\def\rmd{\mathrm{d}}
\def\pP{\mathbb{P}}
\def\noise{\varepsilon}
\def\pE{\mathbb{E}}
\def\ltrstate{\mathbf{x}}
\def\eqdef{:=}
\def\eqsp{\,}
\def\1{\mathbbm{1}}
\def\dimx{{\mathsf{d}_x}}
\def\dimtr{{\mathsf{d}_{\ltrmeas}}}
\def\dimorig{\mathsf{d}_y}
\def\dimb{\mathsf{b}}
\def\trmeas{\mathbf{Y}}
\def\ltrmeas{\mathbf{y}}
\def\ormeas{Y}
\def\lmeas{y}
\def\state{X}
\def\diffltrmeas{\mathsf{\ltrmeas}}
\def\trstate{\mathbf{X}}
\def\rset{\mathbb{R}}
\def\nset{\mathbb{N}}
\def\nsetpos{\mathbb{N}_*}
\def\idm{\operatorname{I}}
\newcommand{\topvec}[1]{\overline{#1}}
\newcommand{\botvec}[1]{\underline{#1}}
\def\inflationstd{\kappa}
\newcommand{\m}[1]{\mathrm{#1}}
\newcommand{\cat}[2]{{#1}^{\scriptscriptstyle\frown}#2}
\DeclareMathOperator*{\argmin}{argmin}
\DeclareMathOperator*{\argmax}{argmax}
\def\gauss{\mathcal{N}}
\def\noise{\varepsilon}
\def\iid{\overset{\mathrm{i.i.d.}}{\sim}}
\def\R{\mathbb{R}}
\def\datadistr{\pi_*}
\def\param{\theta}
\def\sigalgebra{\mathcal{B}}
\def\normpdf{\mathcal{N}}
\def\normdist{\mathcal{N}}
\def\normconst{\mathcal{Z}}
\def\bigo{\mathcal{O}}
\def\id{\mathbf{I}}
\def\categorical{\mathrm{Cat}}
\def\ouralgorithm{\algo}
\def\bridgemean{\boldsymbol{\mu}}
\def\bwmean{\mathsf{m}}
\def\bwmeanV{\mathfrak{m}}
\def\tbwmean{\overline{\mathsf{m}}}
\def\bbwmean{\underline{\mathsf{m}}}
\newcommand{\filtration}[1]{\mathcal{F}_{#1}}
\newcommand{\fwdker}[3]{\overrightarrow{\mathbf{B}}_{#1}(#2 #3)}
\newcommand{\uweight}[2]{\widetilde{\omega}^{#1} _{#2}}
\newcommand{\weight}[2]{\omega^{#1} _{#2}}
\newcommand{\wratio}[3]{\frac{\tilde{\omega}^{#1} _{#2}(#3)}{\Omega^N _{#2}}}
\newcommand{\kldivergence}[2]{\mathsf{KL}(#1 \parallel #2)}
\def\particle{\xi}
\def\tparticle{\smash{\overline{\particle}}}
\def\bparticle{\smash{\underline{\particle}}}
\newcommand{\iparticle}[2]{x^{#1} _{#2}}
\newcommand{\mset}[2]{\R^{#1 \times #2}}
\newcommand{\statevar}[1]{\mathbf{x}_{#1}}
\newcommand{\zerom}[2]{\mathbf{0}_{#1, #2}}
\def\noisestd{\sigma_\lmeas}
\newcommand{\alphacumprod}[2]{\ifthenelse{\equal{#2}{}}{\bar{\alpha}_{#1}}{\bar{\alpha}_{#1,#2}}}
\newcommandx{\xpred}[2][2=]{\boldsymbol{\chi}^{#2} _{#1}}
\newcommandx{\bottomxpred}[2][2=]{\underline{\boldsymbol{\chi}}^{#2} _{#1}}
\newcommandx{\topxpred}[2][2=]{\overline{\boldsymbol{\chi}}^{#2} _{#1}}
\newcommand{\ndist}[2]{\mathcal{N}(#1, #2)}
\newcommand{\epsprop}[2]{\ifthenelse{\equal{#1}{}}{\mathbf{e}^{#2}}{\mathbf{e}^{#2} (#1)}}
\newcommand{\vecidx}[3]{#1 _{#2}[#3]}
\newcommand{\realrevbridge}[3]{\ifthenelse{\equal{#2}{}}{\mathsf{q} _{#1}}{\mathsf{q} _{#1}(#3|#2)}}
\newcommand{\revbridge}[3]{\ifthenelse{\equal{#2}{}}{\smash{{q}}^{\sigma} _{#1}}{\smash{{q}}^{\sigma} _{#1}(#3|#2)}}
\newcommand{\idxrevbridge}[4]{\ifthenelse{\equal{#2}{}}{\smash{{q}}^{\sigma, #4} _{#1}}{\smash{{q}}^{\sigma, #4} _{#1}(#3|#2)}}
\newcommand{\trevbridge}[3]{\ifthenelse{\equal{#2}{}}{\smash{\overline{q}}^{\sigma} _{#1}}{\smash{\overline{q}}^{\sigma} _{#1}(#3|#2)}}
\newcommand{\brevbridge}[3]{\ifthenelse{\isempty{#2}}{\underline{q}^{\sigma} _{#1}}{\underline{q}^{\sigma} _{#1}(#3|#2)}}
\newcommand{\trealrevbridge}[3]{\ifthenelse{\equal{#2}{}}{\overline{\mathsf{q}}_{#1}}{\overline{\mathsf{q}} _{#1}(#3|#2)}}
\newcommand{\sumweight}[1]{\Omega^N _{#1}}
\newcommand{\bridge}[4]{{\ifthenelse{\equal{#3}{}}{p^{#2} _{#1}}{p^{#2} _{#1}(#3, #4)}}}
\newcommand{\outrans}[3]{\ifthenelse{\equal{#2}{}}{\overrightarrow{m}_{#1}}{\overrightarrow{m}_{#1}(#2, #3)}}
\newcommand{\bwtrans}[4]{\ifthenelse{\equal{#3}{}}{p^{#1} _{#2}}{p^{#1} _{#2}(#4|#3)}}
\newcommand{\bwtransV}[4]{\ifthenelse{\equal{#3}{}}{\lambda^{#1} _{#2}}{\lambda^{#1} _{#2}(#4|#3)}}
\newcommand{\bbwtransV}[4]{\ifthenelse{\equal{#3}{}}{\underline{\lambda}^{#1} _{#2}}{\underline{\lambda}^{#1} _{#2}(#4|#3)}}
\newcommand{\bbwtrans}[4]{\ifthenelse{\equal{#3}{}}{\underline{p}^{#1} _{#2}}{\underline{p}^{#1} _{#2}(#4|#3)}}
\newcommand{\tbwtrans}[4]{\ifthenelse{\equal{#3}{}}{\overline{p}^{#1} _{#2}}{\overline{p}^{#1} _{#2}(#4|#3)}}
\newcommand{\bwmargparam}[3]{\ifthenelse{\equal{#2}{}}{{{\mathfrak{p}}}_{#1}^{#3}}{{{\mathfrak{p}}}^{#3} _{#1}(#2)}}
\newcommand{\oldbwmargparam}[3]{\ifthenelse{\equal{#2}{}}{{{\mathsf{p}}}_{#1}^{#3}}{{{\mathsf{p}}}^{#3} _{#1}(#2)}}
\newcommand{\fwtrans}[3]{\ifthenelse{\equal{#2}{}}{q _{#1}}{q _{#1}(#3|#2)}}
\newcommand{\idxfwtrans}[4]{\ifthenelse{\equal{#2}{}}{q^{#4} _{#1}}{q^{#4} _{#1}(#3|#2)}}
\newcommand{\bfwtrans}[3]{\ifthenelse{\isempty{#2}}{\underline{q}_{#1}}{\underline{q}_{#1}(#3|#2)}}
\newcommand{\tfwtrans}[3]{\ifthenelse{\equal{#2}{}}{\overline{q} _{#1}}{\overline{q} _{#1}(#3|#2)}}
\newcommand{\idxtfwtrans}[4]{\ifthenelse{\equal{#2}{}}{\smash{\overline{q}}^{\sigma_\delta, #4} _{#1}}{\smash{\overline{q}}^{\sigma_\delta, #4} _{#1}(#3|#2)}}
\newcommand{\idxbwtrans}[4]{\ifthenelse{\equal{#2}{}}{p^{#4} _{#1}}{p^{#4} _{#1}(#3|#2)}}
\newcommand{\idxtbwtrans}[4]{\ifthenelse{\equal{#2}{}}{\smash{\overline{p}}^{#4} _{#1}}{\smash{\overline{p}}^{#4} _{#1}(#3|#2)}}
\newcommand{\bwmarg}[2]{\ifthenelse{\equal{#2}{}}{{{\mathsf{p}}}_{#1}}{{{\mathsf{p}}}_{#1}(#2)}}
\newcommand{\bwmargV}[2]{\ifthenelse{\equal{#2}{}}{{{\mathfrak{p}}}_{#1}}{{{\mathfrak{p}}}_{#1}(#2)}}
\newcommand{\bwoptmarg}[3]{\ifthenelse{\equal{#3}{}}{\smash{\overset{{}_{\leftarrow}}{{\mathfrak{m}}}}^{#2} _{#1}}{\smash{\overset{{}_{\leftarrow}}{{\mathfrak{m}}}}^{#2} _{#1}(#3)}}
\newcommand{\fwmarg}[2]{\ifthenelse{\equal{#2}{}}{{{\mathsf{q}}}_{#1}}{{{\mathsf{q}}}_{#1}(#2)}}
\newcommand{\initdistr}[3]{\ifthenelse{\equal{#2}{}}{\chi_{#1}}{\chi_{#1}(#2, #3)}}
\newcommand{\bwopt}[4]{\ifthenelse{\equal{#3}{}}{p ^{#2} _{#1}}{p^{#2} _{#1}(#4|#3)}}
\newcommand{\bbwopt}[4]{\ifthenelse{\equal{#3}{}}{p ^{#2} _{#1}}{\underline{p}^{#2} _{#1}(#4|#3)}}
\newcommand{\tbwopt}[4]{\ifthenelse{\equal{#3}{}}{p ^{#2} _{#1}}{\overline{p}^{#2} _{#1}(#4|#3)}}
\newcommand{\fwopt}[4]{\ifthenelse{\equal{#3}{}}{\overset{{}_{\rightarrow}}{m} ^{#2} _{#1}}{\smash{\overset{{}_{\rightarrow}}{m}}^{#2} _{#1}(#4|#3)}}
\newcommand{\ufilter}[3]{{\ifthenelse{\equal{#3}{}}{\eta^{#2} _{#1}}{\eta^{#2} _{#1}(#3)}}}
\newcommand{\filter}[3]{{\ifthenelse{\equal{#3}{}}{\phi^{#2} _{#1}}{\phi^{#2} _{#1}(#3)}}}
\newcommand{\fwdfilter}[3]{{\ifthenelse{\equal{#3}{}}{\rho^{#2} _{#1}}{\rho^{#2} _{#1}(#3)}}}
\newcommand{\noisyfilter}[3]{{\ifthenelse{\equal{#3}{}}{\phi^{#2} _{#1}}{\phi^{#2} _{#1}(#3)}}}
\newcommand{\noiselessfilter}[3]{{\ifthenelse{\equal{#3}{}}{\phi^{#2} _{#1}}{\phi _{#1}(#3 | #2)}}}
\newcommand{\pred}[3]{{\ifthenelse{\equal{#3}{}}{\eta^{#2} _{#1}}{\eta^{#2} _{#1}(#3)}}}
\newcommand{\bfilter}[3]{{\ifthenelse{\equal{#3}{}}{\Phi^{#2} _{#1}}{\Phi^{#2} _{#1}(#3)}}}
\newcommand{\normconstest}[1]{\gamma^N _{#1}}
\newcommand{\Qker}[2]{Q_{#1} ^{#2}}
\newcommand{\lawSMC}[1]{P^N _{#1}}
\newcommand{\lawcSMC}[1]{\mathbf{P}^N _{#1}}
\newcommand{\anc}[2]{\smash{I^{#1} _{#2}}}
\newcommand{\ianc}[2]{a^{#1} _{#2}}
\newcommand{\pot}[3]{\ifthenelse{\equal{#3}{}}{\smash{g^{#2} _{#1}}}{\smash{g^{#2} _{#1}}(#3)}}
\newcommand{\dubpot}[3]{\ifthenelse{\equal{#3}{}}{G^{#2} _{#1}}{G^{#2} _{#1}(#3)}}
\newcommand{\intvect}[2]{[#1:#2]}
\newcommand{\assp}[1]{(\textbf{A}\textbf{#1})}
\newcommand{\noisytrans}[3]{\ifthenelse{\equal{#2}{}}{\mathsf{P}_{#1}}{\mathsf{P}_{#1}(#3|#2)}}
\def\algo{\texttt{MCGdiff}}
\def\ddrm{\texttt{DDRM}}
\def\dps{\texttt{DPS}}
\def\smcdiff{\texttt{SMCdiff}}
\def\ddim{\texttt{DDIM}}
\def\rnvp{\texttt{RNVP}}
\newcommand{\chunk}[3]{#1_{#2:#3}}
\newcommand{\pchunk}[3]{#1 ^{#2:#3}}
\def\Id{\mathrm{I}}
\def\lstate{x}
\def\ltopstate{\overline{x}}
\def\lbottomstate{\underline{x}}
\def\ltopstate{\overline{x}}
\def\lbottomstate{\underline{x}}
\def\topstate{\smash{\overline{X}}}
\def\bottomstate{\underline{X}}
\def\trtopstate{\smash{\overline{\mathbf{X}}}}
\def\ltrtopstate{\smash{\overline{\mathbf{x}}}}
\def\ltrbottomstate{\underline{\mathbf{x}}}
\newcommandx\sequence[3][2=,3=]
\newcommand\projidx[3]{\ifthenelse{\equal{#2}{}}{#1^{\setminus #3}}{#1^{\setminus #3}_{#2}}}
\newcommand{\nf}[1]{\mathsf{f}_{#1}}
\newcommand{\sizebridgefig}{\hsize}
\author{%
  Gabriel Cardoso\\
  CMAP, Ecole polytechnique,\\
  IHU Liryc,\\
  Fondation Bordeaux Université\\
   \And
  Yazid Janati El Idrissi \\
  CMAP, Ecole polytechnique, \\ 
  CITI, Télécom SudParis \\
    \\
  \And
  Sylvain Le Corff \\
  LPSM, Sorbonne Université \\
  \And
  Eric Moulines \\
  CMAP, Ecole polytechnique\\
}
\begin{document}

\blfootnote{Corresponding authors: \texttt{gabriel.victorino-cardoso@polytechnique.edu}, \\
\texttt{yazid.janati@polytechnique.edu}}
\title{Monte Carlo guided Diffusion for Bayesian linear inverse problems}

%


\maketitle

\begin{abstract}
Ill-posed linear inverse problems arise frequently in various applications, from computational photography to medical imaging.
A recent line of research exploits Bayesian inference with informative priors to handle the ill-posedness of such problems.
Amongst such priors, score-based generative models (SGM) have recently been successfully applied to several different inverse problems.
In this study, we exploit the particular structure of the prior defined by the SGM to define a sequence of intermediate linear inverse problems. As the noise level decreases, the posteriors of these inverse problems get closer to the target posterior of the original inverse problem. 
To sample from this sequence of posteriors, we propose the use of Sequential Monte Carlo (SMC) methods.
The proposed algorithm, \algo, is shown to be theoretically grounded and we provide numerical simulations showing that it outperforms competing baselines when dealing with ill-posed inverse problems in a Bayesian setting.
\end{abstract}
\section{Introduction}
This paper is concerned with  linear inverse problems $\lmeas =\m{A} \lstate+ \noisestd \noise$, where $\lmeas \in \rset^\dimorig$ is a vector of indirect observations, $\lstate \in \rset^{\dimx}$ is the vector of unknowns,  $\m{A} \in \rset^{\dimorig \times \dimx}$ is the linear forward operator and $\noise \in \rset^{\dimorig}$ is an unknown noise vector. This general model is used throughout computational imaging, including various tomographic imaging applications such as common types of magnetic resonance imaging \cite{vlaardingerbroek2013magnetic}, X-ray computed tomography \cite{elbakri2002statistical}, radar imaging \cite{cheney2009fundamentals}, and basic image restoration tasks such as deblurring, superresolution, and image inpainting \cite{gonzalez2009digitalIP}.
The classical approach to solving linear inverse problems relies on prior knowledge about $\lstate$, such as its smoothness, sparseness in a dictionary, or its geometric properties.
These approaches attempt to estimate a $\widehat{\lstate}$ by minimizing a regularized inverse problem, $\hat{\lstate}= \operatorname{argmin}_{\lstate} \{ \| \lmeas - \m{A} \lstate \|^2 + \operatorname{Reg}(\lstate) \}$, where $\operatorname{Reg}$ is a regularization term that balances data fidelity and noise while enabling efficient computations. 
However, a common difficulty in the regularized inverse problem is the selection of an appropriate regularizer, which has a decisive influence on the quality of the reconstruction.

Whereas regularized inverse problems continue to dominate the field, many alternative statistical formulations have been proposed; see \cite{besag1991bayesian,idier2013bayesian,marnissi2017variational} and the references therein - see \cite{stuart2010inverse} for a mathematical perspective. A main advantage of statistical approaches is that they allow for \textbf{uncertainty quantification} in the reconstructed solution; see \cite{dashti2017bayesian}.
The \textbf{Bayes' formulation} of the regularized inverse problem is based on considering the indirect measurement $\ormeas$, the state $\state$ and the noise $\noise$ as random variables, and to specify $p(\lmeas |\lstate)$ the \emph{likelihood} (the conditional distribution of $\ormeas$ at $\state$)  and the prior $p(\lstate)$ (the distribution of the state). One can use Bayes' theorem to obtain the \textbf{posterior distribution} $p(\lstate|\lmeas) \propto p(\lmeas|\lstate) p(\lstate)$, where "$\propto$" means that the two sides are equal to each other up to a multiplicative constant that does not depend on $\lstate$.
Moreover, the use of an appropriate method for Bayesian inference allows the quantification of the uncertainty in the reconstructed solution $\lstate$. A variety of priors are available, including but not limited to Laplace \cite{figueiredo2007majorization}, total variation (TV) \cite{kaipio2000statistical} and mixture-of-Gaussians \cite{fergus2006removing}.
In the last decade, a variety of techniques have been proposed to design and train generative models capable of producing perceptually realistic samples from the original data, even in challenging high-dimensional data such as images or audio \cite{kingma2019introduction,kobyzev2020normalizing,gui2021review}.
Denoising diffusion models have been shown to be particularly effective generative models in this context \cite{sohl2015deep,song2020score,song2021denoising,song2021maximum,benton2022denoising}.
These models convert noise into the original data domain through a series of denoising steps.
A popular approach is to use a generic diffusion model that has been pre-trained, eliminating the need for re-training and making the process more efficient and versatile \cite{trippe2023diffusion,zhang2023towards}.
Although this was not the main motivation for developing these models, they can of course be used as prior distributions in Bayesian inverse problems.
This simple observation has led to a new, fast-growing line of research on how linear inverse problems can benefit from the flexibility and expressive power of the recently introduced deep generative models; see \cite{arjomand2017deep,wei2022deep,su2022survey,kaltenbach2023semi,shin2023physics,zhihang2023domain,sahlstrom2023utilizing}.
\subsection*{Contributions} \begin{itemize}[wide, labelwidth=!, labelindent=0pt]
    \item We propose \algo, a novel algorithm for sampling from the Bayesian posterior of Gaussian linear inverse problems with denoising diffusion model priors. \algo\ specifically exploits the structure of both the linear inverse problem and the denoising diffusion generative model to design an efficient SMC sampler.
    \vspace{-.15cm}
    \item We establish under sensible assumptions that the empirical distribution of the samples produced by \algo\ converges to the target posterior when the number of particles goes to infinity. To the best of our knowledge, \algo\ is the first provably consistent algorithm for conditional sampling from the denoising diffusion posteriors. 
    \vspace{-.15cm}
    \item
    To evaluate the performance of \algo, we perform numerical simulations on several examples for which the target posterior distribution is known. Simulation results support our theoretical results, i.e. the empirical distribution of samples from \algo\ converges to the target posterior distribution. This is \textbf{not} the case for the competing methods (using the same denoising diffusion generative priors) which are shown, when run with random initialization of the denoising diffusion, to generate a significant number of samples outside the support of the target posterior.
    We also illustrate samples from \algo\ in imaging inverse problems.
\end{itemize}
\paragraph{Background and notations. } 
This section provides a concise overview of the diffusion model framework and notations used in this paper. We cover the elements that are important for understanding  our approach, and we recommend that readers refer to the original papers for complete details and derivations \cite{sohl2015deep, ho2020denoising,song2020score,song2021denoising}.
A denoising diffusion model is a generative model consisting of a forward and a backward process.
The forward process involves sampling $\state_0 \sim \fwmarg{\mathrm{data}}{}$ from the data distribution, which is then converted to a sequence $\chunk{\state{}}{1}{n}$ of recursively corrupted versions of $\state_0$.
The backward process involves sampling $\state_n$ according to an easy-to-sample reference distribution on $\rset^{\dimx}$ and generating $\state_0 \in \rset^{\dimx}$ by a sequence of denoising steps.
   Following \cite{sohl2015deep,song2021denoising}, the forward process can be chosen as a Markov chain with joint distribution
\begin{equation}
    \label{eq:forward_def}
    \fwmarg{0:n}{\chunk{\lstate}{0}{n}} = \fwmarg{\mathrm{data}}{\lstate_0} \textstyle\prod_{t=1}^{n} \fwtrans{t}{\lstate_{t-1}}{\lstate_{t}}, \quad \fwtrans{t}{\lstate_{t-1}}{\lstate_{t}}= \gauss( \lstate_t; (1 - \beta_t)^{1/2} \lstate_{t-1}, \beta_t \idm_{\dimx})\eqsp,
\end{equation}
where $\idm_{\dimx}$ is the identity matrix of size $\dimx$,  $\sequence{\beta}[t][\nset] \subset (0,1)$ is a non-increasing sequence and $\normpdf(\statevar{}; \mathbf{\mu}, \Sigma)$ is the p.d.f. of the Gaussian distribution with mean $\mathbf{\mu}$ and covariance matrix $\Sigma$ (assumed to be non-singular) evaluated at $\statevar{}$.
  For all $t > 0$, set $\alphacumprod{t}{} = \prod_{\ell=1}^t (1 - \beta_\ell)$ with the convention $\alpha_0 = 1$. We have for all $0 \leq s < t \leq n$,
  \begin{equation}
  \fwtrans{t|s}{\lstate_{s}}{\lstate_t} \eqdef \textstyle \int \prod_{\ell=s+1}^{t} \fwtrans{\ell}{\lstate_{\ell-1}}{\lstate_\ell} \rmd \chunk{\lstate}{s+1}{t-1} = \normpdf(\lstate_{t}; (\alphacumprod{t}{} / \alphacumprod{s}{})^{1/2} \lstate_{s}, (1 - \alphacumprod{t}{} / \alphacumprod{s}{}) \idm_{\dimx}) \eqsp.
  \end{equation} 
    For the standard choices of  $ \alphacumprod{t}{}$,  the sequence of distributions $(\fwmarg{t}{})_t$ converges weakly to the standard normal distribution as $t \to \infty$, which we chose as the reference distribution.   For the reverse process, \cite{song2021denoising,song2021maximum} introduce an \emph{inference distribution} $\revbridge{1:n|0}{\lstate_0}{{\chunk{\lstate}{1}{n}}}$, depending on a sequence $\sequence{\sigma}[t][\nset]$ of hyperparameters satisfying 
$\sigma_{t}^2 \in [0,1 - \alphacumprod{t-1}{}]$ for all $t \in \nset^*$, and defined as
$\revbridge{1:n|0}{\lstate_0}{{\chunk{\lstate}{1}{n}}}=
\revbridge{n|0}{\lstate_0}{\lstate_n} \textstyle\prod_{t=n}^{2}
\revbridge{t-1|t, 0}{\lstate_t, \lstate_0}{\lstate_{t-1}} \eqsp,$ where $\revbridge{n|0}{\lstate_0}{\lstate_n}=\gauss\left(\lstate_n ; \alphacumprod{n}{}^{1/2} \lstate_0,\left(1-\alphacumprod{n}{}  \right) \Id \right)$ and
$\revbridge{t-1|t, 0}{\lstate_t,\lstate_0}{\lstate_{t-1}} =\gauss\left(\lstate_{t-1} ; \boldsymbol{\mu}_{t}(\lstate_0,\lstate_t), \sigma_{t}^2 \idm_{d} \right) \eqsp,$
with
$\boldsymbol{\mu}_{t}(\lstate_0,\lstate_t)= \alphacumprod{t-1}{} ^{1/2} \lstate_0+ (1- \alphacumprod{t-1}{} -\sigma^2 _t)^{1/2} (\lstate_t-\alphacumprod{t}{}^{1/2} \lstate_0) \big/ (1-\alphacumprod{t}{})^{1/2} \eqsp.$
For $t=n-1,\dots,1$, we define by backward induction the sequence $
\revbridge{t|0}{\lstate_0}{\lstate_t} 
=  \int \revbridge{t|t+1,0}{\lstate_{t+1}, \lstate_0}{\lstate_{t}} \revbridge{t+1|0}{\lstate_0}{\lstate_{t+1}} \rmd \lstate_{t+1}.$
It is shown in \cite[Lemma 1]{song2021denoising} that for all $t \in [1:n]$, the  distributions of the forward and inference process conditioned on the initial state coincide, i.e. that
$\revbridge{t|0}{\lstate_0}{\lstate_t} =  \fwtrans{t|0}{\lstate_0}{\lstate_t}.$
The backward process is derived from the inference distribution by replacing, for each $t \in [2:n]$, $\lstate_0$ in the definition $\revbridge{t-1|t, 0}{\lstate_t, \lstate_0}{\lstate_{t-1}}$ with a prediction where $\xpred{0|t}[\theta](\lstate_t) \eqdef \alphacumprod{t}{}^{-1/2} \left(\lstate_t - (1 - \alphacumprod{t}{})^{1/2}\epsprop{\lstate_t, t}{\theta}\right)$
where $\epsprop{\lstate,t}{\theta}$ is typically a neural network parameterized by $\theta$.
More formally, the backward distribution is defined as
$
    \oldbwmargparam{0:n}{\lstate_{0:n}}{\param} =\oldbwmargparam{n}{\lstate_n}{} \textstyle\prod_{t=0}^{n+1} \bwtrans{\theta}{t}{\lstate_{t+1}}{\lstate_{t}}\eqsp,
$ where
$\oldbwmargparam{n}{\lstate_n}{}  =  \normpdf{\left(\lstate_n; 0_\dimx, \Id_{\dimx} \right)}$ and
for all $t \in [1:n-1]$,
\begin{equation}
    \label{eq:bwker:defn}
    \bwtrans{\theta}{t}{\lstate_{t+1}}{\lstate_{t}} \eqdef \revbridge{t|t+1, 0}{\lstate_{t+1}, \xpred{0|t+1}[\theta](\lstate_{t+1})}{\lstate_{t}} = \normpdf(\lstate_t, \bwmean^\param _{t+1}(\lstate_{t+1}), \sigma^2 _{t+1} \Id_{\dimx})\eqsp,
\end{equation}
where $\bwmean_{t+1}(\lstate_{t+1}) \eqdef \bridgemean(\xpred{0|t+1}[\theta](\lstate_{t+1}), \lstate_{t+1})$ and $0_\dimx$ is the null vector of size $\dimx$. At step $0$, we set $\bwtrans{}{0}{\lstate_1}{\lstate_0} \eqdef \normpdf(\lstate_0; \xpred{0|1}[\theta](\lstate_{1}), \sigma^2 _1 \Id_{\dimx})$.
The parameter  $\theta$ is obtained \cite[Theorem 1]{song2021denoising} by solving the following optimization problem:
\begin{equation}
\label{eq:optimal-denoising}
    \theta_{*} \in \textstyle \argmin_{\theta} \sum_{t=1}^{n} (2 \dimx \sigma^2_t \alpha_t)^{-1} \int \|\epsilon - \epsprop{\sqrt{\alpha_t}x_0 + \sqrt{1 - \alpha_t}\epsilon, t}{\theta}\|_2^2 \normpdf(\epsilon; 0_{\dimx}, \idm_{\dimx}) \fwmarg{\operatorname{data}}{\rmd \lstate_0} \rmd \epsilon \eqsp.
\end{equation}
Thus, $\epsprop{\state_t,t}{\theta_*}$ might be seen as the predictor of the noise added to $\state_0$ to obtain $\state_t$ (in the forward pass) and justifies the ''prediction'' terminology. 
The time $0$ marginal $\smash{\oldbwmargparam{0}{\lstate_0}{\param_*} = \int \oldbwmargparam{0:n}{\lstate_{0:n}}{\param_*} \rmd \lstate_{1:n}}$
which we will refer to as the \emph{prior}  is used as an approximation of $\fwmarg{\mathrm{data}}{}$ and the time $s$ marginal is $\smash{\oldbwmargparam{s}{\lstate_s}{\param_*} = \int \oldbwmargparam{0:n}{\lstate_{0:n}}{\param_*} \rmd \lstate_{1:s-1} \rmd \lstate_{s+1:n}}$.
In the rest of the paper we drop the dependence on the parameter $\param_*$. We define for all $v \in \R^{\ell},w \in \R^{k}$, the concatenation operator $\cat{v}{w} = [v^T, w^T]^T \in \R^{\ell + k}$. For $i \in [1:\ell]$, we let $\vecidx{v}{}{i}$ the $i$-th coordinate of $v$.
\paragraph{Related works. } 
The subject of Bayesian problems is very vast, and it is impossible to discuss here all the results obtained in this very rich literature.
One of such domains is image restoration problems, such as deblurring, denoising inpainting, which are challenging problems in computer vision that involves restoring a partially observed degraded image.
Deep learning techniques are widely used for this task \cite{arjomand2017deep,yeh2018image,xiang2023deep,wei2022deep}  with many of them relying on auto-encoders, VAEs \cite{ivanov2018variational,peng2021generating,zheng2019pluralistic}, GANs \cite{yeh2018image,zeng2022aggregated}, or autoregressive transformers \cite{yu2018generative,wan2021high}.
In what follows, we focus on methods based on denoising diffusion that has recently emerged as a way to produce high-quality realistic samples from the original data distribution on par with the best GANs in terms of image and audio generation, without the intricacies of adversarial training; see \cite{sohl2015deep,song2020score,song2022solving}.
Diffusion-based approaches do not require specific training for degradation types, making them much more versatile and computationally efficient. In \cite{song2022solving}, noisy linear inverse problems are proposed to be solved by diffusing the degraded observation forward, leading to intermediate observations $\{ \lmeas_s \}_{s = 0} ^n$, and then running a modified backward process that promotes consistency with $\lmeas_s$ at each step $s$.
The Denoising-Diffusion-Restoration model (\ddrm) \cite{kawar2022denoising} also modifies the backward process so that the unobserved part of the state follows the backward process while the observed part is obtained as a noisy weighted sum between the noisy observation and the prediction of the state. As observed by \cite{lugmayr2022repaint}, \ddrm\ is very efficient, but the simple blending used occasionally causes inconsistency in the restoration process.  
\dps\  \cite{chung2023diffusion} considers a backward process targeting the posterior. 
\dps\ approximates the score of the posterior using the Tweedie formula, which incorporates the learned score of the prior.
The approximation error is quantified and shown to decrease when the noise level is large, i.e., when the posterior is close to the prior distribution. As shown in \Cref{sec:numerics} with a very simple example, neither \ddrm\ nor \dps\ can be used to sample the target posterior and therefore do not solve the Bayesian recovery problem (even if we run \ddrm\ and \dps\ several time with independent initializations). Indeed, we show that \ddrm\ and \dps\ produce samples under the "prior" distribution (which is generally captured very well by the denoising diffusion model), but which are not consistent with the observations (many samples land in areas with very low likelihood). 
In \cite{trippe2023diffusion} the authors introduce \smcdiff, a Sequential Monte Carlo-based denoising diffusion model that aims at solving specifically the \emph{inpainting problem}. \smcdiff\ produces a particle approximation of the conditional distribution of the non observed part of the state conditionally on a forward-diffused trajectory of the observation. The resulting particle approximation is shown to converge to the true posterior of the SGM under the assumption that the joint laws of the forward and backward processes coincide, which fails to be true in realistic setting.
In comparison with \smcdiff, \algo\ is a versatile approach that solves any Bayesian linear inverse problem while being consistent under practically no assumption. In parallel to our work, \cite{wu2023practical} also developed a similar SMC based methodology but with a different proposal kernel.
\section{The \ouralgorithm\ algorithm}
In this section we present our methodology for the inpainting problem \eqref{eq:inpainting}, both with noise and without noise.
The more general case is treated in \Cref{sec:gen_lin_ip}.
Let $\dimorig \in [1:\dimx - 1]$.
In what follows we denote the $\dimorig$ top coordinates of a vector $\lstate \in \rset^{\dimx}$ by $\ltopstate$ and the remaining coordinates by $\lbottomstate$, so that $\lstate = \cat{\ltopstate}{\lbottomstate}$. The inpainting problem is defined as
\begin{equation}
   \label{eq:inpainting}
    \ormeas =  \topstate + \noisestd \noise \eqsp, \quad \varepsilon \sim \ndist{0}{\idm_{\dimorig}} \eqsp, \quad \sigma \geq 0 \eqsp,
\end{equation}
where $\topstate$ are the first $\dimorig$ coordinates of a random variable $\state \sim \bwmarg{0}{}$. The goal is then to recover the law of the complete state $\state$ given a realisation $\ltrmeas$ of the incomplete observation $\trmeas$ and the model \eqref{eq:inpainting}.
\paragraph{Noiseless case.}
\label{sec:noiseless}
We begin by the case $\noisestd=0$. As the first $\dimorig$ coordinates are observed exactly, we aim at infering the remaining coordinates of $\state$,
which correspond to $\bottomstate$.
As such, given an observation $\lmeas$, we aim at sampling from the posterior $\smash{\filter{0}{\lmeas}{\lbottomstate_0}} \propto \bwmarg{0}{\cat{\lmeas}{\lbottomstate_0}}$ with integral form
\begin{equation} 
    \label{eq:posterior:integral_form}
    \filter{0}{\lmeas}{\lbottomstate_0} \propto \textstyle \int \bwmarg{n}{\lstate_n} \left\{ \prod_{s = 1}^{n-1} \bwtrans{}{s}{\lstate_{s+1}}{\lstate_s} \right\} \bwtrans{}{0}{\lstate_1}{\cat{\lmeas}{\lbottomstate_0}} \rmd \lstate_{1:n} \eqsp.
\end{equation}
To solve this problem, we propose to use SMC algorithms \cite{doucet2001sequential, infhidden, chopin2020introduction}, where a set of $N$ random samples, referred to as particles, is iteratively updated to approximate the posterior distribution. The updates involve, at iteration $s$, selecting promising particles from the pool of particles $\pchunk{\particle_{s+1}}{1}{N} = (\particle^1 _{s+1}, \dotsc, \particle^N _{s+1})$ based on a weight function $\uweight{}{s}$, and then apply a Markov transition $\bwopt{s}{\lmeas}{}{}$ to obtain the samples $\pchunk{\particle_{s}}{1}{N}$.
    The transition $\bwopt{s}{\lmeas}{\lstate_{s+1}}{\lstate_s}$ is designed to follow the backward process while guiding the $\dimorig$ top coordinates of the pool of particles $\pchunk{\particle_s}{1}{N}$ towards the measurement $\lmeas$.
Note that under the backward dynamics \eqref{eq:bwker:defn}, $\topstate_t$ and $\bottomstate_t$ are independent conditionally on $\state_{t+1}$ with transition kernels respectively
        $\tbwtrans{}{t}{\lstate_{t+1}}{\ltopstate_{t}}  \eqdef \normpdf(\ltopstate_{t}; \tbwmean_{t+1}(\lstate_{t+1}), \sigma^2 _{t+1} \Id_{\dimorig})$ and $
        \bbwtrans{}{t}{\lstate_{t+1}}{\lbottomstate_{t}}  \eqdef \normpdf(\lbottomstate_{t}; \bbwmean_{t+1}(\lstate_{t+1}), \sigma^2 _{t+1} \Id_{\dimx - \dimorig})$
    where $\tbwmean_{t+1}(\lstate_{t+1}) \in \rset^{\dimorig}$ and $\bbwmean_{t+1}(\lstate_{t+1}) \in \rset^{\dimx - \dimorig}$ are such that $\bwmean_{t+1}(\lstate_{t+1}) = \cat{\tbwmean_{t+1}(\lstate_{t+1})}{\bbwmean_{t+1}(\lstate_{t+1})}$ and the above kernels satisfy $\bwtrans{}{t}{\lstate_{s+1}}{\lstate_s} = \tbwtrans{}{t}{\lstate_{t+1}}{\ltopstate_{t}} \bbwtrans{}{t}{\lstate_{t+1}}{\lbottomstate_{t}}$. We consider the following proposal kernels for $t \in [1:n]$,
\begin{equation} 
    \label{eq:optkernel}
    \bwopt{s}{\lmeas}{\lstate_{t+1}}{\lstate_t} \propto \bwtrans{}{t}{\lstate_{t+1}}{\lstate_t} \tfwtrans{t|0}{\lmeas}{\ltopstate_t} \eqsp, \quad \mathrm{where} \quad \tfwtrans{t|0}{\lmeas}{\ltopstate_t} \eqdef \normpdf(\ltopstate_t; \alphacumprod{t}{}^{1/2} \lmeas, (1 - \alphacumprod{t}{}) \Id_{\dimorig}) \eqsp,
\end{equation}
and $\bwopt{n}{\lmeas}{}{}(\lstate_n) \propto \bwmarg{n}{\lstate_n} \tfwtrans{n|0}{\lmeas}{\ltopstate_n}$. For the final step, we define $\bbwopt{0}{\lmeas}{\lstate_1}{\lbottomstate_0} = \bbwtrans{}{0}{\lstate_1}{\lbottomstate_0}$. Using standard Gaussian conjugation formulas, we obtain
\begin{align*}
\bwopt{t}{\lmeas}{\lstate_{t+1}}{\lstate_t} & = \bbwtrans{}{t}{\lstate_{t+1}}{\lbottomstate_t} \cdot \normpdf\left(\ltopstate_t; \mathsf{K}_t \alpha^{1/2} _t \lmeas + (1 - \mathsf{K}_t) \tbwmean_{t+1}(\lstate_{t+1}), (1 - \alphacumprod{t}{}) \mathsf{K}_t \cdot \Id_{\dimorig} \right) \eqsp,\\
\bwmarg{n}{\lstate_n} & = \normpdf(\lbottomstate_n; 0_{\dimx - \dimorig}, \Id_{\dimx - \dimorig})  \cdot \normpdf(\ltopstate_n; \mathsf{K}_n \alphacumprod{n}{}^{1/2} \lmeas, (1 - \alphacumprod{n}{}) \mathsf{K}_n \cdot \Id_{\dimorig})
\end{align*}
where $\mathsf{K}_{t} \eqdef \sigma^2 _{t+1} \big/ (\sigma^2 _{t+1} + 1 - \alpha_t)$. For this procedure to target the posterior $\smash{\filter{0}{\lmeas}{}}$, the weight function $\uweight{}{s}$ is chosen as follows; we set $
    \uweight{}{n-1}(\lstate_n) \eqdef \int \bwtrans{}{n-1}{\lstate_{n}}{\lstate_{n-1}} \tfwtrans{n-1|0}{\lmeas}{\ltopstate_{n-1}} \rmd \lstate_{n-1}
     = \normpdf\left(\alpha_{n-1}^{1/2} \lmeas; \tbwmean_{n}(\lstate_n), \sigma_{n}^2 + 1 - \alpha_s\right)$
and for $\smash{t \in [1:n-2]}$,
\begin{equation}
    \label{eq:weightfunc}
    \uweight{}{t}(\lstate_{t+1}) \eqdef  \frac{\int \tbwtrans{}{t}{\lstate_{t+1}}{\ltopstate_{t}} \tfwtrans{t|0}{\lmeas}{\ltopstate_{t}} \rmd \lstate_{t}}{\tfwtrans{t+1|0}{\lmeas}{\ltopstate_{t+1}}}
    = \frac{\normpdf\left(\alpha_{t}^{1/2} \lmeas; \tbwmean_{t+1}(\lstate_{s+1}), (\sigma_{t+1}^2 + 1 - \alpha_t) \Id_{\dimorig}\right)}{\normpdf\left(\alpha_{t+1}^{1/2}\lmeas; \ltopstate_{t+1}, (1 - \alpha_{t+1}) \Id_{\dimorig} \right)} \eqsp.
\end{equation}
For the final step, we set $\uweight{}{0}(\lstate_1) \eqdef \tbwtrans{}{0}{\ltopstate_1}{\lmeas} \big/ \tfwtrans{1|0}{\lmeas}{\ltopstate_1}$. 
The overall SMC algorithm targeting $\filter{0}{\lmeas}{}$ using the instrumental kernel \eqref{eq:optkernel} and weight function \eqref{eq:weightfunc} is summarized in \Cref{alg:algonoiseless}.
\begin{algorithm2e}
    \small
    \caption{\algo\ ($\sigma = 0$)}
    \label{alg:algonoiseless} 
    \KwInput{Number of particles $N$}
    \KwOutput{$\pchunk{\particle _0}{1}{N}$}
    \tcp{\scriptsize{Operations involving index $i$ are repeated for each $i \in [1:N]$}}
    $\overline{z}^i _{n} \sim \normpdf(\mathbf{0}_{\dimorig}, \idm_{\dimorig}), \quad \underline{z}^i _{n} \sim \normpdf(\mathbf{0}_{\dimx - \dimorig}, \idm_{\dimx - \dimorig}), \quad \tparticle^i _n = \mathsf{K}_n \alphacumprod{n}{}^{1/2} \lmeas + (1 - \alphacumprod{n}{}) \mathsf{K} _n \overline{z}^i _n , \quad \particle^i _n = \cat{\tparticle^i _n}{\underline{z}^i _n}$;\\
    \For{$s \gets n-1:0$}{
        \If{$s = n-1$}{
            $\uweight{}{n-1}(\particle^i _n) = \normpdf\big(\alphacumprod{n}{}^{1/2} \lmeas; \topvec{\bwmean}_{n}(\particle^i _n), 2 - \alphacumprod{n}{} \big)$;
        }
        \Else{
            $\uweight{}{s}(\particle^i _{s+1}) = \normpdf\left(\alphacumprod{s}{}^{1/2} \lmeas; \topvec{\bwmean}_{s+1}(\particle^i _{s+1}), \sigma_{s+1}^2 + 1 - \alphacumprod{s}{}\right) \big/ \normpdf\left(\alphacumprod{s+1}{}^{1/2}\lmeas; \tparticle^i _{s+1}, 1 - \alphacumprod{s+1}{} \right)$;
        }
        $\anc{i}{s+1} \sim \categorical\big(\{ \uweight{}{s}(\particle^{j} _{s+1}) / \sum_{k = 1}^N \uweight{}{s}(\particle^{k} _{s+1})\} _{j = 1} ^N \big), \quad \overline{z}^i _{s} \sim \normpdf(\mathbf{0}_{\dimorig}, \idm_{\dimorig}), \quad \underline{z}^i _{s} \sim \normpdf(\mathbf{0}_{\dimx - \dimorig}, \idm_{\dimx - \dimorig})$;\\
        $\tparticle^i _s = \mathsf{K}_s \alphacumprod{s}{}^{1/2} \lmeas + (1 - \mathsf{K}_s) \topvec{\bwmean}_{s+1}(\particle^i _{s+1}) + (1 - \alpha_s)^{1/2} \mathsf{K}^{1/2} _s \overline{z}^i _{s}, \quad \bparticle^{i}_{s} = \botvec{\bwmean}_{s+1}(\particle^i _{s+1}) + \sigma _{s+1} \underline{z}^i _{s}$;\\
        Set $\particle^i _s = \cat{\tparticle^i _s}{\bparticle^i _s}$;
    }
    \end{algorithm2e}
We now provide a justification to \Cref{alg:algonoiseless}. Let $\{ \pot{s}{\lmeas}{} \}_{s = 1} ^{n}$ be a sequence of positive functions. Consider the sequence of distributions $\left\{\smash{\filter{s}{\lmeas}{}}\right\}_{s=1}^{n}$ defined as follows; $\filter{n}{\lmeas}{\lstate_n} \propto \bwmarg{n}{\lstate_n} \pot{n}{\lmeas}{\lstate_n}$ and for $t \in [1:n-1]$
\begin{equation}
    \filter{t}{\lmeas}{\lstate_{t}}
     \propto \textstyle \int {\pot{t+1}{\lmeas}{\lstate_{t+1}}}^{-1}{\pot{t}{\lmeas}{\lstate_t}} \bwtrans{}{t}{\lstate_{t+1}}{\lstate_{t}}  \filter{t+1}{\lmeas}{\rmd \lstate_{t+1}} \eqsp,  \label{eq:FKrecursion}
\end{equation}
By construction, the time $t$ marginal \eqref{eq:FKrecursion} is $\filter{t}{\lmeas}{\lstate_t} \propto \bwmarg{t}{\lstate_t} \pot{t}{\lmeas}{\lstate_t}$ for all $t \in [1:n]$.
Then, using $\smash{\filter{1}{\lmeas}{}}$ and \eqref{eq:posterior:integral_form}, we have that $\filter{0}{\lmeas}{\lbottomstate_0} \propto  \int  {\pot{1}{\lmeas}{\lstate_1}}^{-1}{\tbwtrans{}{0}{\ltopstate_1}{\lmeas}} \bbwtrans{}{0}{\lstate_1}{\lbottomstate_0}  \filter{1}{\lmeas}{\rmd \lstate_1}$.

The recursion \eqref{eq:FKrecursion} suggests a way of obtaining a particle approximation of $\filter{0}{\lmeas}{}$; by sequentially approximating each $\filter{t}{\lmeas}{}$ we can effectively derive a particle approximation of the posterior.
To construct the intermediate particle approximations we use the framework of \emph{auxiliary particle filters} (APF) \cite{pitt1999filtering}.
We focus on the case $\pot{t}{\lmeas}{\lstate_t} = \tfwtrans{t|0}{\lmeas}{\ltopstate_t}$ which corresponds to \Cref{alg:algonoiseless}. The initial particle approximation $\filter{n}{\lmeas}{}$ is obtained by drawing $N$ i.i.d. samples $\pchunk{\particle_{n}}{1}{N}$ from $\bwopt{n}{\lmeas}{}{}$ and setting $\filter{n}{N}{} = N^{-1} \sum_{i = 1}^N \delta_{\particle^{i} _{n}}$ where $\delta_\particle$ is the Dirac mass at $\xi$.
Assume that the empirical approximation of $\filter{t+1}{\lmeas}{}$ is $\filter{t+1}{N}{} = N^{-1} \sum_{i = 1}^N \delta_{\particle^{i} _{t+1}}\eqsp,$ where $\pchunk{\particle_{t+1}}{1}{N}$ are $N$ random variables.
Substituting $\smash{\filter{t+1}{N}{}}$ into the recursion \eqref{eq:FKrecursion} and introducing the instrumental kernel \eqref{eq:optkernel}, we obtain the mixture
\begin{equation} 
\label{eq:particleapprox:intermediate}
    \widehat\phi^N _t (\lstate_t) = \textstyle  \sum_{i = 1}^N {\uweight{}{t}(\particle^{i} _{t+1})}  \bwopt{t}{\lmeas}{\particle^i _{t+1}}{\lstate_t} \big / \sum_{j = 1}^N  \uweight{}{t}(\particle^{j} _{t+1}) \eqsp.
\end{equation}
Then, a particle approximation of \eqref{eq:particleapprox:intermediate} is obtained by  
sampling $N$ conditionally i.i.d. ancestor indices  $\anc{1:N}{t+1} \iid \categorical(\{ \uweight{}{t}(\particle^{i} _{t+1}) / \sum_{j = 1}^N \uweight{}{t}(\particle^{j} _{t+1})\} _{i = 1} ^N)$, 
and then propagating each ancestor particle $\particle^{I^i _{t+1}} _{t+1}$ according to the instrumental kernel \eqref{eq:optkernel}. 
 The final particle approximation is given by
$\filter{0}{N}{} = N^{-1} \sum_{i = 1}^N \delta_{\bparticle^{i} _{0}},$ where $\smash{\bparticle^{i} _{0} \sim \bbwtrans{}{0}{\particle^{\anc{i}{0}} _1}{\cdot}}$, $\smash{\anc{i}{0} \sim \categorical(\{ \uweight{}{0}(\particle^{k} _{1}) \big/ \sum_{j = 1}^N \uweight{}{0}(\particle^j _1) \} _{k = 1} ^N)}$.
 The sequence of distributions $\{ \bwmarg{t}{} \}_{t=0}^{n}$ approximating the marginals of the forward process initialized at $\bwmarg{0}{}$ defines a path that bridges between $\bwmarg{n}{}$ and the prior $\bwmarg{0}{}$ such that the discrepancy between $\bwmarg{t}{}$ and $\bwmarg{t+1}{}$ is small. SMC samplers based on this path are robust to multi-modality and offer an interesting alternative to the geometric and tempering paths traditionally used in the SMC literature, see \cite{dai2022invitation}. Our proposals $\filter{t}{\lmeas}{\lstate_t} \propto \bwmarg{t}{\lstate_t} \tfwtrans{t|0}{\lmeas}{\ltopstate_t}$ inherit the behavior of $\{ \bwmarg{t}{} \}_{t \in \nset}$ and bridge the initial distribution $\filter{n}{\lmeas}{}$ and posterior $\filter{0}{\lmeas}{}$. Indeed, as $\lmeas$ is a noiseless observation of $\state_0 \sim \bwmarg{0}{}$, we may consider $\smash{\alphacumprod{t}{}^{1/2} \lmeas + (1 - \alphacumprod{t}{})^{1/2} \noise_t}$, with $\noise_t \sim \normpdf(\mathbf{0}_{\dimorig}, \Id_{\dimorig})$, as a noisy observation of $X_t \sim \bwmarg{t}{}$ and thus, $\filter{t}{\lmeas}{}$ is the associated posterior.  We illustrate this intuition by considering the following Gaussian mixture (GM) example. We assume that $\bwmarg{0}{\lstate_0} = \sum_{i = 1}^M w_i \cdot \normpdf(\lstate_0; \mu_i, \Id_{\dimx})$ where $M > 1$ and $\{ w_i \}_{i = 1}^M$ are drawn uniformly on the simplex. The marginals of the forward process are available in closed form and are given by $\bwmarg{t}{\lstate_t} = \sum_{i = 1}^M w_i \cdot \normpdf(\lstate_t; \alphacumprod{t}{}^{1/2} \mu_i, \Id_{\dimx})$, which shows that the discrepancy between $\bwmarg{t}{}$ and $\bwmarg{t+1}{}$ is small as long as $\alphacumprod{t}{}^{1/2} - \alphacumprod{t+1}{}^{1/2} \approx 0$. The posteriors $\{ \filter{t}{\lmeas}{} \}_{t \in [0:n]}$ are also available in closed form and displayed in \Cref{fig:bridge_samples}, which illustrates that our choice of potentials ensures that the discrepancy between consecutive posteriors is small.
\begin{figure}
    \centering
    \begin{tabular}{c}
        \begin{tabular}{
            M{0.1125\linewidth}@{\hspace{-0.22\tabcolsep}}
            M{0.1125\linewidth}@{\hspace{-0.22\tabcolsep}}
            M{0.1125\linewidth}@{\hspace{-0.22\tabcolsep}}
            M{0.1125\linewidth}@{\hspace{-0.22\tabcolsep}}
            M{0.1125\linewidth}@{\hspace{-0.22\tabcolsep}}
            M{0.1125\linewidth}@{\hspace{-0.22\tabcolsep}}
            M{0.1125\linewidth}@{\hspace{-0.22\tabcolsep}}
            M{0.1125\linewidth}@{\hspace{-0.22\tabcolsep}}
        }
        \small{$t = 450$} &  \small{$t = 100$} &  \small{$t = 80$} & \small{$t = 70$} & \small{$t=50$} & \small{$t=20$} & \small{$t=15$} & \small{$t=5$} \\
            \includegraphics[width=\sizebridgefig]{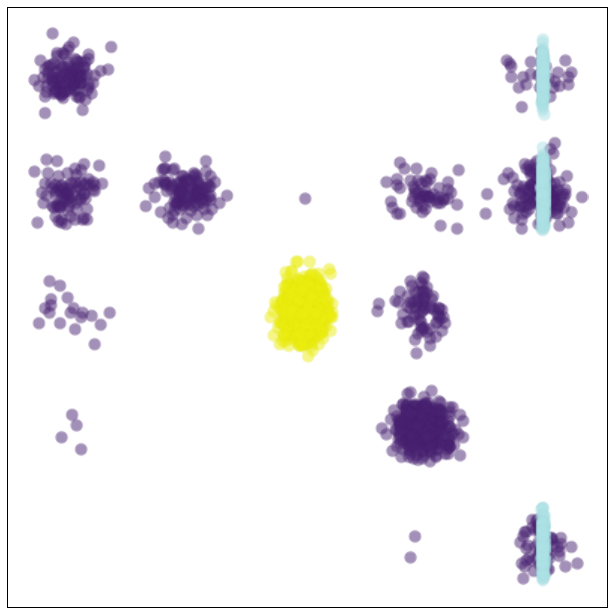}
            & \includegraphics[width=\sizebridgefig]{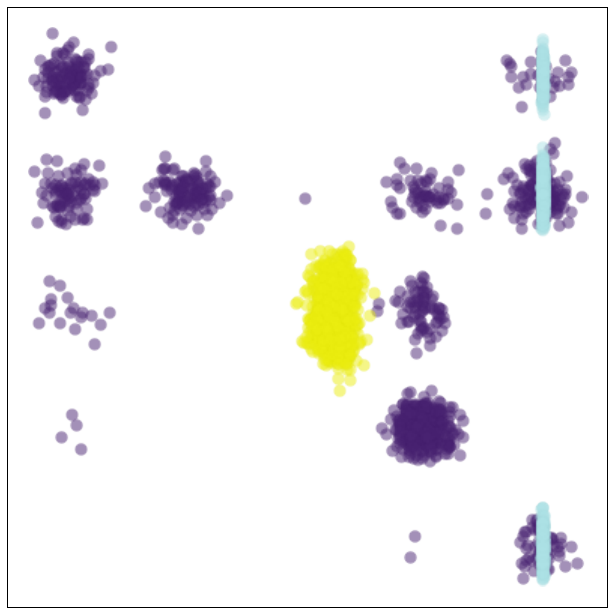}
            & \includegraphics[width=\sizebridgefig]{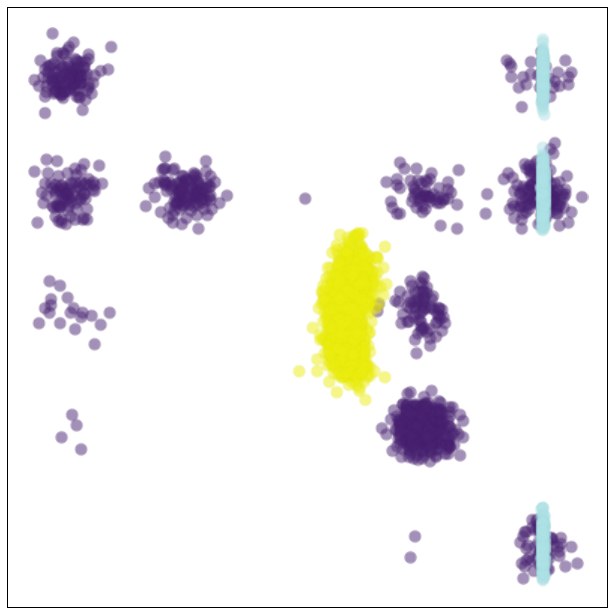}
            & \includegraphics[width=\sizebridgefig]{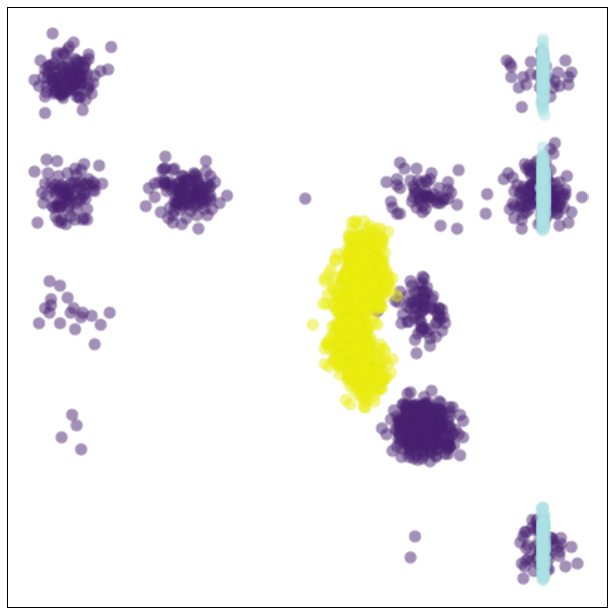}
            & \includegraphics[width=\sizebridgefig]{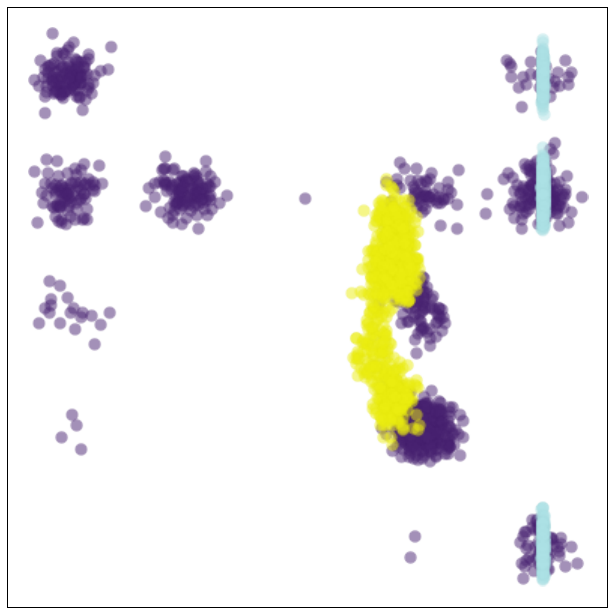}
            & \includegraphics[width=\sizebridgefig]{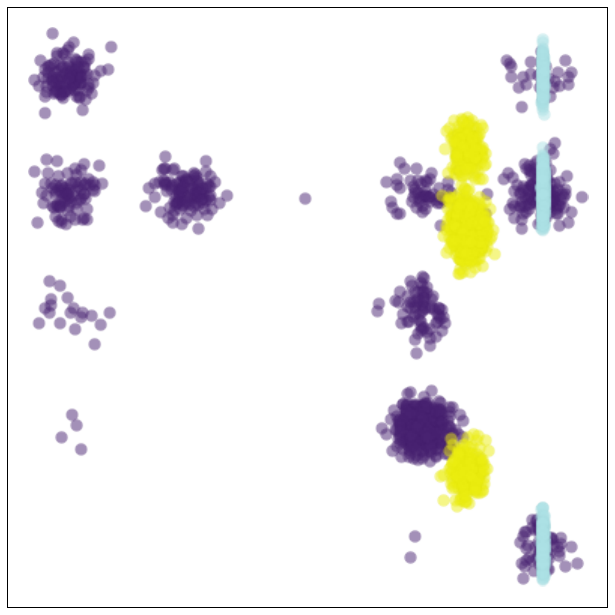}
            & \includegraphics[width=\sizebridgefig]{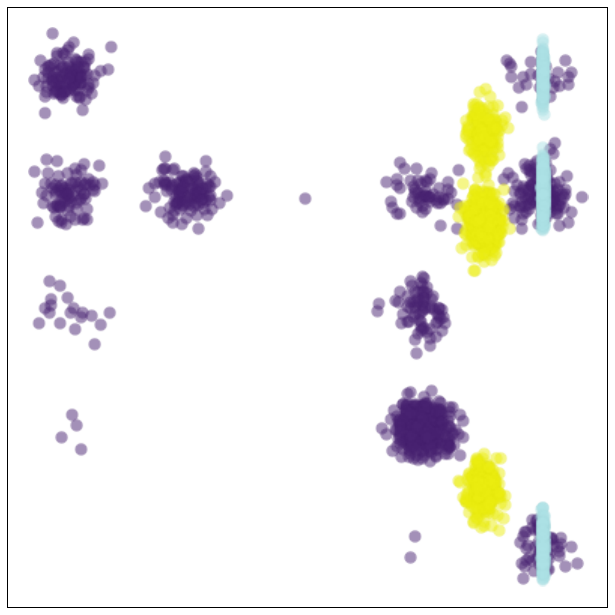}
            & \includegraphics[width=\sizebridgefig]{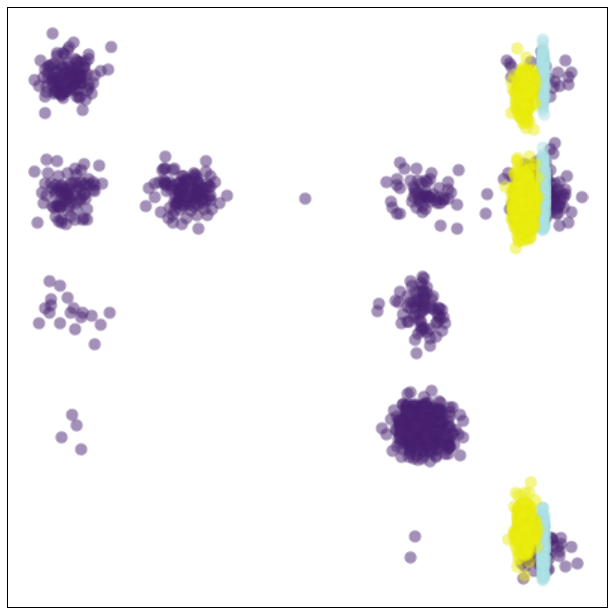}
        \end{tabular}
    \end{tabular}
    \caption{Display of samples from $\protect\filter{t}{\lmeas}{\lstate_t} \propto \protect\bwmarg{t}{\lstate_t} \protect\tfwtrans{t|0}{\lmeas}{\ltopstate_t}$ for the GM prior. In yellow the samples from $\protect\filter{t}{\lmeas}{}$, in purple those from the prior and in light blue those from the posterior $\protect\filter{0}{\lmeas}{}.$ $n$ is set to $500$.} 
    \label{fig:bridge_samples}
\end{figure}
The idea of using the forward diffused observation to guide the observed part of the state, as we do here through $\tfwtrans{t}{\lmeas}{\ltopstate_t}$, has been exploited in prior works but in a different way. For instance, in \cite{song2020score,song2022solving} the observed part of the state is directly replaced by the forward noisy observation and, as it has been noted \cite{trippe2023diffusion}, this introduces an irreducible bias. Instead, \algo\ weights the backward process by the density of the forward one conditioned on $\lmeas$, resulting in a natural and consistent algorithm.

We now establish the convergence of \algo\ with a general sequence of \emph{potentials} $\{ \pot{s}{\lmeas}{} \}_{s = 1} ^n$. We consider the following assumption on the sequence of potentials $\{ \pot{t}{\lmeas}{} \}_{t = 1} ^n$. 
\begin{hypA} 
    \label{assp:bounded_weights}
$\displaystyle \sup_{\lstate \in \R^{\dimx}} \tbwtrans{}{0}{\lstate}{\lmeas} / \pot{1}{\lmeas}{\lstate} < \infty$ and $\displaystyle\sup_{\lstate \in \R^{\dimx}} \int \pot{t}{\lmeas}{\lstate_{t}} \bwtrans{}{t}{\lstate}{\lstate_{t}} \rmd \lstate_{t} \big/ \pot{t+1}{\lmeas}{\lstate} < \infty$ for all $t \in [1:n-1]$.
\end{hypA}
The following exponential deviation inequality is standard and is a direct application of \cite[Theorem 10.17]{douc2014nonlinear}. In particular, it implies a $\bigo(1 / \sqrt{N})$ bound on the mean squared error $\| \filter{0}{N}{h} - \filter{0}{\lmeas}{h} \|_2$.
\begin{proposition} Assume \assp{\ref{assp:bounded_weights}}. There exist constants $c_{1,n}, c_{2,n} \in (0, \infty)$ such that, for all $N \in \nset$, $\varepsilon > 0$ and bounded function $h : \rset^{\dimx} \mapsto \rset$,
$
    \pP \left[ \big| \filter{0}{N}{h} - \filter{0}{\ltrmeas}{h} \big| \geq \varepsilon \right] \leq c_{1,n} \exp(- c_{2,n} N \varepsilon^2 \big/ |h|^2 _\infty )
$
where $|h|_\infty \eqdef \sup_{x \in \rset^{\dimx}} |h(x)|$.
\end{proposition}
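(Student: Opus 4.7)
The plan is to identify Algorithm~\ref{alg:algonoiseless} as a fully adapted auxiliary particle filter for the Feynman--Kac flow $\{\filter{t}{\lmeas}{}\}_{t=1}^{n}$ defined by \eqref{eq:FKrecursion} and then invoke a standard Hoeffding-type inequality for such samplers. First, I would make the Feynman--Kac structure explicit by writing \eqref{eq:FKrecursion} as a backward recursion with Markov kernel $\bwtrans{}{t}{\lstate_{t+1}}{\lstate_t}$ and one-step potential whose marginalisation in $\lstate_t$ is, by direct computation, equal to $\uweight{}{t}$ as defined in \eqref{eq:weightfunc}. The instrumental kernel $\bwopt{t}{\lmeas}{\lstate_{t+1}}{\lstate_t} \propto \bwtrans{}{t}{\lstate_{t+1}}{\lstate_t} \pot{t}{\lmeas}{\lstate_t}$ is then the optimal adapted proposal that absorbs the inner potential, so that only the marginal weight $\uweight{}{t}(\particle_{t+1})$ survives at the selection step. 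The final iteration, where $\bparticle_0^i$ is drawn from $\bbwtrans{}{0}{\particle_1^{\anc{i}{0}}}{\cdot}$ and reweighted by $\uweight{}{0}$, corresponds to integrating out $\ltopstate_0$ and retaining only $\lbottomstate_0$, which is legitimate because of the factorisation $\bwtrans{}{0}{\lstate_1}{\lstate_0} = \tbwtrans{}{0}{\lstate_1}{\ltopstate_0} \bbwtrans{}{0}{\lstate_1}{\lbottomstate_0}$.

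Second, \assp{\ref{assp:bounded_weights}} is exactly the quantitative statement that the weights are uniformly bounded: its first clause bounds $\uweight{}{0}$ and its second clause bounds $\uweight{}{t}$ for $t \in [1:n-1]$ (boundedness of $\uweight{}{n-1}$ uses the explicit Gaussian form of $\bwmarg{n}{}$ in place of a transition emanating from time $n$). Given these $n$ uniform bounds and a bounded test function $h$, \cite[Theorem~10.17]{douc2014nonlinear}, which establishes a Hoeffding inequality for SMC samplers under bounded normalised weights, applies directly to the empirical measure $\filter{0}{N}{} = N^{-1}\sum_i \delta_{\bparticle^i_0}$ output by the algorithm and yields the stated exponential deviation bound. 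The constants $c_{1,n}, c_{2,n}$ arise from the standard multiplicative propagation of particle-filter errors across the $n$ time steps and are finite for every fixed $n$.

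The main obstacle I anticipate is the purely notational work needed to match the algorithm to the precise APF framework of \cite[Chapter~10]{douc2014nonlinear}, in particular the final step which samples only the bottom coordinates via $\bbwtrans{}{0}{\particle_1^{\anc{i}{0}}}{\cdot}$ and reweights with $\uweight{}{0} = \tbwtrans{}{0}{\ltopstate_1}{\lmeas} / \tfwtrans{1|0}{\lmeas}{\ltopstate_1}$ rather than the bulk weight used for $t \ge 1$. Once this identification is carried out, the exponential inequality is a black-box consequence of Theorem~10.17 combined with the bounds supplied by \assp{\ref{assp:bounded_weights}}, and no further analysis is required.
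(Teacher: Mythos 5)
Your proposal follows exactly the route the paper takes: the paper's entire justification is that the bound is a direct application of \cite[Theorem 10.17]{douc2014nonlinear} once \assp{\ref{assp:bounded_weights}} guarantees bounded weights, and your identification of \Cref{alg:algonoiseless} as a fully adapted auxiliary particle filter for the flow \eqref{eq:FKrecursion} is precisely the (unwritten) matching step the authors rely on. No gap; the extra care you take with the final step involving $\bbwtrans{}{0}{\lstate_1}{\lbottomstate_0}$ and $\uweight{}{0}$ is the only part the paper leaves implicit.
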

We also furnish our estimator with an explicit non-asymptotic bound on its bias. Define $\smash{\bfilter{0}{N}{} = \pE \big[ \filter{0}{N}{} ]}$ where $\filter{0}{N}{} = N^{-1} \sum_{i = 1}^N \delta_{\bparticle^i _0}$ is the particle approximation produced by \Cref{alg:algonoiseless} and the expectation is with respect to the law of $(\particle^{1:N}_{1:n}, \anc{1:N}{0:n-1})$.
Define for all $t \in [1:n]$, $\filter{t}{\star}{\lstate_t} \propto \bwmarg{t}{\lstate_t} \int \delta_\lmeas(\rmd \ltopstate_0) \bwtrans{}{0|t}{\lstate_t}{\lstate_0} \rmd \lbottomstate_0 \eqsp,$ where $\bwtrans{}{0|t}{\lstate_t}{\lstate_0} \eqdef \int \left\{ \prod_{s = 0} ^{t-1} \bwtrans{}{s}{\lstate_{s+1}}{\lstate_s} \right\} \rmd \lstate_{1:t-1}$.
\begin{proposition} \label{lem:kldiv:filtering}
   It holds that
    \begin{equation}
        \kldivergence{\filter{0}\lmeas{}}{\bfilter{0}{N}{}} \leq \mathsf{C}^{\lmeas} _{0:n}(N-1)^{-1} + \mathsf{D}^{\lmeas} _{0:n} N^{-2} \eqsp,
    \end{equation}
    where  $\mathsf{D}^\lmeas _{0:n} > 0$,
    $\mathsf{C}^\lmeas _{0:n} \eqdef 
            \textstyle\sum_{t = 1} ^n \int \frac{\normconst_t / \normconst_0}{\pot{t}{\lmeas}{z_t}} \left\{ \int \delta_{\lmeas}(\rmd \ltopstate_0)  \bwtrans{}{0|t}{z_t}{x_0} \rmd \lbottomstate_0 \right\} \filter{t}{\star}{\rmd z_t} $
    and $\normconst_t \eqdef \int \pot{t}{\lmeas}{\lstate_t} \bwmarg{t}{\rmd \lstate_t}$ for all $t \in [1:n]$ and $\normconst_0 \eqdef \int \delta_\lmeas(\rmd \ltopstate_0) \bwmarg{0}{\lstate_0} \rmd \lbottomstate_0$. If furthermore \assp{\ref{assp:bounded_weights}} holds then both $\mathsf{C}^{\lmeas} _{0:n}$ and $\mathsf{D}^{\lmeas} _{0:n}$ are finite.
\end{proposition}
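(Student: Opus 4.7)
The plan is to recognize $\filter{0}{\lmeas}{}$ as the terminal marginal of a Feynman--Kac flow driven by the backward kernels $\bwtrans{}{t}{}{}$ and potentials $\{\pot{t}{\lmeas}{}\}_{t=1}^n$, to view \Cref{alg:algonoiseless} as its auxiliary particle filter implementation, and then to invoke a standard non-asymptotic KL bias bound expressed via $\chi^2$-type quantities.

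First I would use \eqref{eq:FKrecursion} to display $\{\filter{t}{\lmeas}{}\}_{t=1}^n$ as a Feynman--Kac flow with initial law proportional to $\bwmarg{n}{}\pot{n}{\lmeas}{}$, backward transitions $\bwtrans{}{t-1}{\lstate_t}{\lstate_{t-1}}$, and multiplicative potentials $\pot{t-1}{\lmeas}{\lstate_{t-1}}/\pot{t}{\lmeas}{\lstate_t}$, terminated at $t=0$ by the smoothing kernel $\bbwtrans{}{0}{\lstate_1}{\lbottomstate_0}$ together with the potential $\tbwtrans{}{0}{\ltopstate_1}{\lmeas}/\pot{1}{\lmeas}{\lstate_1}$. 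The partition functions are exactly the $\normconst_t$ of the statement, and the terminal marginal equals $\filter{0}{\lmeas}{}$. The pair $(\bwopt{t}{\lmeas}{},\uweight{}{t})$ from \eqref{eq:optkernel}--\eqref{eq:weightfunc} is a valid APF proposal for this flow, so $\bfilter{0}{N}{}=\pE[\filter{0}{N}{}]$ is the expected empirical measure of the corresponding particle filter.

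Second, I would apply a non-asymptotic KL bias bound
\begin{equation*}
\kldivergence{\filter{0}{\lmeas}{}}{\bfilter{0}{N}{}} \leq \frac{1}{N-1}\sum_{t=1}^n \chi^2_t + \frac{1}{N^2}\sum_{t=1}^n R_t,
\end{equation*}
derived by writing $\pE[\filter{0}{N}{h}]=\pE[\gamma^N_0(h)/\gamma^N_0(\1)]$ using the unbiasedness of the unnormalized particle estimators, and then Taylor-expanding $x\mapsto\log(1+x)$ around $0$. The linear term vanishes by unbiasedness, the quadratic term produces a one-step conditional variance of the Feynman--Kac weights which the many-body identity converts into the $\chi^2_t$ sum, and the cubic remainder yields the $R_t$.

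Third, specializing to the MCGdiff weights, the one-step $\chi^2_t$ discrepancy is the ratio of the ideal intermediate target $\filter{t}{\star}{}$ over the proposal-induced density $\propto\pot{t}{\lmeas}{}\bwmarg{t}{}$, integrated against $\filter{t}{\star}{}$. Unwinding the normalizations yields exactly the contribution $(\normconst_t/\normconst_0)\cdot\{\int\delta_\lmeas(\rmd\ltopstate_0)\bwtrans{}{0|t}{z_t}{x_0}\rmd\lbottomstate_0\}/\pot{t}{\lmeas}{z_t}$ integrated against $\filter{t}{\star}{\rmd z_t}$, recovering $\mathsf{C}^\lmeas_{0:n}$. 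The constant $\mathsf{D}^\lmeas_{0:n}$ collects the cubic remainders $R_t$, bounded by higher powers of the same ratios; under \assp{\ref{assp:bounded_weights}} these ratios are uniformly bounded, ensuring $\mathsf{C}^\lmeas_{0:n},\mathsf{D}^\lmeas_{0:n}<\infty$.

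The main technical obstacle is the terminal step. Since $\bfilter{0}{N}{}$ becomes absolutely continuous only after the smoothing by $\bbwtrans{}{0}{}$, whereas the intermediate $\pE[\filter{t}{N}{}]$ for $t\geq 1$ are averages of point masses and would give an infinite KL against $\filter{t}{\lmeas}{}$, one must absorb $\bbwtrans{}{0}{}$ into the very last Feynman--Kac update before applying the Taylor expansion, so that the KL on the left-hand side is measured between absolutely continuous objects. Once this reformulation is in place the remaining steps are standard Feynman--Kac bookkeeping driven by \assp{\ref{assp:bounded_weights}}.
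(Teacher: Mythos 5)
Your proposal follows the same overall architecture as the paper's proof: the leading constant $\mathsf{C}^{\lmeas}_{0:n}$ is indeed a sum of $\chi^2$-type second moments of $\rmd\filter{t}{\star}{}/\rmd\filter{t}{\lmeas}{}$, the conditional-SMC (``many-body'') identity is the central tool, and \assp{\ref{assp:bounded_weights}} enters exactly where you say it does. However, two steps of your sketch would not survive execution as written. First, the middle step is not a Taylor expansion of $\log(1+x)$ in which ``the linear term vanishes by unbiasedness'': that mechanism bounds the bias of $\pE[\filter{0}{N}{h}]$ for a \emph{fixed} test function $h$, whereas a KL bound requires control of the density $\rmd\bfilter{0}{N}{}/\rmd\filter{0}{\lmeas}{}$ itself. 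The paper obtains this via the exact identity $\bfilter{0:n}{N}{\rmd z_{0:n}} = \pE_{z_{0:n}}\big[ N^{n}(\normconst_0/\normconst_n)\big/\prod_{s=0}^{n-1}\sumweight{s}\big]\,\filter{0:n}{\lmeas}{\rmd z_{0:n}}$ (\Cref{lem:rnikod}), two applications of Jensen's inequality, and then an \emph{exact} backward induction computing $\pE_{z_{0:n}}\big[\prod_{s=0}^{n-1} N^{-1}\sumweight{s}\big]$ under the conditional particle law (\Cref{lem:closed_form_bound}); the conclusion follows from $\log(1+x)\le x$. That induction, which isolates the single-pinned-trajectory contributions as the $O(1/N)$ term and pushes everything else into the $N^{-2}$ remainder, is the genuine technical content of the proof and cannot be dismissed as standard bookkeeping. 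Second, the ``main technical obstacle'' you identify is not an obstacle: $\pE[\filter{t}{N}{}]$ is the expectation of the empirical measure over the randomness of the particle positions, hence absolutely continuous at every $t$ (the proposals are Gaussian), so no intermediate KL is infinite; the terminal kernel $\bbwtrans{}{0}{}{}$ is accommodated simply by pinning $\delta_{\lmeas}(\rmd\ltopstate_0)$ inside the smoothing distribution $\filter{0:n}{\lmeas}{}$ rather than by any reformulation of the flow.
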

The proof of \Cref{lem:kldiv:filtering} is postponed to \Cref{proof:kldiv:filtering}. \assp{\ref{assp:bounded_weights}} is an assumption on the equivalent of the weights $\{\uweight{}{t}\}_{t = 0}^n$ with a general sequence of potentials $\{ \pot{t}{\lmeas}{}\}_{t = 1} ^n$ and is not restrictive as it can be satisfied by setting for example $\pot{s}{\lmeas}{\lstate_s} = \tfwtrans{s|0}{\lmeas}{\ltopstate_s} + \delta$ where $\delta > 0$. The resulting algorithm is then only a slight modification of the one described above, see \Cref{sec:proof_lems} for more details. 
It is also worth noting that \Cref{lem:kldiv:filtering} combined with Pinsker's inequality implies that the bias of \algo\ goes to $0$ with the number of particle samples $N$ for fixed $n$. We have chosen to present a bound in Kullback--Leibler (KL) divergence, inspired by \cite{andrieu2018uniform, huggins2019sequential}, as it allows an explicit dependence on the modeling choice $\{ \pot{s}{\lmeas}{} \}_{s = 1} ^n$, see \Cref{lem:closed_form_bound}.
Finally, unlike the theoretical guarantees established for \smcdiff\ in \cite{trippe2023diffusion}, proving the asymptotic exactness of our methodology w.r.t. to the generative model posterior does not require having $\bwmarg{s+1}{\lstate_{s+1}} \bwtrans{}{s}{\lstate_{s+1}}{\lstate_{s}} = \bwmarg{s}{\lstate_s} \fwtrans{s+1}{\lstate_s}{\lstate_{s+1}}$ for all $\smash{s \in [0:n-1]}$, 
which does not in hold in practice. 
\paragraph{Noisy case.}
\label{sec:noisy}
We consider the case $\noisestd > 0$. The posterior density is given by
$
    \noisyfilter{0}{\lmeas}{\lstate_0} \propto \pot{0}{\lmeas}{\ltopstate_0} \bwmarg{0}{\lstate_0}$, where $\pot{0}{\lmeas}{\lstate_0} \eqdef \normpdf(\lmeas; \ltopstate_0, \noisestd^2 \Id_{\dimorig})$.
In what follows, assume that there exists $\tau \in [1:n]$ such that $\sigma^2 = (1 - \alphacumprod{\tau}{}) \big/ \alphacumprod{\tau}{}$. We denote $\tilde\lmeas_\tau = \alphacumprod{\tau}{}^{1/2} \lmeas$. We can then write that 
\begin{equation}
    \label{eq:pot_as_foward}
    \pot{0}{\lmeas}{\ltopstate_0} = \alphacumprod{\tau}{}^{1/2} \cdot \normpdf(\tilde\lmeas_\tau; \alphacumprod{\tau}{}^{1/2} \lstate_0, (1 - \alphacumprod{\tau}{}) \cdot \Id_{\dimorig})
    = \alphacumprod{\tau}{}^{1/2} \cdot \tfwtrans{\tau|0}{\ltopstate_0}{\tilde\lmeas_\tau} \eqsp,
\end{equation}
which hints that the likelihood function $\pot{0}{\lmeas}{}$ is closely related to the forward process \eqref{eq:forward_def}. We may then write the posterior $\noisyfilter{0}{\lmeas}{\lstate_0}$ as 
$\filter{0}{\lmeas}{\lstate_0}  \propto \tfwtrans{\tau|0}{\ltopstate_0}{\tilde\lmeas_\tau} \bwmarg{0}{\lstate_0} \propto \int \delta_{\tilde\lmeas_\tau}(\rmd \ltopstate_\tau) \fwtrans{\tau|0}{\lstate_0}{\lstate_\tau} \bwmarg{0}{\lstate_0} \rmd \lbottomstate_\tau$.
Next, assume that the forward process \eqref{eq:forward_def} is the reverse of the backward one \eqref{eq:bwker:defn}, i.e. that 
\begin{equation} 
    \label{eq:bweqfw}
    \bwmarg{t}{\lstate_t} \fwtrans{t+1}{\lstate_t}{\lstate_{t+1}} = \bwmarg{t+1}{\lstate_{t+1}} \bwtrans{}{t}{\lstate_{t+1}}{\lstate_t}\eqsp,  \quad \forall t \in [0:n-1] \eqsp.
\end{equation} 
This is similar to the assumption made in \smcdiff\  \cite{trippe2023diffusion}. Then, it is easily seen that it implies $\bwmarg{0}{\lstate_0} \fwtrans{\tau|0}{\lstate_0}{\lstate_{\tau}} = \bwmarg{\tau}{\lstate_{\tau}} \bwtrans{}{0|\tau}{\lstate_{\tau}}{\lstate_0}$ and thus 
\begin{equation}
    \label{eq:noiseless_at_tau}
    \filter{0}{\lmeas}{\lstate_0} = {\int \bwtrans{}{0|\tau}{\lstate_\tau}{\lstate_0} \delta_{\tilde\lmeas_\tau}(\rmd \ltopstate_\tau) \bwmarg{\tau}{\lstate_\tau} \rmd \lbottomstate_\tau}\big/{\int \delta_{\tilde\lmeas_\tau}(\rmd \overline{z}_\tau) \bwmarg{\tau}{z_\tau} \rmd \underline{z}_\tau}
    = \int \bwtrans{}{0|\tau}{\cat{{\tilde\lmeas_\tau}}{\lbottomstate_\tau}}{\lstate_0} \filter{\tau}{\tilde\lmeas_\tau}{\rmd \lbottomstate_\tau} \eqsp,
\end{equation}
where $\filter{\tau}{\tilde\lmeas_\tau}{\lbottomstate_\tau} \propto \bwmarg{\tau}{\cat{\tilde\lmeas_\tau}{\lbottomstate_\tau}}$.
\eqref{eq:noiseless_at_tau} highlights that solving the inverse problem \eqref{eq:inpainting} with $\noisestd > 0$ is equivalent to solving an inverse problem on the intermediate state $\state_{\tau} \sim \bwmarg{\tau}{}$ with \emph{noiseless} observation $\tilde\lmeas_\tau$ of the $\dimorig$ top coordinates and then propagating the resulting posterior back to time $0$ with the backward kernel $\bwtrans{}{0|\tau}{}{}$.
The assumption \eqref{eq:bweqfw} does not always holds in realistic settings.
Therefore, while \eqref{eq:noiseless_at_tau} also holds only approximately in practice, we can still use it as inspiration for designing potentials when the assumption is not valid.
Consider then $\{ \pot{t}{\lmeas}{} \}_{t = \tau} ^n$ and sequence of probability measures $\{ \filter{t}{\lmeas}{} \}_{t = \tau} ^n$ defined for all $t \in [\tau: n]$ as
$
    \filter{t}{\lmeas}{x_t} \propto \pot{t}{\lmeas}{\lstate_t} \bwmarg{t}{\lstate_t}$, where $\pot{t}{\lmeas}{\lstate_t }\eqdef\normpdf\left(\lstate_t; \tilde\lmeas_\tau, \big(1 - (1 - \inflationstd) \alphacumprod{t}{} / \alphacumprod{\tau}{} \big) \Id_{\dimorig}\right)$, $\inflationstd \geq 0$.
In the case of $\inflationstd = 0$, we have $\pot{t}{\lmeas}{\lstate_t} = \tfwtrans{t|\tau}{\tilde\lmeas_\tau}{\ltopstate_t}$ for $t \in [\tau + 1:n]$ and $\filter{\tau}{\lmeas}{} = \filter{\tau}{\tilde\lmeas_\tau}{}$. The recursion \eqref{eq:FKrecursion} holds for $t \in [\tau:n]$ and assuming $\inflationstd > 0$, we find that
$ 
    \filter{0}{\lmeas}{\lstate_0} \propto {\pot{0}{\lmeas}{\lstate_0}} \int {\pot{\tau}{\lmeas}{\lstate_\tau}}^{-1} \bwtrans{}{0|\tau}{\lstate_\tau}{\lstate_0} \filter{\tau}{\lmeas}{\rmd \lstate_\tau} \eqsp,
$
which resembles the recursion \eqref{eq:noiseless_at_tau}. In practice we take $\kappa$ to be small in order to mimick the Dirac delta mass at $\ltopstate_\tau$ in \eqref{eq:noiseless_at_tau}. Having a particle approximation $\filter{\tau}{N}{} = N^{-1} \sum_{i = 1}^N \delta_{\particle^i _\tau}$ of $\filter{\tau}{\lmeas}{}$ by adapting \Cref{alg:algonoiseless}, we estimate $\filter{0}{\lmeas}{}$ with $
    \filter{0}{N}{} = \textstyle \sum_{i = 1}^N \weight{i}{0} \delta_{\particle^i _0}$ where $\particle^i _0 \sim \bwtrans{}{0|\tau}{\particle^i _\tau}{\cdot }$ and $\weight{i}{0} \propto \pot{0}{\lmeas}{\particle^i _0}$.
In the next section we extend this methodology to general linear Gaussian observation models.
Finally, note that \eqref{eq:noiseless_at_tau} allows us to extend \smcdiff\, to handle noisy inverse problems in a principled manner. This is detailed in \Cref{sec:smc_diff_ext}.
\subsection{Extension to general linear inverse problems}
\label{sec:gen_lin_ip}
Consider $\ormeas = \m{A} \state + \noisestd \noise$
where $\m{A} \in \mset{\dimorig}{\dimx}$, 
$\noise \sim \ndist{0_{\dimorig}}{\Id_{\dimorig}}$ and $\noisestd \geq 0$ and the singular value decomposition (SVD) $\m{A} = \m{U} \m{S} \smash{\m{\overline{V}}}^T$,
where $\m{\overline{V}} \in \mset{\dimx}{\dimtr}$, $\m{U} \in \mset{\dimorig}{\dimtr}$ are two orthonormal matrices, 
and $\m{S} \in \mset{\dimtr}{\dimtr}$ is diagonal. 
For simplicity, it is assumed that the singular values satisfy $s_1 > \cdots > s_\dimtr > 0$.
Set $\dimb= \dimx - \dimtr$. 
Let $\m{\underline{V}} \in \mset{\dimx}{\dimb}$ be an orthonormal matrix of which the columns complete those of $\m{\overline{V}}$ into an orthonormal basis of $\rset^{\dimx}$, i.e. $\m{\underline{V}}^T \m{\underline{V}}= \idm_{\dimb}$ 
and $\m{\underline{V}}^T \m{\overline{V}}=\zerom{\dimb}{\dimtr}$. We define $\m{V}= [\m{\overline{V}}, \m{\underline{V}}] \in \mset{\dimx}{\dimx}$. In what follows, for a given $\ltrstate \in \rset^\dimx$ we write $\topvec{\ltrstate} \in \rset^\dimtr$ for its top $\dimtr$ coordinates and $\botvec{\ltrstate} \in \rset^\dimb$ for the remaining coordinates.
Setting $\trstate \eqdef \m{V}^T \state$ and $\trmeas \eqdef \m{S}^{-1} \m{U}^T \ormeas$ and multiplying the measurement equation by $\m{S}^{-1} \m{U}^T$ yields
\[ 
    \trmeas =  \trtopstate + \noisestd \m{S}^{-1} \tilde\noise \eqsp, \quad \widetilde{\varepsilon} \sim \ndist{0}{\idm_{\dimtr}} \eqsp.
\]
In this section, we focus on solving this linear inverse problem in the orthonormal basis defined by $\m{V}$ using the methodology developed in the previous sections.
This prompts us to define the diffusion based generative model in this basis.
As $\m{V}$ is an orthonormal matrix, the law of $\trstate_0 = \m{V}^T \state_0$ is $\bwmargV{0}{\ltrstate_0} \eqdef \bwmarg{0}{\m{V} \ltrstate_0}$.
By definition of $\bwmargV{0}{}$ and the fact that $\|\m{V} \ltrstate \|_2 =  \| \ltrstate \|_2$ for all $\ltrstate \in \rset^{\dimx}$ we have that
\begin{equation*}
  \textstyle \bwmargV{0}{\ltrstate_0} = \int \bwtrans{}{0}{\lstate_1} {\m{V} \ltrstate_0} \left\{ \prod_{s = 1}^{n-1} \bwtrans{}{s}{\lstate_{s+1}}{\rmd \lstate_s} \right\} \bwmarg{n}{\rmd \lstate_n}
  = \int \bwtransV{}{0}{\ltrstate_1}{\ltrstate_0} \left\{ \prod_{s = 1}^{n-1} \bwtransV{}{s}{\ltrstate_{s+1}}{\rmd \ltrstate_s} \right\} \bwmargV{n}{\rmd \ltrstate_n}
\end{equation*}
where for all $\smash{s \in [1:n]}$,
$
\smash{\bwtransV{}{s-1}{\ltrstate_{s}}{\ltrstate_{s-1}} \eqdef \normpdf(\ltrstate_{s-1}; \bwmeanV_{s}(\ltrstate_{s}), \sigma^2 _{s} \Id_{\dimx})}$, where $\smash{\bwmeanV_{s}(\ltrstate_{s}) \eqdef \m{V}^T \bwmean_{s}(\m{V} \ltrstate_{s})}
$.
The transition kernels $\{\bwtransV{}{s}{}{}\}_{s = 0} ^{n-1}$ thus define a diffusion based model in the basis $\m{V}$.
In what follows we write $\topvec{\bwmeanV}_{s}(\ltrstate_{s})$ for the first $\dimtr$ coordinates of $\bwmeanV_{s}(\ltrstate_{s})$ and $\botvec{\bwmeanV}_{s}(\ltrstate_{s})$ the last $\dimb$ coordinates.
We denote by $\bwmargV{s}{}$ the time $s$ marginal of the backward process.
\vspace{-.1cm}
\paragraph*{Noiseless.} In this case the target posterior is $\filter{0}{\ltrmeas}{\ltrstate_0} \propto \bwmargV{0}{\cat{\ltrmeas}{\ltrbottomstate_0}}$.
The extension of \cref{alg:algonoiseless} is straight forward;
it is enough to replace $\lmeas$ with $\ltrmeas$ (=$\m{S}^{-1} \m{U}^T \lmeas$) and the backward kernels $\left\{ \bwtrans{}{t}{}{} \right\}_{t = 0} ^{n-1}$ with $\left\{ \bwtransV{}{t}{}{} \right\}_{t = 0} ^{n-1}$.
\vspace{-.1cm}
\paragraph*{Noisy.} The posterior density is then $\smash{\noisyfilter{0}{\ltrmeas}{\ltrstate_0} \propto \pot{0}{\ltrmeas}{\ltrtopstate_0} \bwmargV{0}{\ltrstate_0}}$, where 
\begin{equation} 
  \label{eq:pot:forward}
\pot{0}\ltrmeas{\ltrtopstate_{0}} = \textstyle \prod_{i = 1}^{\dimtr} \normpdf(\vecidx{\ltrmeas}{}{i}; \vecidx{\ltrtopstate}{0}{i}, (\noisestd / s_i)^2). 
\end{equation}
As in \Cref{sec:noisy}, assume that there exists $\{ \tau_i \}_{i = 1}^\dimtr \subset [1:n]$ such that $\alphacumprod{\tau_i}{} \noisestd^2 = (1 - \alphacumprod{\tau_i}{}) s^2 _i$ and define for all $i \in [1:\dimtr]$, $\smash{\tilde\ltrmeas_{i} \eqdef \alphacumprod{\tau_i}{}^{1/2} \vecidx{\ltrmeas}{}{i}}$.
Then we can write the potential $\pot{0}{\ltrmeas}{}$ in a similar fashion to \eqref{eq:pot_as_foward} as the product of forward processes from time $0$ to each time step $\tau_i$, i.e.
$
   \pot{0}{\ltrmeas}{\ltrstate_0} = \textstyle \prod_{i = 1}^{\dimtr} \alphacumprod{\tau_i}{}^{1/2} \normpdf(\tilde\ltrmeas_{i}; \alphacumprod{\tau_i}{}^{1/2} \vecidx{\ltrstate}{0}{i}, (1 - \alphacumprod{\tau_i}{}))
$.
Writing the potential this way allows us to generalize \eqref{eq:noiseless_at_tau} as follows.
Denote for $\smash{\ell \in \intvect{1}{\dimx}}$, $\projidx{\ltrstate}{}{\ell} \in \R^{\dimx-1}$ the vector $\ltrstate$ with its $\ell$-th coordinate removed. Define 
\[ 
   \filter{\tau_1:n}{\tilde\ltrmeas}{\rmd \ltrstate_{\tau_1:n}} \propto \left\{ \textstyle \prod_{i = 1}^{\dimtr-1} \bwtransV{}{\tau_i | \tau_{i + 1}}{\ltrstate_{\tau_{i+1}}}{\ltrstate_{\tau_{i}}} \delta_{\tilde\ltrmeas_i}(\rmd \vecidx{\ltrstate}{\tau_i}{i}) \rmd \projidx{\ltrstate}{\tau_i}{i} \right\} \bwmargV{\tau_\dimtr}{\ltrstate_{\tau_\dimtr}} \delta_{\tilde\ltrmeas_\dimtr}(\rmd \vecidx{\ltrstate}{\tau_\dimtr}{\dimtr}) \rmd \projidx{\ltrstate}{\tau_\dimtr}{\dimtr} \eqsp,
\]
which corresponds to the posterior of a noiseless inverse problem on the joint states $\trstate_{\tau_1:n} \sim \bwmargV{\tau_1:n}{}$ with noiseless observations $\tilde\ltrmeas_{\tau_i}$ of $\trstate_{\tau_i}[i]$ for all $i \in [1:\dimtr]$.
\begin{proposition}
    \label{prop:noisytonoiseless}
    Assume that $\bwmargV{s+1}{\ltrstate_{s+1}} \bwtransV{}{s}{\ltrstate_{s+1}}{\ltrstate_s} = \bwmargV{s}{\ltrstate_s} \fwtrans{s+1}{\ltrstate_s}{\ltrstate_{s+1}}$ for all $\smash{s \in [0:n-1]}$. Then it holds that $ \noisyfilter{0}{\ltrmeas}{\ltrstate_{0}} \propto \int \bwtransV{}{0|\tau_1}{\ltrstate_{\tau_1}}{\ltrstate_0} \filter{\tau_1:n}{\tilde\ltrmeas}{\rmd \ltrstate_{\tau_1:n}}$.
\end{proposition}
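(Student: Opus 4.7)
The plan is to transform $\noisyfilter{0}{\ltrmeas}{\ltrstate_0} \propto \pot{0}{\ltrmeas}{\ltrtopstate_0}\,\bwmargV{0}{\ltrstate_0}$ in two stages. First, I would rewrite the likelihood $\pot{0}{\ltrmeas}{\ltrtopstate_0}$ as the marginal of a single ``forward chain'' threaded through the times $0 = \tau_0 < \tau_1 < \cdots < \tau_\dimtr$; this ordering is automatic since $\alphacumprod{\tau_i}{} = s_i^2/(s_i^2+\noisestd^2)$ is strictly decreasing in $i$ (because $s_1 > \cdots > s_\dimtr > 0$) while $\alphacumprod{\cdot}{}$ is strictly decreasing in time. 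Second, I would apply the reverse-is-forward hypothesis iteratively to flip this chain around $\ltrstate_{\tau_\dimtr}$ and read off the expression announced in the statement.

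For the first stage, with the convention $\ltrstate_{\tau_0} \eqdef \ltrstate_0$, the identity to establish is
\[
  \pot{0}{\ltrmeas}{\ltrtopstate_0} \;\propto\; \int \prod_{i=1}^{\dimtr} \fwtrans{\tau_i|\tau_{i-1}}{\ltrstate_{\tau_{i-1}}}{\ltrstate_{\tau_i}}\,\delta_{\tilde\ltrmeas_i}(\rmd \vecidx{\ltrstate}{\tau_i}{i})\,\rmd \projidx{\ltrstate}{\tau_i}{i} \eqsp.
\]
This follows from a coordinate-wise computation: each $\fwtrans{\tau_i|\tau_{i-1}}{}{}$ is isotropic Gaussian and hence factorises across the $\dimx$ coordinates, so one can integrate coordinate by coordinate. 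For $j \notin [1:\dimtr]$ the chain of Gaussian kernels seen by coordinate $j$ is purely Lebesgue and collapses to $1$ by Chapman-Kolmogorov. For $j \in [1:\dimtr]$ the only Dirac constraint encountered by coordinate $j$ is $\vecidx{\ltrstate}{\tau_j}{j} = \tilde\ltrmeas_j$; the unconstrained tail $\vecidx{\ltrstate}{\tau_{j+1}}{j}, \ldots, \vecidx{\ltrstate}{\tau_\dimtr}{j}$ again integrates to $1$, while the head from $\vecidx{\ltrstate}{0}{j}$ up to $\tilde\ltrmeas_j$ collapses by Chapman-Kolmogorov to $\normpdf(\tilde\ltrmeas_j;\alphacumprod{\tau_j}{}^{1/2}\vecidx{\ltrstate}{0}{j}, 1-\alphacumprod{\tau_j}{})$, which matches the $j$-th factor of \eqref{eq:pot:forward} up to the harmless constant $\alphacumprod{\tau_j}{}^{1/2}$.

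For the second stage, iterating the hypothesis $\bwmargV{s+1}{\ltrstate_{s+1}}\,\bwtransV{}{s}{\ltrstate_{s+1}}{\ltrstate_s} = \bwmargV{s}{\ltrstate_s}\,\fwtrans{s+1}{\ltrstate_s}{\ltrstate_{s+1}}$ along each interval $[\tau_{i-1}:\tau_i]$ promotes it to the multi-step identity $\bwmargV{\tau_{i-1}}{\ltrstate_{\tau_{i-1}}}\,\fwtrans{\tau_i|\tau_{i-1}}{\ltrstate_{\tau_{i-1}}}{\ltrstate_{\tau_i}} = \bwmargV{\tau_i}{\ltrstate_{\tau_i}}\,\bwtransV{}{\tau_{i-1}|\tau_i}{\ltrstate_{\tau_i}}{\ltrstate_{\tau_{i-1}}}$. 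Cascading these equalities for $i = 1, \ldots, \dimtr$ yields
\[
  \bwmargV{0}{\ltrstate_0}\prod_{i=1}^{\dimtr}\fwtrans{\tau_i|\tau_{i-1}}{\ltrstate_{\tau_{i-1}}}{\ltrstate_{\tau_i}} \;=\; \bwtransV{}{0|\tau_1}{\ltrstate_{\tau_1}}{\ltrstate_0}\prod_{i=1}^{\dimtr-1}\bwtransV{}{\tau_i|\tau_{i+1}}{\ltrstate_{\tau_{i+1}}}{\ltrstate_{\tau_i}}\,\bwmargV{\tau_\dimtr}{\ltrstate_{\tau_\dimtr}} \eqsp.
\]
Multiplying the first displayed equation by $\bwmargV{0}{\ltrstate_0}$, substituting this identity inside the integral, pulling the $\ltrstate_0$-dependent factor $\bwtransV{}{0|\tau_1}{\ltrstate_{\tau_1}}{\ltrstate_0}$ out, and recognising what remains as the definition of $\filter{\tau_1:n}{\tilde\ltrmeas}{\rmd \ltrstate_{\tau_1:n}}$ delivers the claim.

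The main hurdle is the coordinate-wise bookkeeping of the first stage: it is essential that the $\tau_i$ be distinct and monotone, so that every observed coordinate $j$ admits a well-defined ``free tail'' that can be integrated away without colliding with a Dirac mass supported on a different coordinate. Once this auxiliary integral representation of the potential is in hand, the remainder is only a repeated application of the detailed-balance-type assumption.
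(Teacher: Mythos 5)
Your proof is correct and follows essentially the same route as the paper: both express the potential $\pot{0}{\ltrmeas}{\ltrtopstate_0}$ as a marginal of the forward process pinned to $\tilde\ltrmeas_i$ in coordinate $i$ at time $\tau_i$ (via the coordinate-wise factorisation of the forward kernels and Chapman--Kolmogorov), and then flip the chain using the detailed-balance hypothesis to read off $\bwtransV{}{0|\tau_1}{}{}$ integrated against $\filter{\tau_1:n}{\tilde\ltrmeas}{}$. The only difference is presentational: the paper carries the full joint $\Gamma^{\diffltrmeas}_{0:n}$ over all times $0{:}n$ before marginalising, whereas you work directly with the skeleton chain at $\{0,\tau_1,\dots,\tau_\dimtr\}$ using composed kernels, which is equivalent.
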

The proof of \Cref{prop:noisytonoiseless} is given in \Cref{proof:noisytonoiseless}. We have shown that sampling from $\filter{0}{\ltrmeas}{}$ is equivalent to sampling from $\filter{\tau_1:n}{\tilde\ltrmeas}{}$ then propagating the final state $\trstate_{\tau_1}$ to time $0$ according to $\bwtransV{}{0|\tau_1}{}{}$.
Therefore, as in \eqref{eq:pot_as_foward}, we define $\{ \pot{t}{\lmeas}{} \}_{t = \tau} ^n$ and $\{ \filter{t}{\ltrmeas}{} \}_{t = \tau} ^n$ for all $\smash{t \in [\tau_1: n]}$ by $\smash{\filter{t}{\ltrmeas}{\ltrstate_t} \propto \pot{t}{\ltrmeas}{\ltrstate_t} \bwmarg{t}{\ltrstate_t}}$ and
$\smash{\pot{t}{\ltrmeas}{\ltrstate_t} \eqdef \prod_{i = 1}^{\tau(t)} \normpdf\left(\ltrstate_t; \tilde\ltrmeas_i, 1 - (1 - \inflationstd) \alphacumprod{t}{} / \alphacumprod{\tau_i}{} \right)}$, $\smash{\inflationstd > 0}$.
We obtain a particle approximation of $\filter{\tau_1}{\ltrmeas}{}$ using a particle filter with proposal kernel and weight function
$
\smash{\bwtransV{\ltrmeas}{t}{\ltrstate_{t+1}}{\ltrstate_t} \propto \pot{t}{\ltrmeas}{\ltrstate_t} \bwtrans{}{t}{\ltrstate_{t+1}}{\ltrstate_t}}$, $\smash{\uweight{}{t}(\ltrstate_{t+1}) = {\int \pot{t}{\ltrmeas}{\ltrstate_t} \bwtrans{}{t}{\ltrstate_{t+1}}{\rmd \ltrstate_t}}\big/\pot{t+1}{\ltrmeas}{\ltrstate_{t+1}}}
$,
which are both available in closed form.
\section{Numerics}
\label{sec:numerics}
The focus of this work is on providing an algorithm that consistently approximates the posterior distribution of a linear inverse problem with Gaussian noise.
A prerequisite for quantitative evaluation in ill-posed inverse problems in a Bayesian setting is to have access to samples of the posterior distribution.
This generally requires having at least an unnormalized proxy of the posterior density, so that one can run MCMC samplers such as the No U-turn sampler (NUTS) \cite{Hoffman2011}.
Therefore, this section focus on mixture models of two types of basis distribution, the Gaussian and the Funnel distributions.
We then present a brief illustration of \algo\ on image data. However, in this setting, the actual posterior distribution is unknown and the main goal is to explore the potentially multimodal posterior distribution, which makes a comparison with a "real image" meaningless.
Therefore, metrics such as Fréchet Inception Distance (FID) and LPIPS score, which require comparison to a ground truth, are not useful for evaluating Bayesian reconstruction methods in such settings.\footnote{The code for the experiments is available at \url{https://anonymous.4open.science/r/mcgdiff/README.md}.}
\paragraph{Mixture Models:}
\begin{figure}
    \centering
    \begin{tabular}{
        M{0.0\linewidth}@{\hspace{0\tabcolsep}}
        M{0.45\linewidth}@{\hspace{0\tabcolsep}}
        M{0.45\linewidth}@{\hspace{0\tabcolsep}}
        M{0.05\linewidth}@{\hspace{0\tabcolsep}}
    }
        \rotatebox[x=7pt,y=-12pt]{90}{\tiny $x_2$} &
        \begin{tabular}{
            M{0.25\linewidth}@{\hspace{0\tabcolsep}}
            M{0.25\linewidth}@{\hspace{0\tabcolsep}}
            M{0.25\linewidth}@{\hspace{0\tabcolsep}}
            M{0.25\linewidth}@{\hspace{0\tabcolsep}}
        }
            \algo & \ddrm & \dps & \rnvp \\
            \includegraphics[width=\hsize]{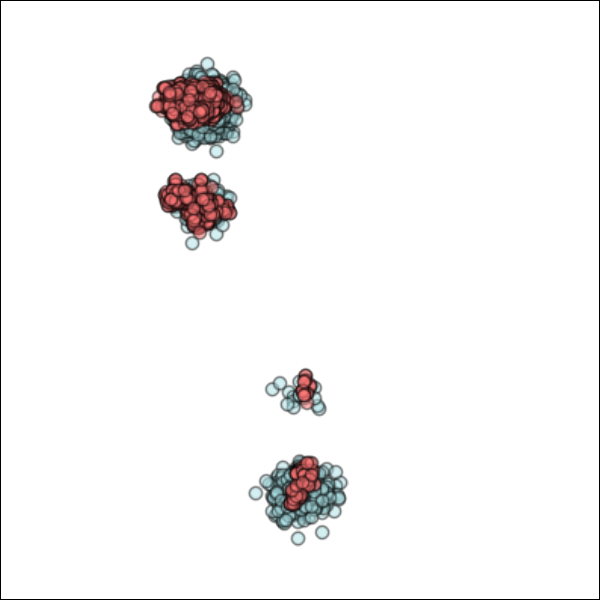}
            & \includegraphics[width=\hsize]{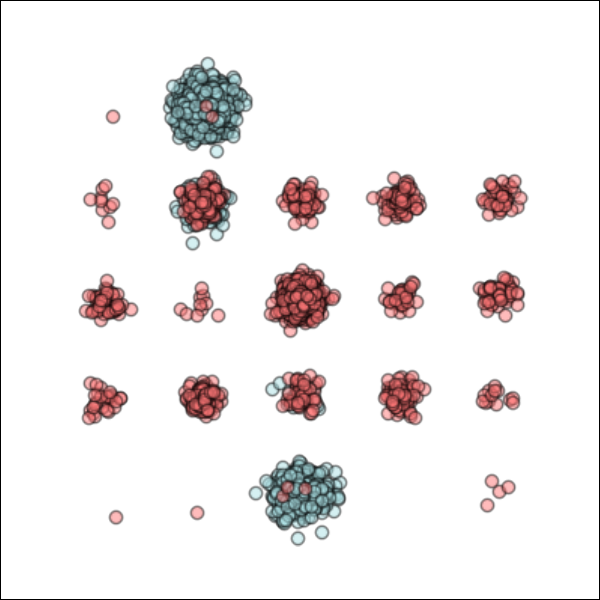}
            & \includegraphics[width=\hsize]{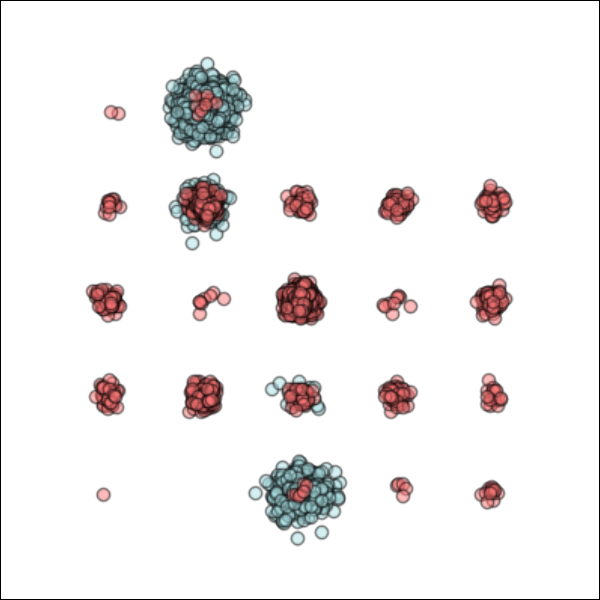}
            & \includegraphics[width=\hsize]{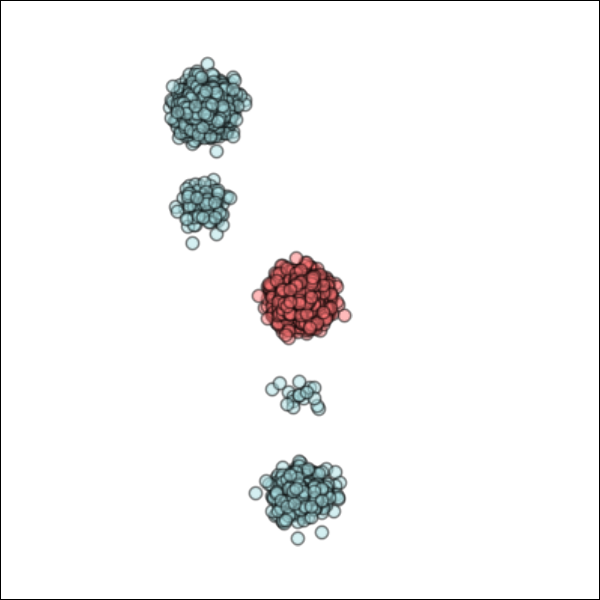} \\
        \end{tabular}
    &
        \begin{tabular}{
                M{0.25\linewidth}@{\hspace{0\tabcolsep}}
                M{0.25\linewidth}@{\hspace{0\tabcolsep}}
                M{0.25\linewidth}@{\hspace{0\tabcolsep}}
                M{0.25\linewidth}@{\hspace{0\tabcolsep}}
            }
                  \algo & \ddrm & \dps & \rnvp \\
                  \includegraphics[width=\hsize]{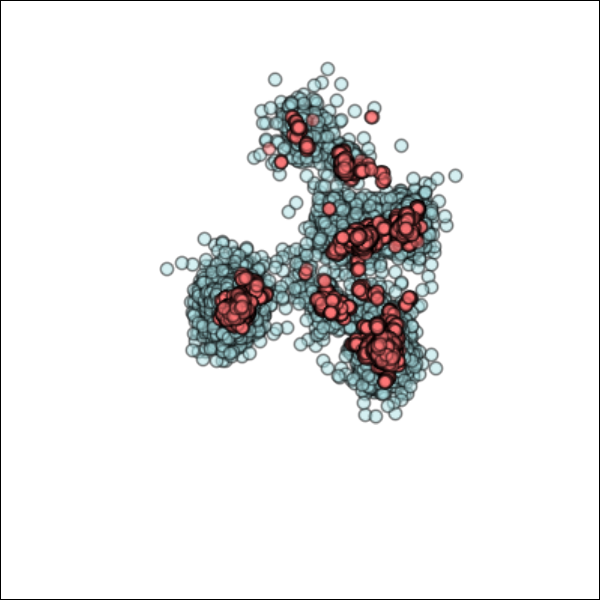}
                  & \includegraphics[width=\hsize]{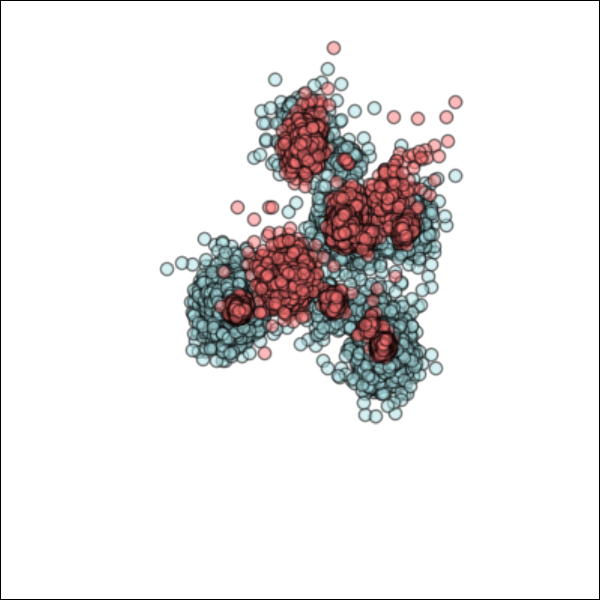}
                  & \includegraphics[width=\hsize]{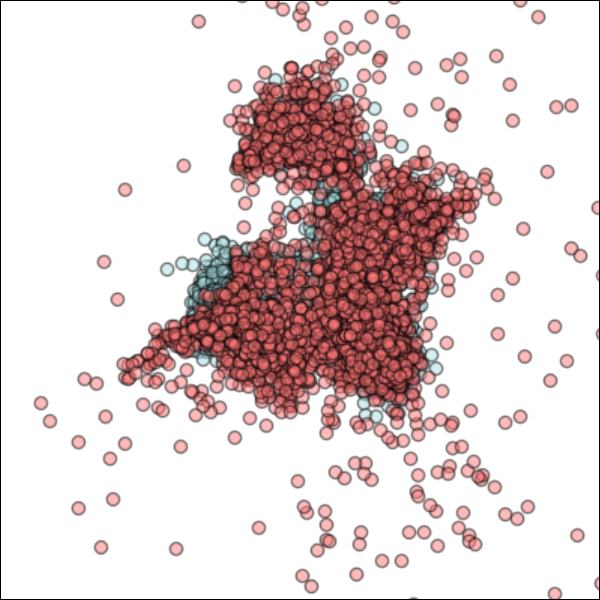}
                  & \includegraphics[width=\hsize]{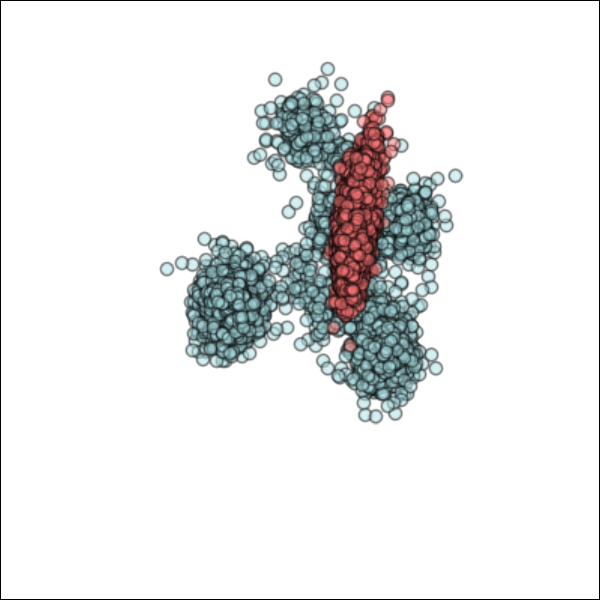}
        \end{tabular}
        & \rotatebox[x=10pt,y=-15pt]{-90}{\tiny $\operatorname{PCA}_2$}\\\addlinespace[-1pt]
    & \tiny $x_1$ & \tiny $\operatorname{PCA}_1$ &
    \end{tabular}
    \caption{The first and last four columns correspond respectively to GM with $(\dimx, \dimtr) = (800, 1)$ and FM with $(\dimx, \dimtr) = (10, 1)$.
    The blue and red dots represent respectively samples from the exact posterior and those generated by each of the algorithms used (names on top).}
    \label{fig:gmm:20:illustration}
\end{figure}
\DTLloaddb{comparison_gmm_sw_20}{data/gaussian/main_25_20_inverse_problem_comparison.csv}
\DTLloaddb{comparison_fmm_sw_100}{data/funnel/main_20_100_inverse_problem_comparison.csv}
\begin{table}
    \centering
    \begin{subtable}[h]{0.45\textwidth}
        \resizebox{1\textwidth}{!}{
            \begin{tabular}{|c|c|c|c|c|c|c|}%
                \hline
                $d$ & $d_y$ & \algo & \ddrm & \dps & \rnvp \\
                \hline
                \DTLforeach*{comparison_gmm_sw_20}{
                    \dim=D_X,
                    \ydim=D_Y,
                    \distours=MCG_DIFF,
                    \distoursnum=MCG_DIFF_num,
                    \distddrm=DDRM,
                    \distddrmnum=DDRM_num,
                    \distdps=DPS,
                    \distdpsnum=DPS_num,
                    \distrnvp=RNVP,
                    \distrnvpnum=RNVP_num}{
                    \dim & \ydim &
                    \DTLgminall{\rowmin}{\distoursnum,\distddrmnum,\distdpsnum,\distrnvpnum}%
                    \dtlifnumeq{\distoursnum}{\rowmin}{\bf}{}\distours &
                    \dtlifnumeq{\distddrmnum}{\rowmin}{\bf}{}\distddrm &
                    \dtlifnumeq{\distdpsnum}{\rowmin}{\bf}{}\distdps &
                    \dtlifnumeq{\distrnvpnum}{\rowmin}{\bf}{}\distrnvp\DTLiflastrow{}{\\
                    }}
                \\\hline
            \end{tabular}%
        }
    \end{subtable}%
    \begin{subtable}[h]{0.45\textwidth}
        \resizebox{1\textwidth}{!}{
            \begin{tabular}{|c|c|c|c|c|c|c|}%
                \hline
                $d$ & $d_y$ & \algo & \ddrm & \dps & \rnvp \\
                \hline
                \DTLforeach*{comparison_fmm_sw_100}{
                    \dim=D_X,
                    \ydim=D_Y,
                    \distours=MCG_DIFF,
                    \distoursnum=MCG_DIFF_num,
                    \distddrm=DDRM,
                    \distddrmnum=DDRM_num,
                    \distdps=DPS,
                    \distdpsnum=DPS_num,
                    \distrnvp=RNVP,
                    \distrnvpnum=RNVP_num}{
                    \dim & \ydim &
                    \DTLgminall{\rowmin}{\distoursnum,\distddrmnum,\distdpsnum,\distrnvpnum}%
                    \dtlifnumeq{\distoursnum}{\rowmin}{\bf}{}\distours &
                    \dtlifnumeq{\distddrmnum}{\rowmin}{\bf}{}\distddrm &
                    \dtlifnumeq{\distdpsnum}{\rowmin}{\bf}{}\distdps &
                    \dtlifnumeq{\distrnvpnum}{\rowmin}{\bf}{}\distrnvp\DTLiflastrow{}{\\
                    }}
                \\\hline
            \end{tabular}%
        }
    \end{subtable}%
    \caption{Sliced Wasserstein for the GM (left) and FM (right) case.}
    \label{table:gmm:sw}
\end{table}
\vspace{-.3mm}
We refer to the Funnel mixture prior as FM prior (see \cref{appendix:numerics} for the definition).
For GM prior, we consider a mixture of $25$ components with pre-established means and variances.
For FM prior, we consider a mixture of $20$ components consisting of rotated and translated funnel distributions.
For a given pair $(\dimx, \dimtr)$, we sample a prior distribution by randomly sampling the weights of the mixture and
for the FM case the translation and rotation of each component.
We then randomly sample measurement models $(\lmeas, \m{A}, \noisestd) \in \rset^{\dimtr} \times \mset{\dimtr}{\dimx} \times [0, 1]$.
For each pair of prior distribution and measurement model, we generate $10^4$ samples from \algo, \dps, \ddrm, \rnvp,
and from the posterior either analytically (GM) or using NUTS (FM).
We calculate for each algorithm the sliced Wasserstein (SW) distance between the resulting samples and the posterior samples.
\Cref{table:gmm:sw} shows the CLT $95\%$ confidence intervals obtained over $20$ seeds.
\Cref{fig:gmm:20:illustration} illustrate the samples for the different algorithms for a given seed.
We see that \algo\ outperforms all the other algorithms in each setting tested.
The complete details of the numerical experiments performed in this section is available in \cref{appendix:numerics} as well as an additional visualisations.

\subsubsection{Image datasets}
We now present samples from \algo\ in different image dataset and different kinds of inverse problems.
\paragraph*{Super Resolution}
We start by super resolution.
We set $\sigma_y = 0.05$ for all the datasets
and $\zeta_{\mathrm{coeff}} = 0.1$ for \dps\ . We use $100$ steps of \ddim\ with $\eta=1$. The results are shown in \Cref{fig:sr}.
We use a downsampling ratio of $4$ for the CIFAR-10 dataset, $8$ for both Flowers and Cats datasets and $16$ for the others.
The dimension of the datasets are recalled in \cref{table:image_dataset_description}.
We display in \cref{fig:sr} samples from \algo, \dps and \ddrm over several different image datasets (\cref{table:image_dataset_description}).
For each algorithm, we
generate $1000$ samples and we show the pair of samples that are the furthest apart in $L^2$ norm from each other in the pool of samples.
For \algo\ we ran several parallel particle filters with $N=64$ to generate $1000$ samples.
\begin{figure}
\centering
\begin{tabular}{@{} l 
    M{0.13\linewidth}@{\hspace{0\tabcolsep}} 
    M{0.13\linewidth}@{\hspace{0\tabcolsep}} 
    M{0.13\linewidth}@{\hspace{0\tabcolsep}} 
    M{0.13\linewidth}@{\hspace{0\tabcolsep}} 
    M{0.13\linewidth}@{\hspace{0\tabcolsep}}
    M{0.13\linewidth}@{\hspace{0\tabcolsep}} @{}}
 & CIFAR-10 & Flowers & Cats & Bedroom & Church & CelebaHQ
    \\
 sample 
 &\includegraphics[width=\hsize]{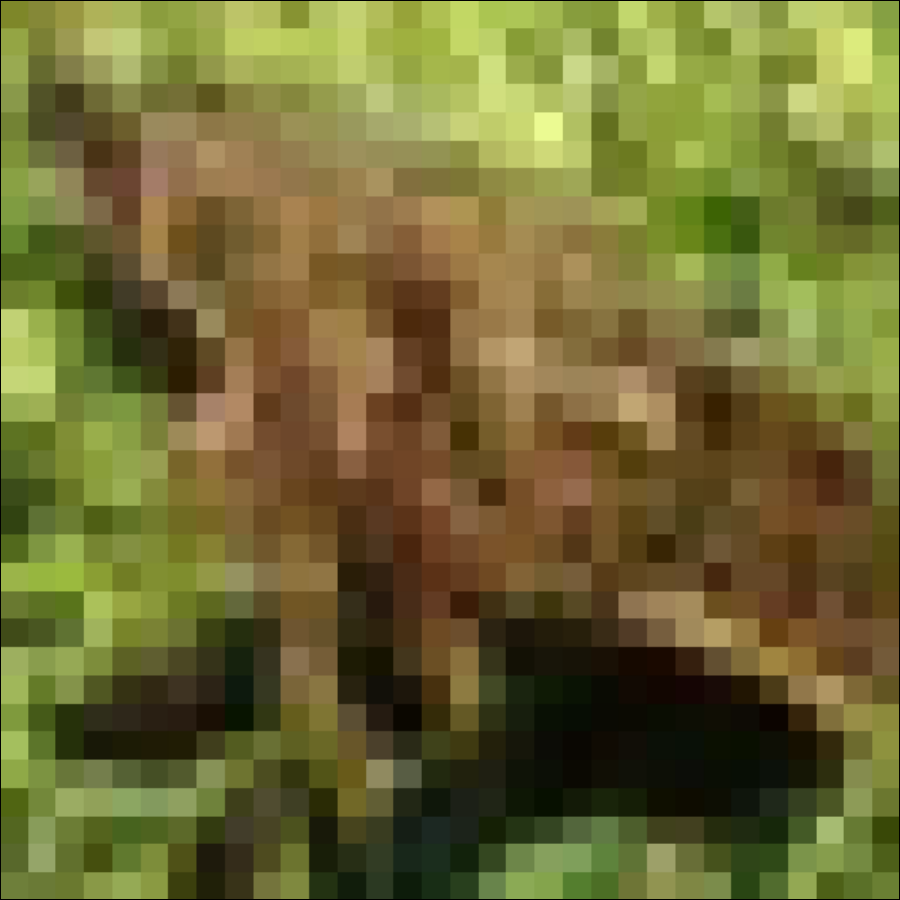}
 &\includegraphics[width=\hsize]{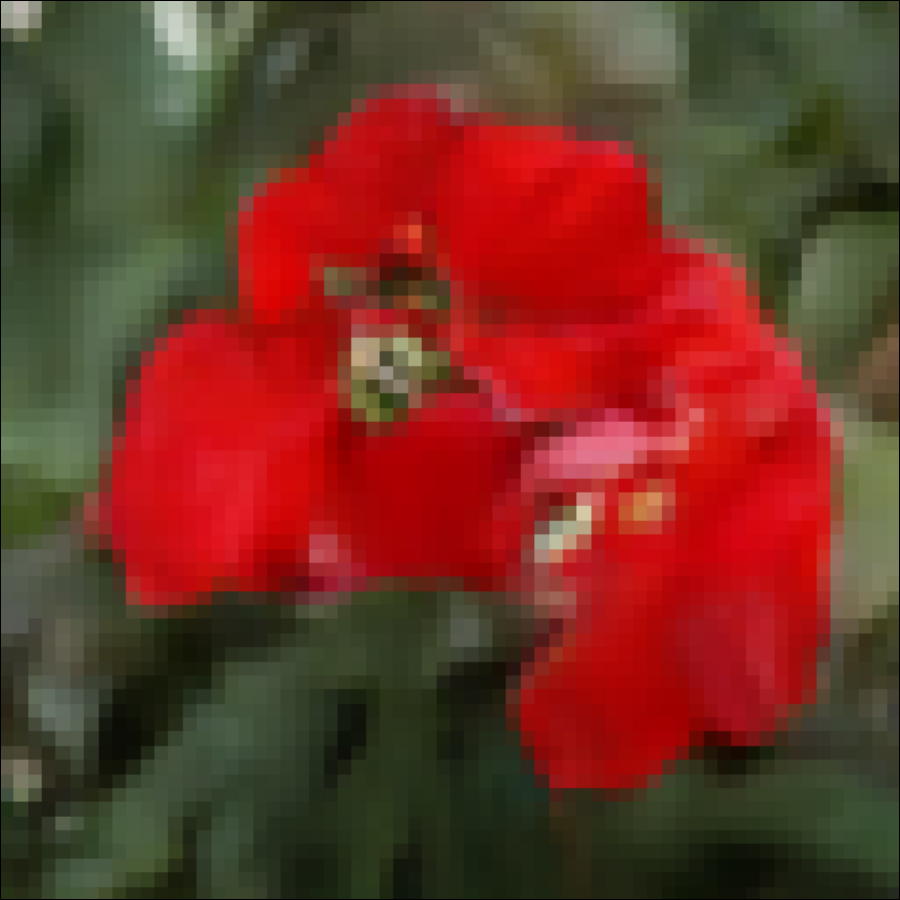}
 &\includegraphics[width=\hsize]{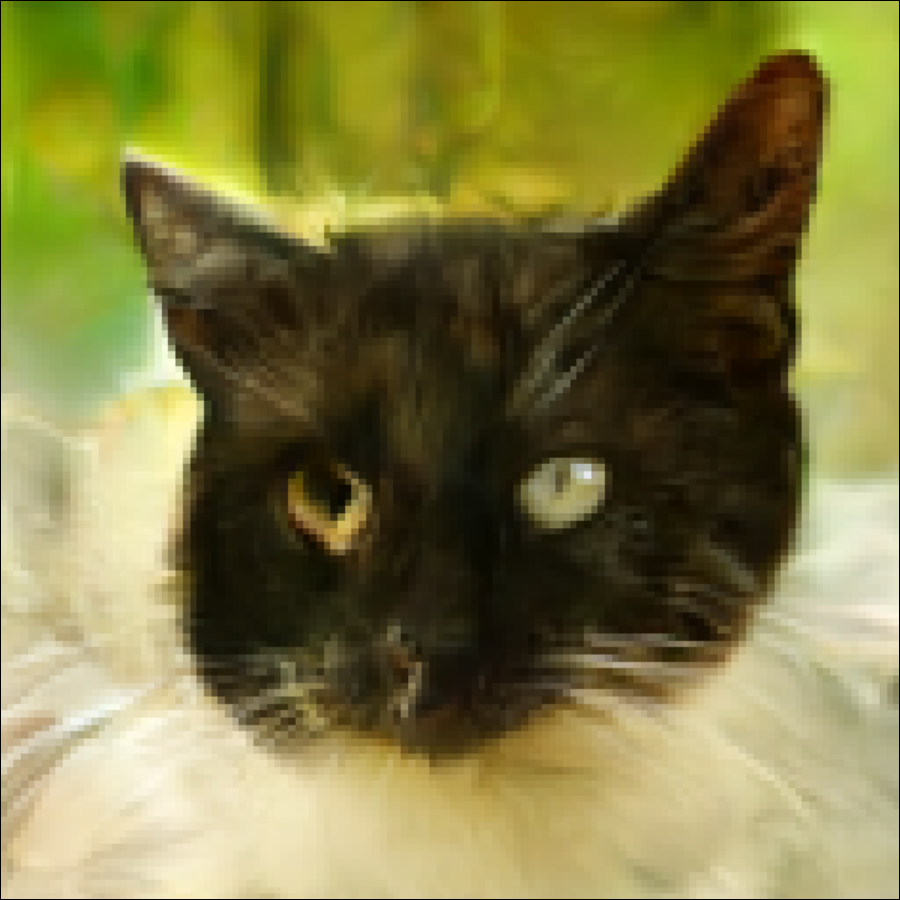}
 & \includegraphics[width=\hsize]{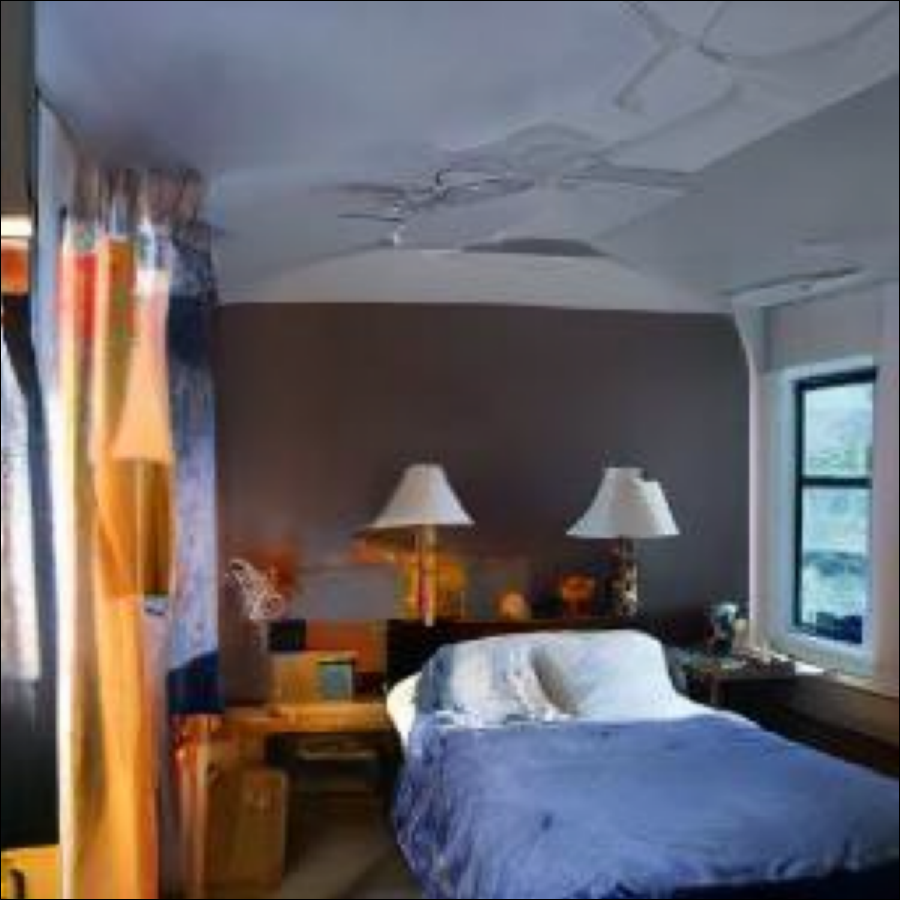}
 & \includegraphics[width=\hsize]{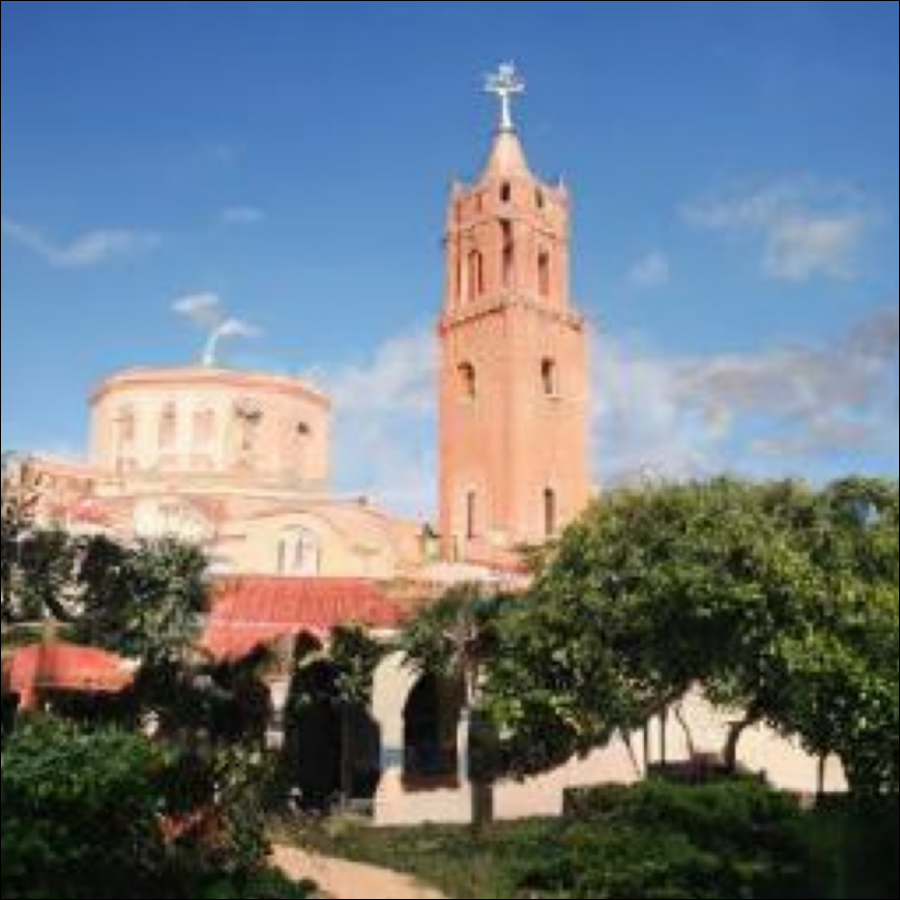}
 & \includegraphics[width=\hsize]{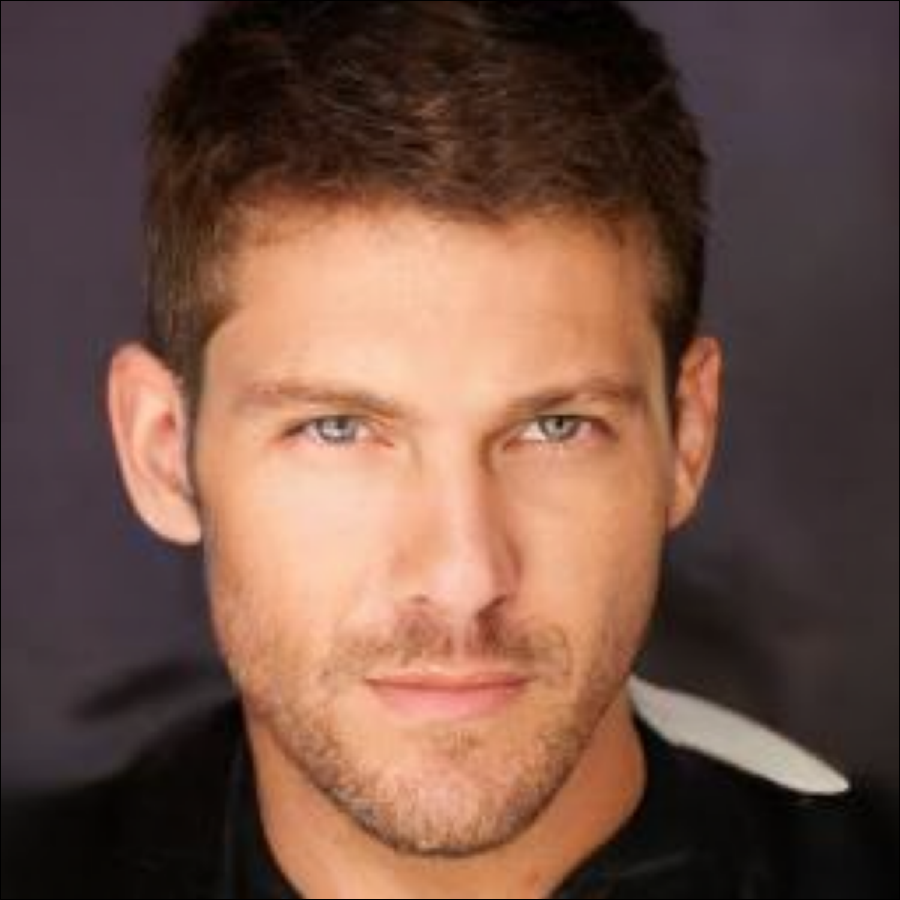} \\\addlinespace[-1.5ex]
observation
 &\includegraphics[width=\hsize]{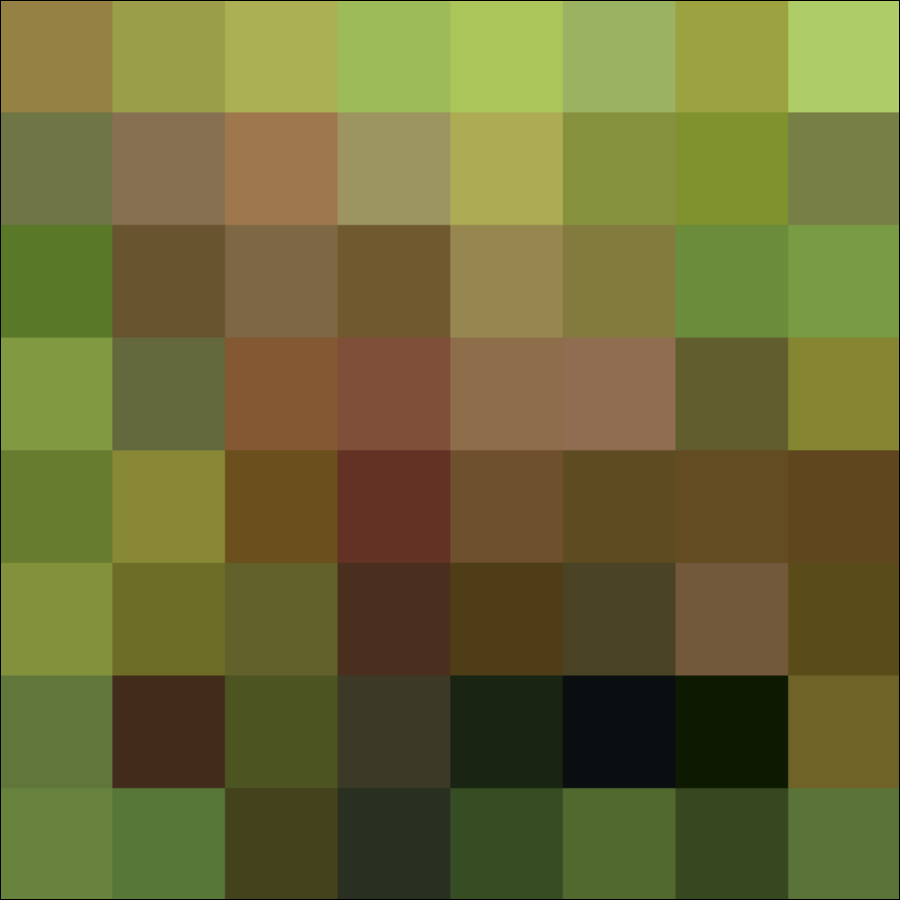}
 &\includegraphics[width=\hsize]{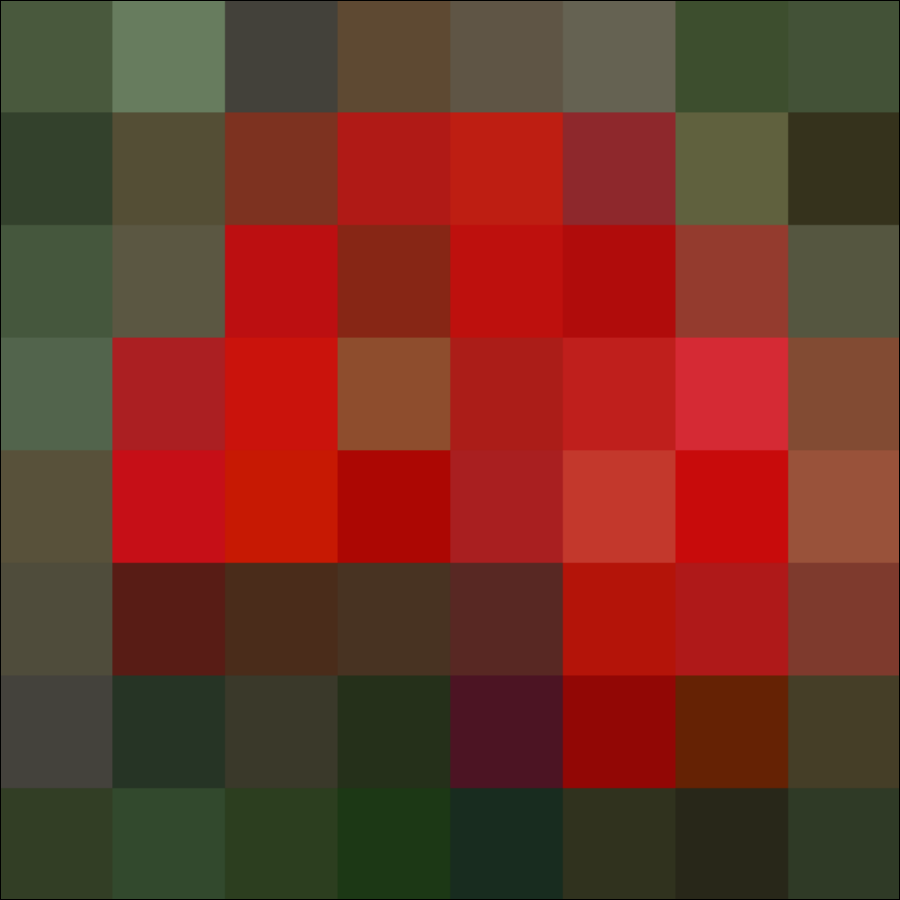}
 &\includegraphics[width=\hsize]{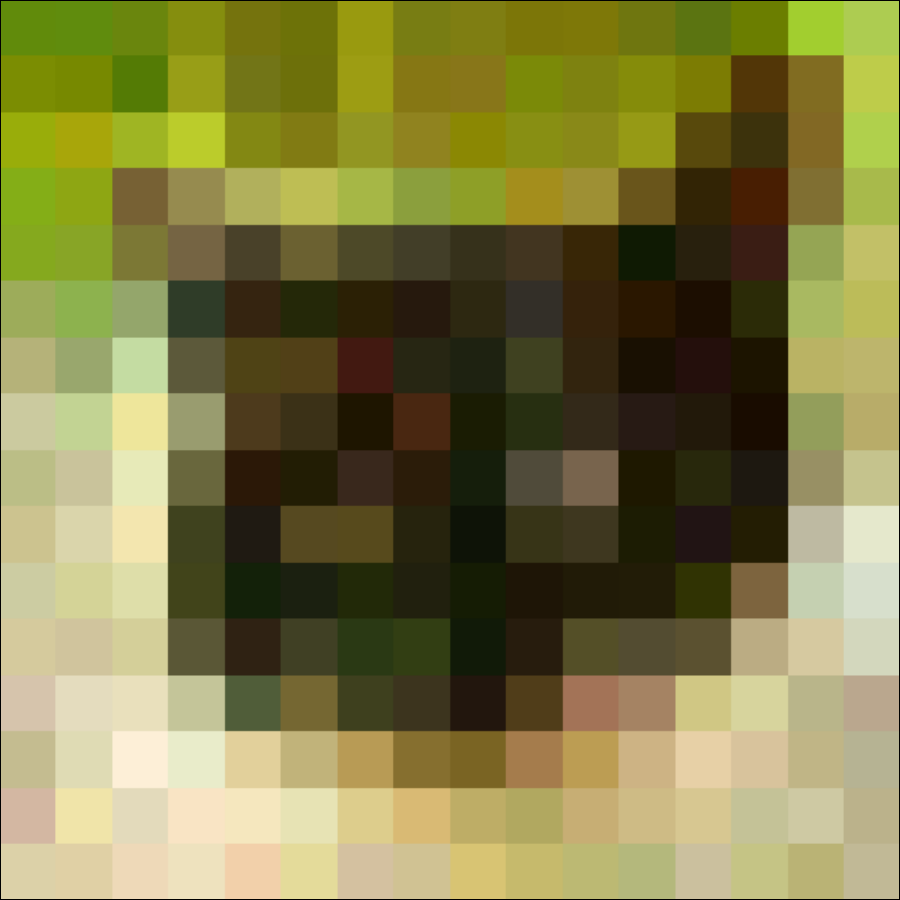}
 & \includegraphics[width=\hsize]{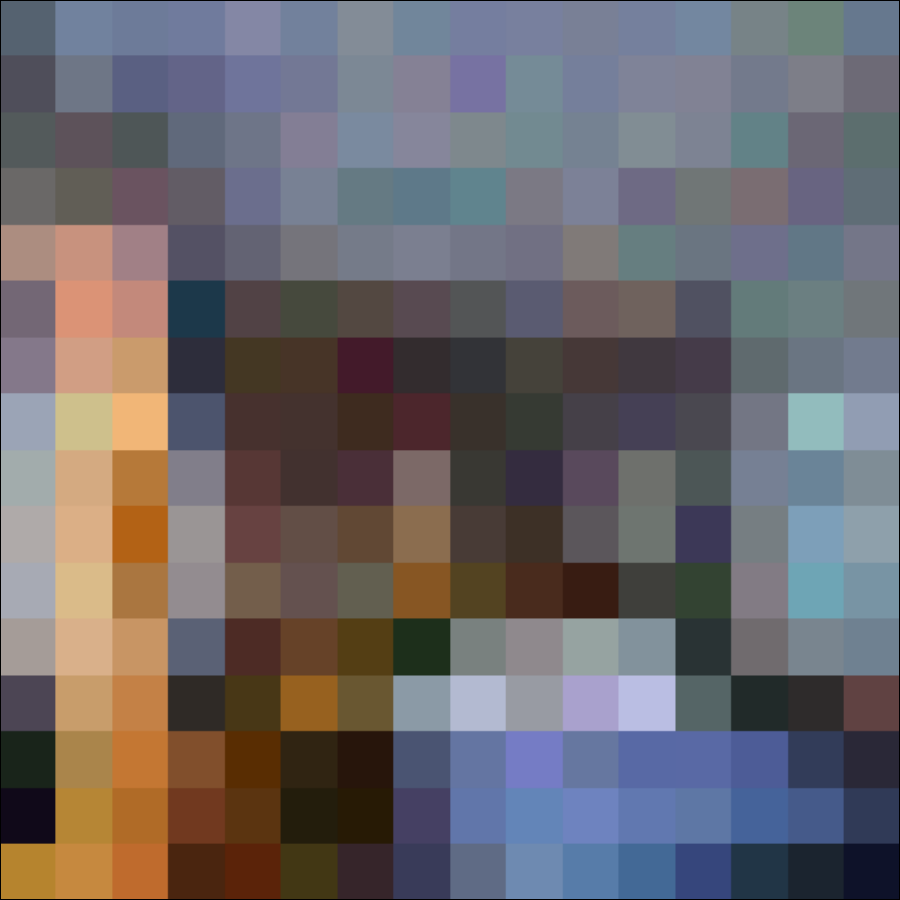}
 & \includegraphics[width=\hsize]{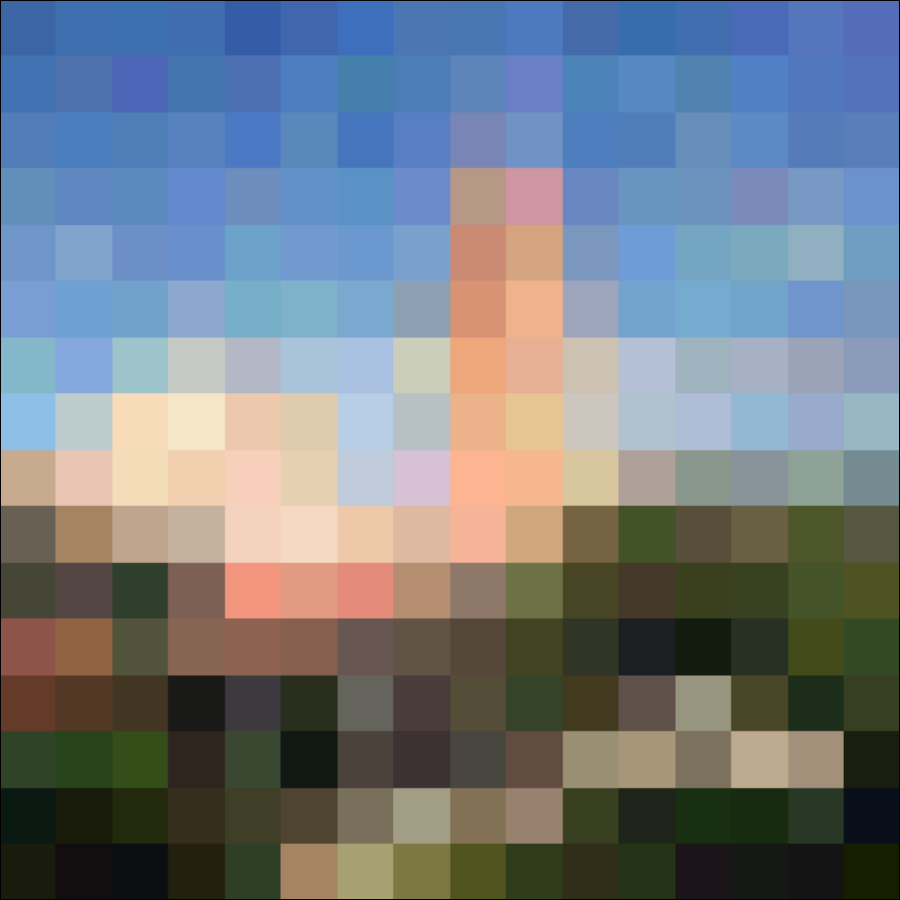}
 & \includegraphics[width=\hsize]{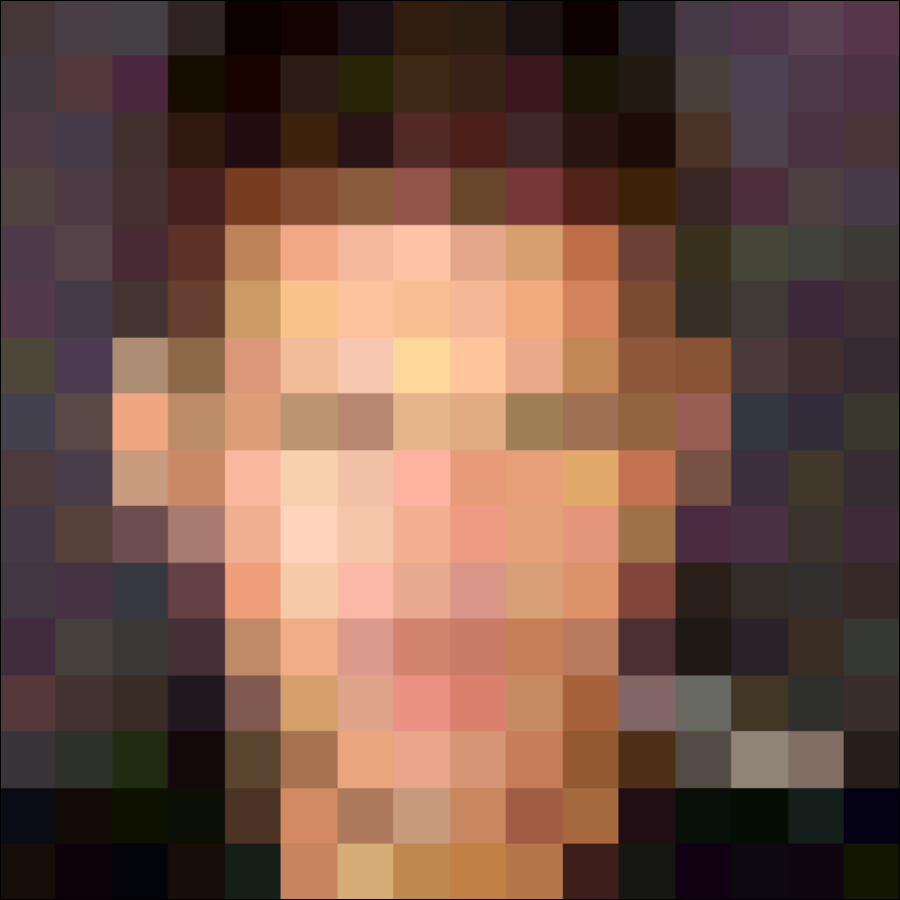} \\\addlinespace[-1.5ex]
 \dps 
 &\includegraphics[width=\hsize]{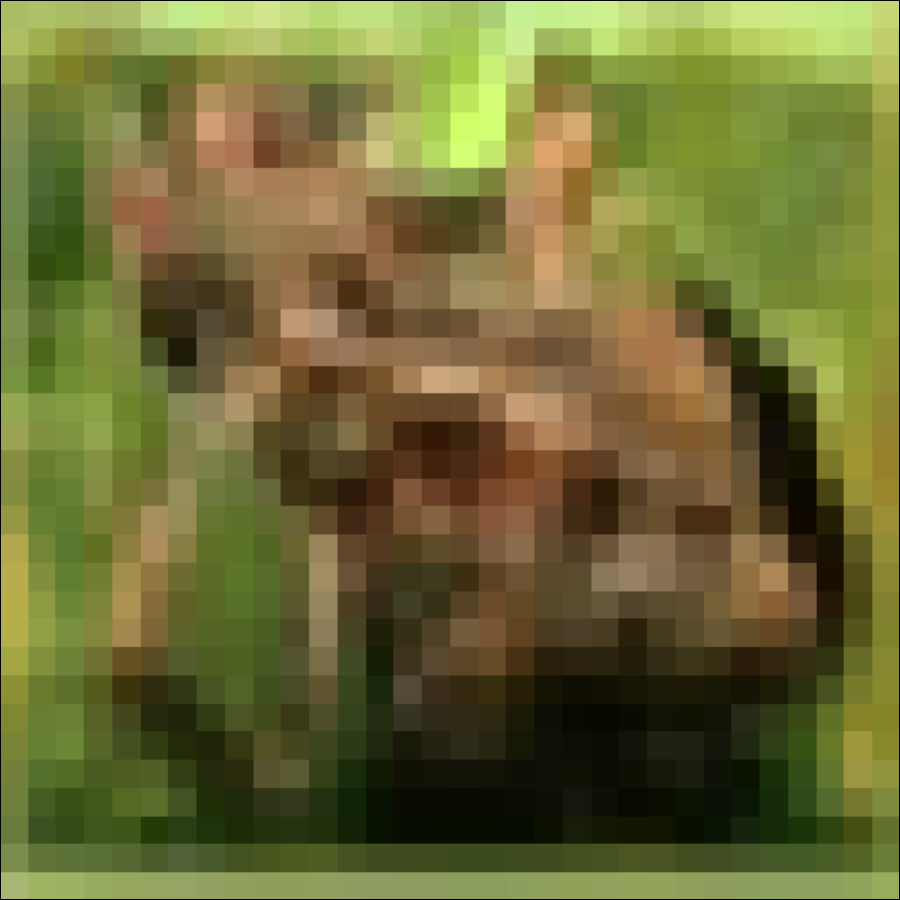}
 &\includegraphics[width=\hsize]{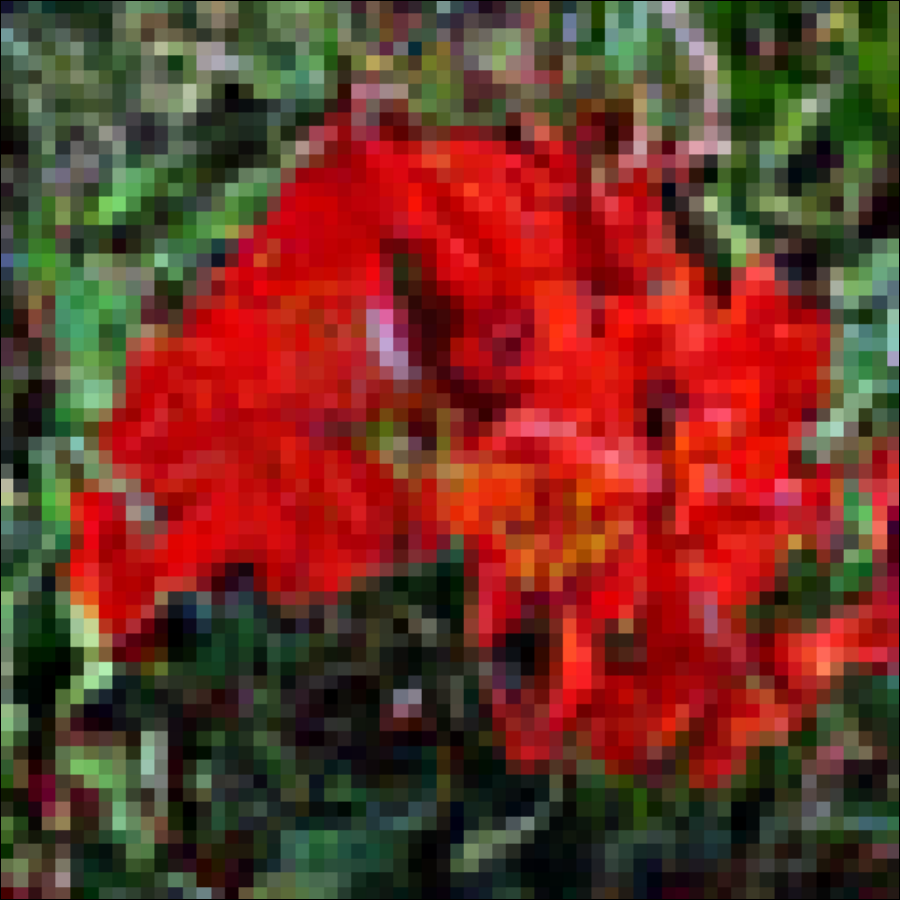}
 &\includegraphics[width=\hsize]{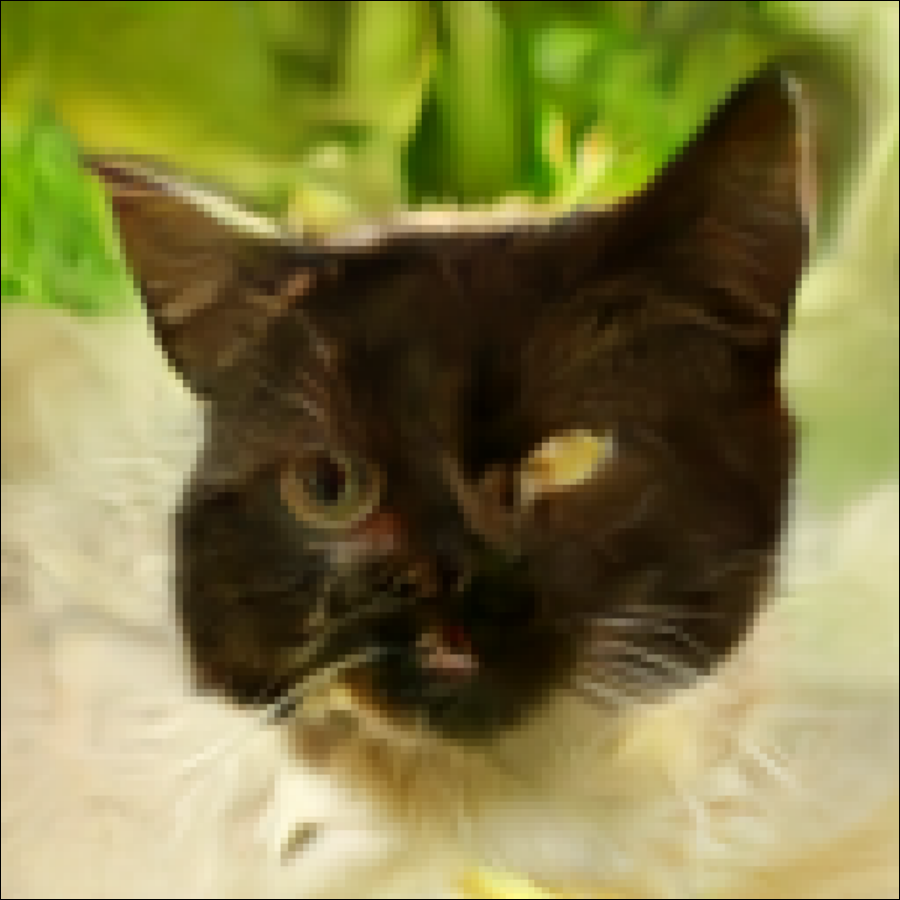}
 & \includegraphics[width=\hsize]{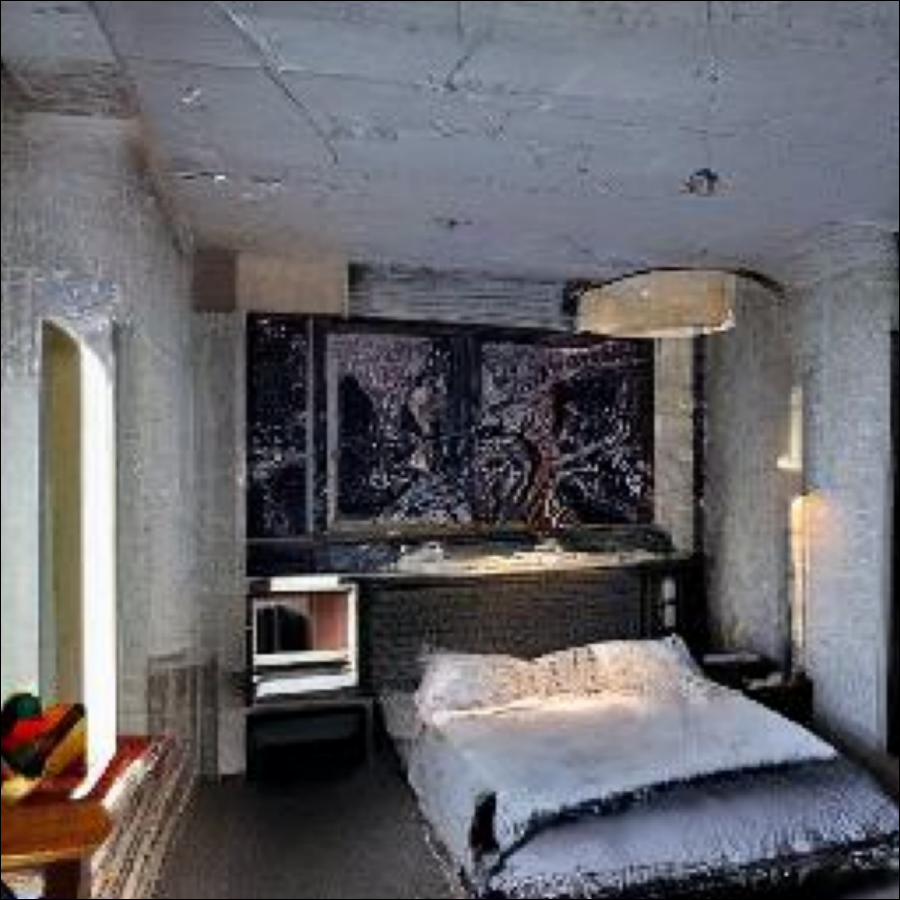}
 & \includegraphics[width=\hsize]{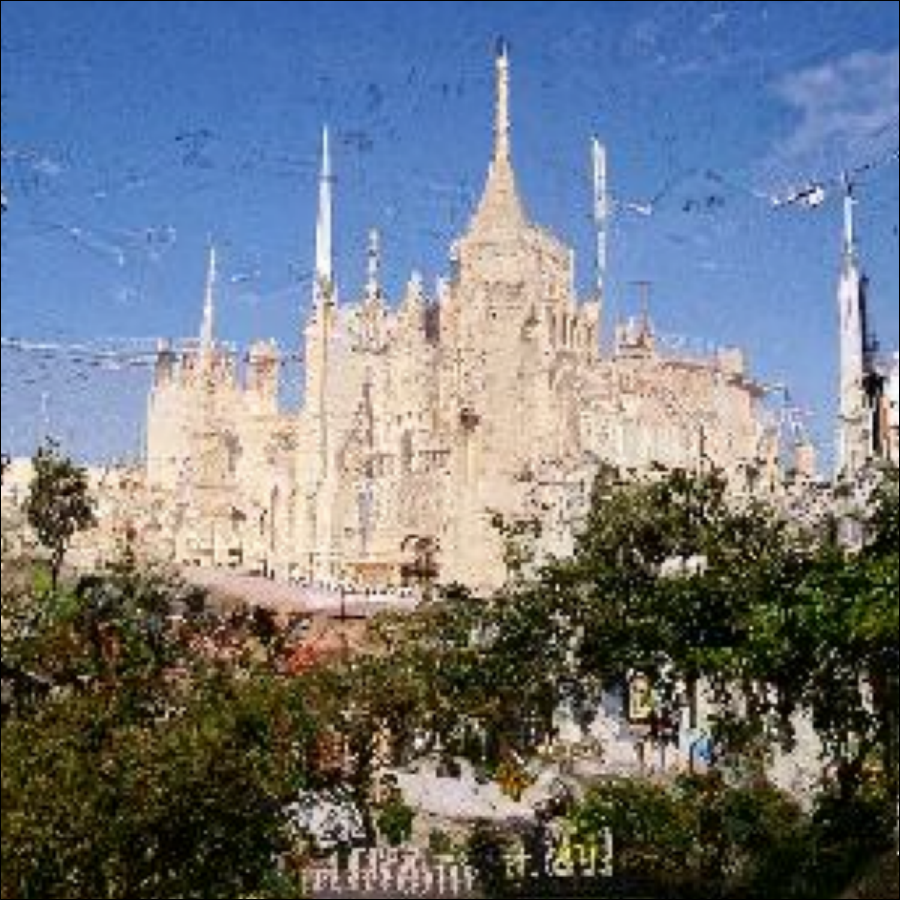}
 & \includegraphics[width=\hsize]{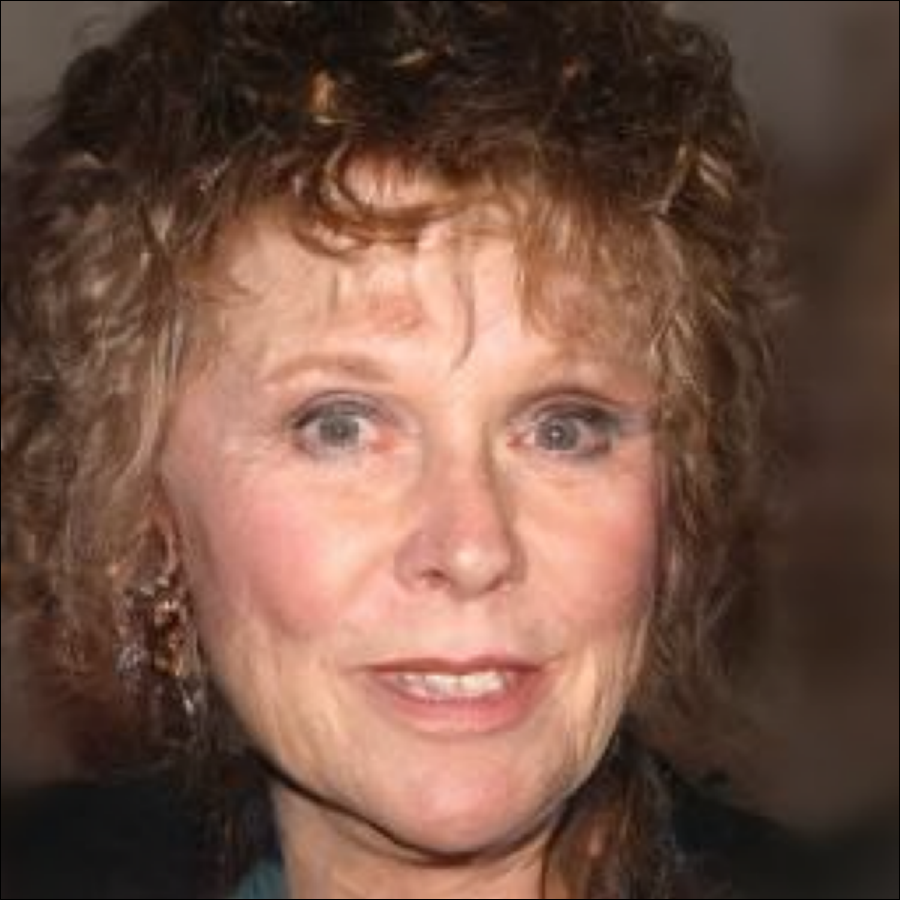} \\\addlinespace[-1.5ex]
 \dps 
 &\includegraphics[width=\hsize]{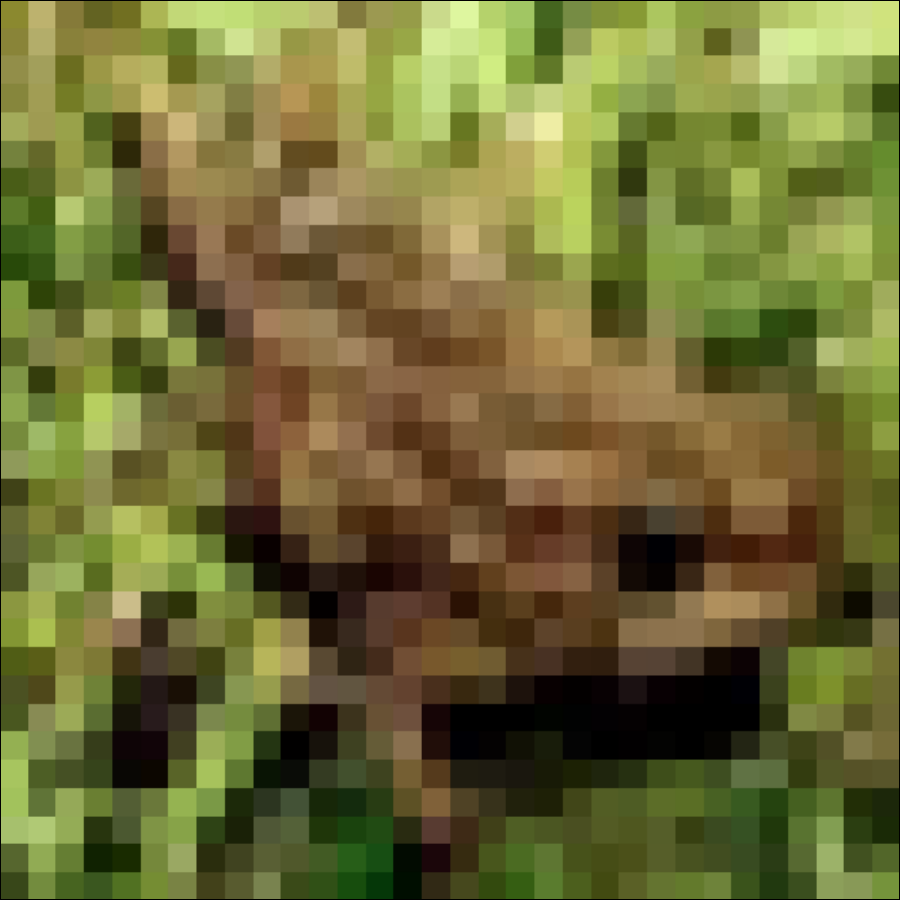}
 &\includegraphics[width=\hsize]{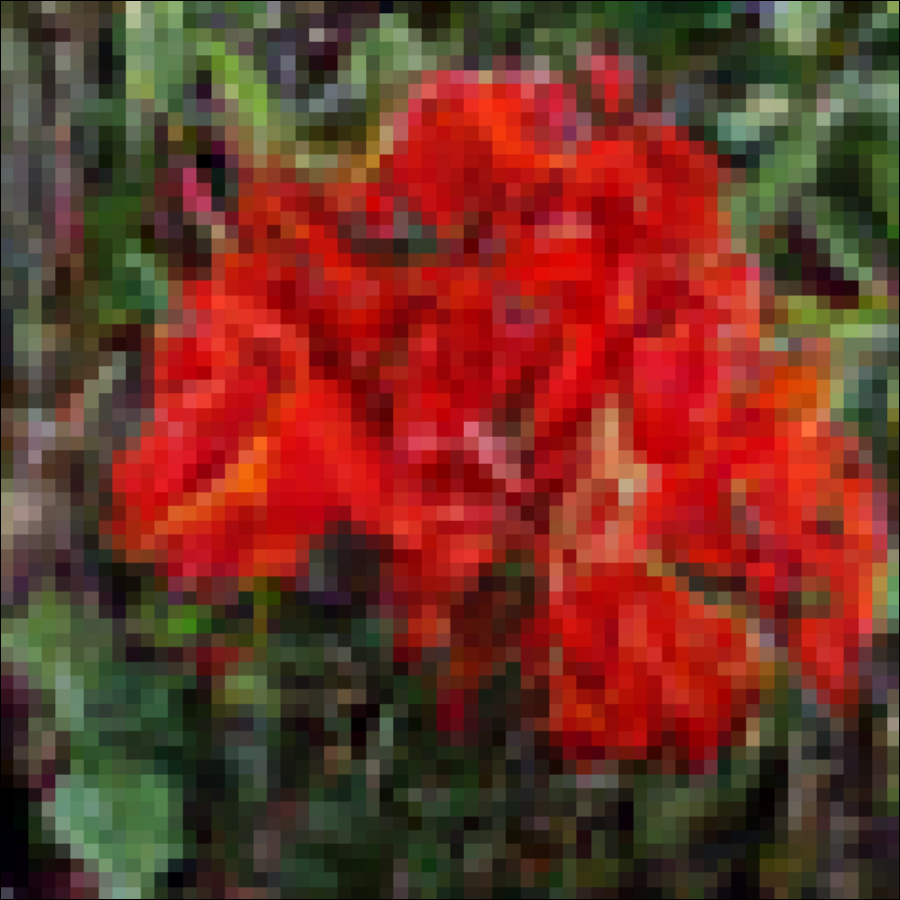}
 &\includegraphics[width=\hsize]{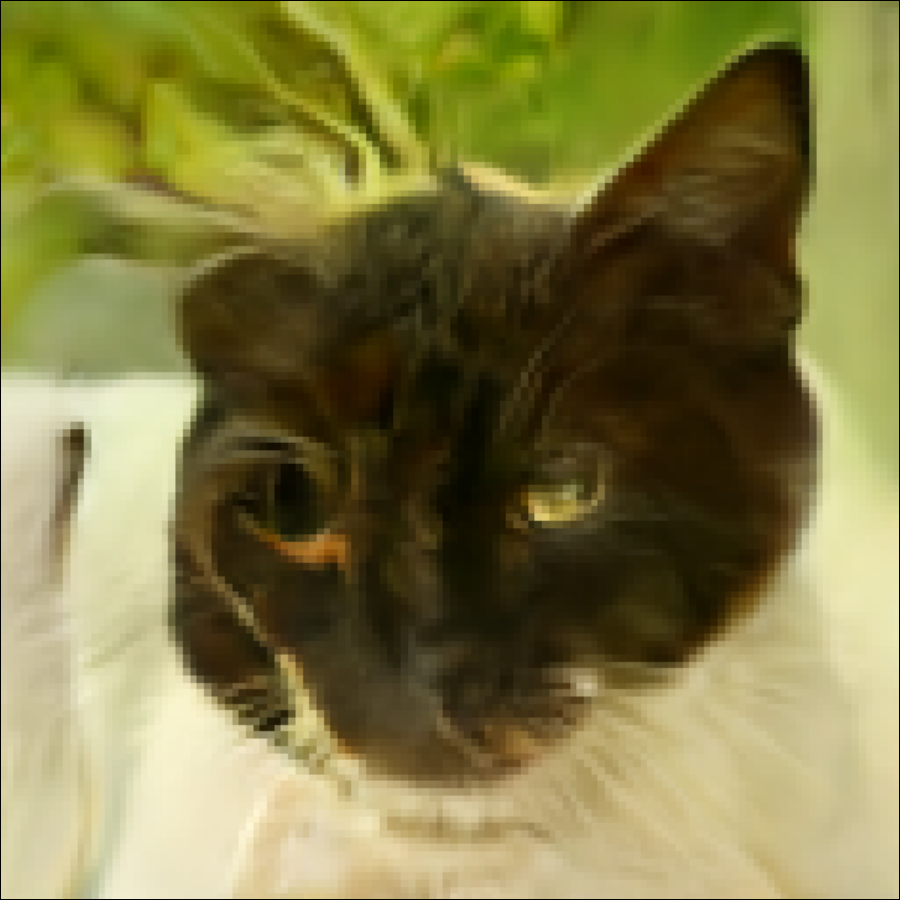}
 & \includegraphics[width=\hsize]{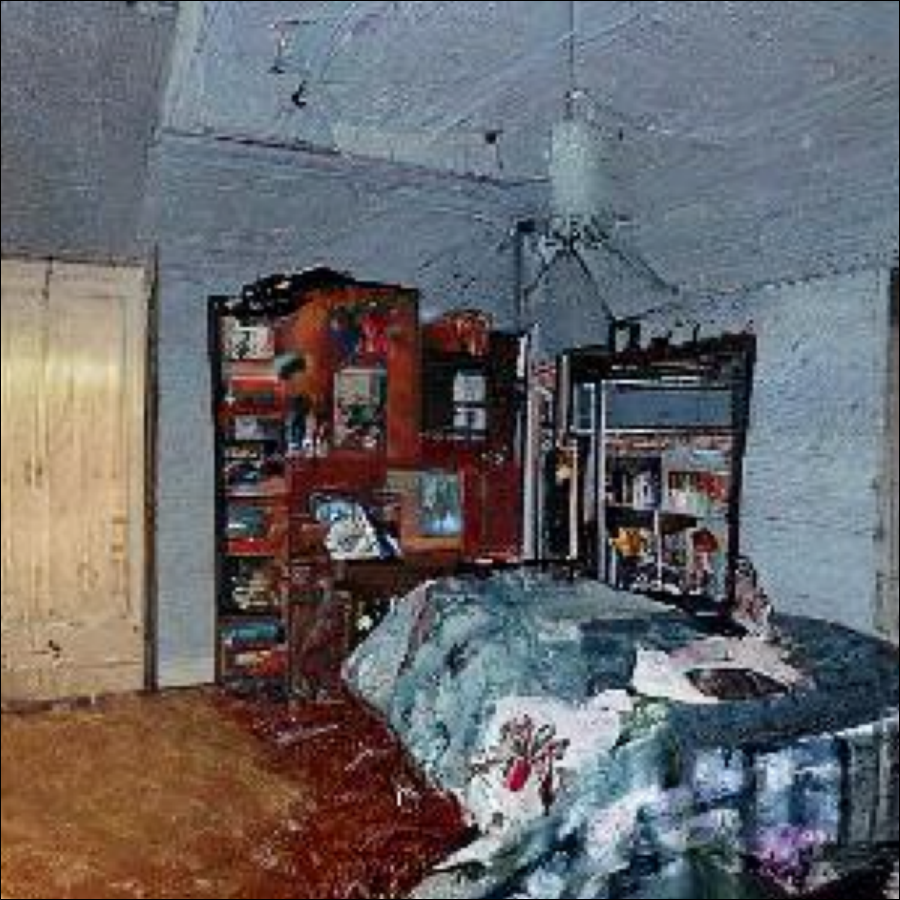}
 & \includegraphics[width=\hsize]{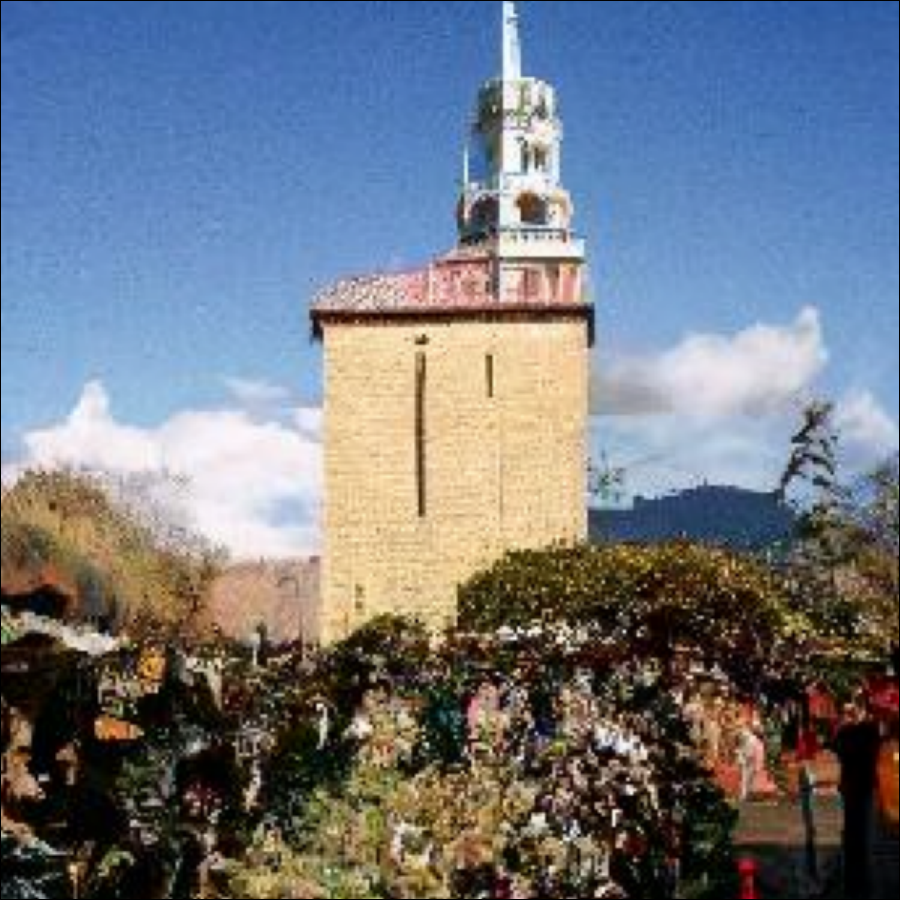}
 & \includegraphics[width=\hsize]{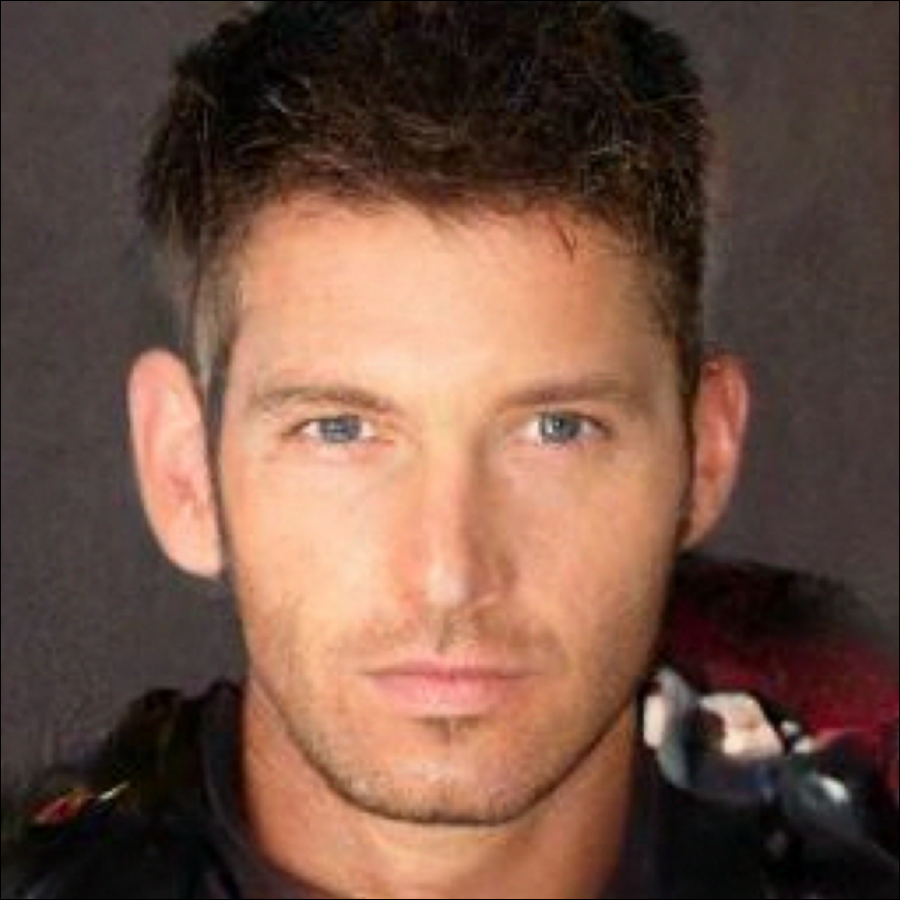} \\\addlinespace[-1.5ex]
 \ddrm 
 &\includegraphics[width=\hsize]{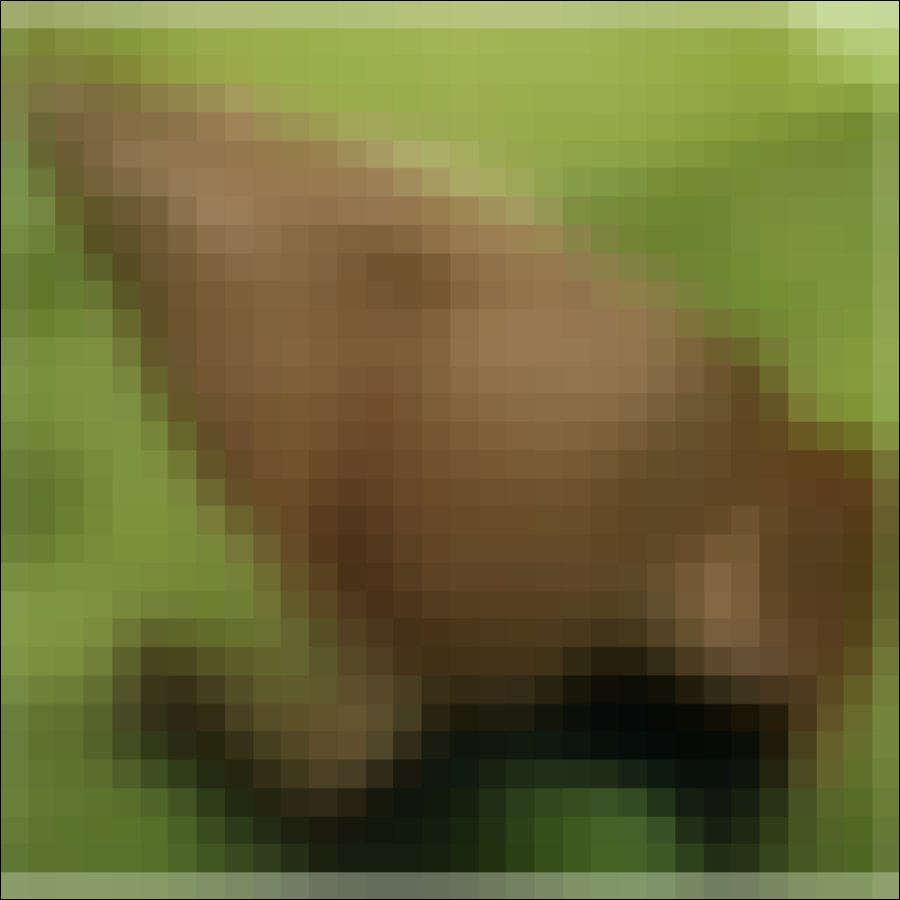}
 &\includegraphics[width=\hsize]{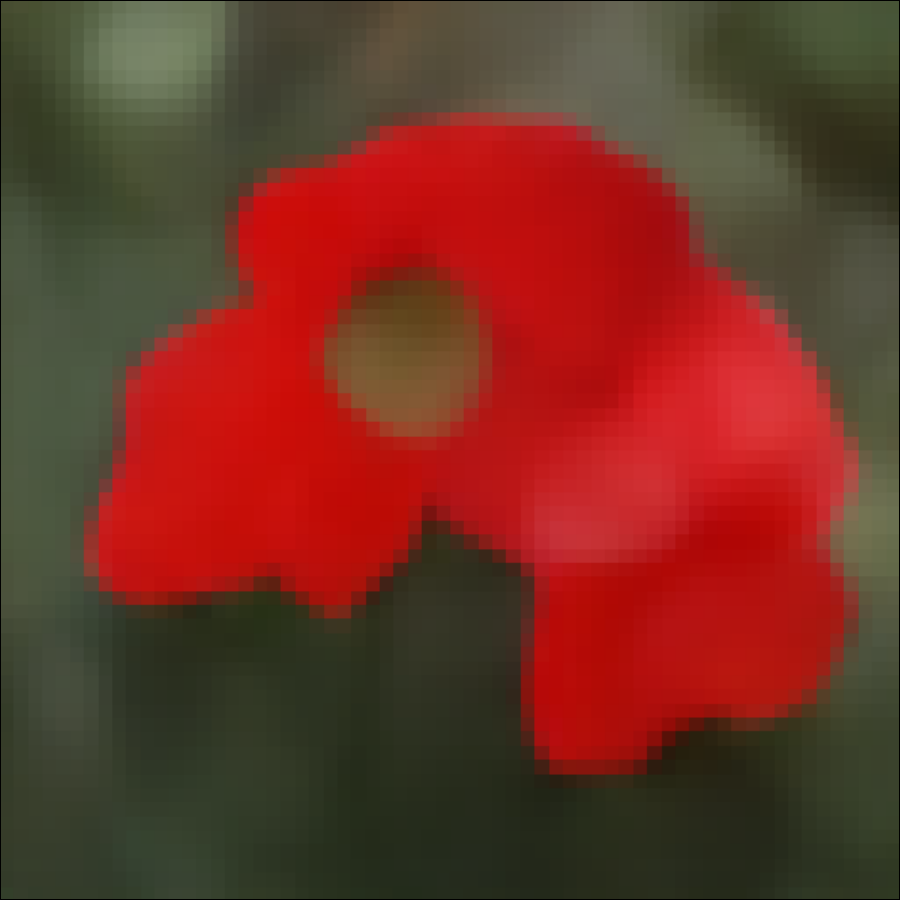}
 &\includegraphics[width=\hsize]{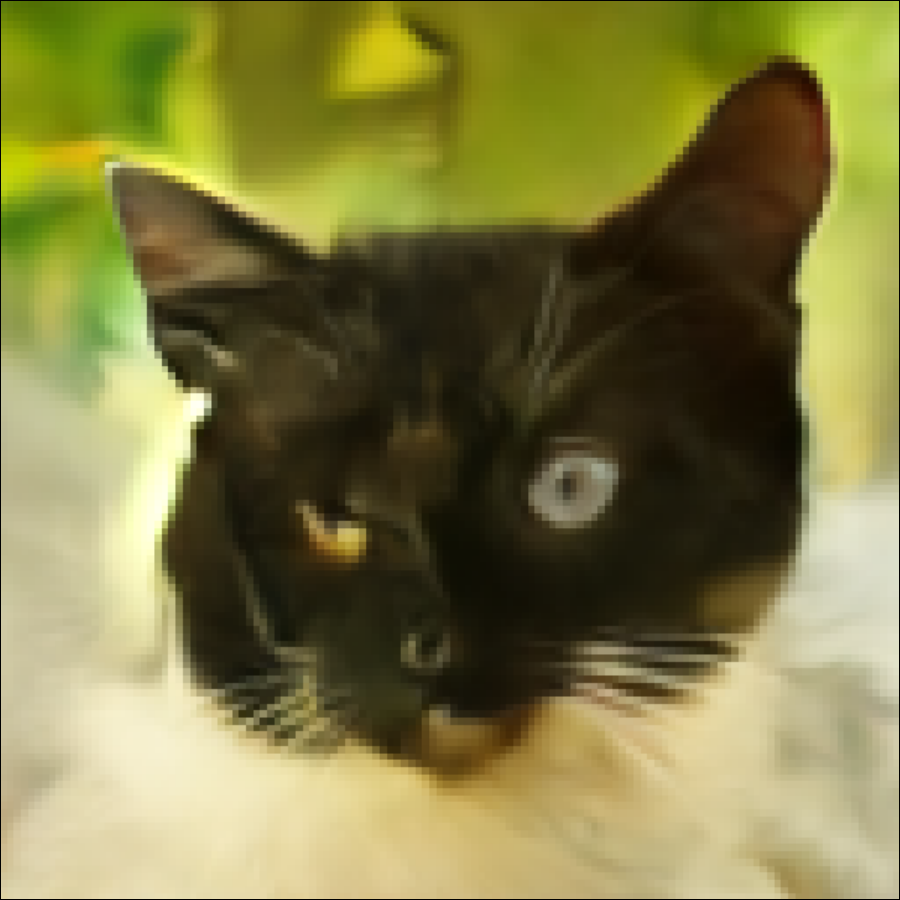}
 & \includegraphics[width=\hsize]{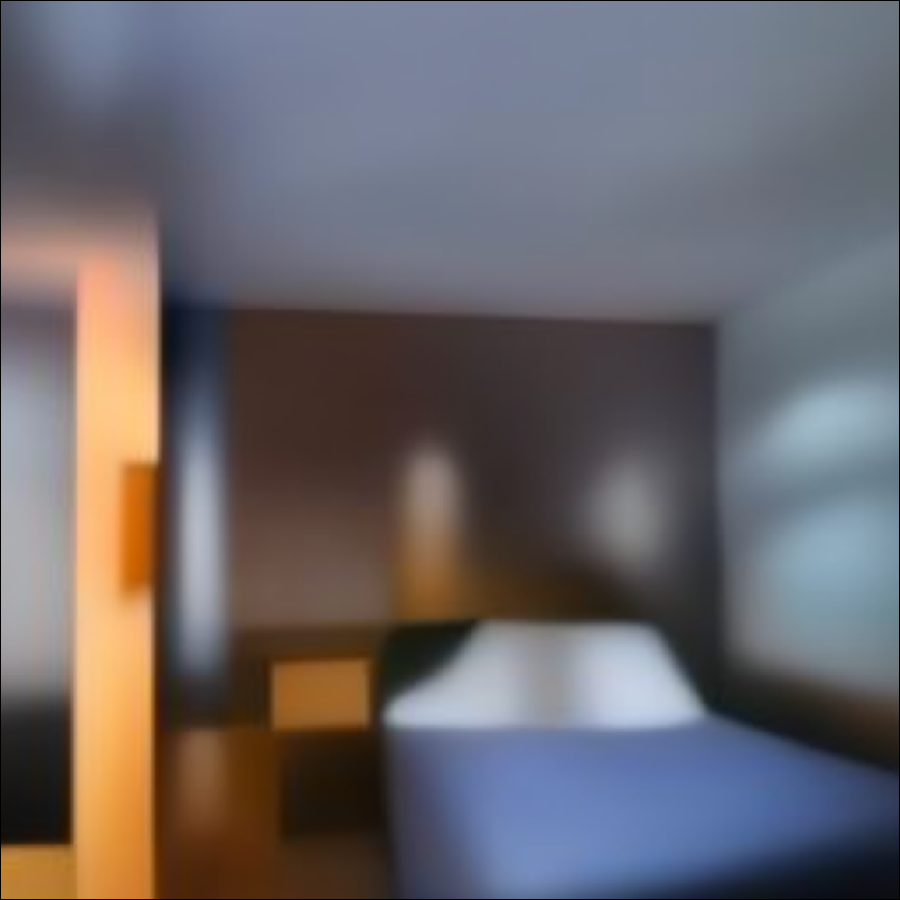}
 & \includegraphics[width=\hsize]{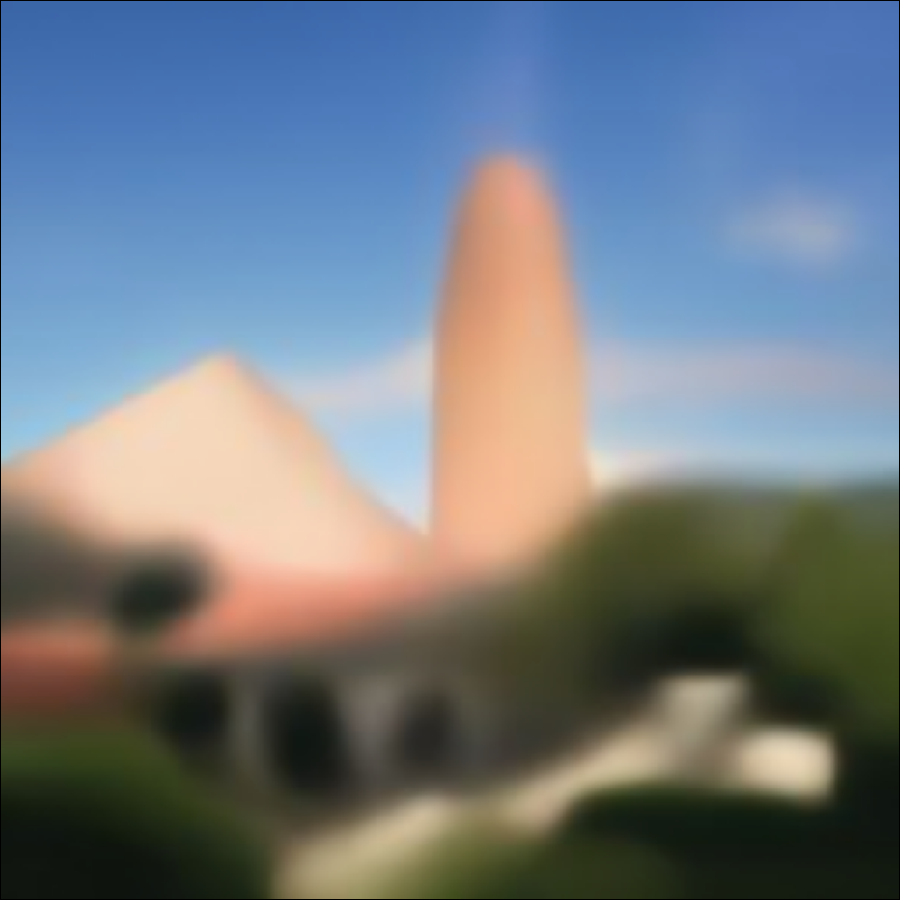}
 & \includegraphics[width=\hsize]{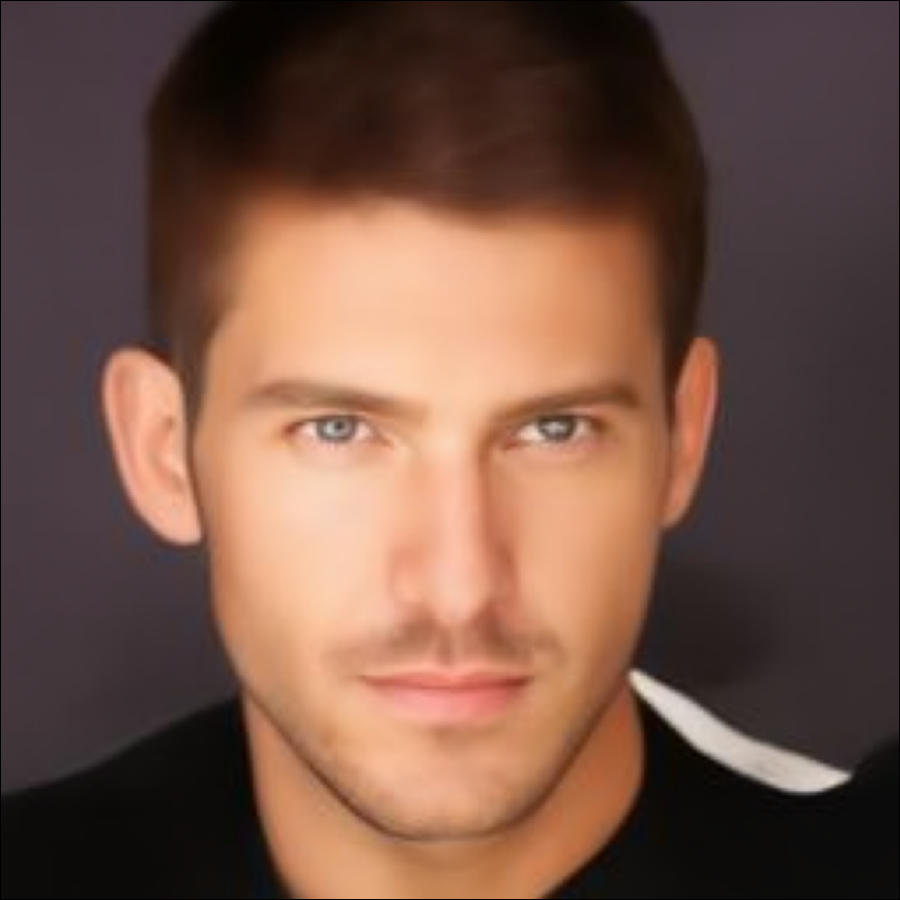} \\\addlinespace[-1.5ex]
 \ddrm 
 &\includegraphics[width=\hsize]{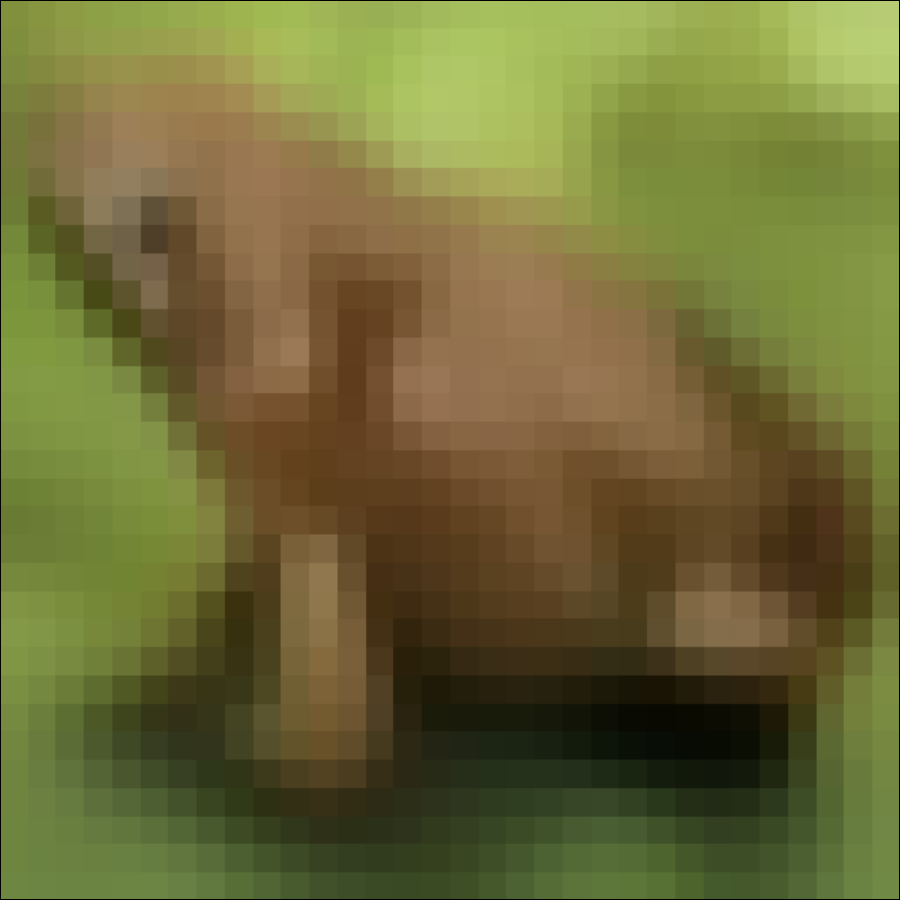}
 &\includegraphics[width=\hsize]{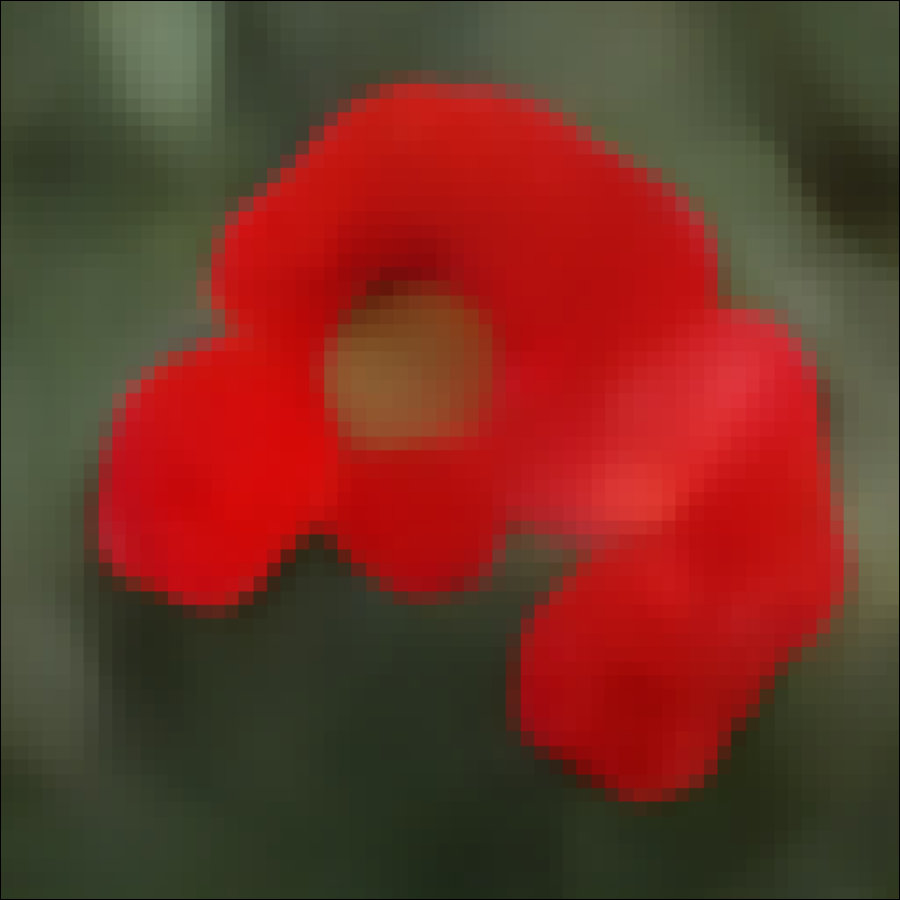}
 &\includegraphics[width=\hsize]{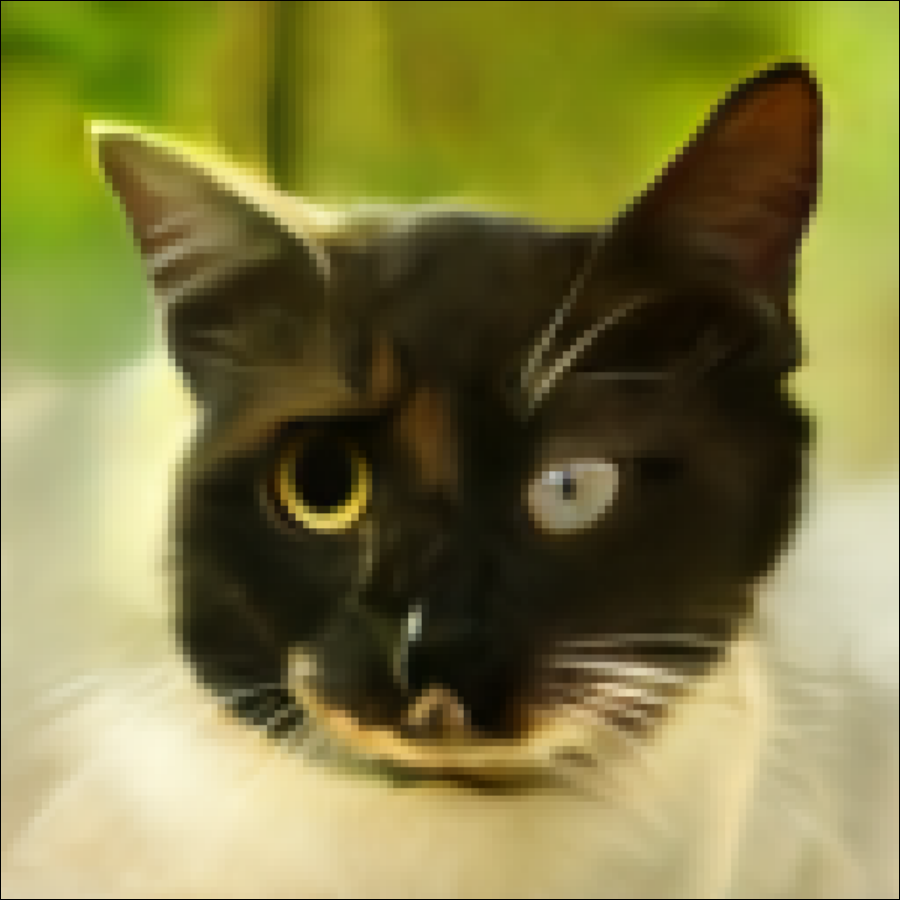}
 & \includegraphics[width=\hsize]{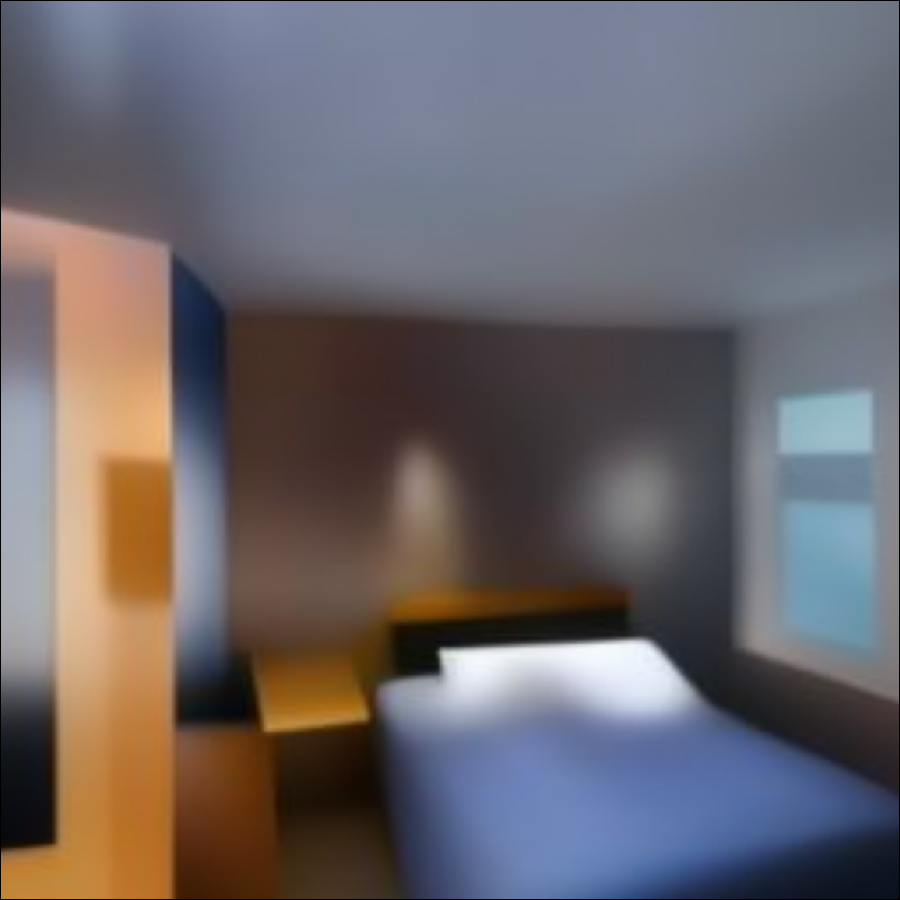}
 & \includegraphics[width=\hsize]{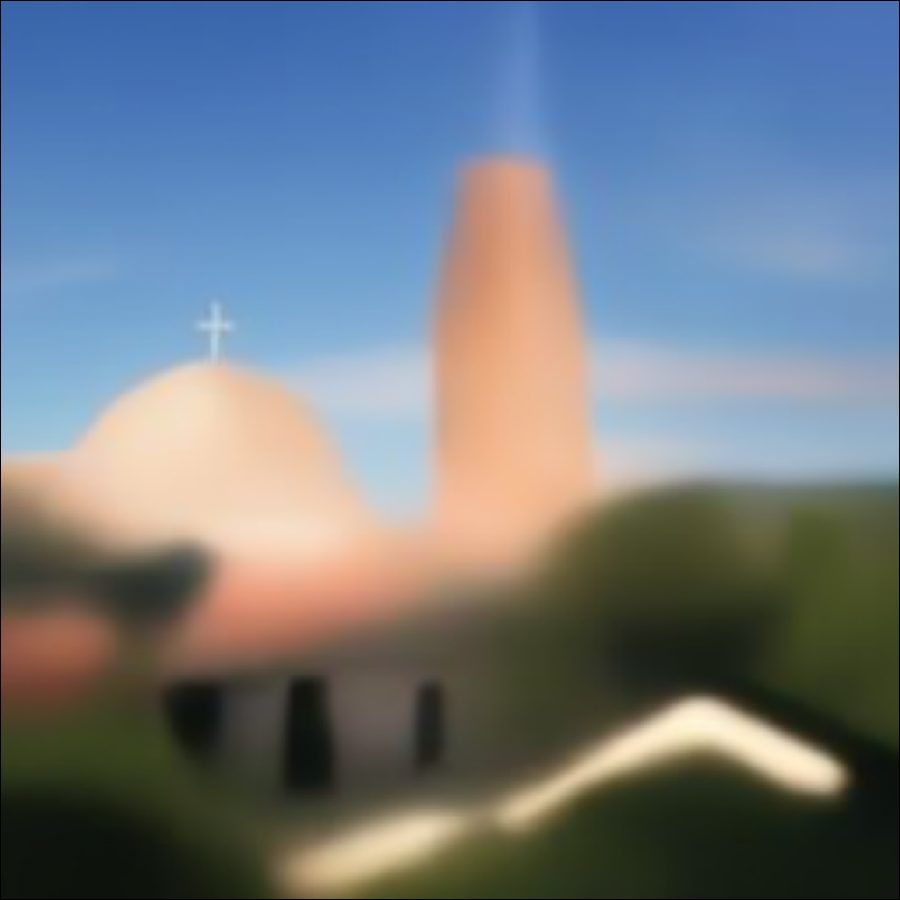}
 & \includegraphics[width=\hsize]{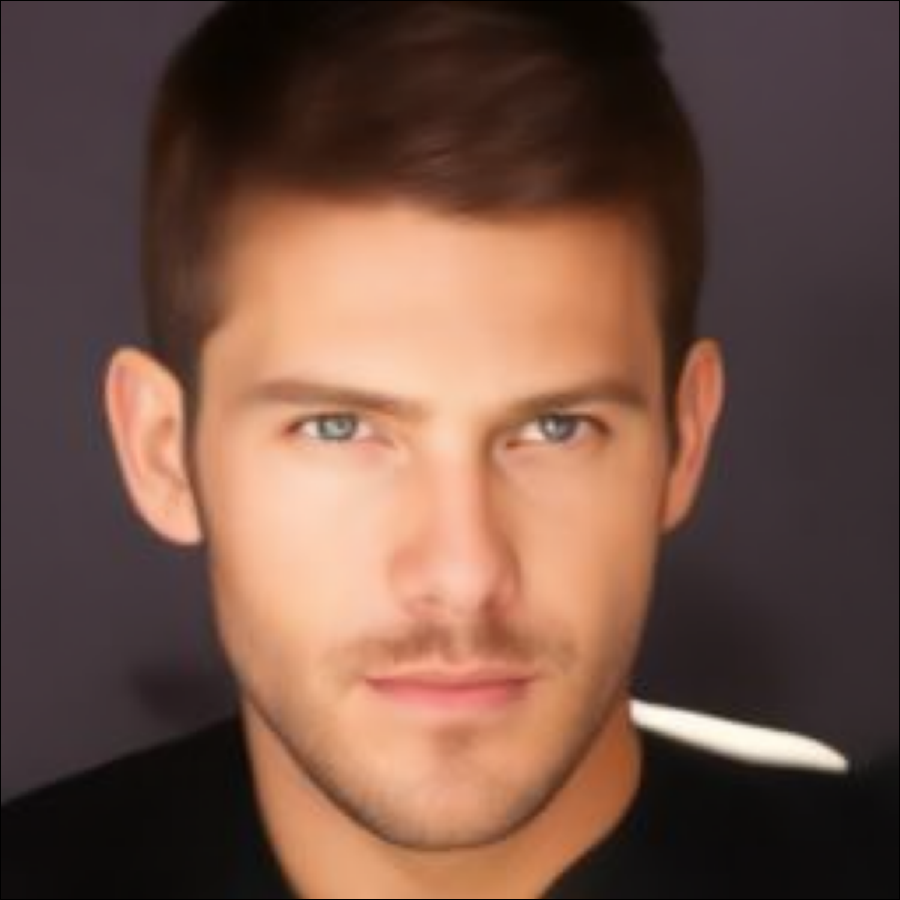} \\\addlinespace[-1.5ex]
  \algo
 &\includegraphics[width=\hsize]{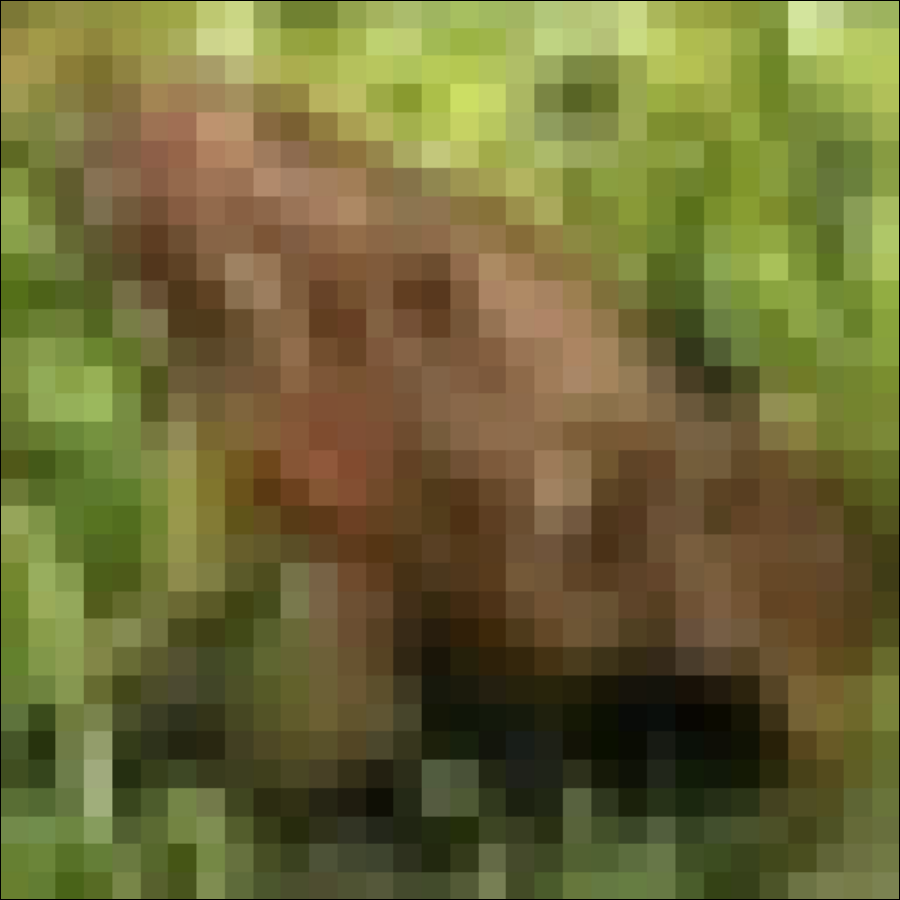}
 &\includegraphics[width=\hsize]{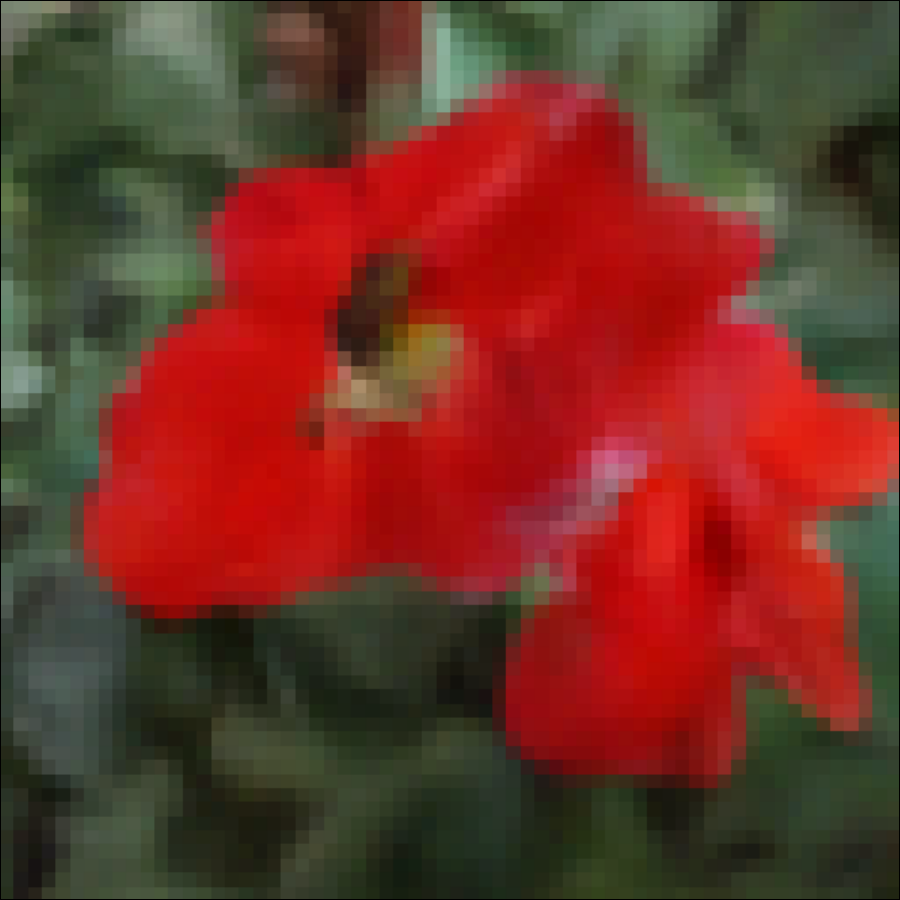}
 &\includegraphics[width=\hsize]{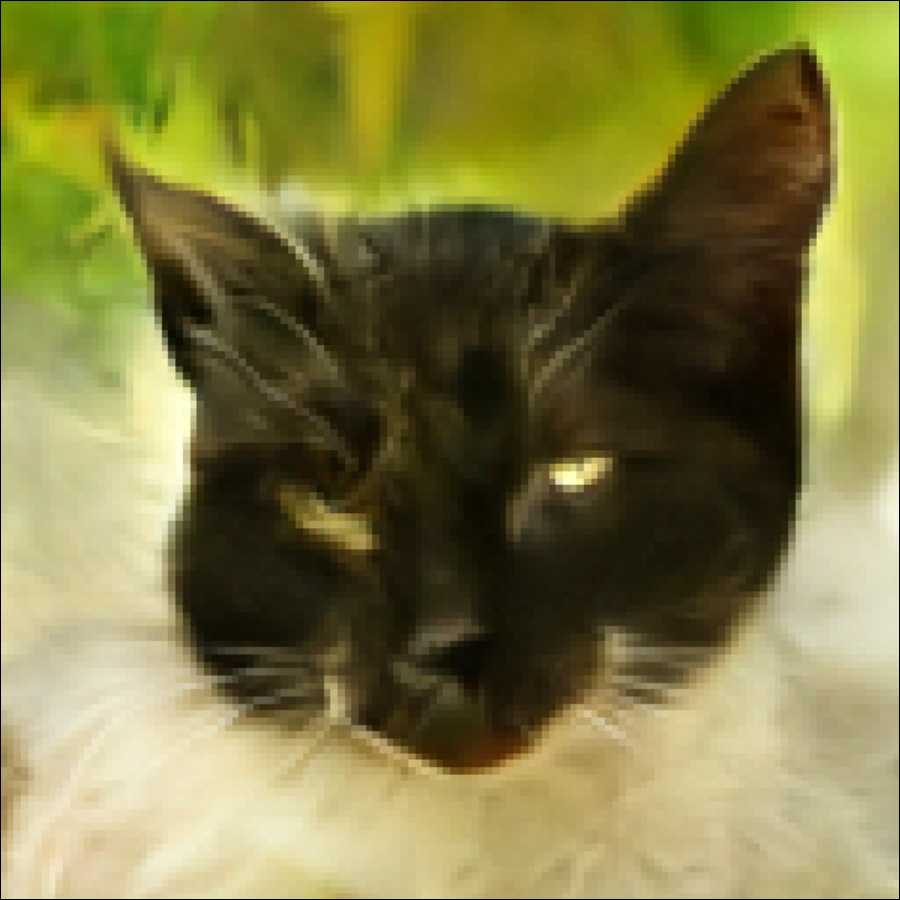}
 & \includegraphics[width=\hsize]{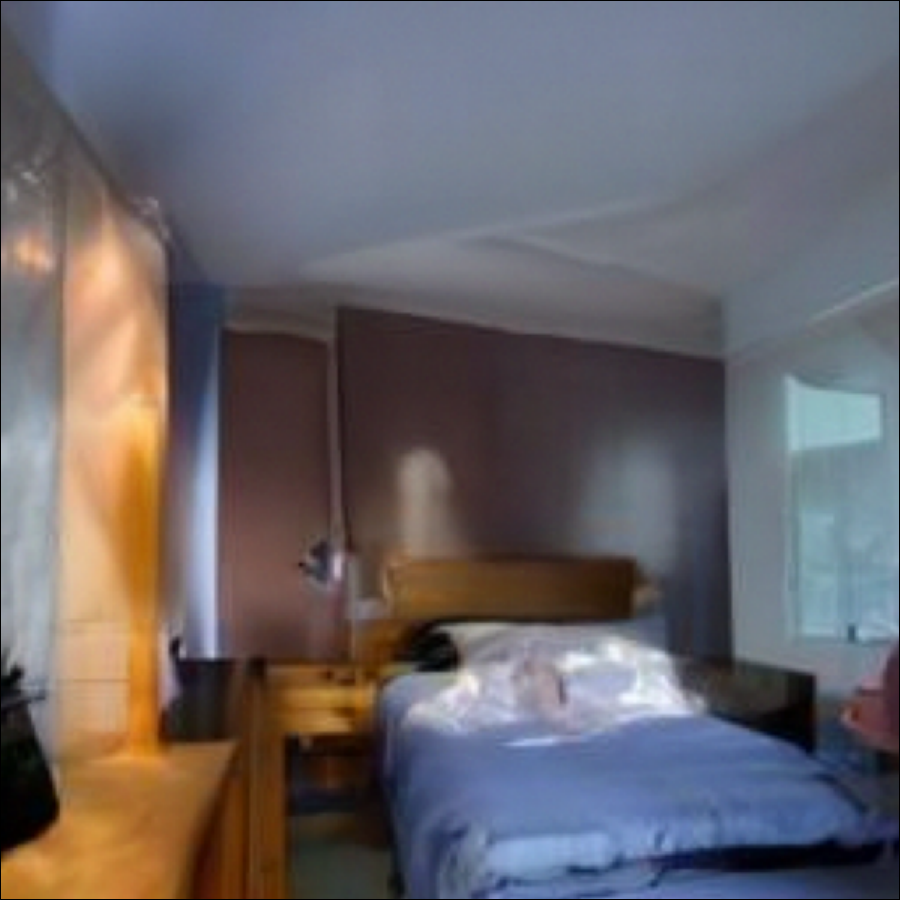}
 & \includegraphics[width=\hsize]{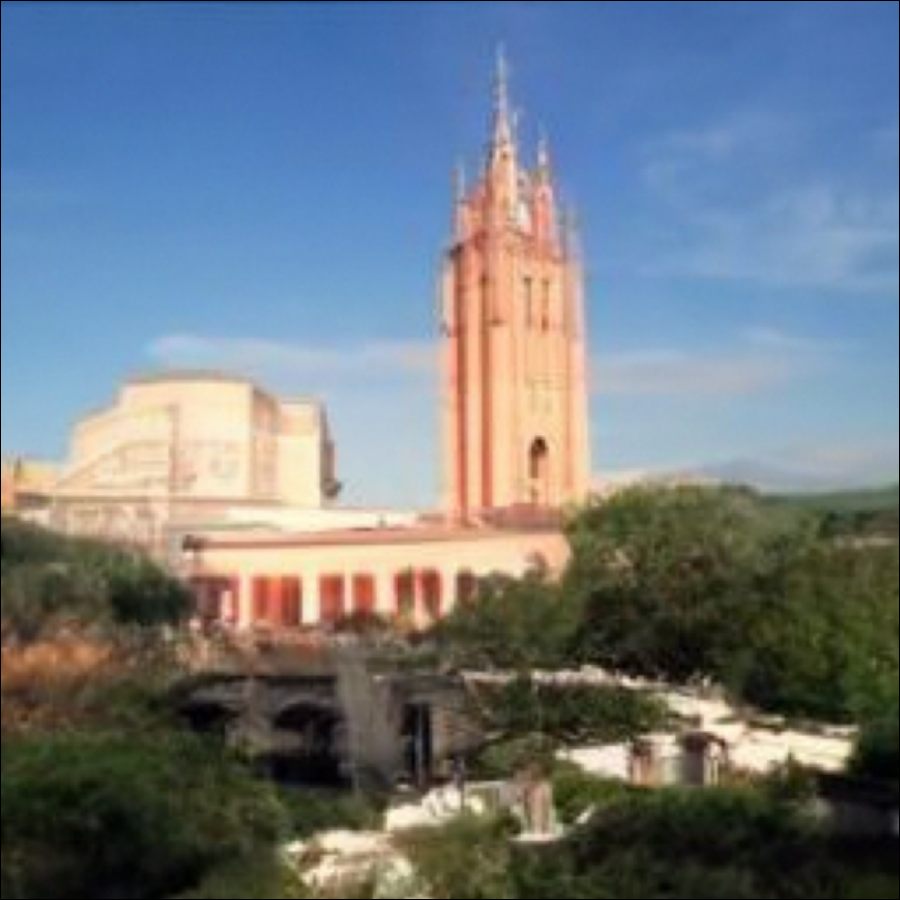}
 & \includegraphics[width=\hsize]{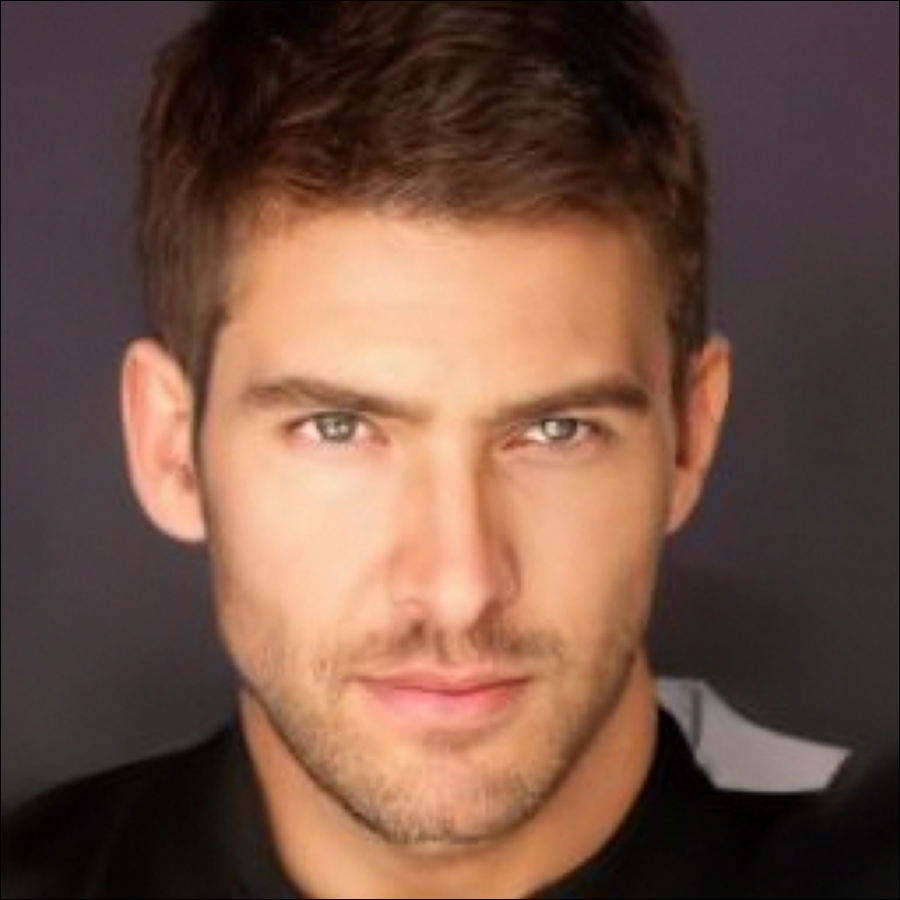} \\\addlinespace[-1.5ex]
 \algo
 &\includegraphics[width=\hsize]{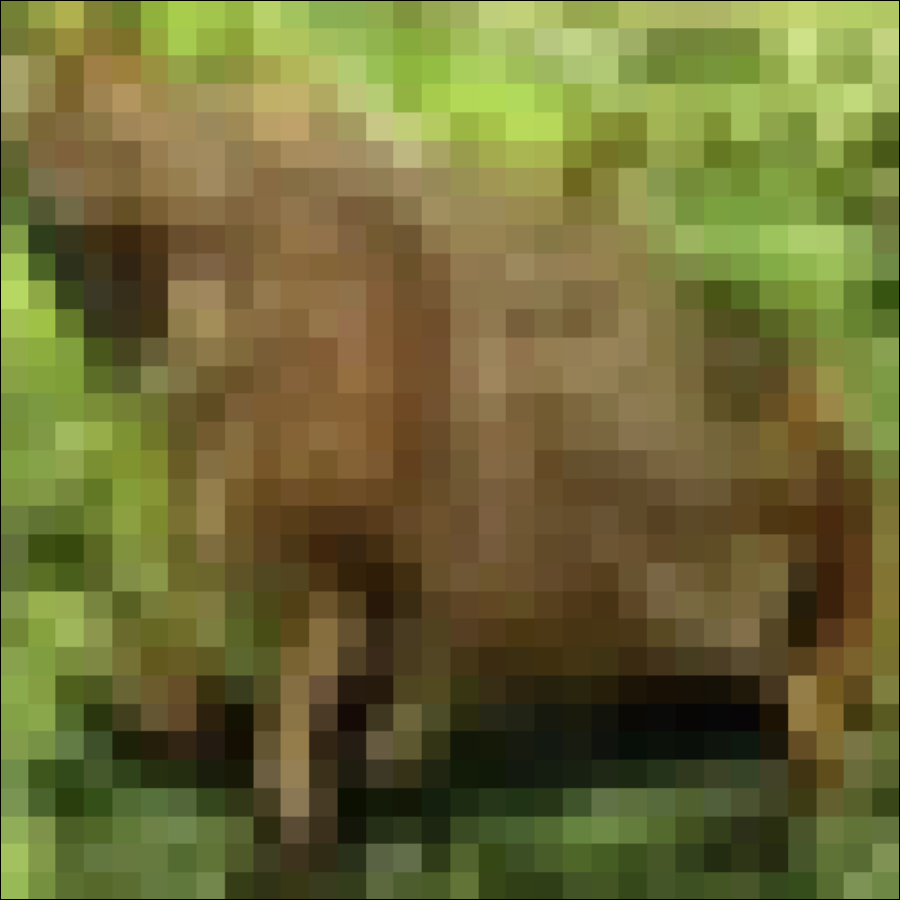}
 &\includegraphics[width=\hsize]{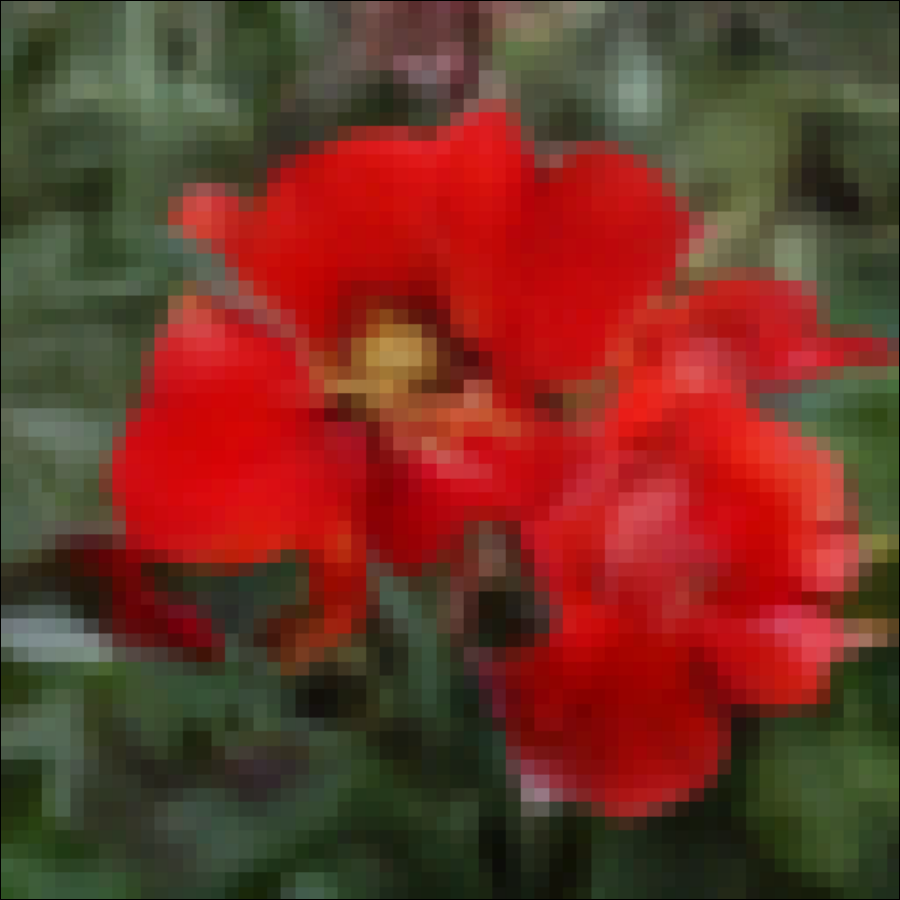}
 &\includegraphics[width=\hsize]{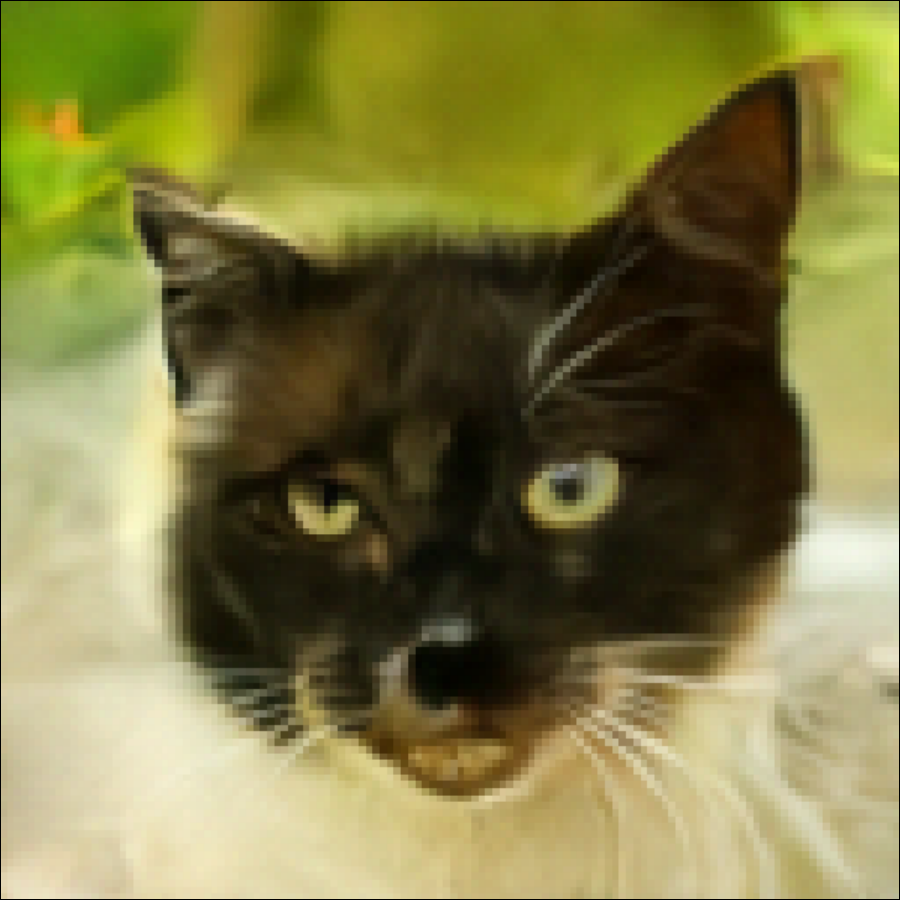}
 & \includegraphics[width=\hsize]{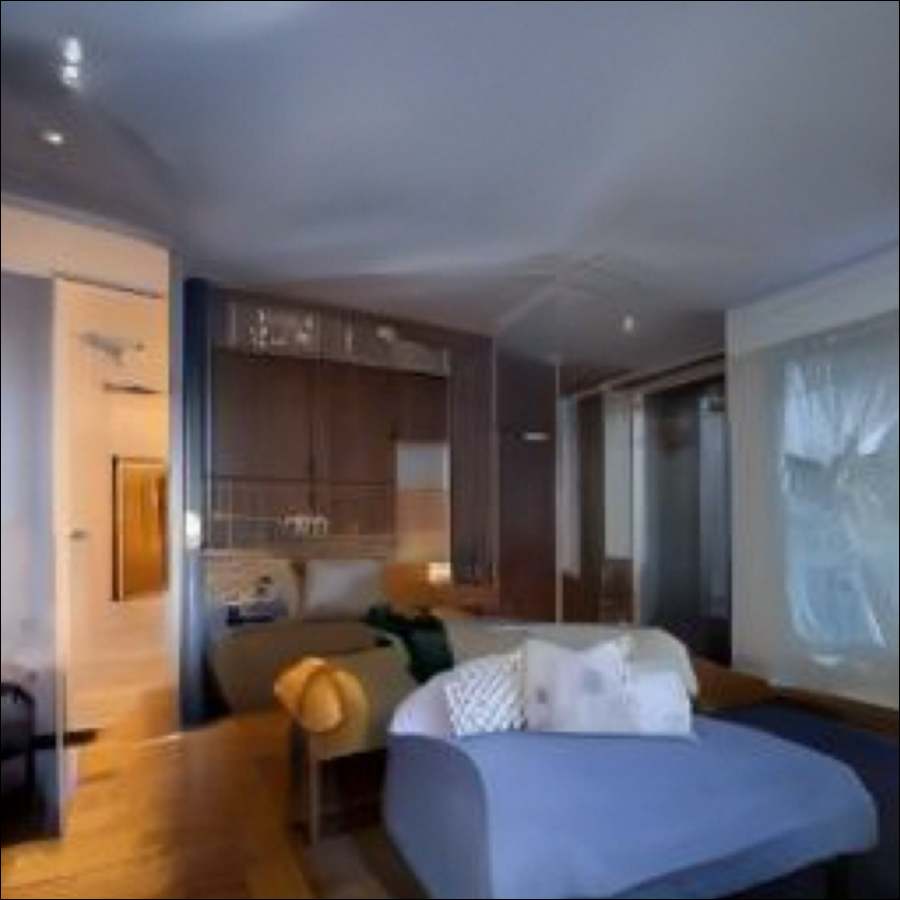}
 & \includegraphics[width=\hsize]{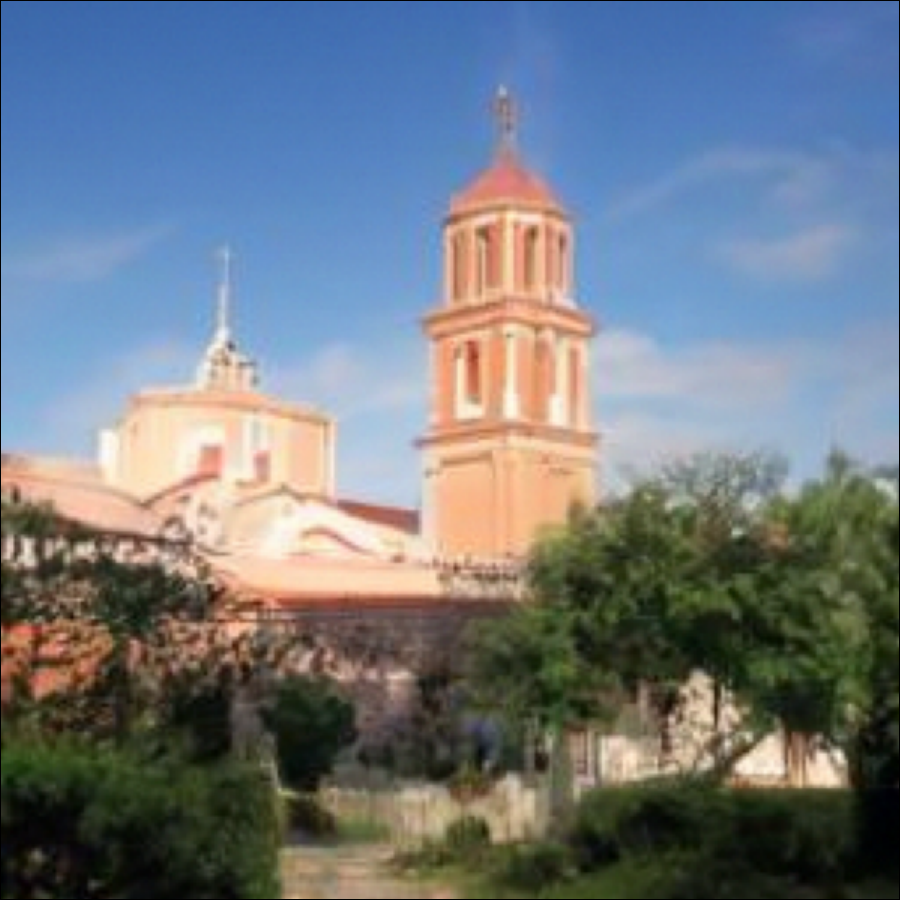}
 & \includegraphics[width=\hsize]{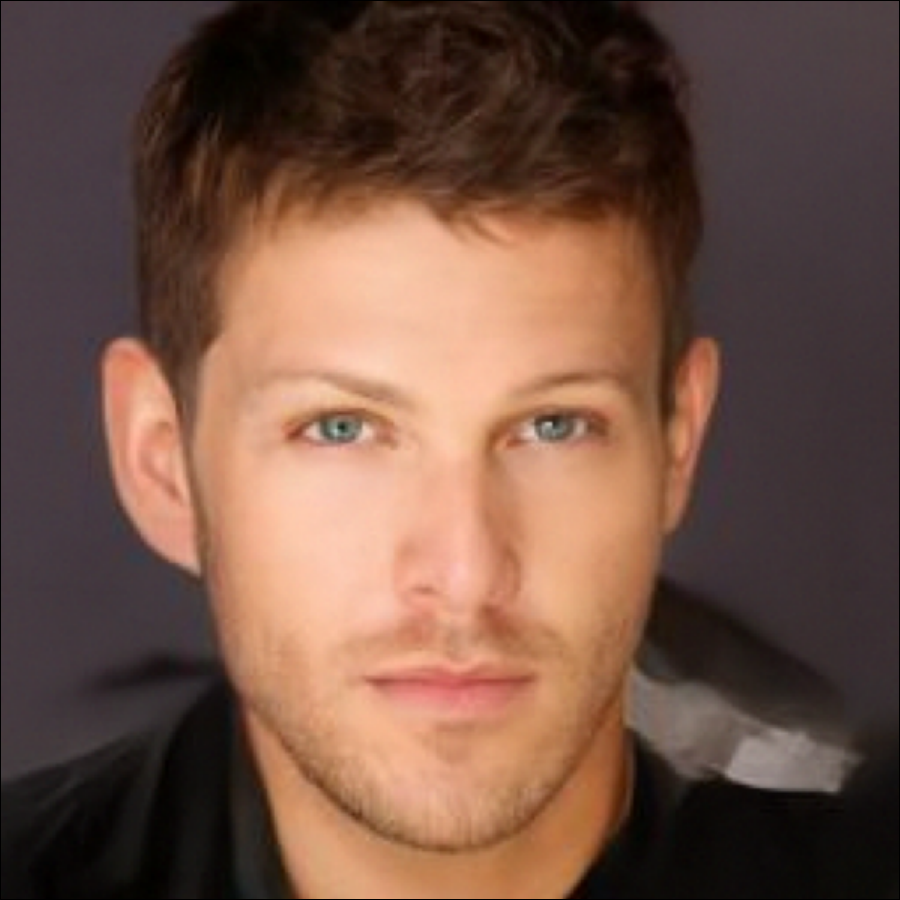} \\\addlinespace[-1.5ex]
\end{tabular}
\caption{Ratio 4 for CIFAR, 8 for flowers and Cats and 16 for CELEB}
\label{fig:sr}
\end{figure}
\begin{table}[H]
    \centering
    \begin{tabular}{| c | c | c | c | c | c | c |}
        \hline
        & CIFAR-10 & Flowers & Cats & Bedroom & Church & CelebaHQ \\ \hline
        $(W, H, C)$ & $(32, 32, 3)$ & $(64, 64, 3)$ & $(128, 128, 3)$ & $(256, 256, 3)$ & $(256, 256, 3)$ & $(256, 256, 3)$ \\\hline
    \end{tabular}
    \caption{The datasets used for the inverse problems over image datasets.}
    \label{table:image_dataset_description}
\end{table}
\paragraph*{Gaussian 2D debluring}
We consider a Gaussian 2D square kernel with sizes $(w / 6, h / 6)$ and standard deviation $w / 30$ where $(w, h)$ are the width and height of the image.
We set $\sigma_y = 0.1$ for all the datasets
and $\zeta_{\mathrm{coeff}} = 0.1$ for \dps\ . We use $100$ steps of \ddim\ with $\eta=1$.
We display in \cref{fig:deblur_2d} samples from \algo, \dps and \ddrm over several different image datasets (\cref{table:image_dataset_description}).
For each algorithm, we
generate $1000$ samples and we show the pair of samples that are the furthest appart in $L^2$ norm from each other in the pool of samples.
For \algo\ we ran several parallel particle filters with $N=64$ to generate $1000$ samples.
\begin{figure}
\centering
\begin{tabular}{@{} l
    M{0.13\linewidth}@{\hspace{0\tabcolsep}}
    M{0.13\linewidth}@{\hspace{0\tabcolsep}}
    M{0.13\linewidth}@{\hspace{0\tabcolsep}}
    M{0.13\linewidth}@{\hspace{0\tabcolsep}}
    M{0.13\linewidth}@{\hspace{0\tabcolsep}}
    M{0.13\linewidth}@{\hspace{0\tabcolsep}} @{}}
 & CIFAR 10 & Flowers & Cats & Bedroom & Church & CelebaHQ
    \\
 sample
 &\includegraphics[width=\hsize]{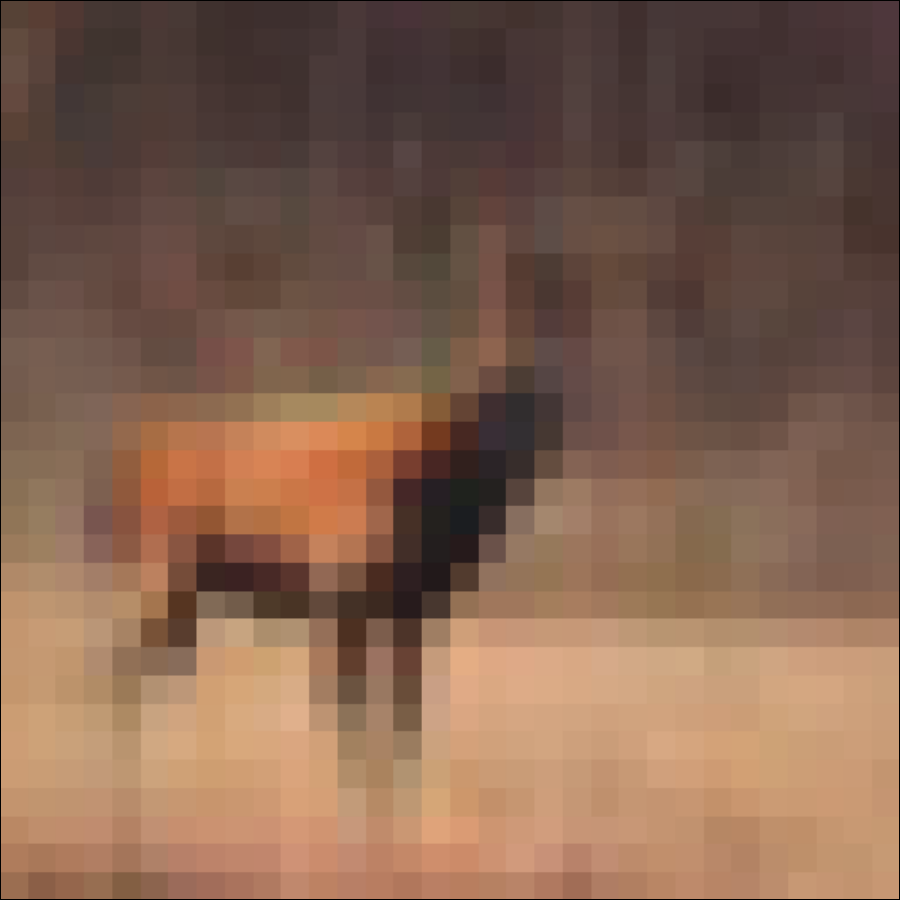}
 &\includegraphics[width=\hsize]{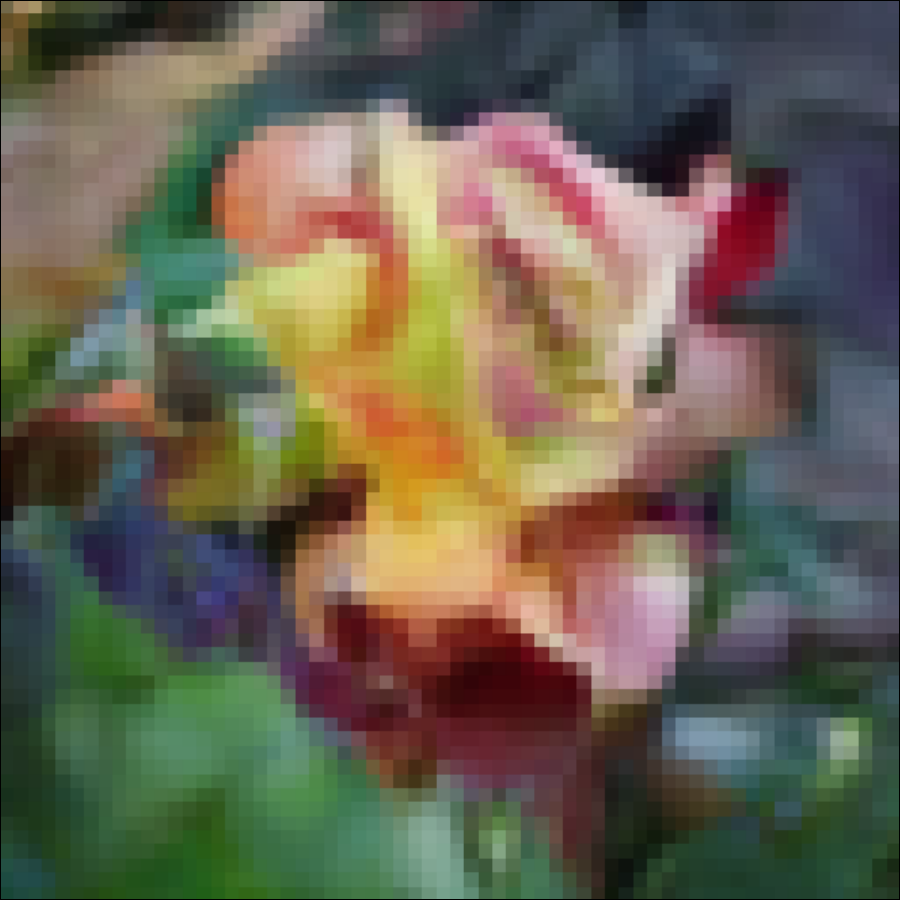}
 &\includegraphics[width=\hsize]{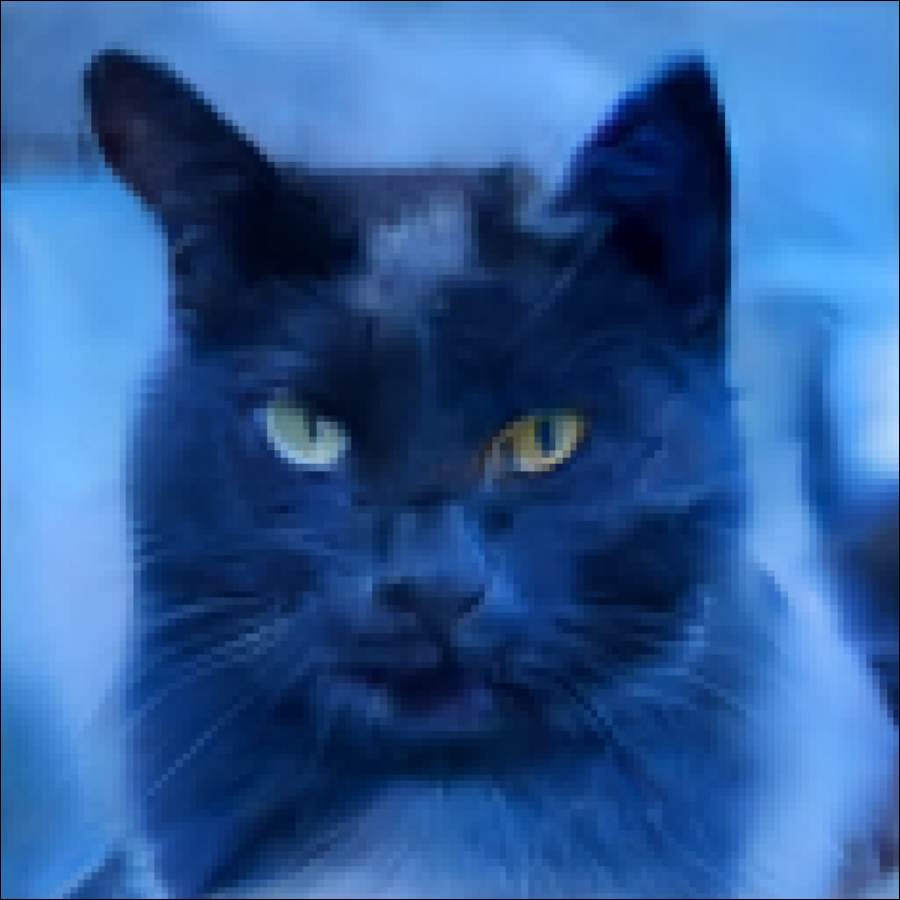}
 & \includegraphics[width=\hsize]{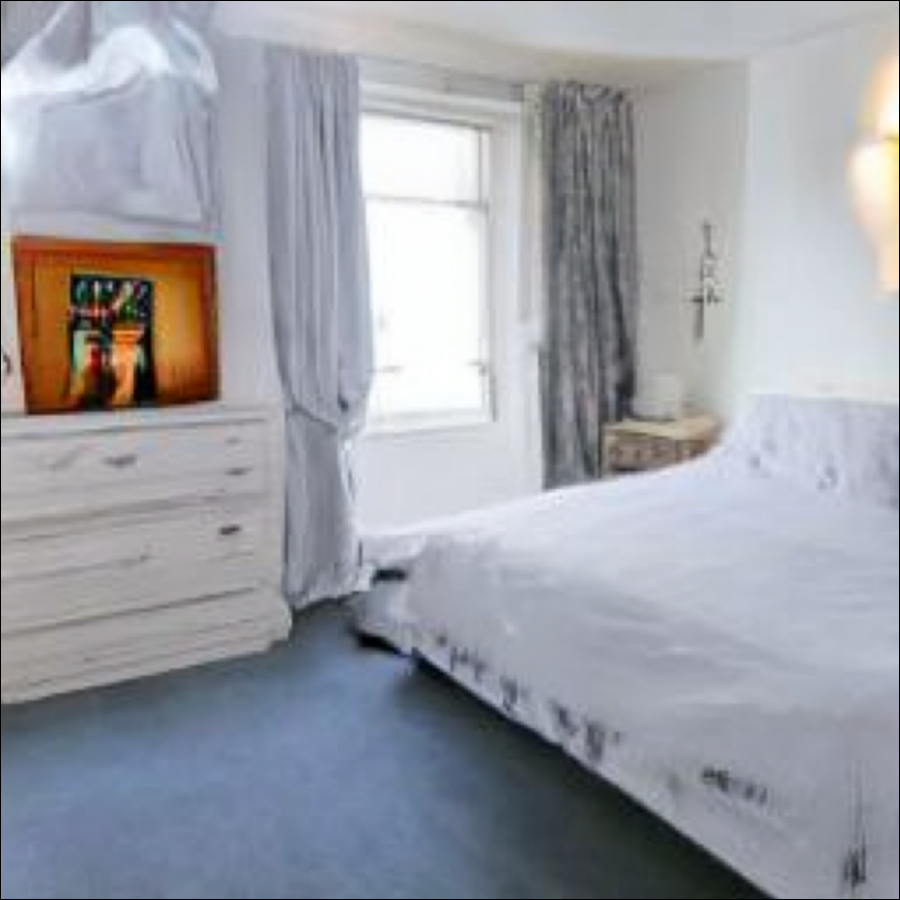}
 & \includegraphics[width=\hsize]{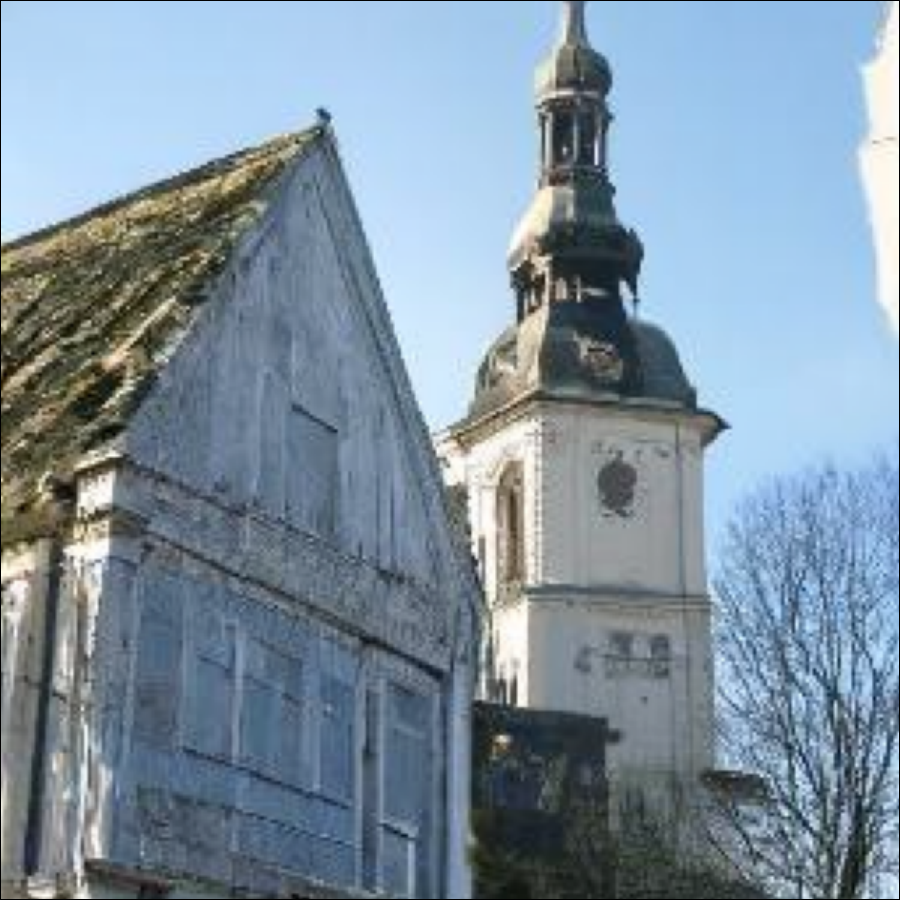}
 & \includegraphics[width=\hsize]{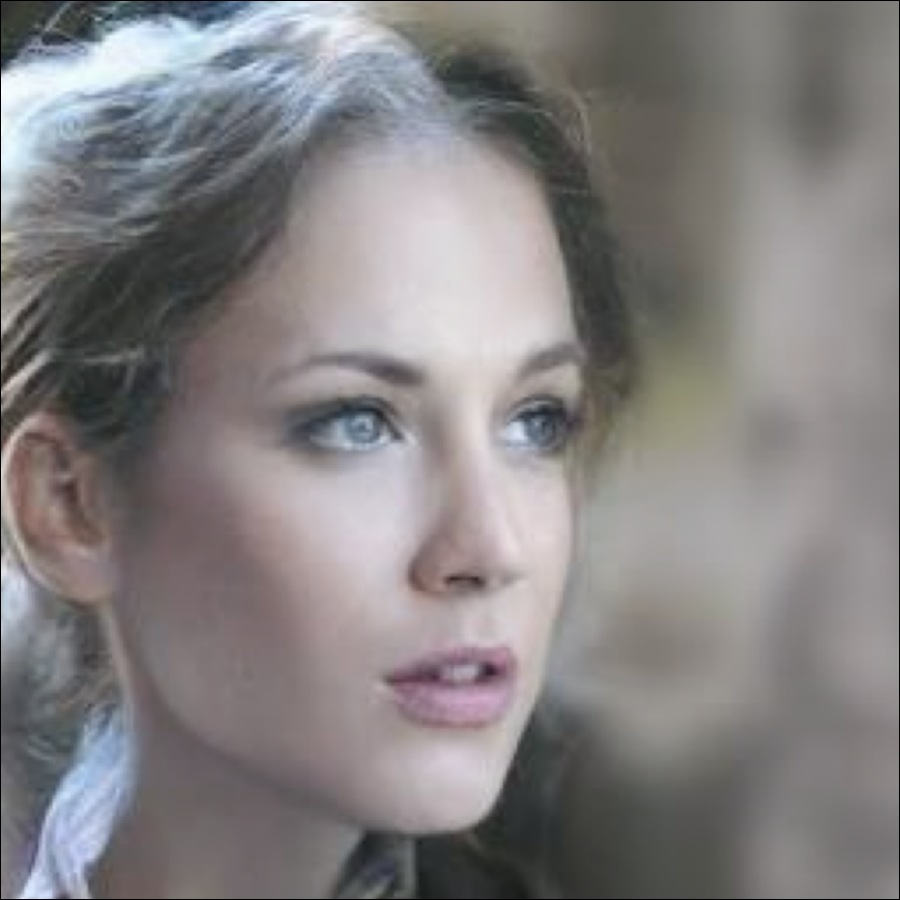} \\\addlinespace[-1.5ex]
observation
 &\includegraphics[width=\hsize]{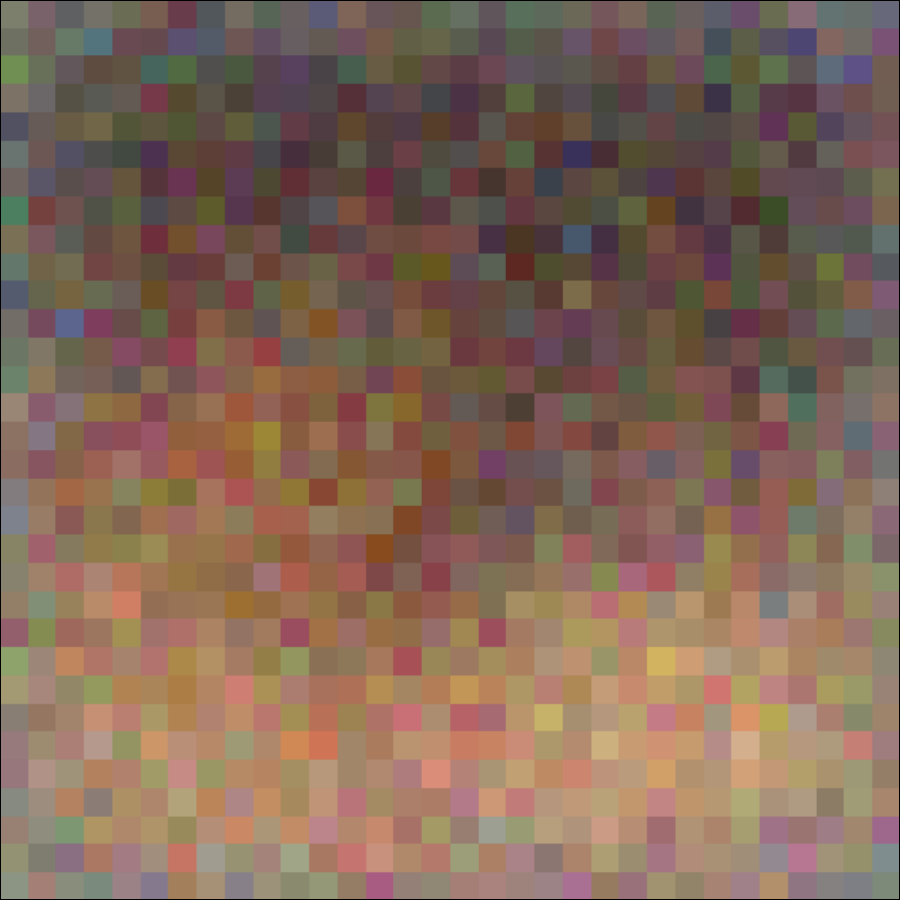}
 &\includegraphics[width=\hsize]{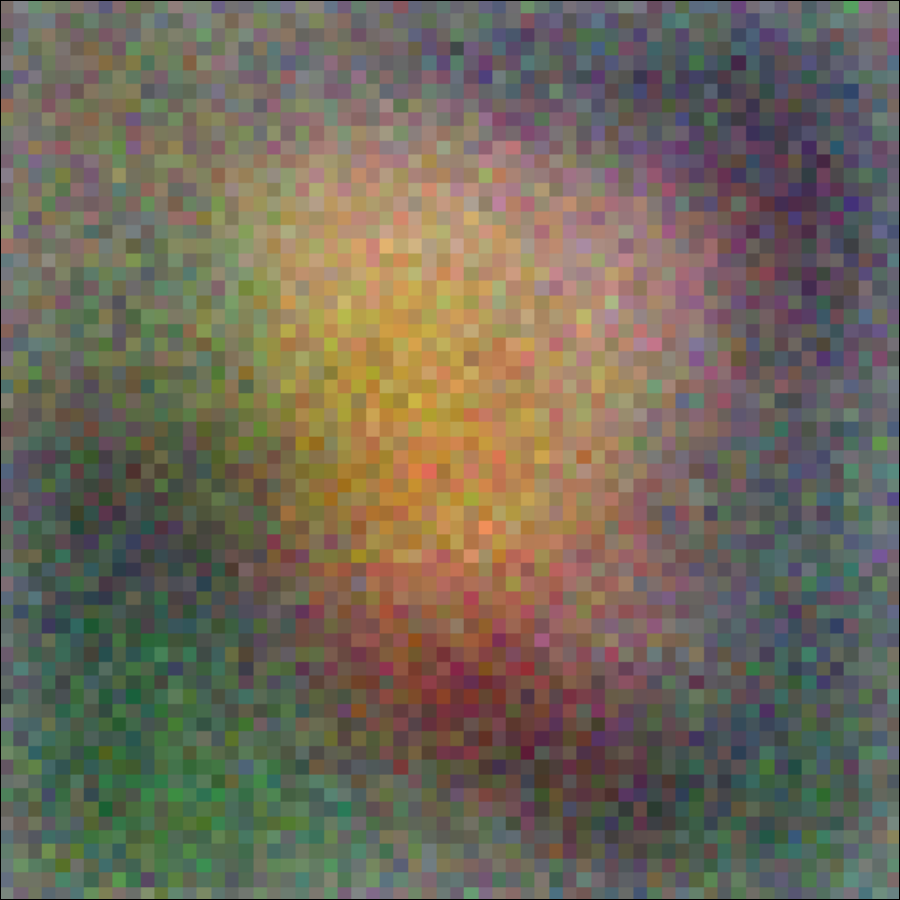}
 &\includegraphics[width=\hsize]{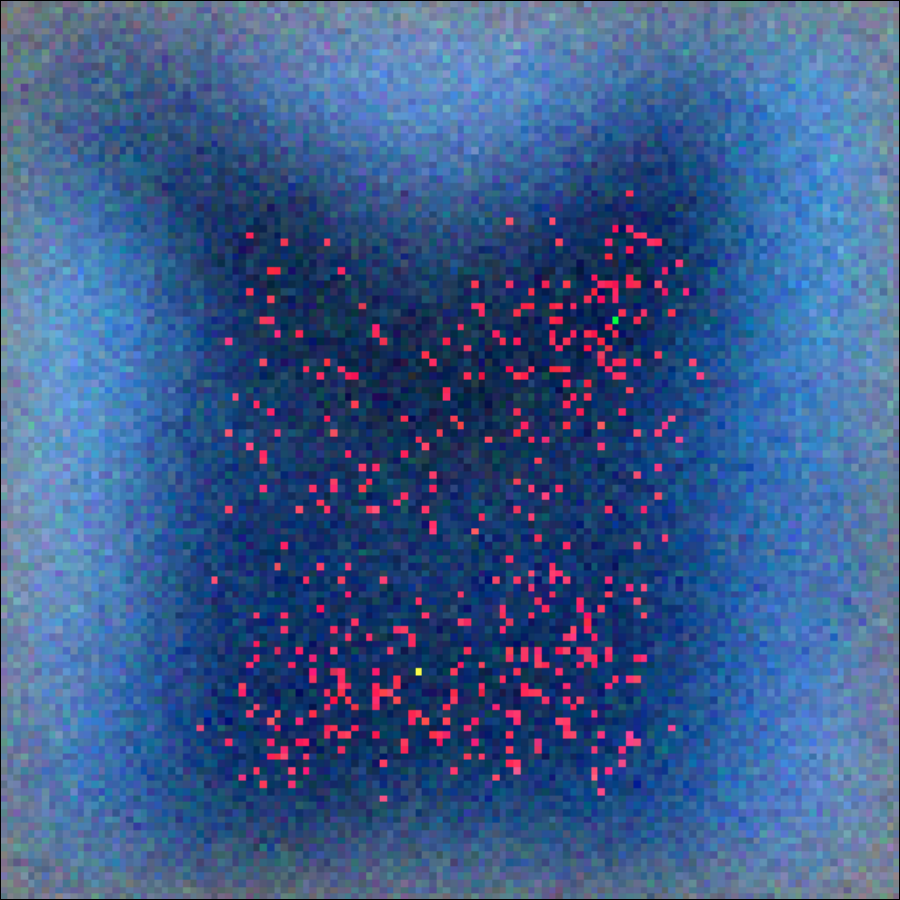}
 & \includegraphics[width=\hsize]{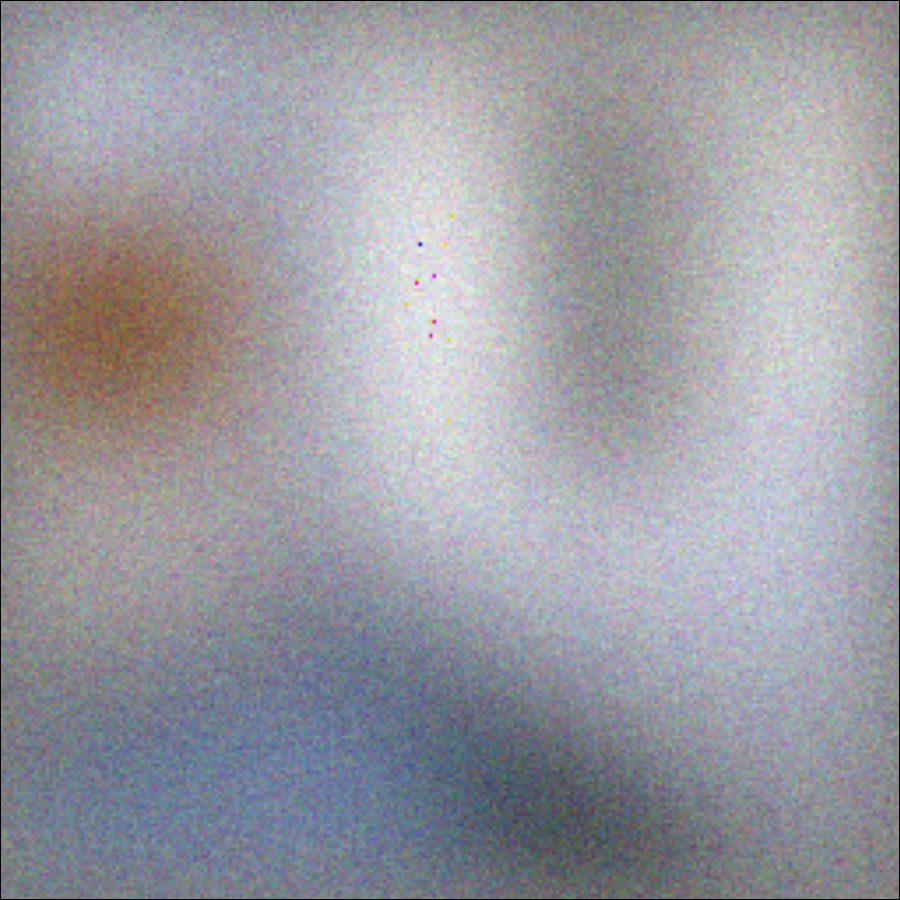}
 & \includegraphics[width=\hsize]{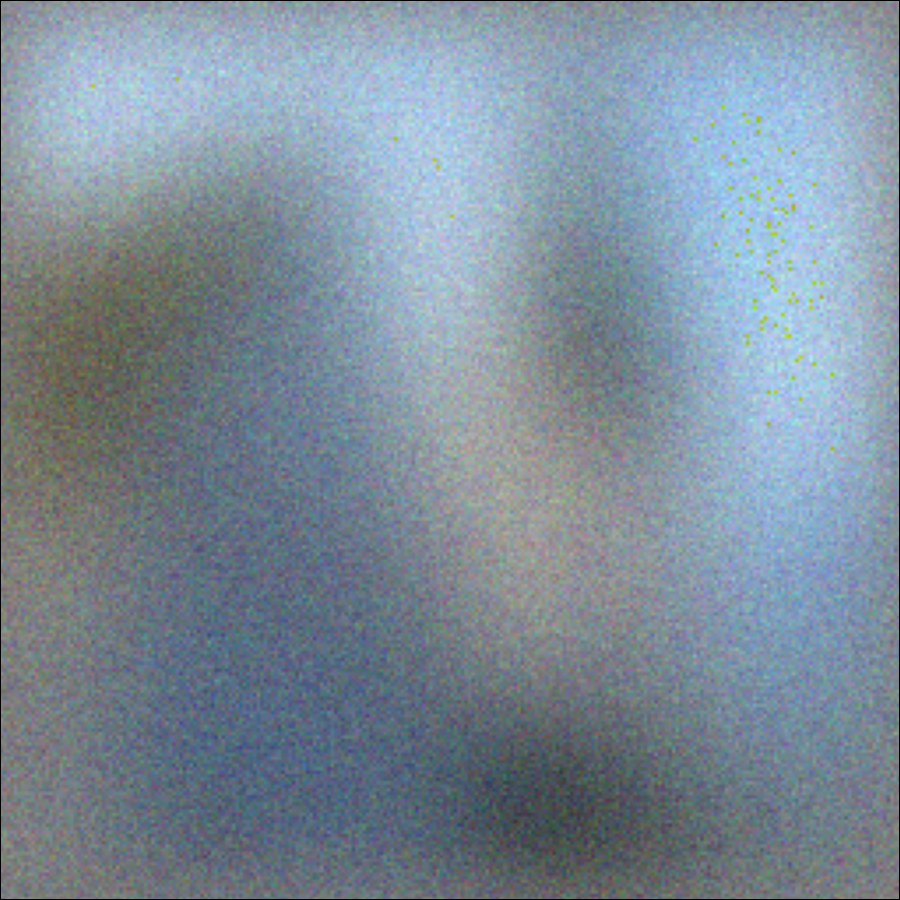}
 & \includegraphics[width=\hsize]{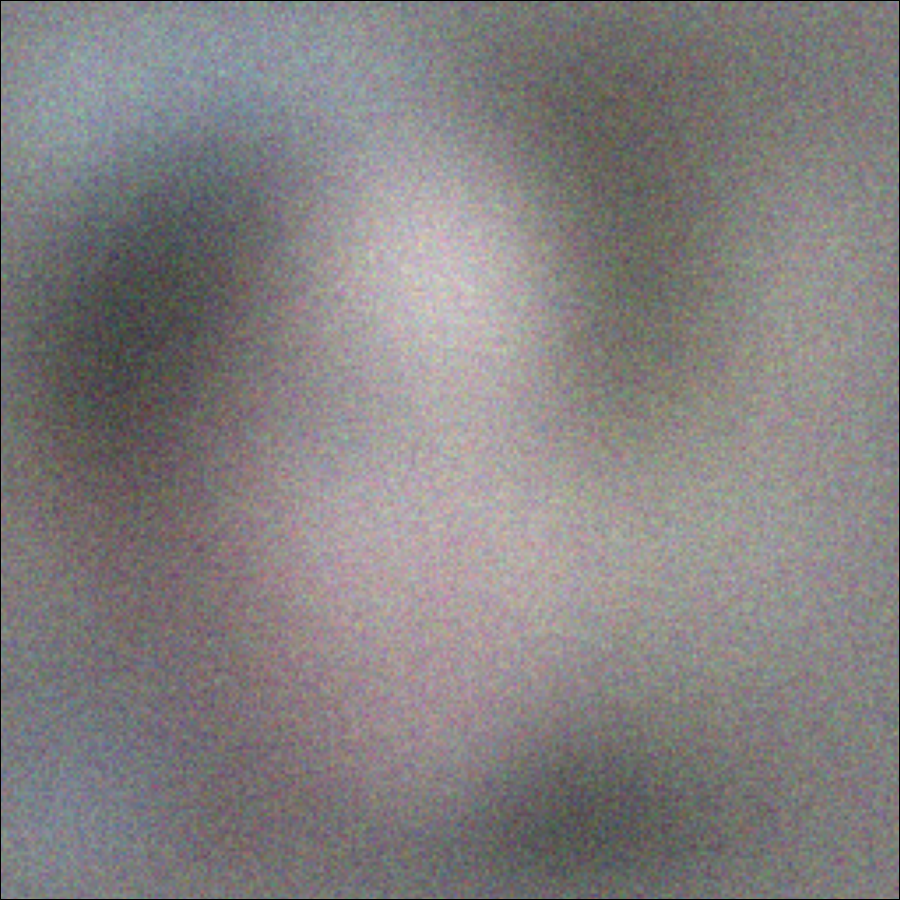} \\\addlinespace[-1.5ex]
 \dps
 &\includegraphics[width=\hsize]{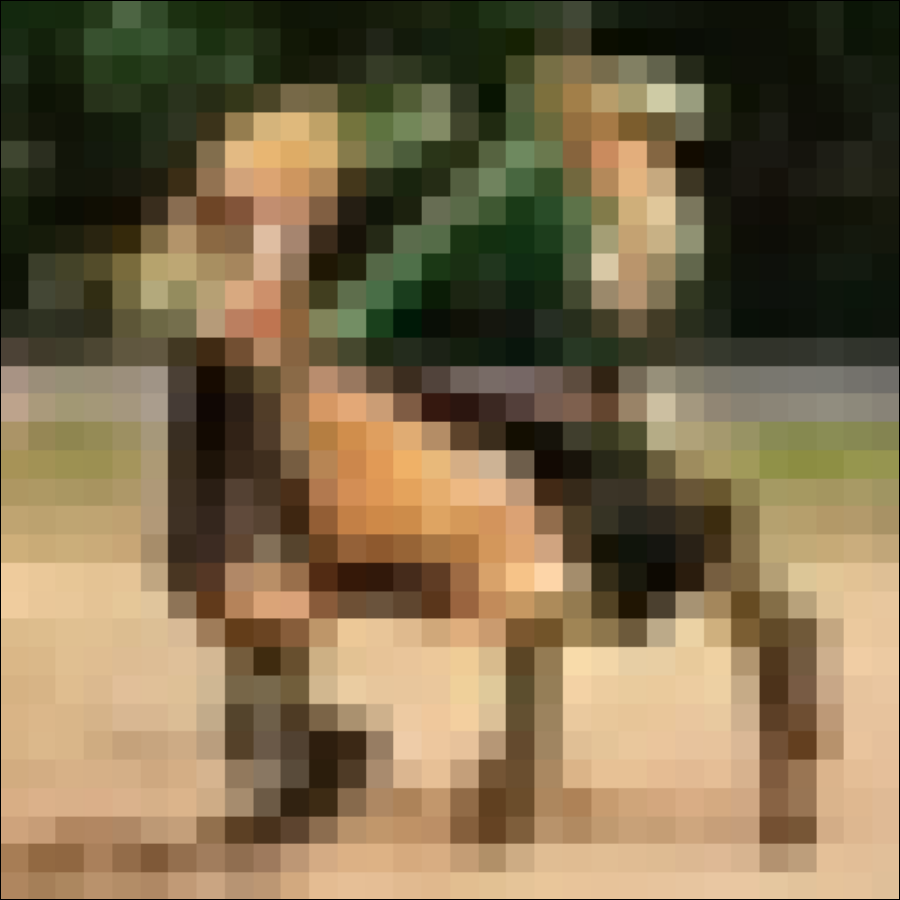}
 &\includegraphics[width=\hsize]{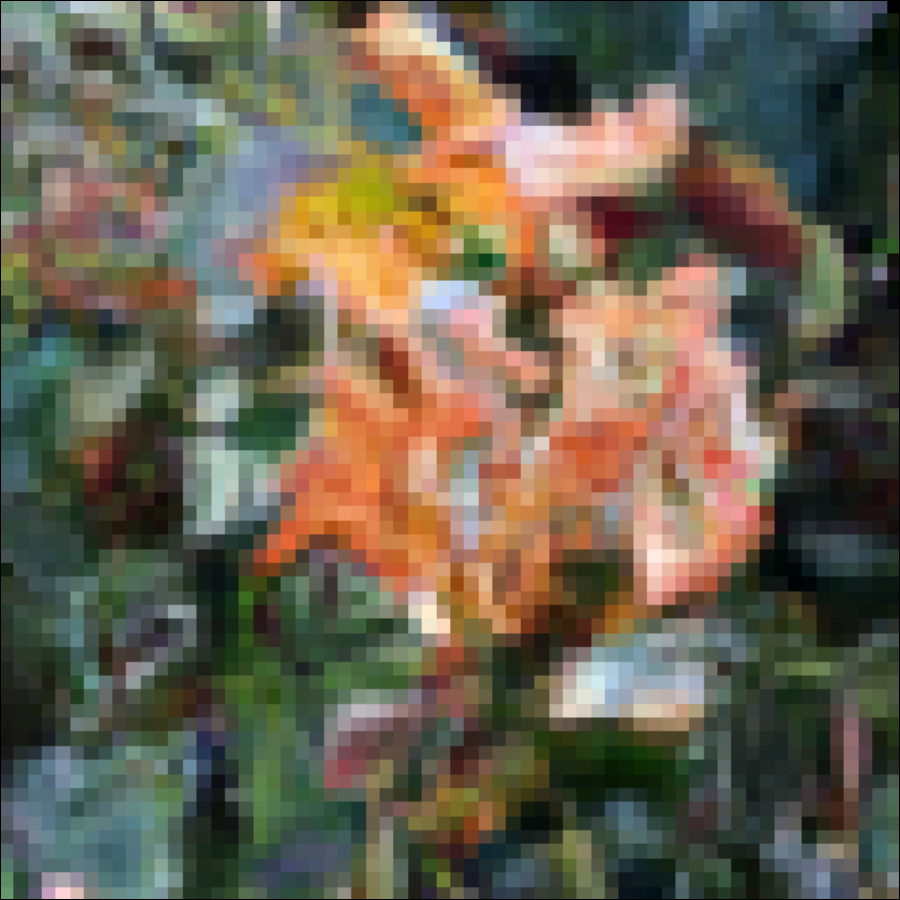}
 &\includegraphics[width=\hsize]{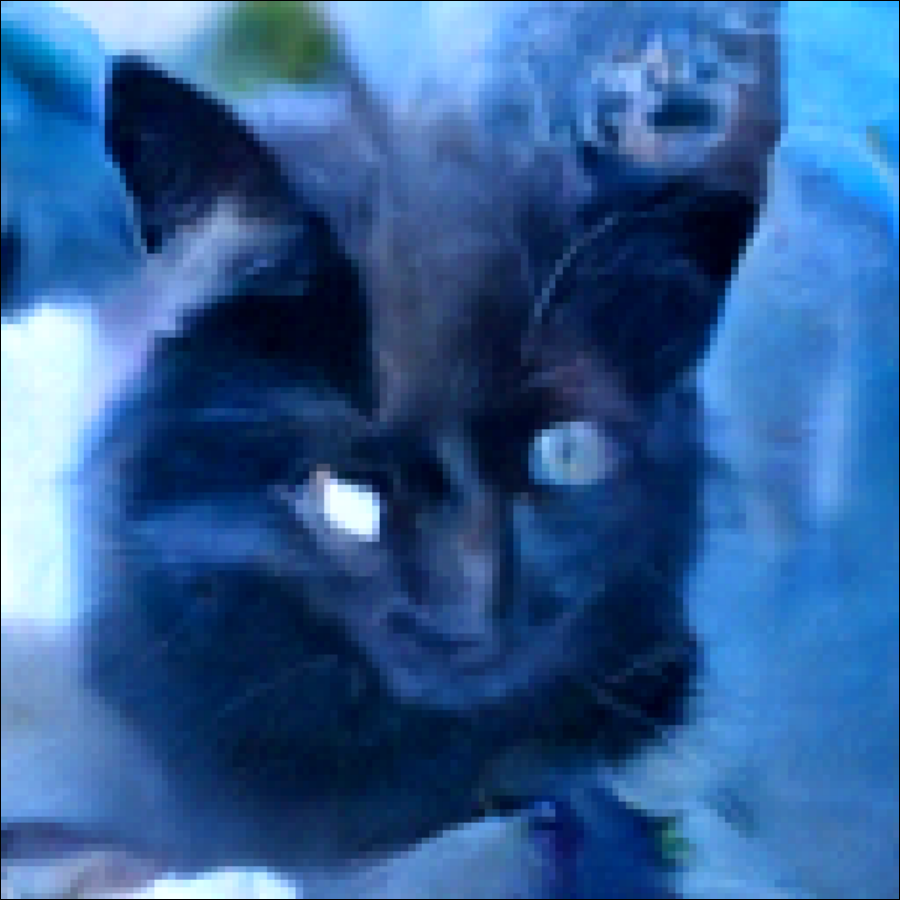}
 & \includegraphics[width=\hsize]{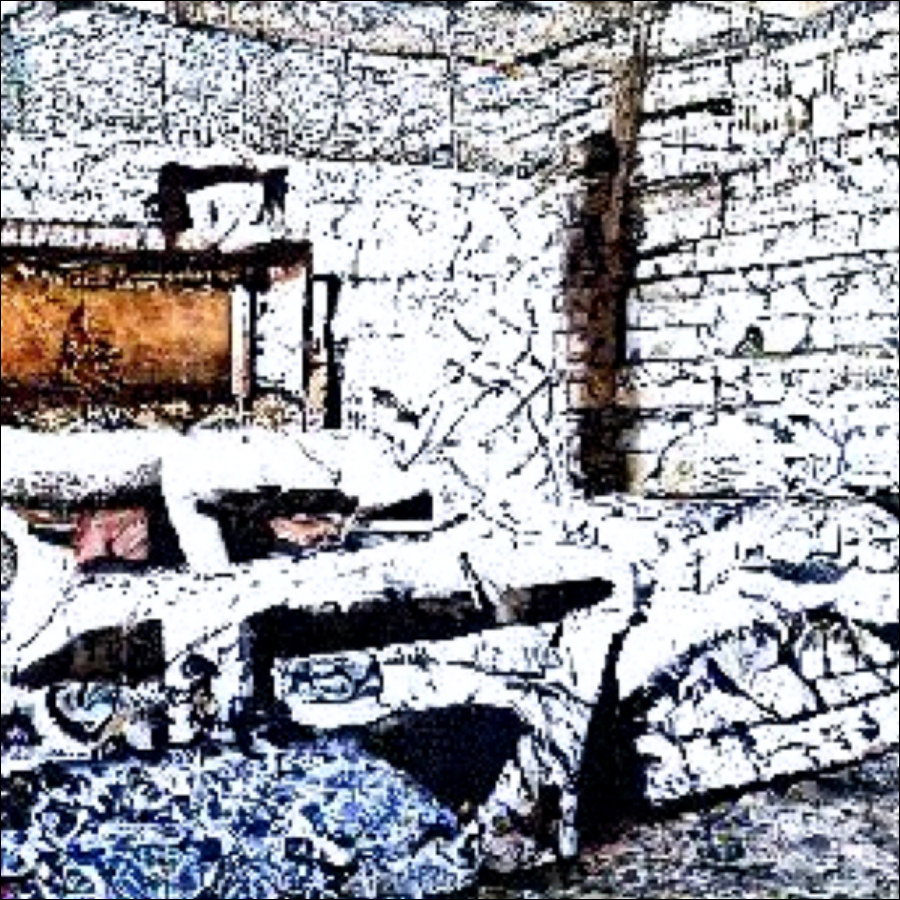}
 & \includegraphics[width=\hsize]{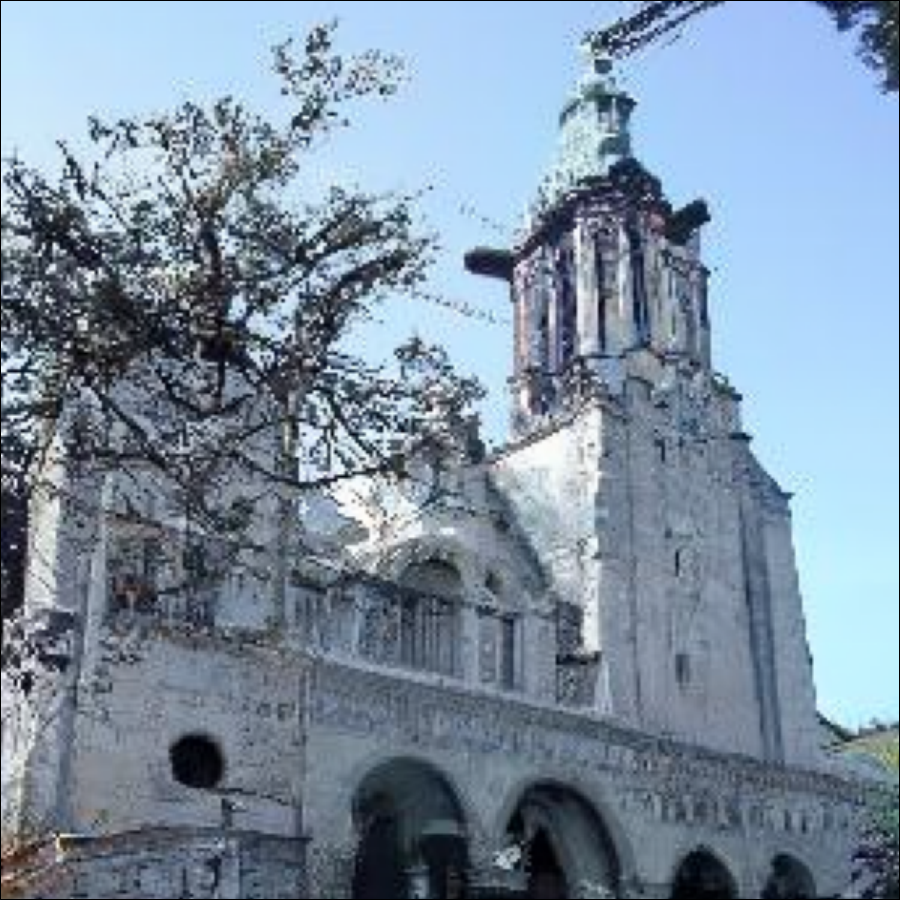}
 & \includegraphics[width=\hsize]{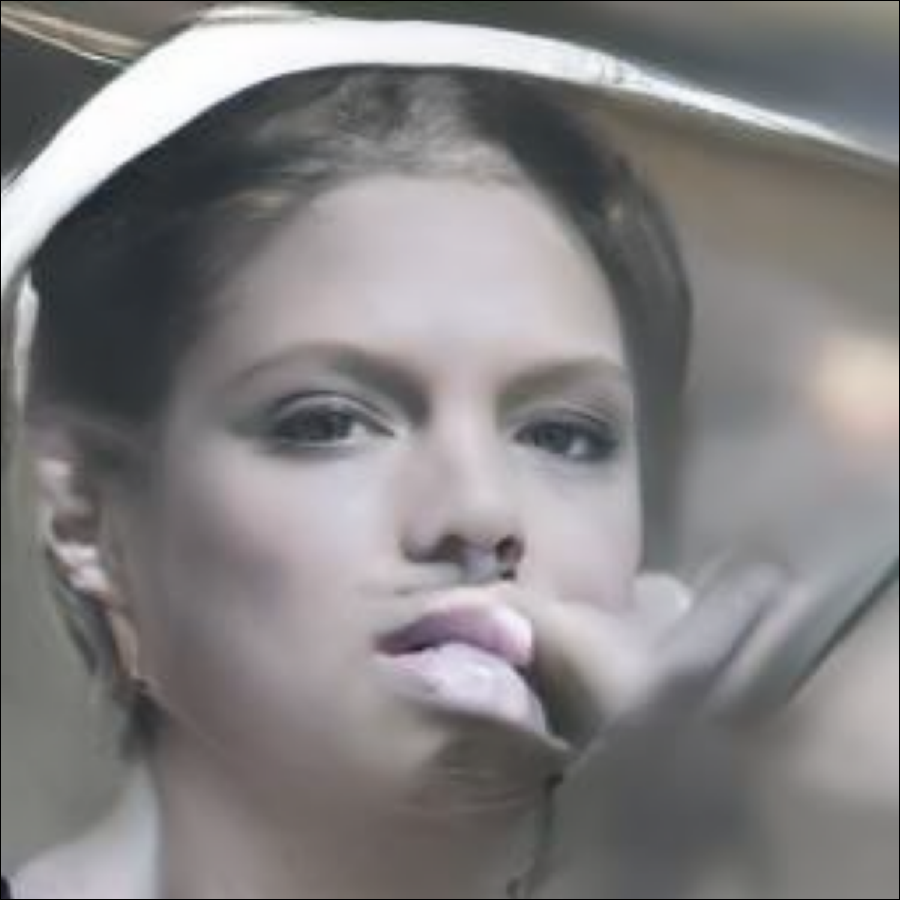} \\\addlinespace[-1.5ex]
 \dps
 &\includegraphics[width=\hsize]{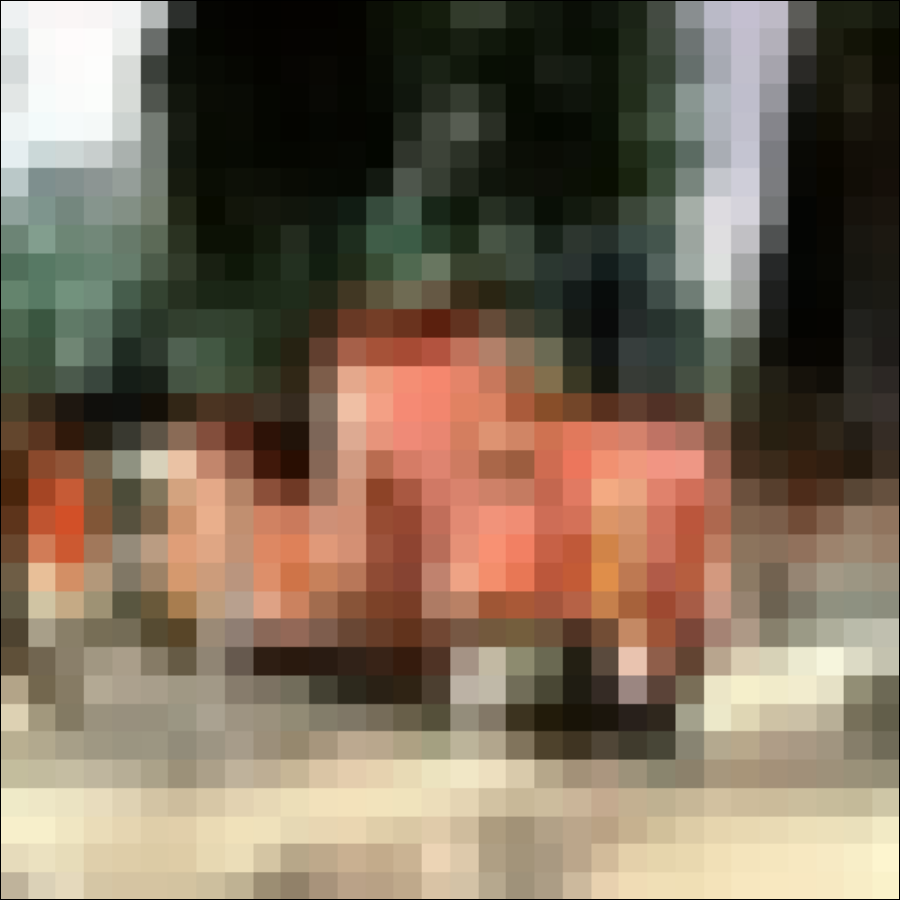}
 &\includegraphics[width=\hsize]{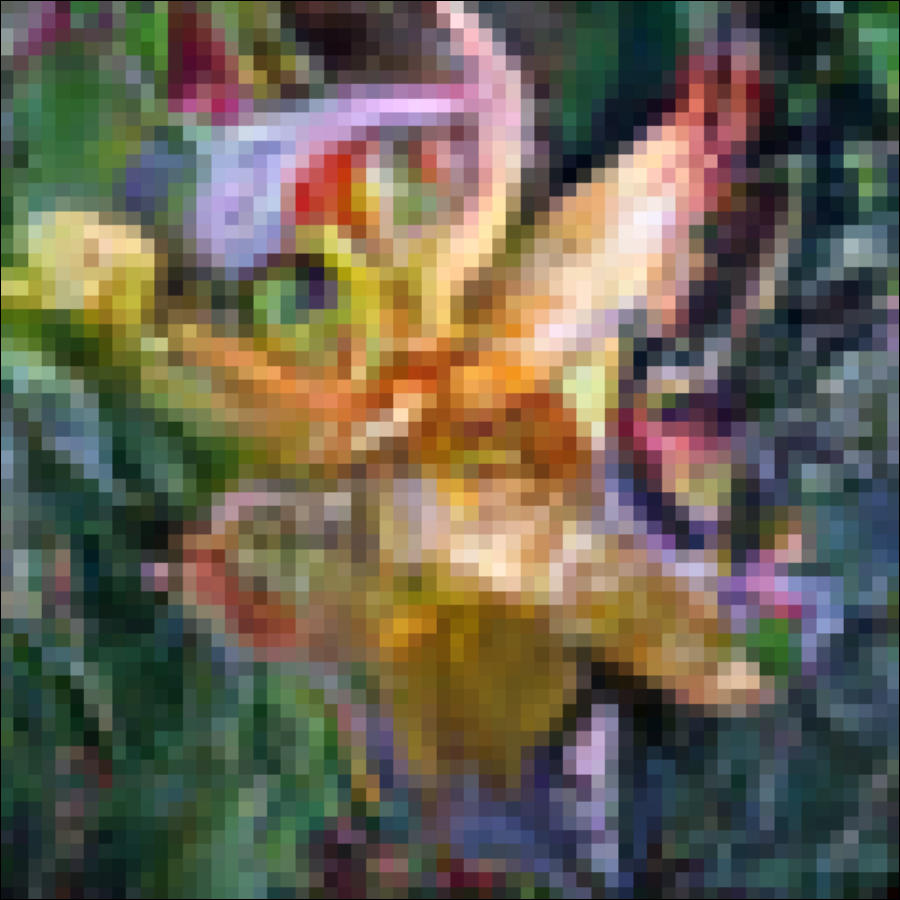}
 &\includegraphics[width=\hsize]{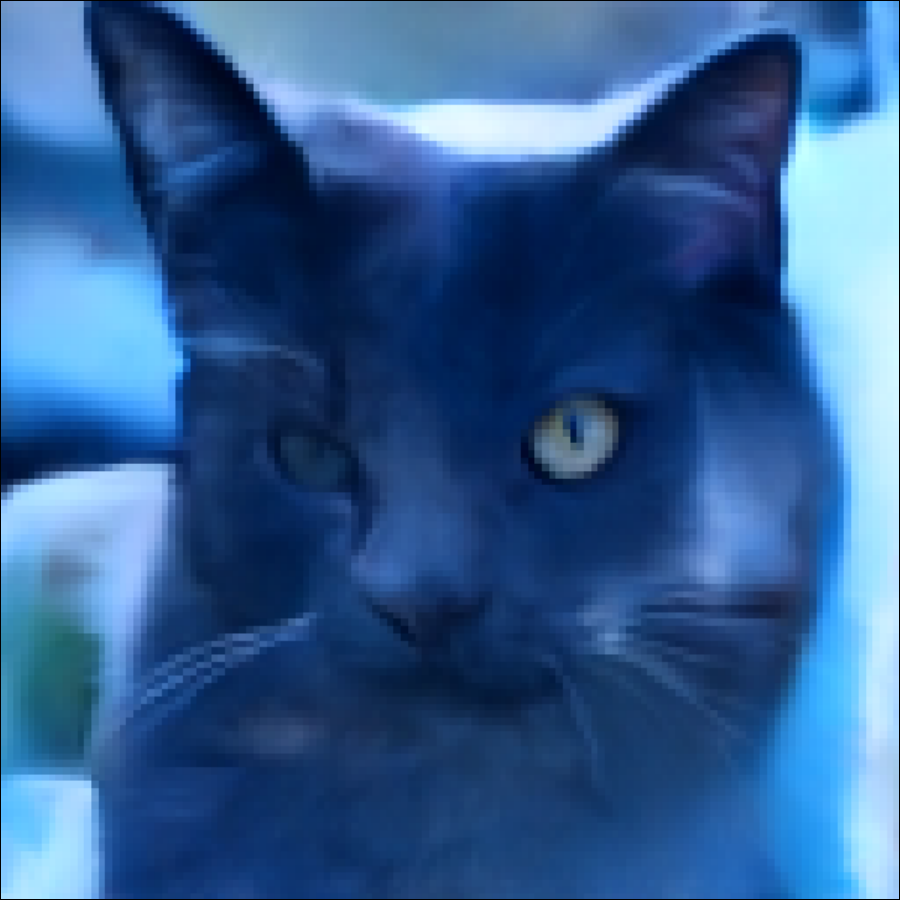}
 & \includegraphics[width=\hsize]{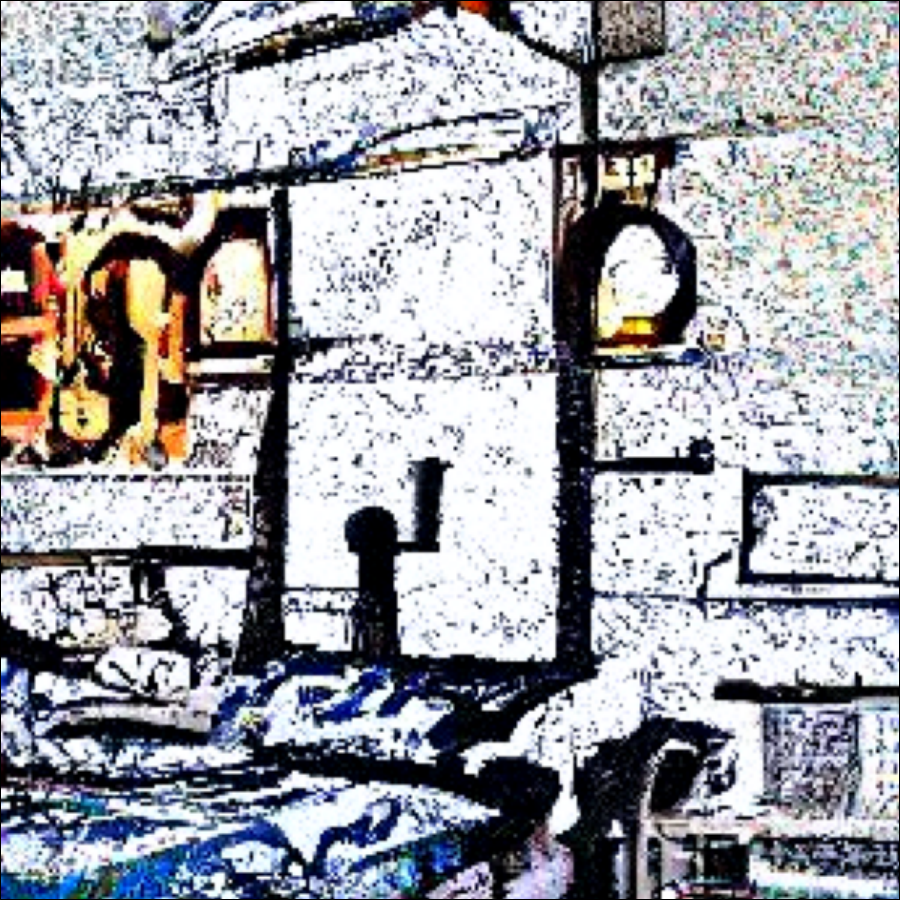}
 & \includegraphics[width=\hsize]{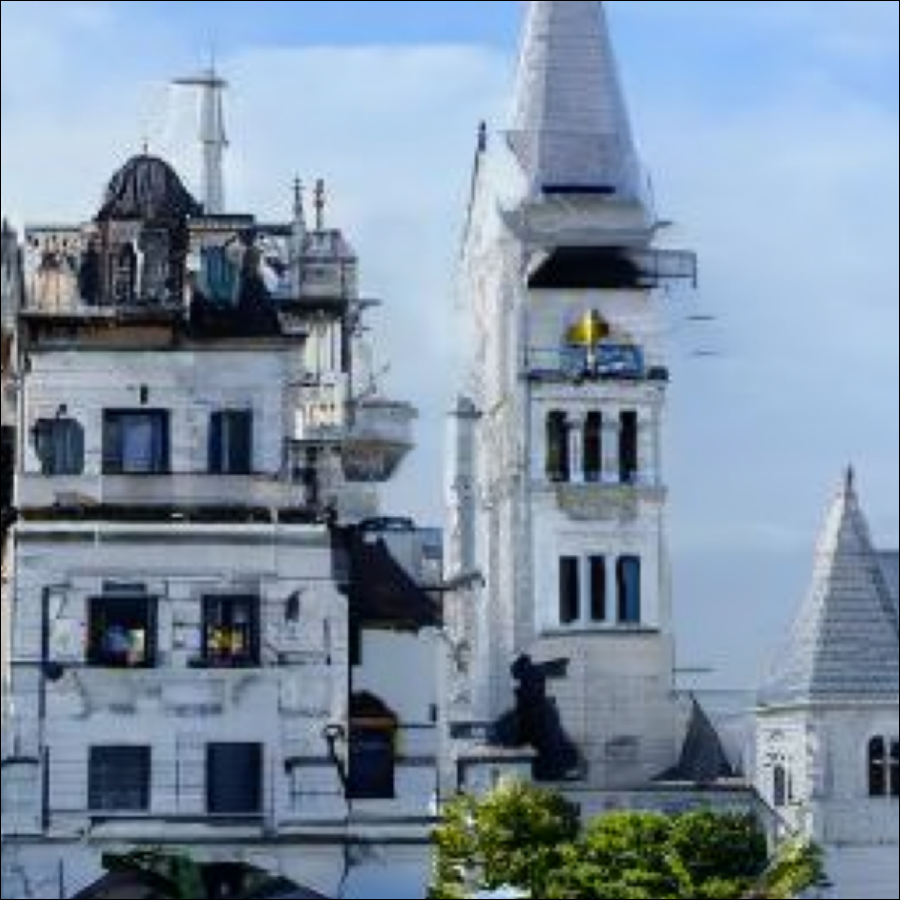}
 & \includegraphics[width=\hsize]{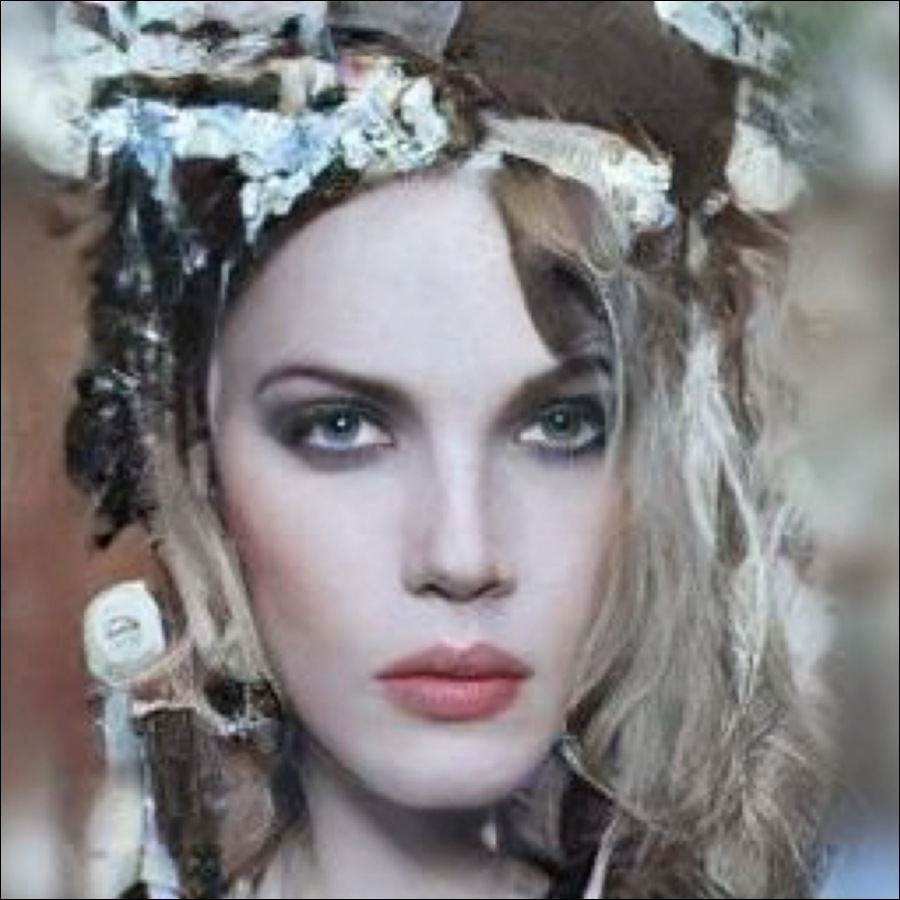} \\\addlinespace[-1.5ex]
 \ddrm
 &\includegraphics[width=\hsize]{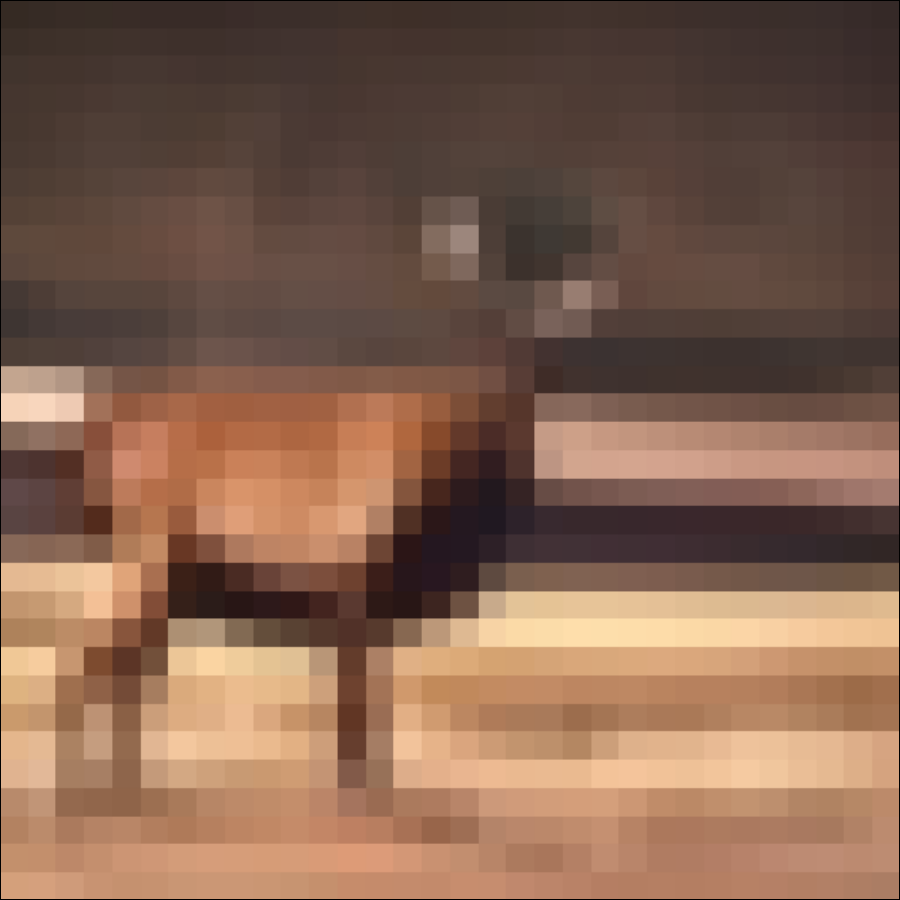}
 &\includegraphics[width=\hsize]{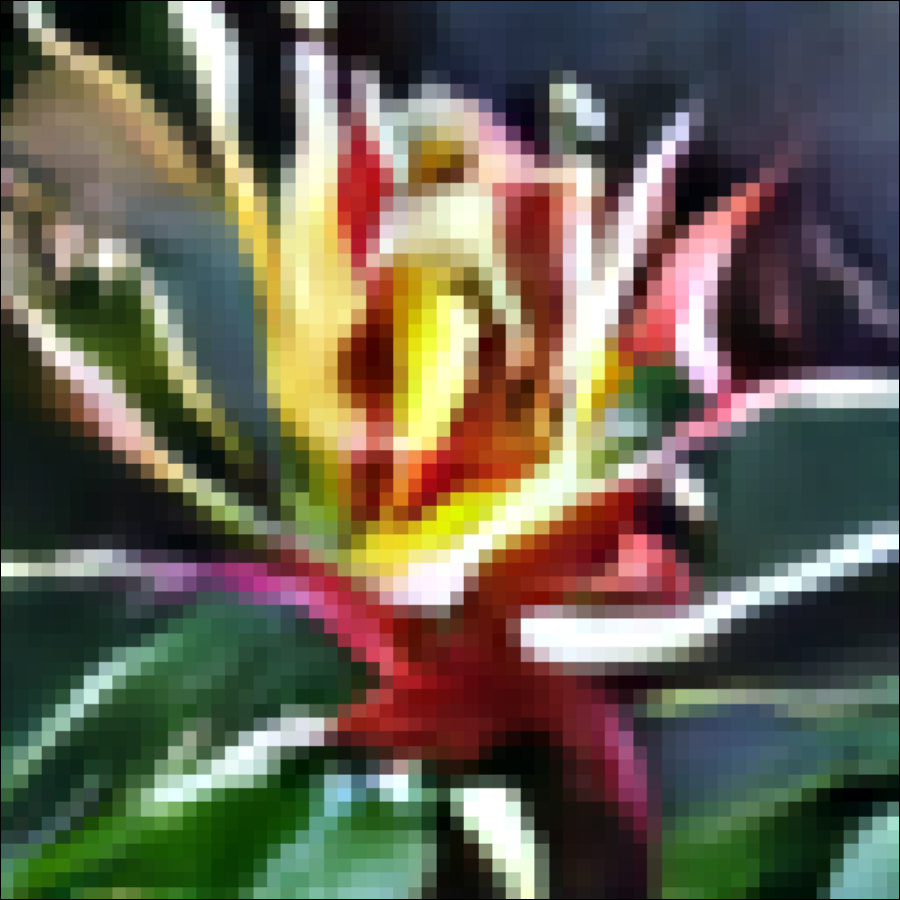}
 &\includegraphics[width=\hsize]{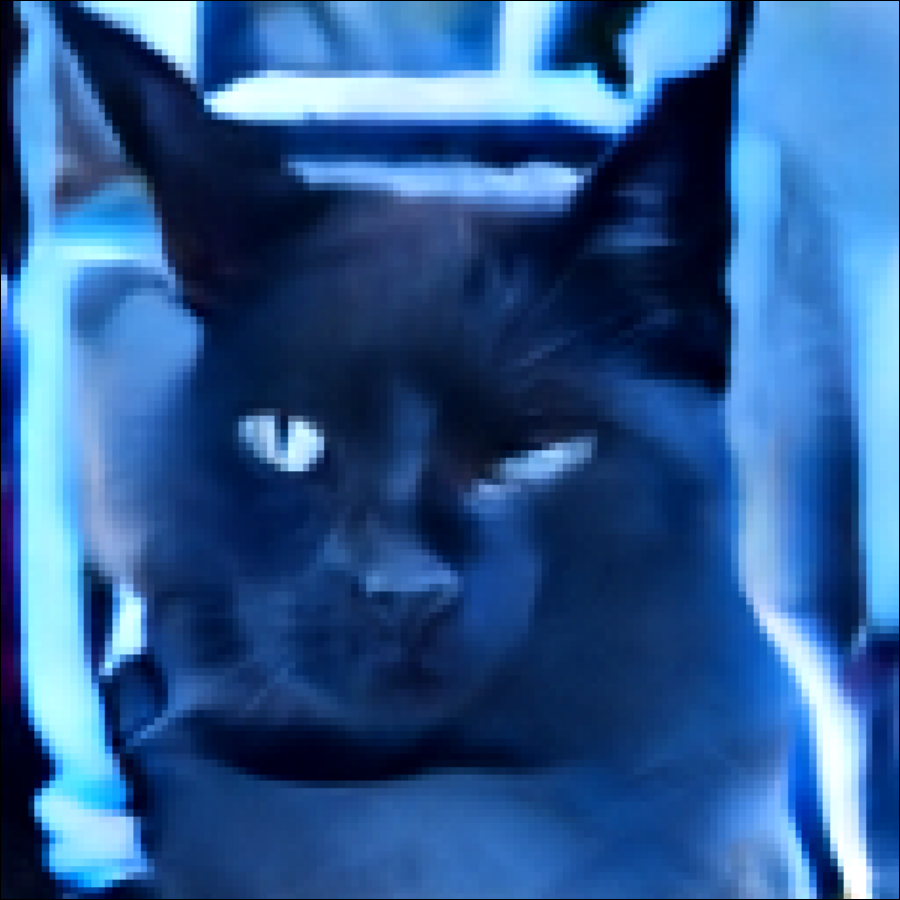}
 & \includegraphics[width=\hsize]{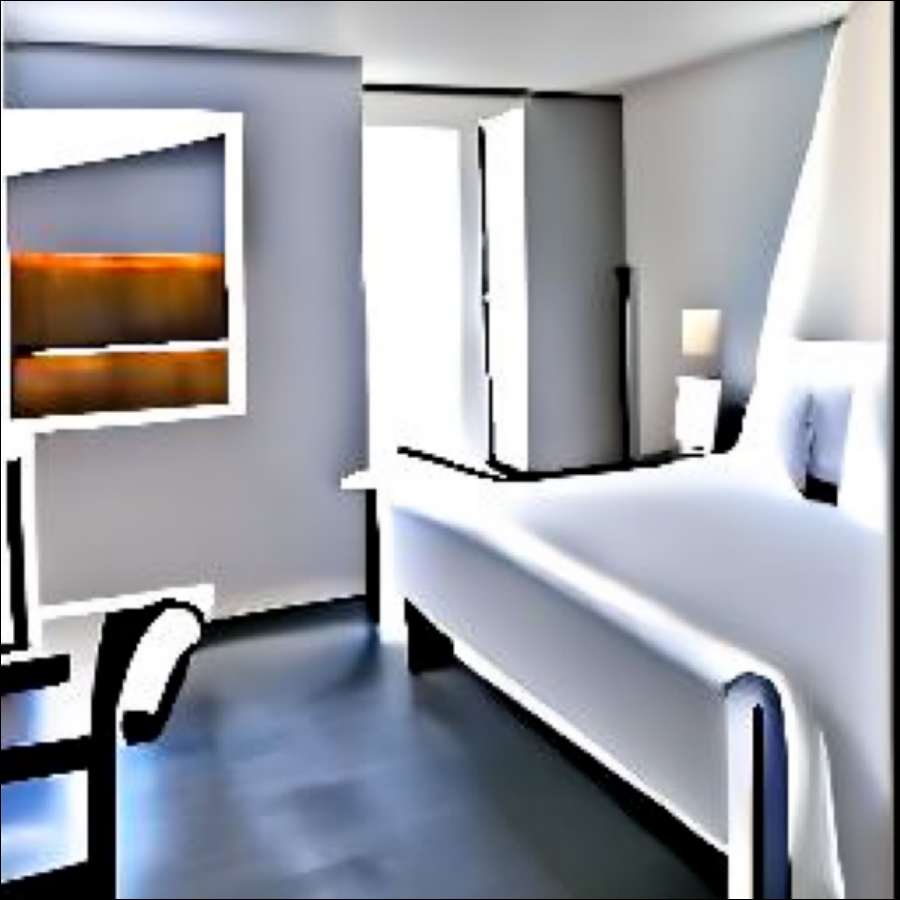}
 & \includegraphics[width=\hsize]{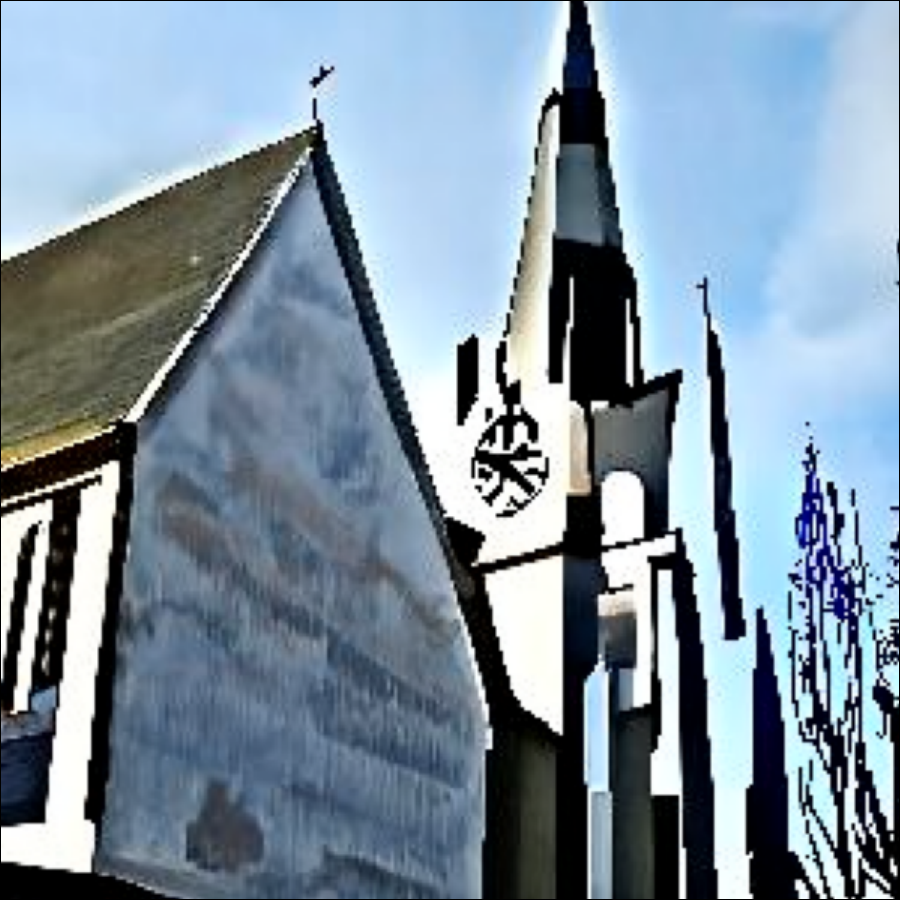}
 & \includegraphics[width=\hsize]{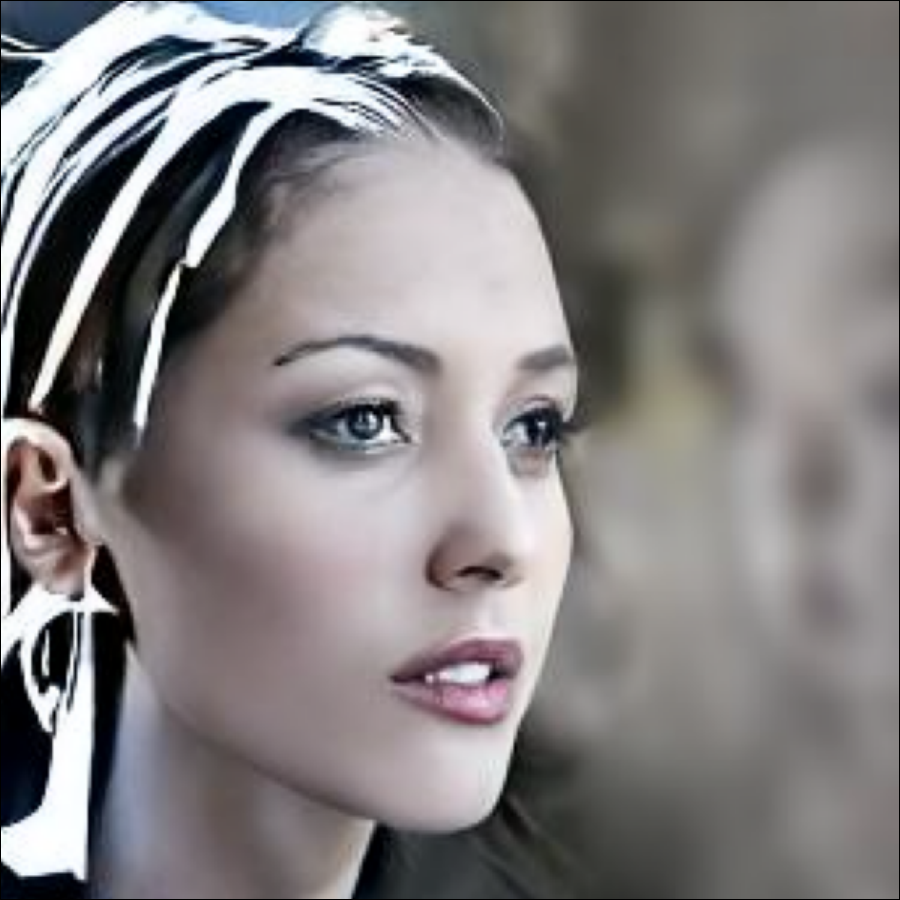} \\\addlinespace[-1.5ex]
 \ddrm
 &\includegraphics[width=\hsize]{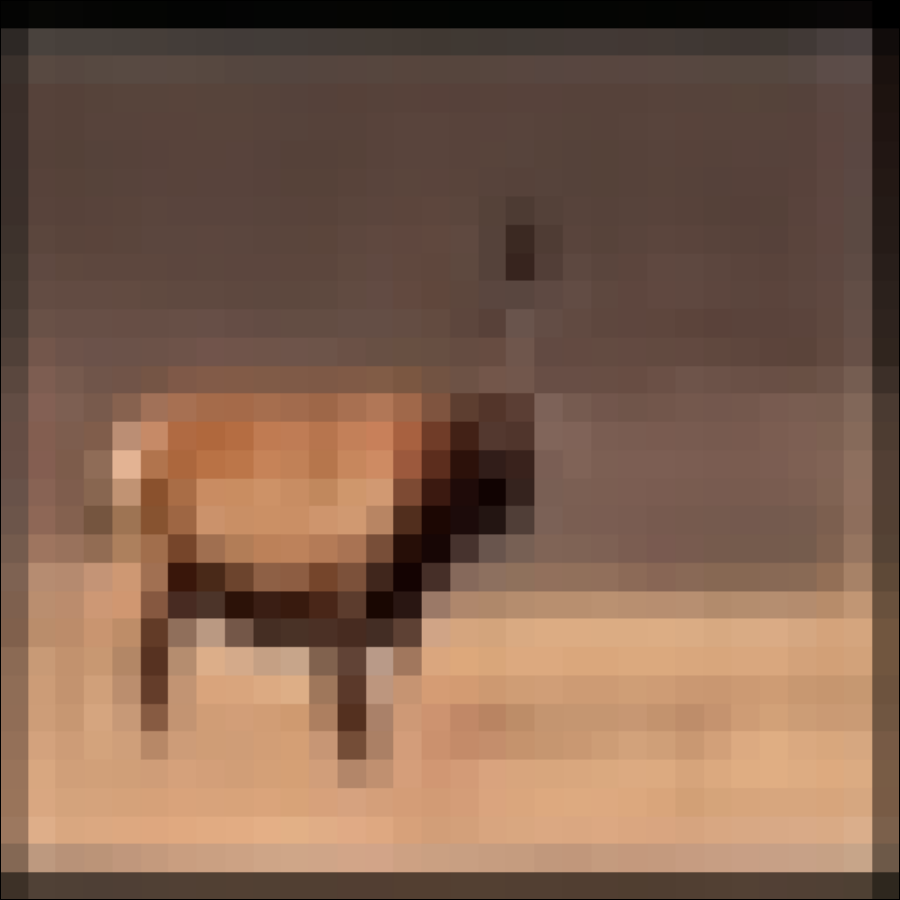}
 &\includegraphics[width=\hsize]{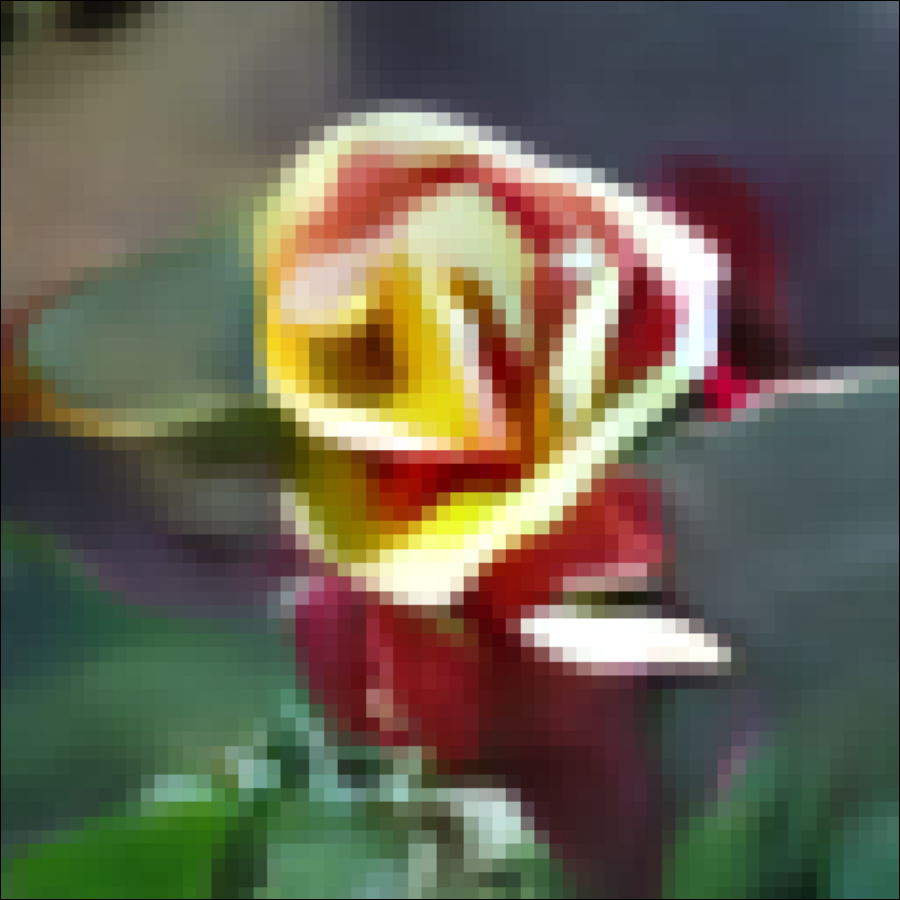}
 &\includegraphics[width=\hsize]{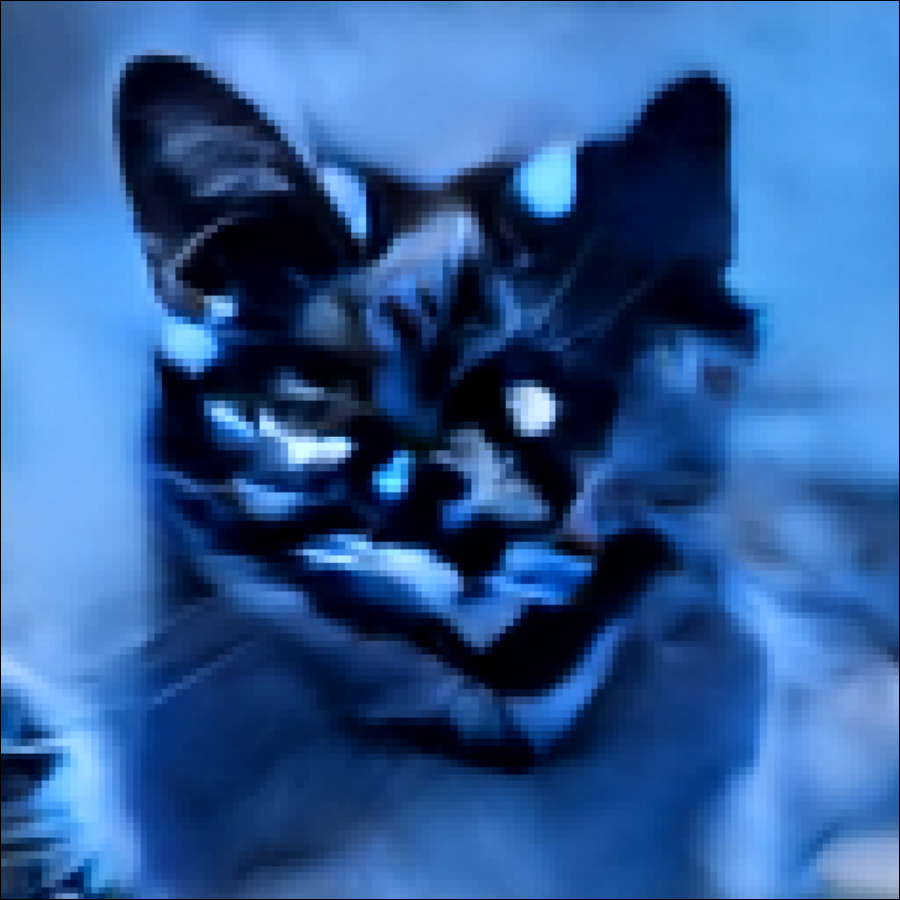}
 & \includegraphics[width=\hsize]{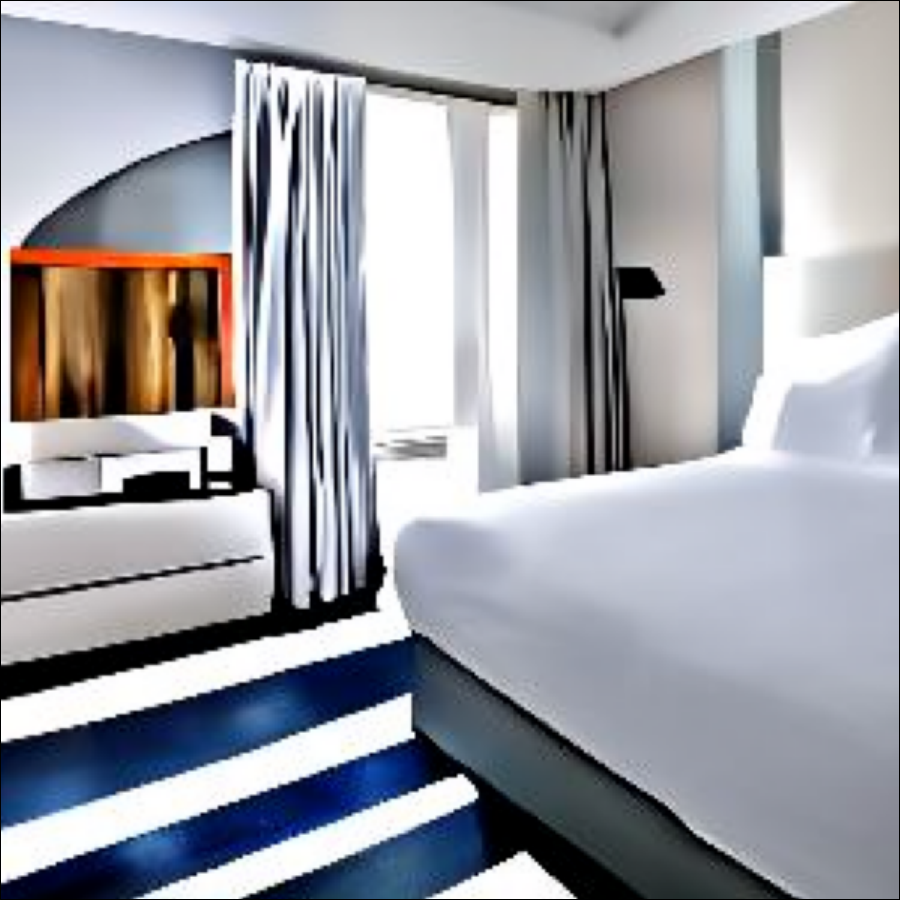}
 & \includegraphics[width=\hsize]{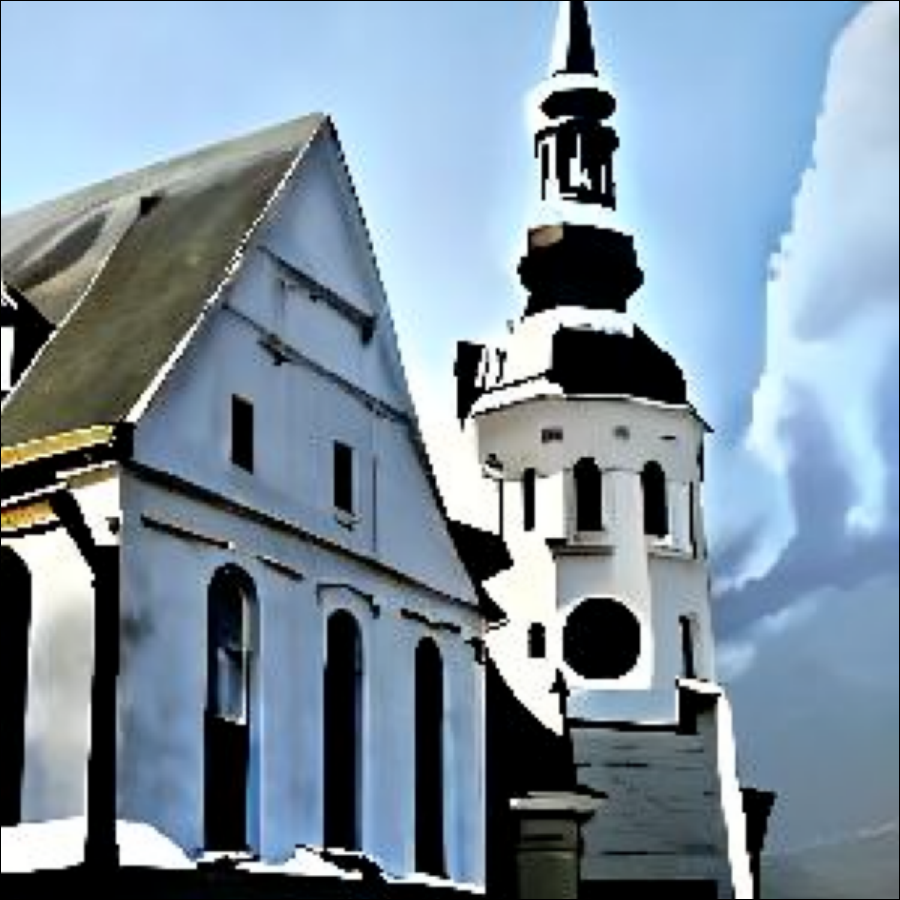}
 & \includegraphics[width=\hsize]{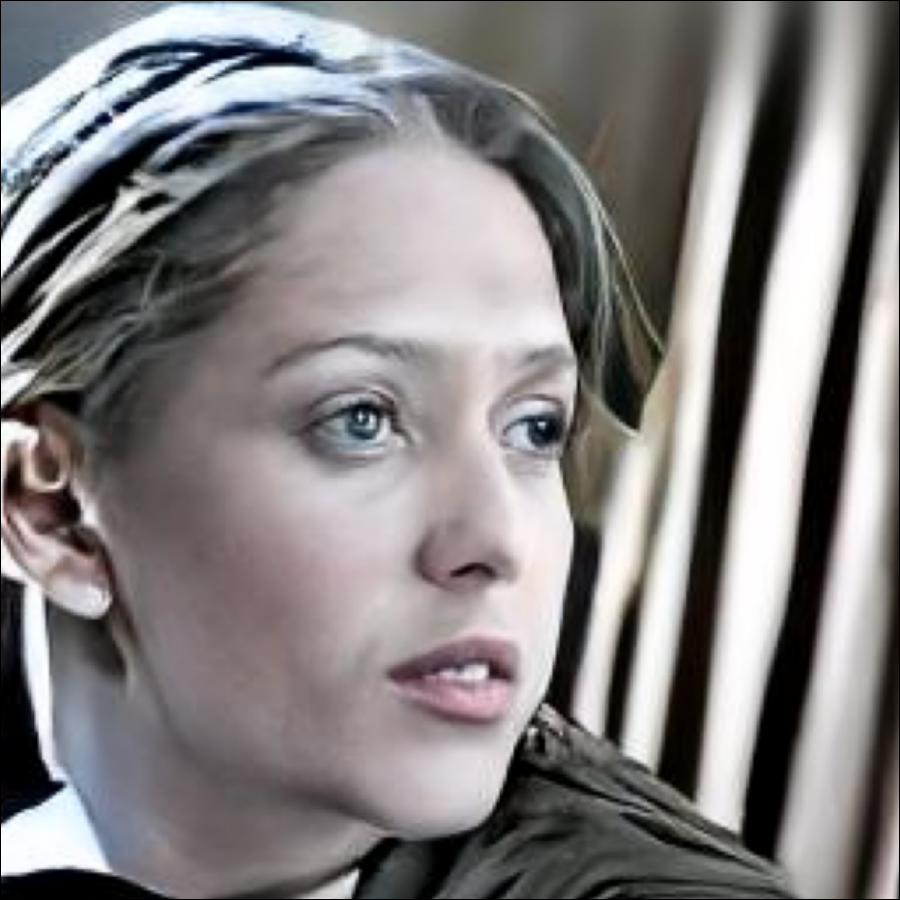} \\\addlinespace[-1.5ex]
  \algo
 &\includegraphics[width=\hsize]{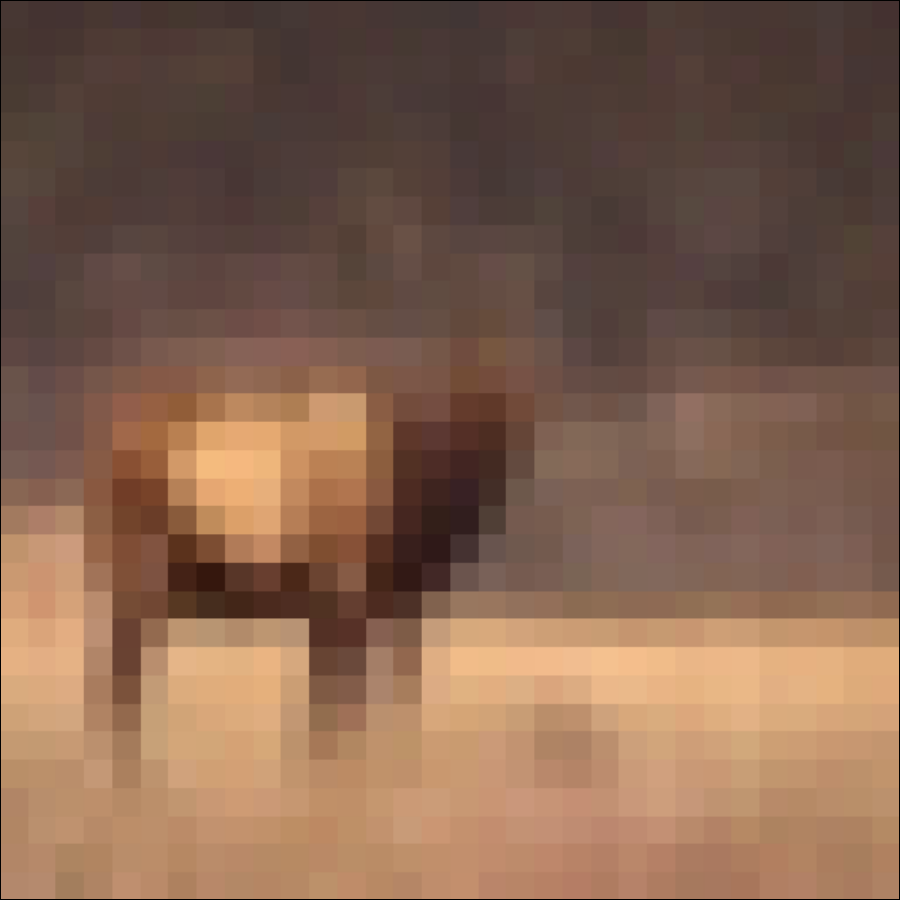}
 &\includegraphics[width=\hsize]{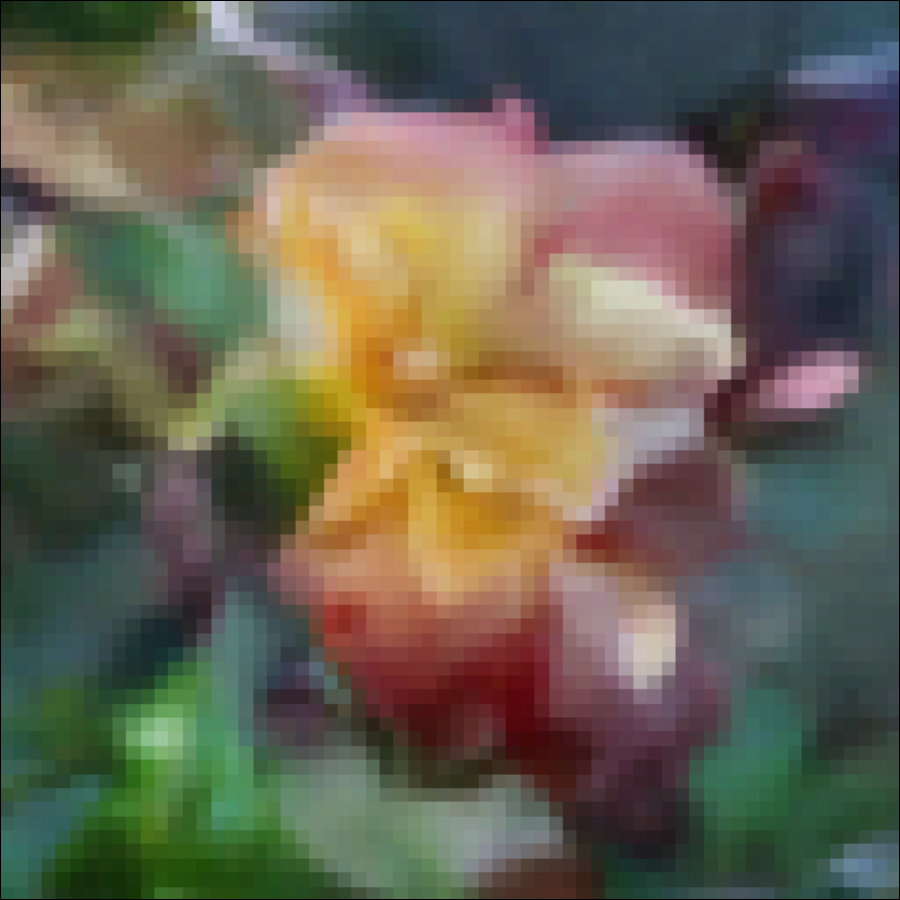}
 &\includegraphics[width=\hsize]{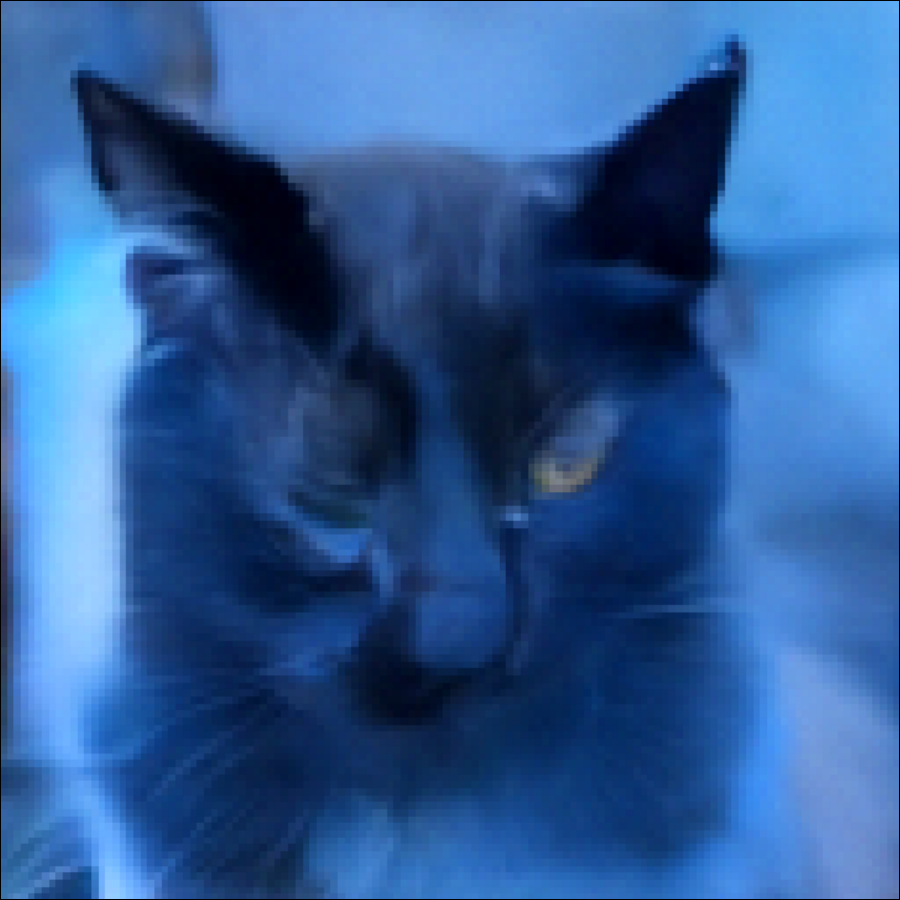}
 & \includegraphics[width=\hsize]{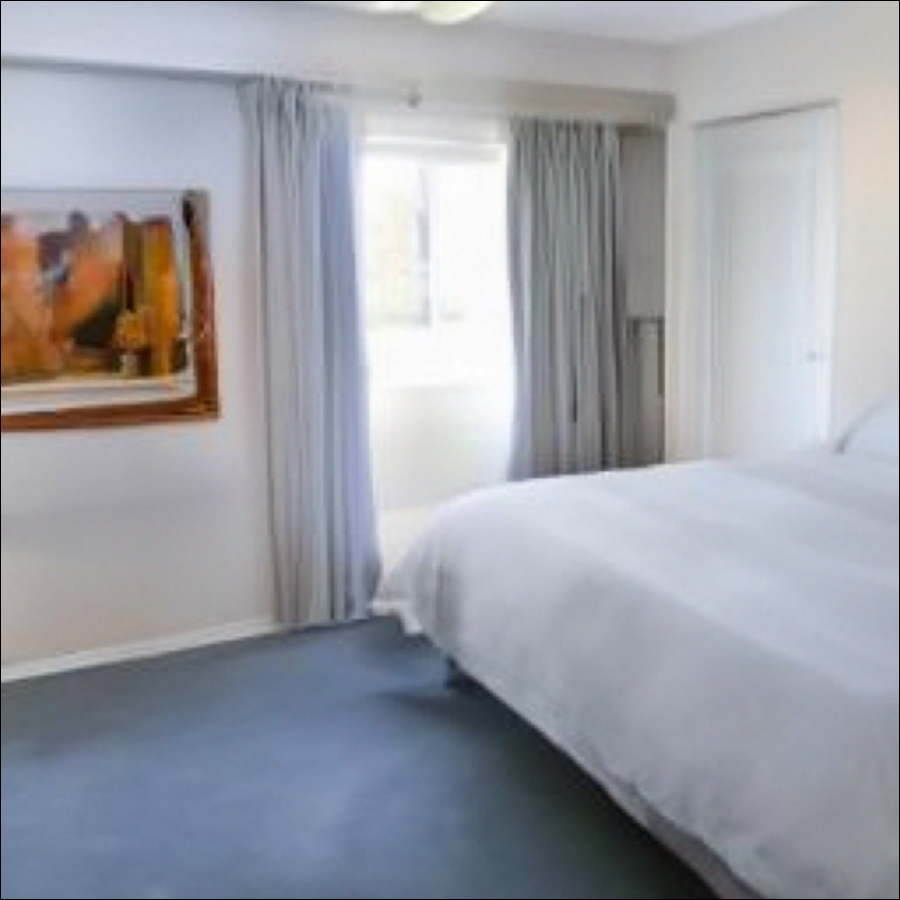}
 & \includegraphics[width=\hsize]{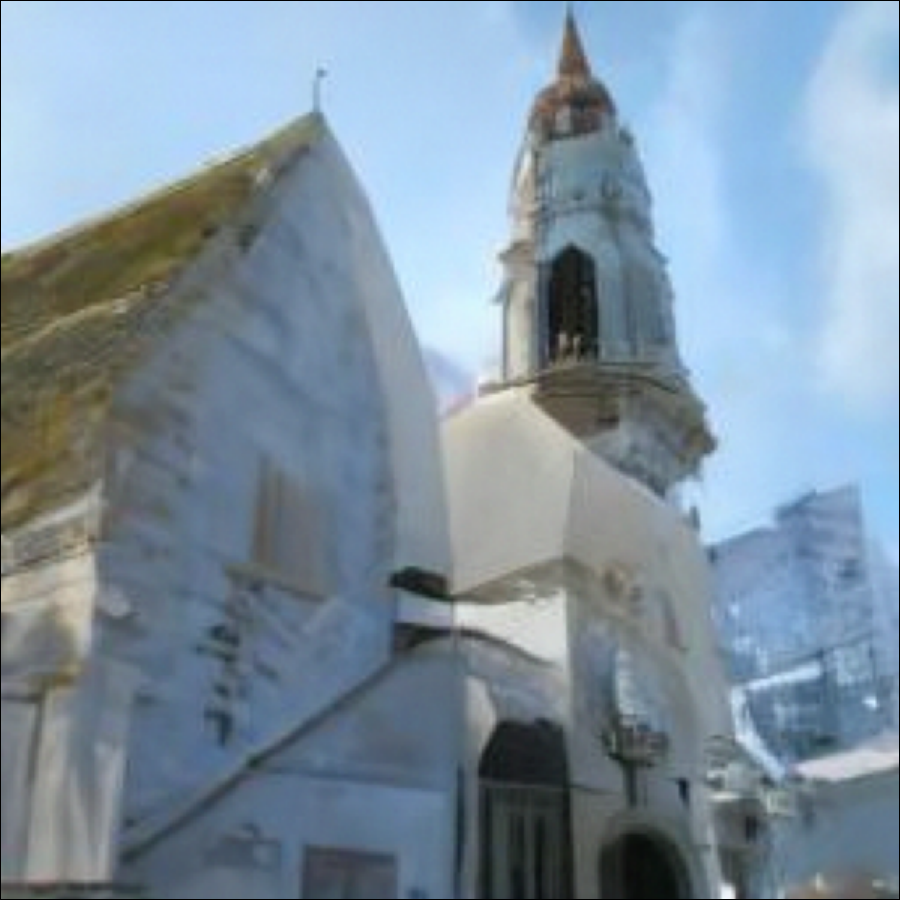}
 & \includegraphics[width=\hsize]{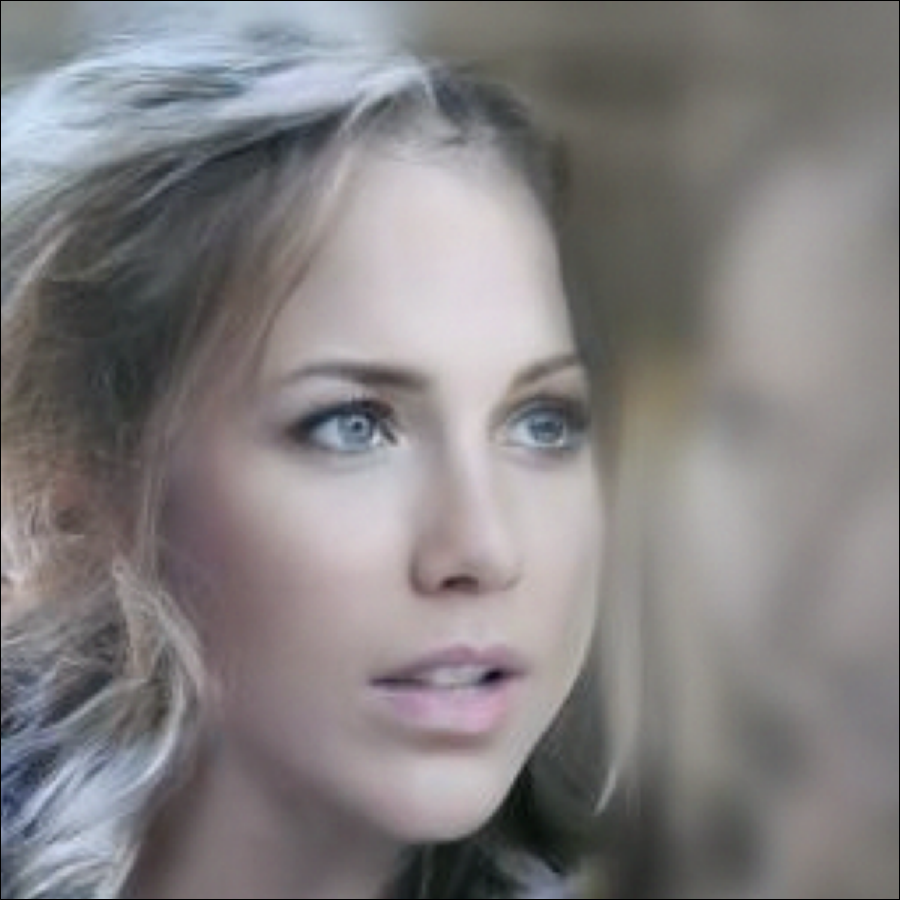} \\\addlinespace[-1.5ex]
 \algo
 &\includegraphics[width=\hsize]{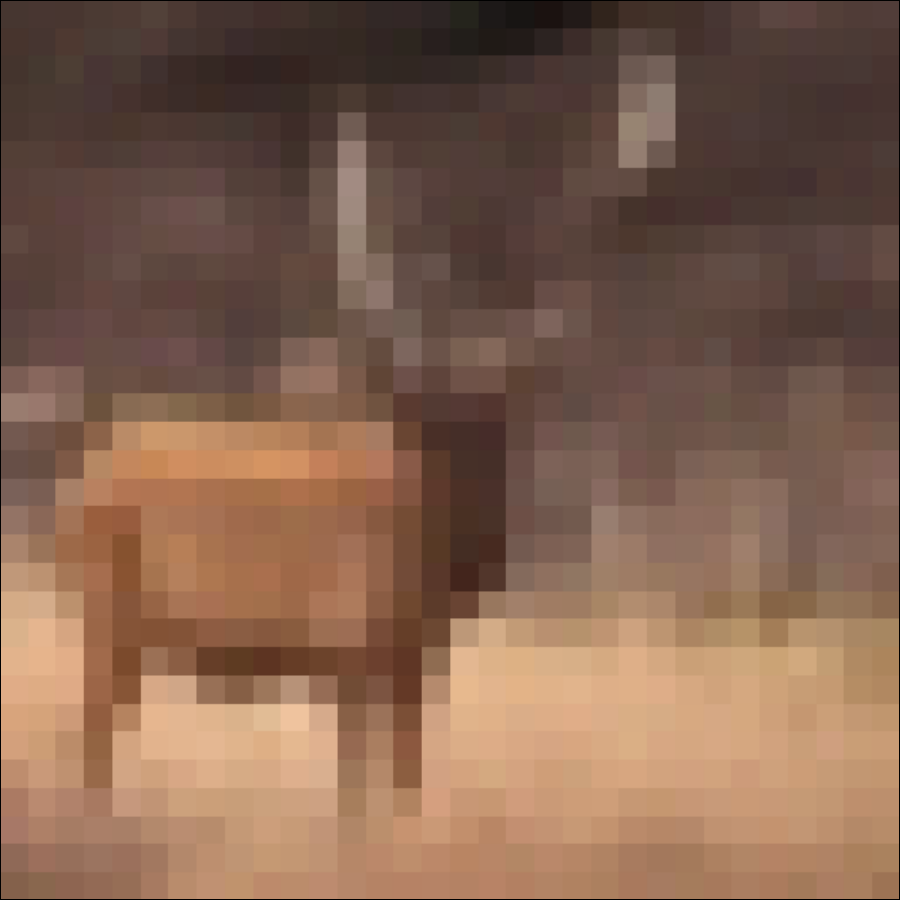}
 &\includegraphics[width=\hsize]{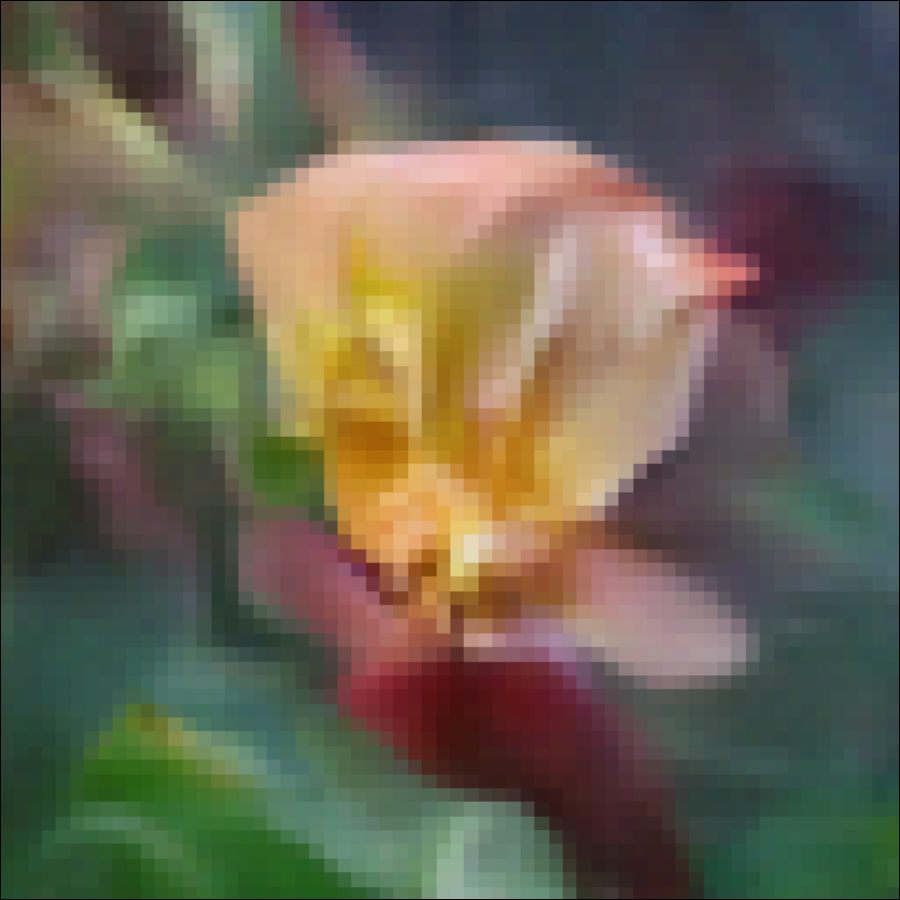}
 &\includegraphics[width=\hsize]{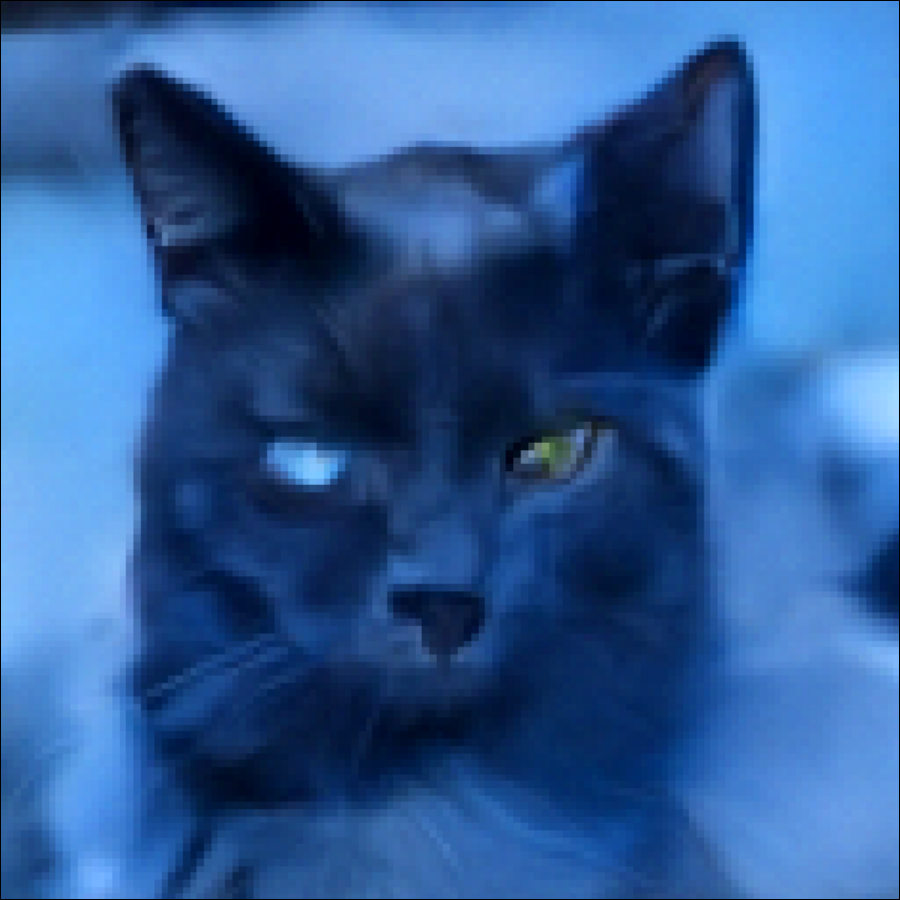}
 & \includegraphics[width=\hsize]{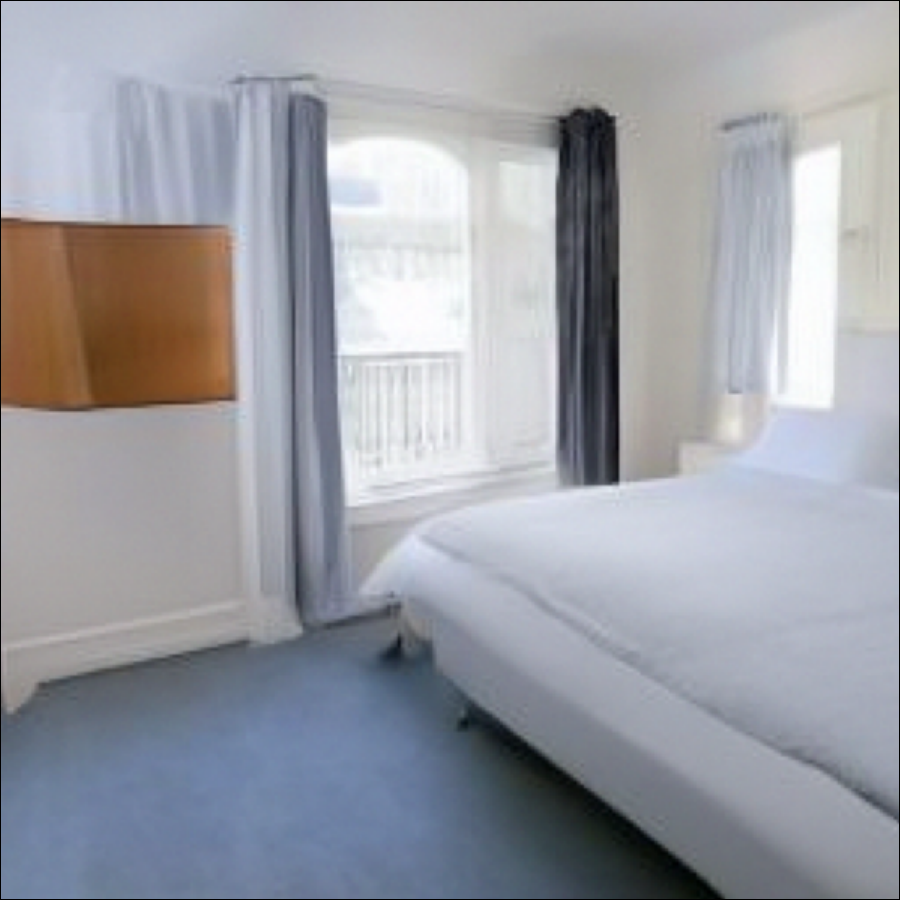}
 & \includegraphics[width=\hsize]{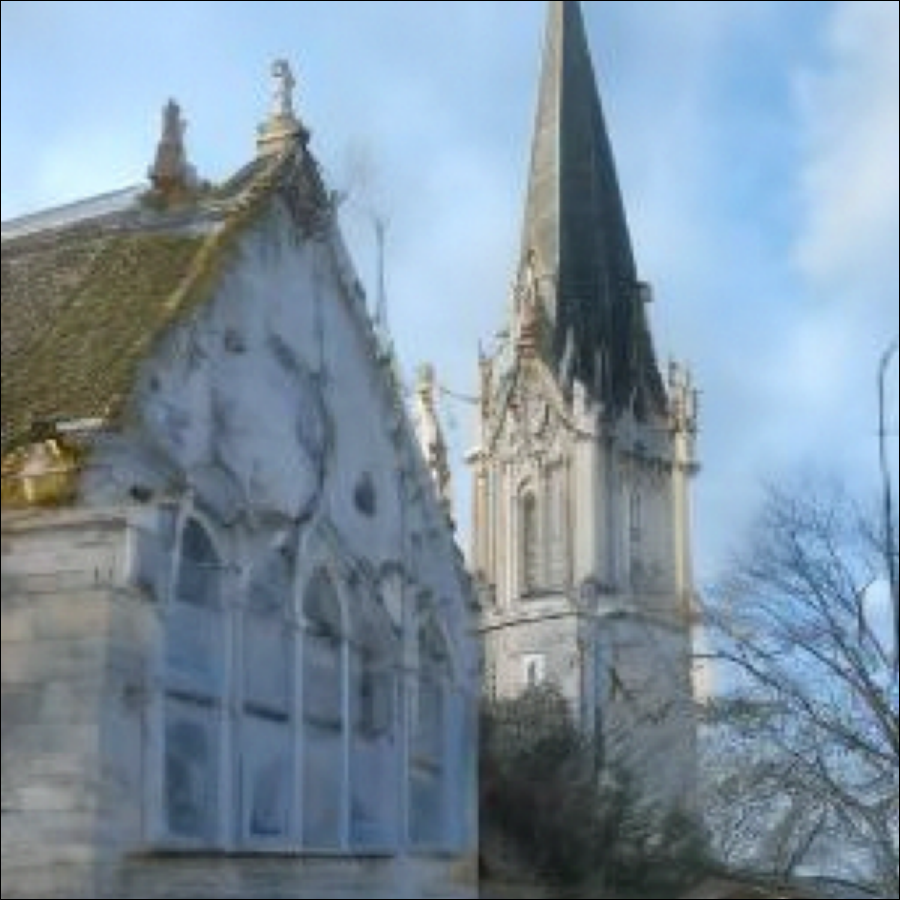}
 & \includegraphics[width=\hsize]{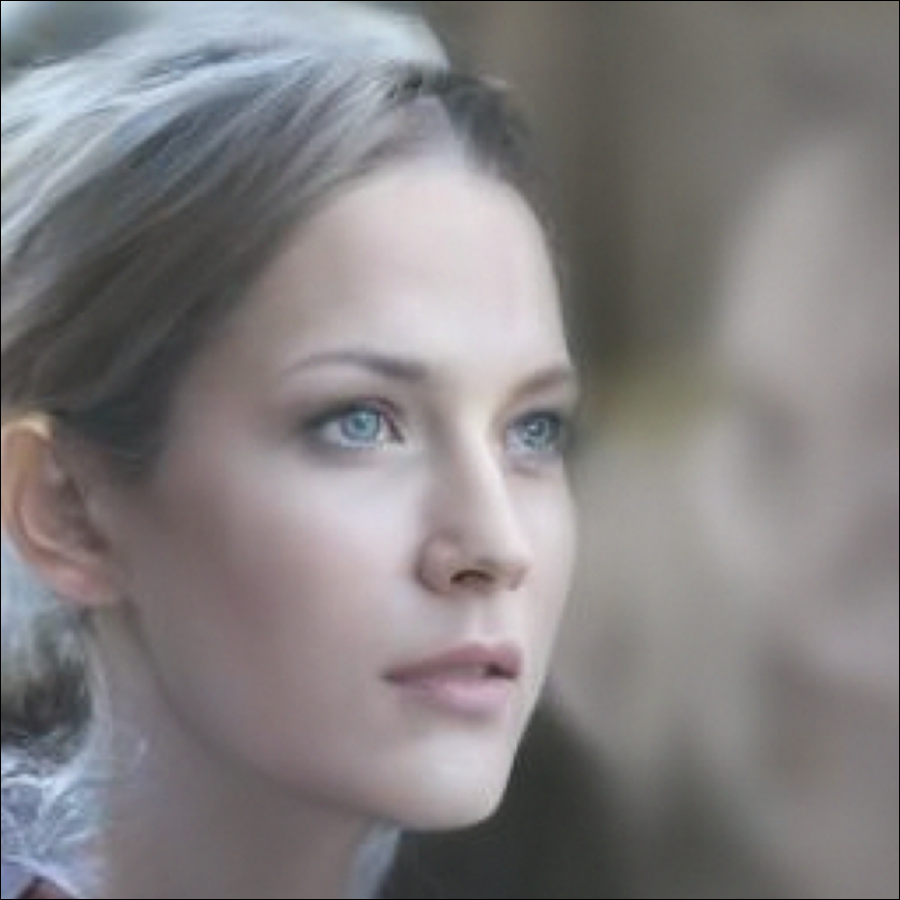} \\\addlinespace[-1.5ex]

\end{tabular}
\caption{}
\label{fig:deblur_2d}
\end{figure}
\paragraph{Inpainting on CelebA}
\begin{figure}
\centering
\begin{tabular}{@{} r M{0.12\linewidth}@{\hspace{0\tabcolsep}} M{0.12\linewidth}@{\hspace{0\tabcolsep}} M{0.12\linewidth}@{\hspace{0\tabcolsep}} M{0.12\linewidth}@{\hspace{0\tabcolsep}} M{0.12\linewidth}@{\hspace{0\tabcolsep}} M{0.12\linewidth}@{\hspace{0\tabcolsep}}
M{0.12\linewidth}@{\hspace{0\tabcolsep}}
 M{0.12\linewidth}@{\hspace{0\tabcolsep}}@{}}
& Original & \algo & \algo & \algo & \algo & \dps & \ddrm & \smcdiff \\
  & \includegraphics[width=\hsize]{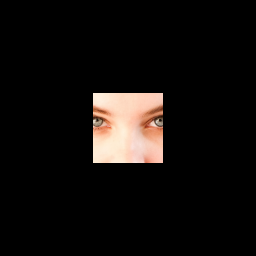}
  &  \includegraphics[width=\hsize]{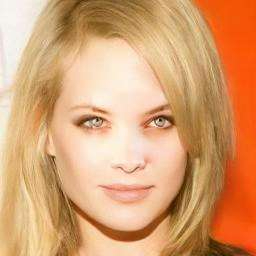}
  & \includegraphics[width=\hsize]{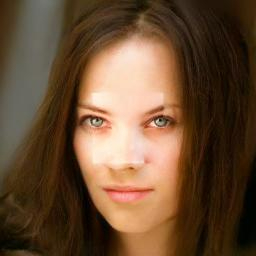}
  &\includegraphics[width=\hsize]{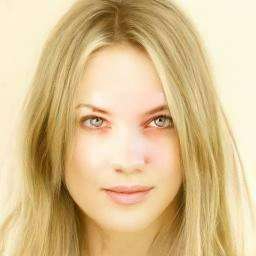}
  & \includegraphics[width=\hsize]{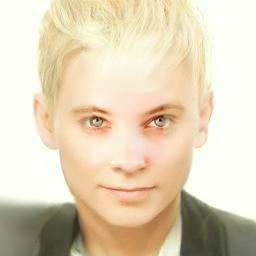}
  & \includegraphics[width=\hsize]{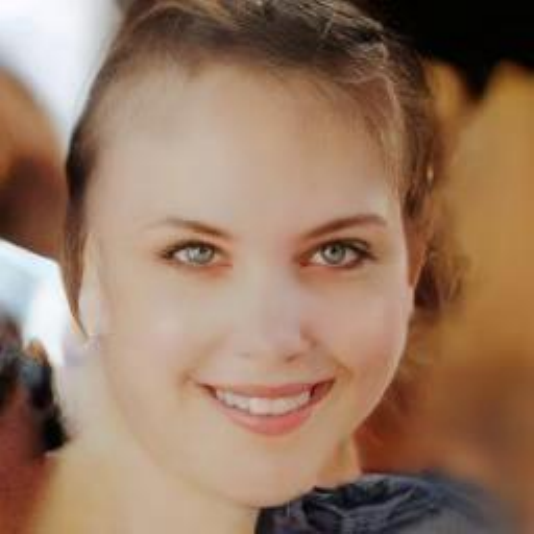}
  & \includegraphics[width=\hsize]{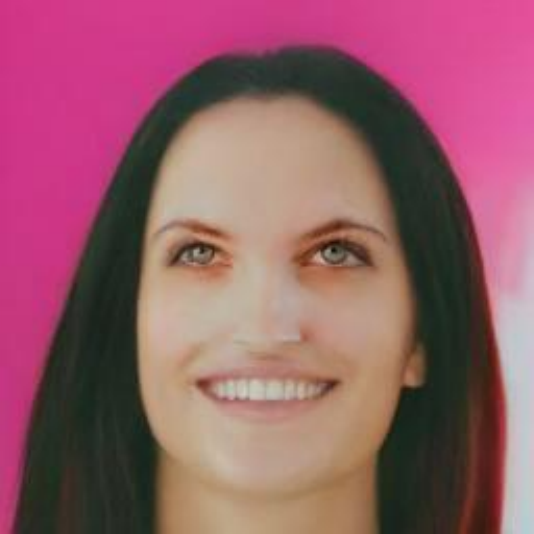}
  & \includegraphics[width=\hsize]{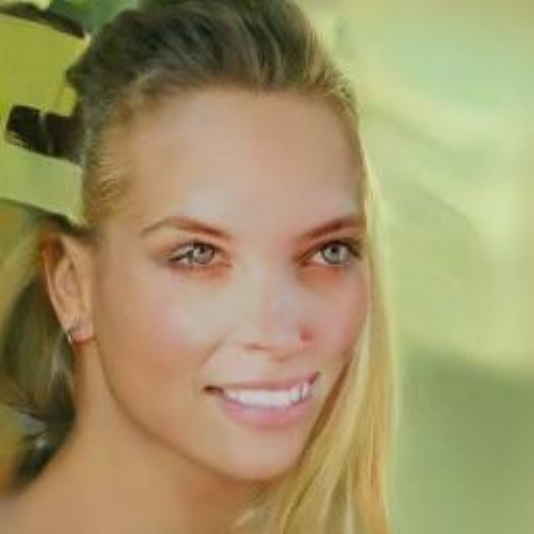}\\[-1ex]
  & \includegraphics[width=\hsize]{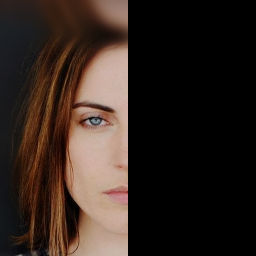}
  &  \includegraphics[width=\hsize]{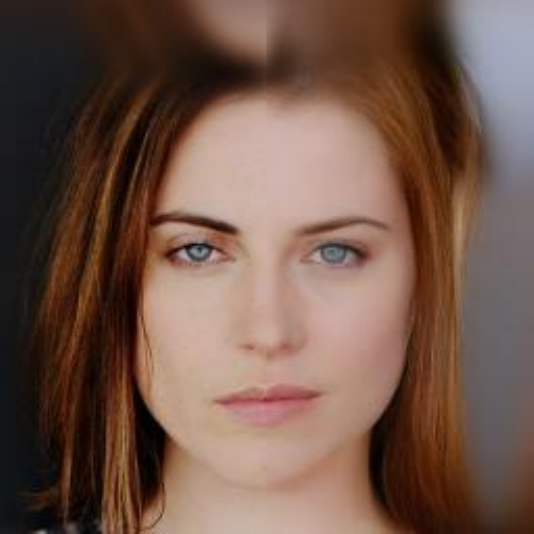}
  & \includegraphics[width=\hsize]{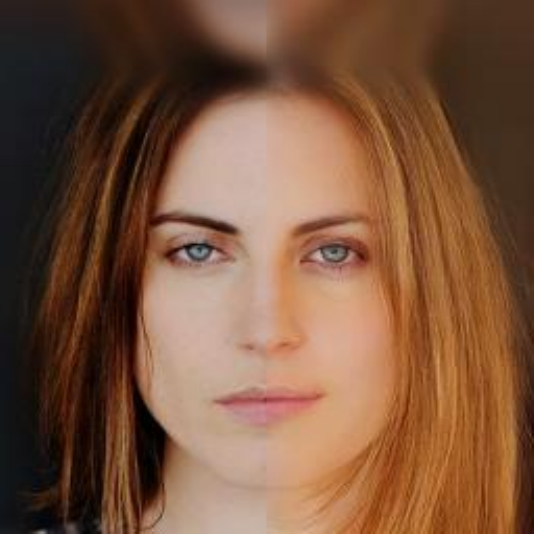}
  &\includegraphics[width=\hsize]{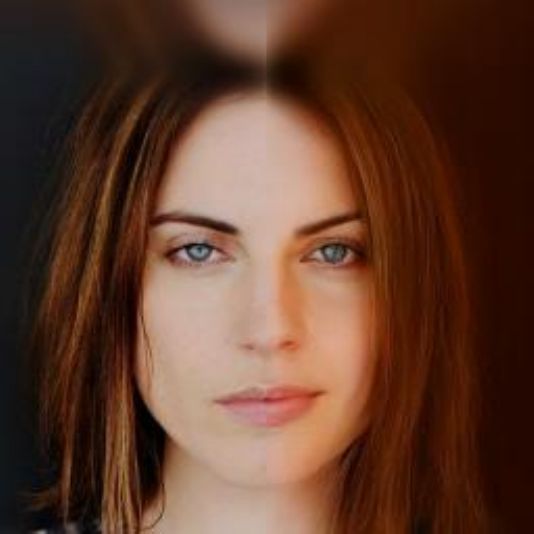}
  & \includegraphics[width=\hsize]{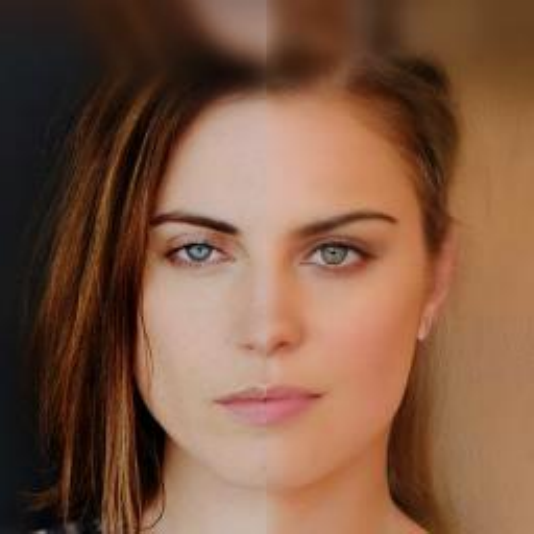}
  & \includegraphics[width=\hsize]{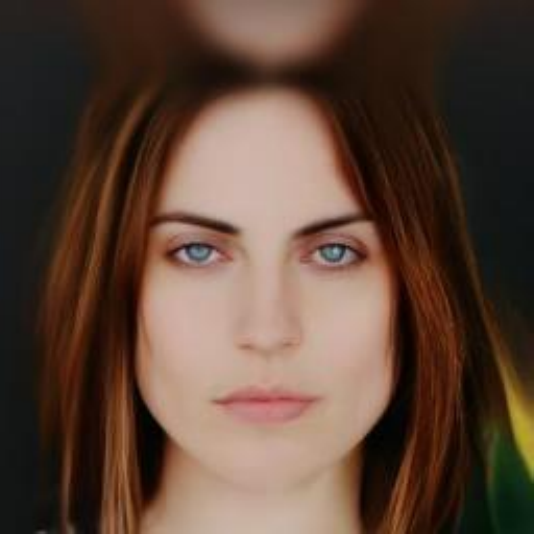}
  & \includegraphics[width=\hsize]{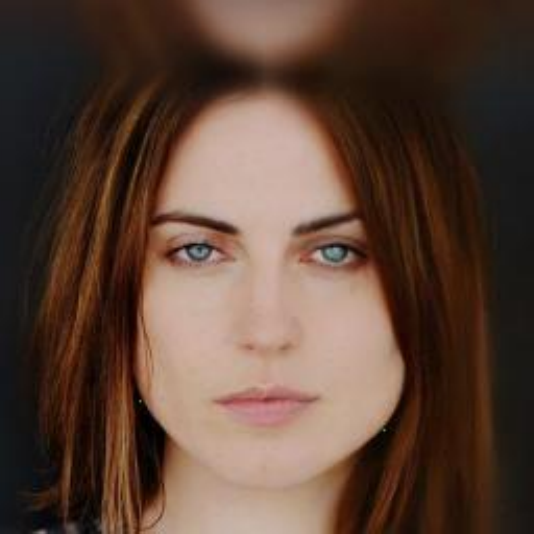}
  & \includegraphics[width=\hsize]{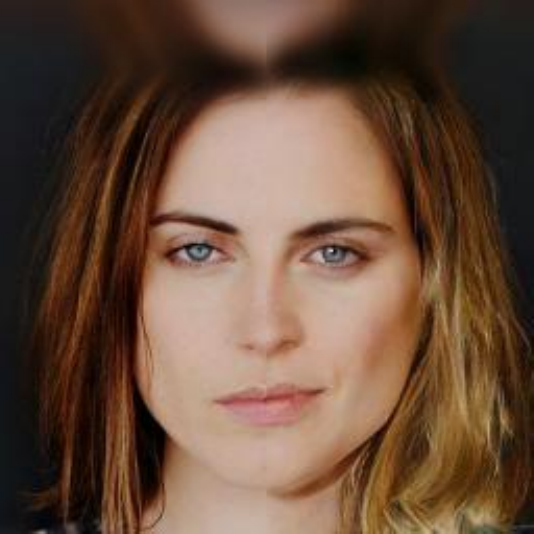}\\[-1ex]
  & \includegraphics[width=\hsize]{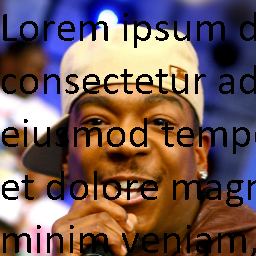}
  &  \includegraphics[width=\hsize]{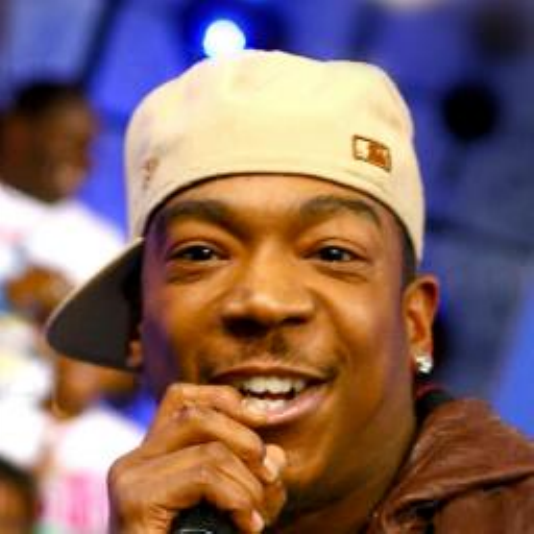}
  & \includegraphics[width=\hsize]{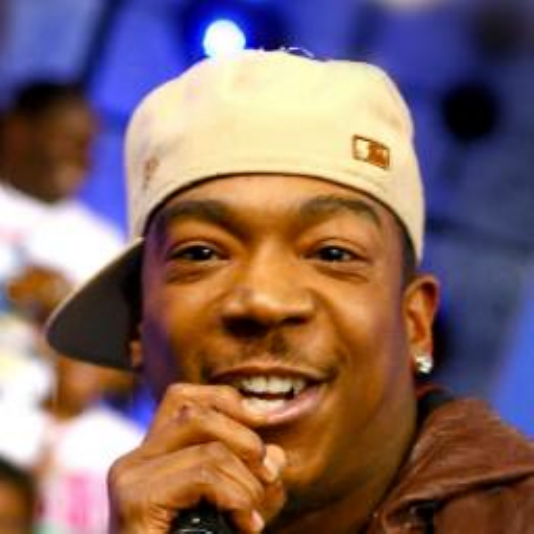}
  &\includegraphics[width=\hsize]{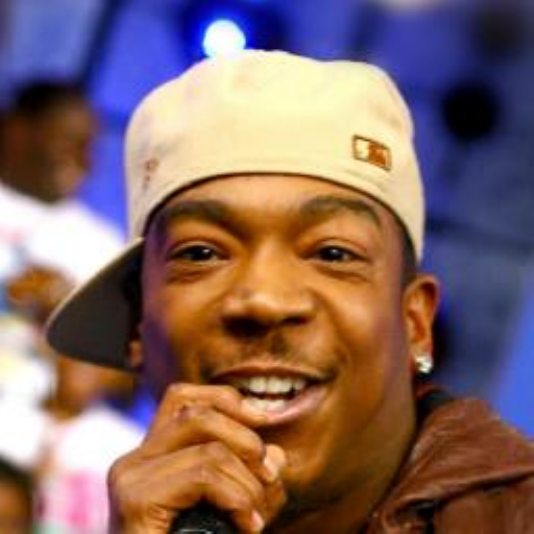}
  & \includegraphics[width=\hsize]{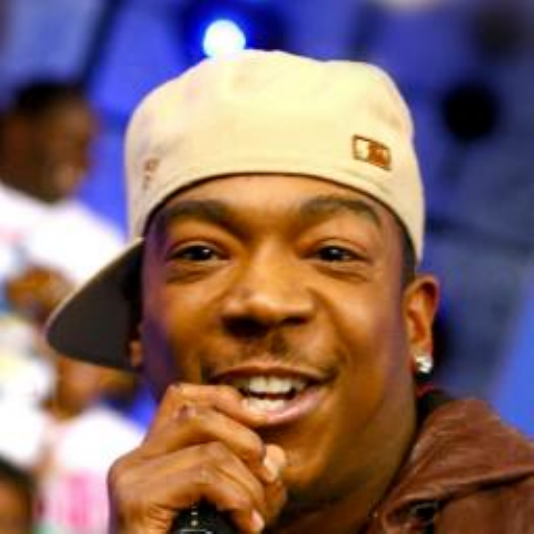}
  & \includegraphics[width=\hsize]{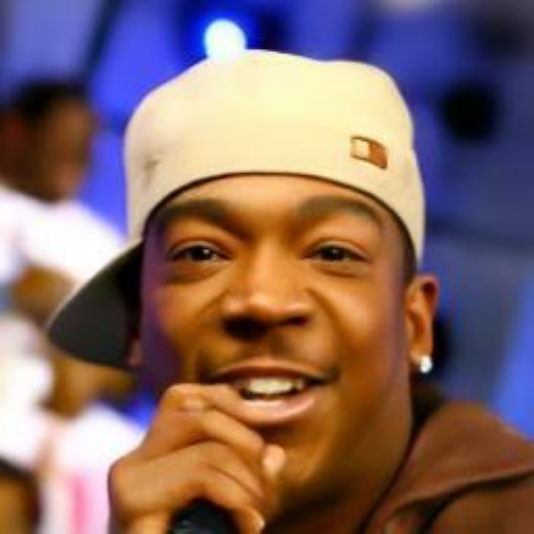}
  & \includegraphics[width=\hsize]{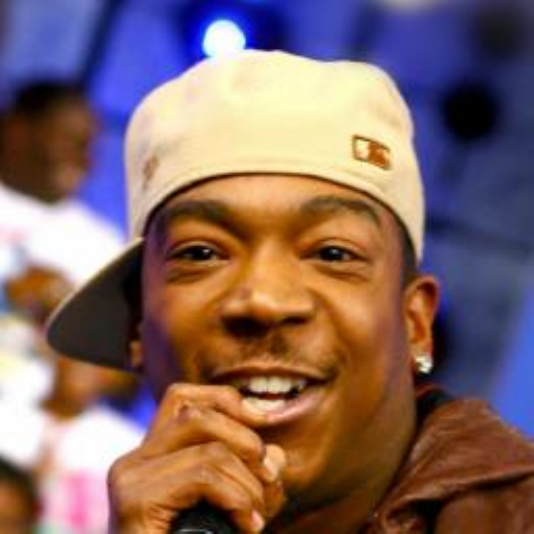}
  & \includegraphics[width=\hsize]{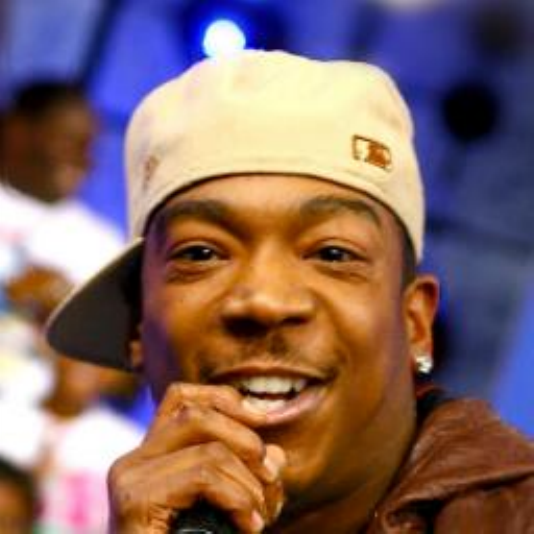}
\end{tabular}
\caption{Inpainting with different masks on the CelebA test set.}
\label{fig:celeb}
\end{figure}
We consider the inpainting problem on the CelebA dataset with several different masks in \cref{fig:celeb}.
We show in \cref{fig:celeb:evolution:particle_cloud} the evolution of the particle cloud with $s$.
\begin{figure}[H]
\centering
\begin{tabular}{@{}
                M{0.15\linewidth}@{\hspace{0\tabcolsep}}
                M{0.15\linewidth}@{\hspace{0\tabcolsep}}
                M{0.15\linewidth}@{\hspace{0\tabcolsep}}
                M{0.15\linewidth}@{\hspace{0\tabcolsep}} M{0.15\linewidth}@{\hspace{0\tabcolsep}} M{0.15\linewidth}@{\hspace{0\tabcolsep}} @{}}
$1000$ & $900$ & $800$ & $700$ & $600$ & $500$\\
  \includegraphics[width=\hsize]{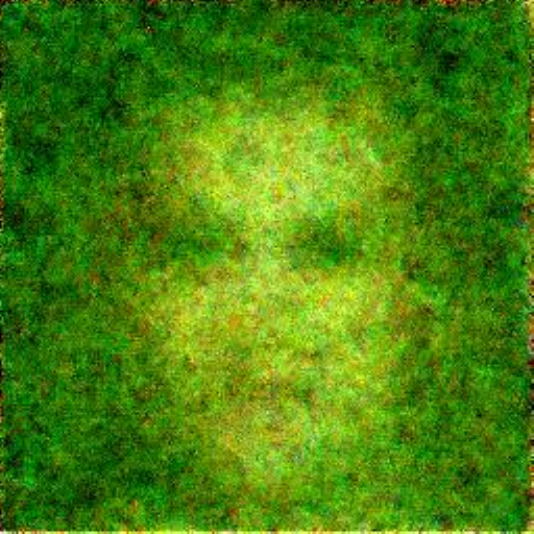}
  &  \includegraphics[width=\hsize]{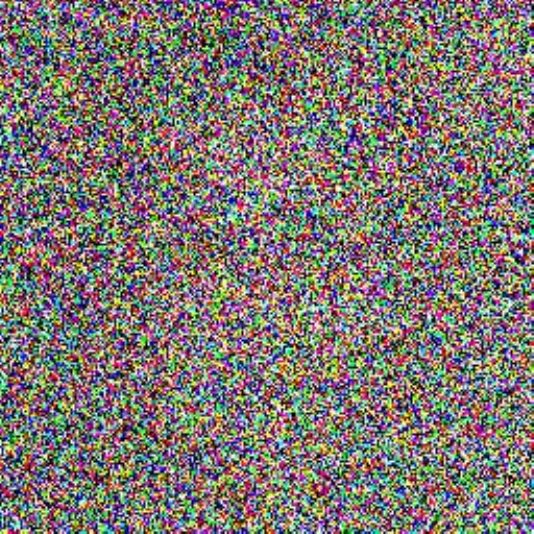}
  & \includegraphics[width=\hsize]{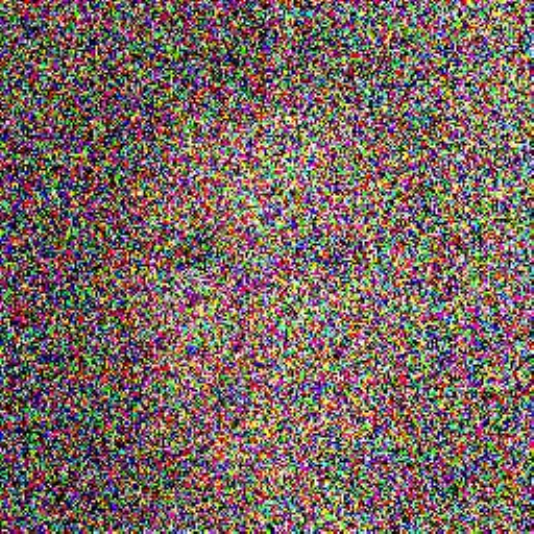}
  &\includegraphics[width=\hsize]{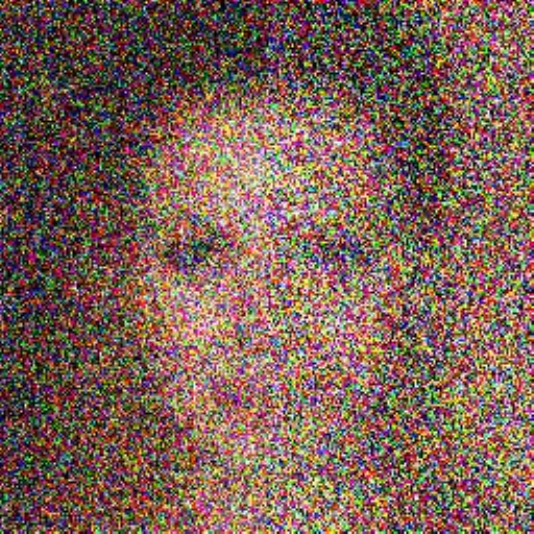}
  & \includegraphics[width=\hsize]{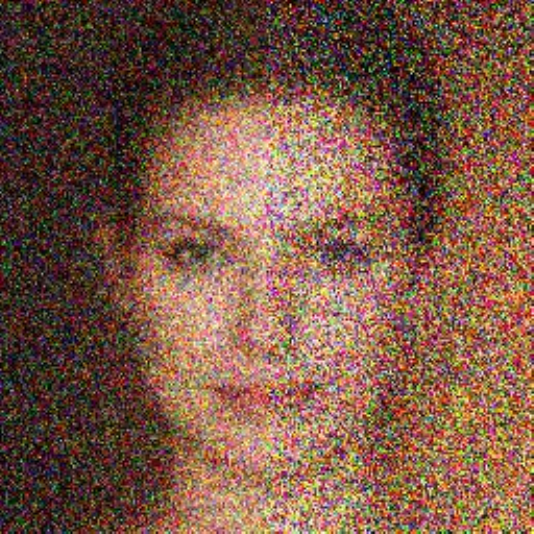}
  & \includegraphics[width=\hsize]{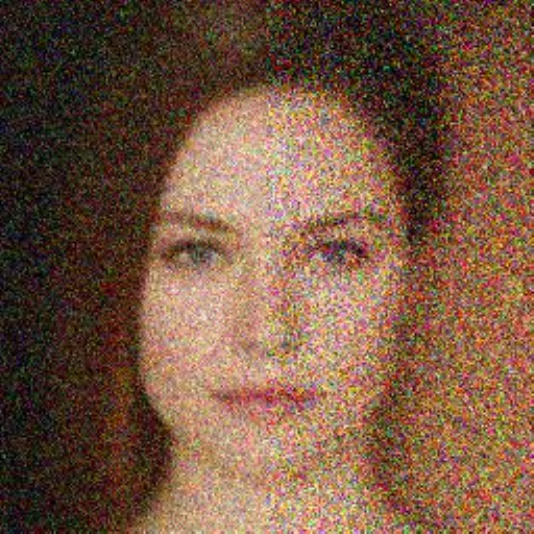}\\\addlinespace[-1.5pt]
    \includegraphics[width=\hsize]{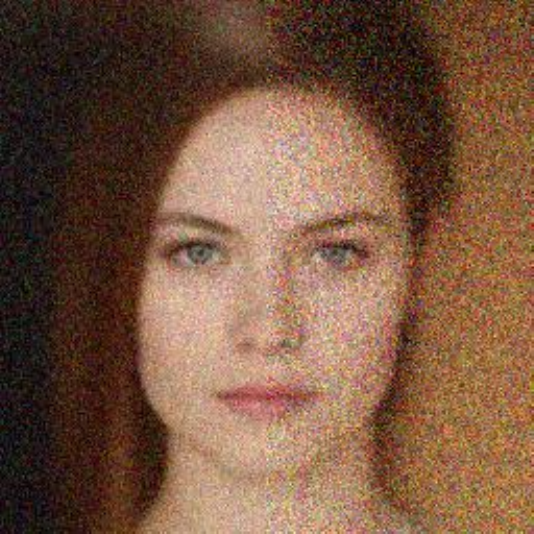} &
    \includegraphics[width=\hsize]{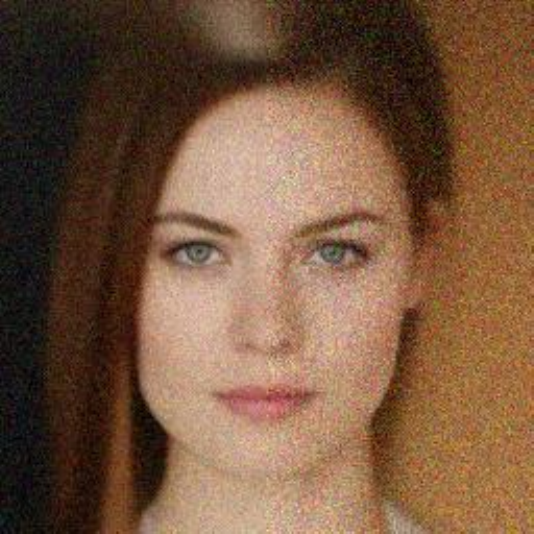} &
    \includegraphics[width=\hsize]{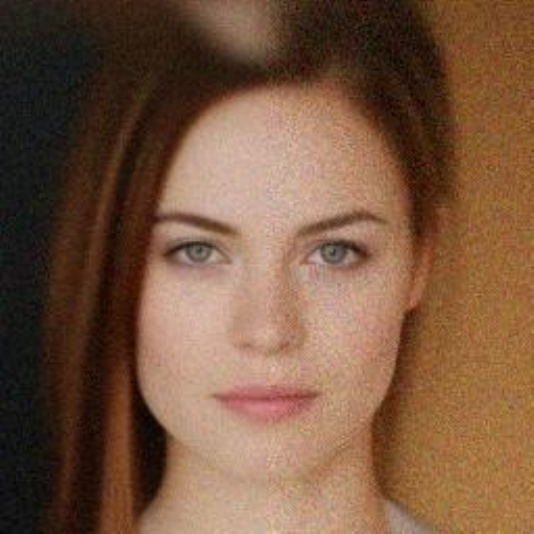} &
    \includegraphics[width=\hsize]{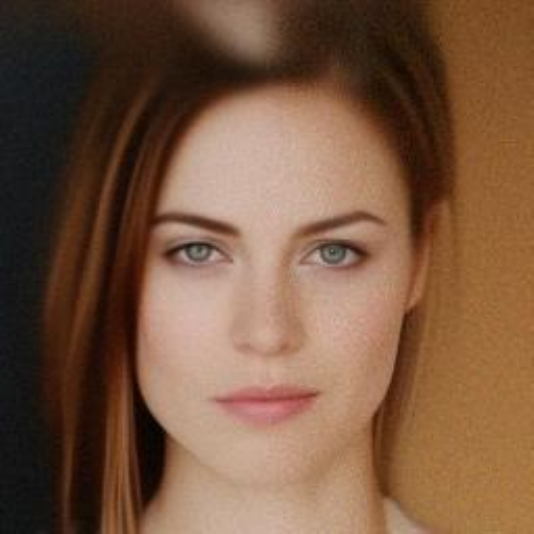} &
    \includegraphics[width=\hsize]{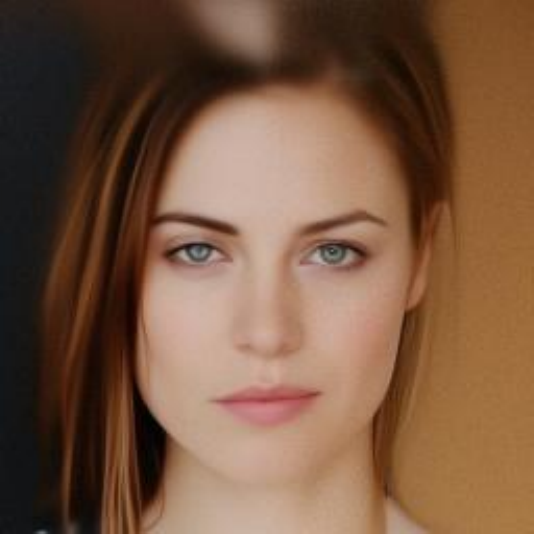} &
    \includegraphics[width=\hsize]{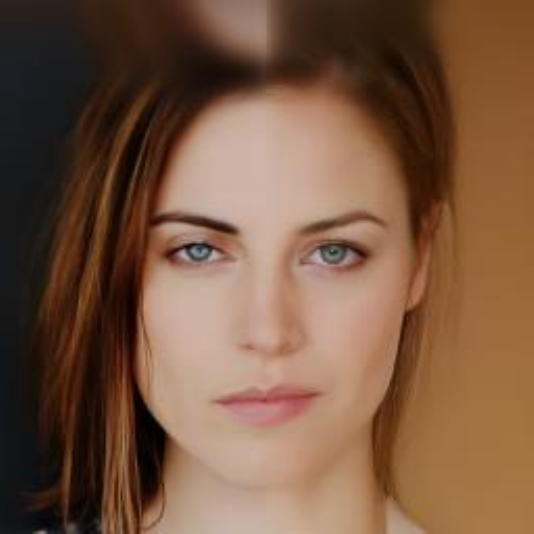} \\
    $400$ & $300$ & $200$ & $100$ & $50$ & $4$
\end{tabular}
\caption{Evolution of the particle cloud for one of the masks. The numbers on top and bottom indicate the step $s$ of the approximation.}
\label{fig:celeb:evolution:particle_cloud}
\end{figure}
\section{Conclusion}
In this paper, we present \algo\, a novel method for solving Bayesian linear Gaussian inverse problems with SGM priors.
We show that \algo\ is theoretically grounded and provided numerical experiments that reflect the adequacy of \algo\ in a Bayesian framework, as opposed to recent works.
This difference is of the uttermost importance when the relevance of the generated samples is hard to verify, as in safety critical applications.
\algo\ is a first step towards robust approaches for addressing the challenges of Bayesian linear inverse problems with SGM priors.

\paragraph{Acknowledgments} GC, YJEL and EM would like to thank the Isaac Newton Institute for Mathematical Sciences for support and hospitality during the program \emph{The mathematical and statistical foundation of future data-driven engineering} when work on this paper was undertaken

\bibliographystyle{apalike}
\bibliography{bibliography}

\newpage
\begin{appendix}
  \section{\smcdiff\ extension}
  \label{sec:smc_diff_ext}
  
    The identity \eqref{eq:noiseless_at_tau} allows us to extend \smcdiff\ \cite{trippe2023diffusion} to handle noisy inverse problems as we now show. We have that 
    \begin{align*}
        \filter{\tau}{\tilde\lmeas_\tau}{\lbottomstate_\tau} & = \frac{\int \bwtrans{}{\tau}{\lstate_{\tau + 1}}{\cat{\tilde\lmeas_\tau}{\lbottomstate_\tau}} \left\{ \prod_{s = \tau+1}^{n-1} \bwtrans{}{s}{\lstate_{s+1}}{\rmd \lstate_s} \right\} \bwmarg{n}{\rmd \lstate_n}}{ \int \bwmarg{\tau}{\cat{\tilde\lmeas_\tau}{\underline{z}_\tau}} \rmd \underline{z}_\tau} \\
        & = \int b^{\tilde\lmeas_\tau} _{\tau:n}(\lbottomstate_{\tau:n}|\ltopstate_{\tau+1:n}) f^{\tilde\lmeas_\tau} _{\tau+1:n}(\rmd \ltopstate_{\tau+1:n}) \rmd \lbottomstate_{\tau+1:n}  \eqsp,
    \end{align*}
    where 
    \begin{align*} 
        b_{\tau:n}(\lbottomstate_{\tau:n}|\ltopstate_{\tau+1:n}) & = \frac{\bwtrans{}{\tau}{\lstate_{\tau + 1}}{\cat{\tilde\lmeas_\tau}{\lbottomstate_\tau}} \left\{ \prod_{s = \tau+1}^{n-1} \bbwtrans{}{s}{\lstate_{s+1}}{\lbottomstate_s} \tbwtrans{}{s}{\lstate_{s+1}}{\ltopstate_s} \right\} \underline{\mathsf{p}}_n(\lbottomstate_n)}{\mathsf{L}^{\tilde\lmeas_\tau} _{\tau:n}(\ltopstate_{\tau+1:n})} \eqsp, \\
        f^{\tilde\lmeas_\tau} _{\tau+1:n}(\ltopstate_{\tau+1:n}) & = \frac{\mathsf{L}^{\tilde\lmeas_\tau} _{\tau:n}(\ltopstate_{\tau+1:n})}{\int \bwmarg{\tau}{\cat{\tilde\lmeas_\tau}{\underline{z}_\tau}} \rmd \underline{z}_\tau} \eqsp,
    \end{align*}
    and
    \begin{equation*}
        \mathsf{L}^{\tilde\lmeas_\tau} _{\tau:n}(\ltopstate_{\tau+1:n}) = 
         \int \bwtrans{}{\tau}{\cat{\ltopstate_{\tau + 1}}{\underline{z}_{\tau+1}}}{\cat{\tilde\lmeas_\tau}{\underline{z}_\tau}} \left\{ \prod_{s = \tau+1}^{n-1} \bbwtrans{}{s}{\cat{\ltopstate_{s+1}}{\underline{z}_{s+1}}}{\rmd \underline{z}_s} \tbwtrans{}{s}{\cat{\ltopstate_{s+1}}{\underline{z}_{s+1}}}{\ltopstate_s} \right\} \underline{\mathsf{p}}_n(\rmd \underline{z}_n) \eqsp.
    \end{equation*}
    Next, \eqref{eq:bweqfw} implies that 
    \begin{multline*}
       \int \bwmarg{s+1}{\cat{\ltopstate_{s+1}}{\underline{z}_{s+1}}} \bbwtrans{}{s}{\cat{\ltopstate_{s+1}}{\underline{z}_{s+1}}}{\rmd \underline{z}_s} \tbwtrans{}{s}{\cat{\ltopstate_{s+1}}{\underline{z}_{s+1}}}{\ltopstate_s} \rmd \underline{z}_{s:s+1} = \\
       \int \bwmarg{s}{\cat{\ltopstate_s}{\underline{z}_s}} \tfwtrans{s+1}{\ltopstate_s}{\ltopstate_{s+1}} \bfwtrans{s+1}{\underline{z}_s}{\underline{z}_{s+1}} \rmd \underline{z}_{s:s+1} \eqsp, 
    \end{multline*}
    and applied repeatedly, we find that 
    \[ 
        \mathsf{L}^{\tilde\lmeas_\tau}(\ltopstate_{\tau+1:n}) = \int \bwmarg{\tau}{\cat{\tilde\lmeas_\tau}{\lbottomstate_\tau}} \rmd \lbottomstate_\tau \cdot \int \delta_{\tilde\lmeas_\tau}(\rmd \ltopstate_\tau) \prod_{s = \tau+1}^n \tfwtrans{s}{\ltopstate_{s-1}}{\ltopstate_s} \eqsp.
    \]
    and thus, $f^{\tilde\lmeas_\tau} _{\tau:n}(\ltopstate_{\tau+1:n}) = \int \delta_{\tilde\lmeas_\tau}(\rmd \ltopstate_\tau) \prod_{s = \tau+1}^n \tfwtrans{s}{\ltopstate_{s-1}}{\ltopstate_s}$. In order to approximate $\filter{\tau}{\tilde\lmeas_\tau}{}$ we first diffuse the noised observation up to time $n$, resulting in $\ltopstate_{\tau+1:n}$, and then estimate $b^{\tilde\lmeas_\tau} _{\tau+1:n}(\cdot | \ltopstate_{\tau+1:n})$ using a particle filter with $\bbwtrans{}{s}{\lstate_{s+1}}{\lbottomstate_s}$ as transition kernel at step 
    $s \in [\tau+1:n]$ and $\pot{s}{}{}: \underline{z}_s \mapsto \tbwtrans{}{s-1}{\cat{\ltopstate_s}{\underline{z}_s}}{\ltopstate_{s-1}}$ as potential, similarly to \smcdiff. 
  \section{Proofs}
  \subsection{Proof of \Cref{lem:kldiv:filtering}}
\label{proof:kldiv:filtering}
\subsection*{Preliminary definitions.}
We preface the proof with notations and definitions of a few quantities that will be used throughout. 

For a probability measure $\mu$ and $f$ a bounded measurable function, we write $\mu(f) \eqdef \int f(x) \mu(\rmd x)$  the expectation of $f$ under $\mu$ and if $K(\rmd x|z)$ is a transition kernel we write $K(f)(z) \eqdef \int f(x) K(\rmd x|z)$.

Define the \emph{smoothing} distribution 
\begin{equation} 
    \filter{0:n}{y}{\rmd \lstate_{0:n}} \propto \delta_{\lmeas}(\rmd \ltopstate_0) \bwmarg{0:n}{\lstate_{0:n}}  \rmd \lbottomstate_0 \rmd \lstate_{1:n} \eqsp,
\end{equation}
which admits the posterior $\filter{0}{\lmeas}{}$ as time $0$ marginal. Its particle estimate known as the \emph{poor man smoother} is given by 
\begin{equation} 
    \filter{0:n}{N}{\rmd x_{0:n}} = N^{-1} \sum_{k_{0:n} \in [1:N]^{n+1}}  \delta_{\cat{\lmeas}{\bparticle^{k_0}_{0}}}(\rmd x_0) \prod_{s = 1}^n \1 \big\{ k_s = \anc{k_{s-1}}{s} \big\} \delta_{\particle^{k_s} _{s}}(\rmd x_s) \eqsp.
\end{equation}
We also let $\bfilter{0:n}{N}{}$ be the probability measure defined for any $B \in \sigalgebra(\R^{\dimx})^{\otimes n+1}$ by 
\[ 
    \bfilter{0:n}{N}{B} = \pE \big[ \filter{0:n}{N}{B} \big] \eqsp,
\]
where the expectation is with respect to the probability measure 
\begin{multline} 
    \lawSMC{0:n}\big(\rmd (\lstate^{1:N} _{0:n}, \ianc{1:N}{1:n}) \big) = \prod_{i = 1}^N \bwopt{n}{\lmeas}{}{}(\rmd \lstate^i _n) \prod_{\ell = 2}^n \left\{ \prod_{j = 1}^{N} \sum_{k = 1}^N \weight{k}{\ell-1} \delta_k(\rmd \ianc{j}{\ell}) \bwopt{\ell-1}{y}{\iparticle{\ianc{j}{\ell}}{\ell}}{\rmd \iparticle{j}{\ell-1}} \right\} \\
    \times \prod_{j = 1}^{N} \sum_{k = 1}^N \weight{k}{0} \delta_k(\rmd \ianc{j}{1}) \bbwopt{0}{y}{\iparticle{\ianc{j}{1}}{1}}{\rmd \lbottomstate^j _0} \delta_{\lmeas}(\rmd \ltopstate^j _0) \eqsp,
\end{multline}
where $\weight{i}{t} \eqdef \uweight{}{t}(\particle^i _{t+1}) / \sum_{j = 1}^N \uweight{}{t}(\particle^j _{t+1})$ and 
which corresponds to the joint law of all the random variables generated by \Cref{alg:algonoiseless}.
It then follows by definition that for any $C \in \sigalgebra(\R^{\dimx})$,
\[ 
    \int \bfilter{0:n}{N}{\rmd z_{0:n}} \1_C(z_0) = \pE \left[ \int \filter{0:n}{N}{\rmd z_{0:n}} \1_C(z_0) \right] = \pE \big[ \filter{0}{N}{C} \big] = \bfilter{0}{N}{C} \eqsp.
\]
Define also the law of the \emph{conditional} particle cloud 
\begin{equation}
\begin{aligned}
    \label{eq:cSMCkernel}
& \lawcSMC{}\big(\rmd (\iparticle{1:N}{0:n}, \ianc{1:N}{1:n})\big| z_{0:n} \big) =  \delta_{z_n}(\rmd \iparticle{N}{n}) \prod_{i = 1}^{N-1} \bwopt{n}{y}{}{}(\rmd \iparticle{i}{n}) \\
 & \hspace{2cm} \times\prod_{\ell = 2}^n \delta_{z_{\ell -1}}(\rmd \iparticle{N}{\ell-1}) \delta_N(\rmd \ianc{N}{\ell-1}) \prod_{j = 1}^{N-1} \sum_{k = 1}^N \weight{k}{\ell-1} \delta_k(\rmd \ianc{j}{\ell}) \bwopt{\ell-1}{y}{\iparticle{\ianc{j}{\ell}}{\ell}}{\rmd \iparticle{j}{\ell-1}} \\
 & \hspace{2cm} \times \delta_{z_0}(\rmd \lstate^{N} _{0}) \delta_N(\rmd \ianc{N}{1}) \prod_{j = 1}^{N-1} \sum_{k = 1}^N \weight{k}{0} \delta_k(\rmd \ianc{j}{1}) \bbwopt{0}{y}{\iparticle{\ianc{j}{1}}{1}}{\rmd \lbottomstate^j _0} \delta_\lmeas(\rmd \ltopstate^j _0)\eqsp.
\end{aligned}
\end{equation}
In what follows $\pE _{z_{0:n}}$ refers to expectation with respect to $\lawcSMC{}(\cdot | z_{0:n})$.
Finally, for $s \in [0:n-1]$ we let $\sumweight{s}$ denote the sum of the filtering weights at step $s$, i.e. $\sumweight{s} = \sum_{i = 1}^N \uweight{}{s}(\particle^{i}_{s+1})$. We also write $\normconst_0 = \int \bwmarg{0}{\lstate_0} \delta_\lmeas(\rmd \ltopstate_0) \rmd \lbottomstate_0$ and for all $\ell \in [1:n]$, $\normconst_\ell = \int \tfwtrans{\ell|0}{\lmeas}{\ltopstate_\ell} \bwmarg{\ell}{\rmd \lstate_\ell}.$

The proof of \Cref{lem:kldiv:filtering} relies on two Lemmata stated below and proved in \Cref{sec:proof_lems}; in \Cref{lem:rnikod} we provide an expression for the Radon-Nikodym derivative $\rmd \filter{0:n}{y}{} / \rmd \bfilter{0:n}{y}{}$ and in \Cref{lem:closed_form_bound} we explicit its leading term.
\begin{lemma} $\filter{0:n}{y}{}$ and $\bfilter{0:n}{N}{}$ are equivalent and we have that
    \label{lem:rnikod}
    \begin{equation} 
        \bfilter{0:n}{N}{\rmd z_{0:n}} = \pE _{z_{0:n}} \left[ \frac{N^{n} \normconst_0 / \normconst_n}{\prod_{s = 0}^{n-1} \sumweight{s}} \right] \filter{0:n}{y}{\rmd z_{0:n}} \eqsp.
    \end{equation}
\end{lemma}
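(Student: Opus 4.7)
The plan is to compute the Radon--Nikodym density of $\bfilter{0:n}{N}{}$ with respect to $\filter{0:n}{y}{}$ by exploiting the structure of the conditional SMC kernel $\lawcSMC{}$. First, by the very definition of $\bfilter{0:n}{N}{}$, for any bounded test function $h$ one has $\bfilter{0:n}{N}{h} = \pE_{\lawSMC{0:n}}[\filter{0:n}{N}{h}]$. Expanding the poor man smoother and using the exchangeability of the $N$ particles under $\lawSMC{0:n}$, the $N$ summands in $\filter{0:n}{N}{}$ contribute identical expectations, so this reduces to the law of a single ancestral trajectory, say the one traced back from particle $N$ at time $0$; call this trajectory $\zeta^N_{0:n}$.

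Next comes a change-of-measure identity in the spirit of the conditional SMC construction. Given a fixed trajectory $\zeta_{0:n}$, the conditional distribution under $\lawSMC{0:n}$ of the non-$N$ particles and ancestor indices, given that particle $N$'s ancestral trajectory is $\zeta_{0:n}$, is exactly the kernel $\lawcSMC{}(\cdot \mid \zeta_{0:n})$ from \eqref{eq:cSMCkernel}. I would make this precise by writing both $\lawSMC{0:n}$ and $\lawcSMC{}$ as densities with respect to a common dominating measure, factorizing the density of $\lawSMC{0:n}$ along $\zeta_{0:n}$, and recognizing the remainder as the density of $\lawcSMC{}(\cdot \mid \zeta_{0:n})$ times an explicit trajectory-dependent factor.

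For the final step I would carry out the telescoping algebra. By construction the proposal kernels satisfy $\bwopt{s}{y}{x_{s+1}}{x_s} \propto \bwtrans{}{s}{x_{s+1}}{x_s} \pot{s}{y}{x_s}$ and the targets $\filter{s}{y}{x_s} \propto \pot{s}{y}{x_s} \bwmarg{s}{x_s}$ share the potential $\pot{s}{y}{}$; consequently, along the fixed trajectory $\zeta_{0:n}$, the product of proposal densities times the product of weights $\uweight{}{s}$ telescopes into the smoother density $\filter{0:n}{y}{\zeta_{0:n}}$ multiplied by the ratio $\normconst_0 / \normconst_n$. The resampling normalizers contribute the factor $\prod_{s = 0}^{n-1} \sumweight{s}$ in the denominator of the derivative, and the prefactor $N^n$ arises from the $N$ uniform choices of ancestor index at each of the $n$ resampling steps. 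Equivalence of the two measures then follows from the positivity of all the densities involved.

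The main obstacle will be the bookkeeping in the telescoping step: one must carefully match the ratio $\bwopt{s}{y}{}{} / \bwtrans{}{s}{}{}$ against the weight functions $\uweight{}{s}$ so that all local proposal-side normalizers either cancel the $\pot{s}{y}{}$ factors along $\zeta_{0:n}$ or collapse into the single residual ratio $\normconst_0 / \normconst_n$. The boundary steps also require separate care: the initialization at time $n$ where $\bwopt{n}{y}{}{}(\lstate_n) \propto \bwmarg{n}{\lstate_n} \pot{n}{y}{\lstate_n}$ produces the factor $\normconst_n$, while the terminal step at $s = 0$ uses the Dirac mass $\delta_{\lmeas}(\rmd \ltopstate_0)$ together with the identity $\bbwopt{0}{y}{}{} = \bbwtrans{}{0}{}{}$ to produce the factor $\normconst_0$.
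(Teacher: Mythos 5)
Your proposal is correct and follows essentially the same route as the paper's proof: reduce to a single ancestral trajectory by exchangeability, identify the conditional law of the remaining particles and indices as the conditional SMC kernel $\lawcSMC{}$, and telescope the products $\uweight{}{s}(\cdot)\,\bwopt{s}{y}{\cdot}{\cdot}$ along the distinguished trajectory into $(\normconst_0/\normconst_n)\,\filter{0:n}{y}{}$ with the residual factors $N^n/\prod_{s=0}^{n-1}\sumweight{s}$, handling the initialization at time $n$ and the Dirac-constrained step at time $0$ separately. The paper organizes the symmetry argument by summing over all index trajectories $k_{0:n}$ and observing the summands are equal, rather than invoking exchangeability up front, but this is only a difference in bookkeeping.
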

\begin{lemma} 
    \label{lem:closed_form_bound}
    It holds that
    \begin{multline} 
    \label{eq:normconst_closedform}
        \frac{\normconst_n}{\normconst_0} \pE_{z_{0:n}} \left[ \prod_{s = 0} ^{n-1} N^{-1} \sumweight{s} \right] 
        = \left( \frac{N-1}{N} \right)^{n}  \\
        + \frac{(N-1)^{n-1}}{N^n} \sum_{s = 1} ^n \frac{\normconst_s / \normconst_0}{\tfwtrans{s|0}{\lmeas}{\overline{z}_s} } \int \bwtrans{}{0|s}{z_s}{x_0} \delta_{\lmeas}(\rmd \ltopstate_0) \rmd \lbottomstate_0 + \frac{\mathsf{D}^\lmeas _{0:n}}{N^2} \eqsp.
    \end{multline}
    where $\mathsf{D}^\lmeas _{0:n}$ is a positive constant. 
\end{lemma}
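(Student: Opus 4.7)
The plan is to compute $\pE_{z_{0:n}}\bigl[\prod_{s=0}^{n-1} N^{-1}\sumweight{s}\bigr]$ by expanding the product of normalized weight sums into a multiple sum over index tuples, and then classifying terms by how many of the chosen indices select the conditioning (frozen) trajectory under $\lawcSMC{}(\cdot\mid z_{0:n})$.

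First I would write
\begin{equation*}
\prod_{s=0}^{n-1} N^{-1}\sumweight{s}
= N^{-n}\sum_{(i_0,\dots,i_{n-1})\in[1:N]^n}\prod_{s=0}^{n-1}\uweight{}{s}(\particle^{i_s}_{s+1})
\end{equation*}
and partition the sum by $T = \{s\in[0:n-1] : i_s = N\}$. Recall from \eqref{eq:cSMCkernel} that under $\lawcSMC{}(\cdot\mid z_{0:n})$, the $N$-th particle at each step $\ell$ is frozen to $z_\ell$ while the remaining $N-1$ particles are propagated via the proposal kernel $\bwopt{\ell}{y}{}{}$ with resampling weights $\weight{k}{\ell-1}$. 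The three contributions to the right-hand side of \eqref{eq:normconst_closedform} correspond respectively to $|T|=0$, $|T|=1$ and $|T|\geq 2$.

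For the leading $|T|=0$ case the $(N-1)^n$ index tuples are exchangeable and share a common conditional expectation. A backward tower argument exploiting the APF design of $\uweight{}{s}$ (cf.\ \eqref{eq:weightfunc}) integrates one weight at a time against the proposal marginal; the successive cancellations telescope into $\normconst_0/\normconst_n$, so that after multiplying by $(N-1)^n/N^n$ and by $\normconst_n/\normconst_0$ one recovers the $((N-1)/N)^n$ leading term. For $|T|=\{s^\star\}$, the frozen factor $\uweight{}{s^\star}(z_{s^\star+1})$ is deterministic in the conditioning trajectory, and by \eqref{eq:weightfunc} equals a ratio of a quantity involving $\tbwmean_{s^\star+1}(z_{s^\star+1})$ and $\tfwtrans{s^\star+1|0}{\lmeas}{\overline{z}_{s^\star+1}}$. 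Combining this with the telescoping of the remaining $n-1$ non-frozen weights and applying the Bayes inversion that rewrites $\tfwtrans{s|0}{\lmeas}{\overline{z}_s}^{-1}\bwmarg{s}{z_s}$ times the smoothing weight as $\int\bwtrans{}{0|s}{z_s}{x_0}\delta_{\lmeas}(\rmd\ltopstate_0)\rmd\lbottomstate_0$ (after the reindexing $s^\star+1\to s$), one obtains exactly the middle sum with prefactor $(N-1)^{n-1}/N^n$.

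The main obstacle is twofold. First, resampling in \eqref{eq:cSMCkernel} couples the non-frozen particles through the possibility of selecting the frozen ancestor at each step, so one cannot treat them as independent; this has to be resolved by step-by-step conditioning on $\filtration{s+1}$ and careful bookkeeping of which subsets of indices share frozen ancestors at each stage. Second, controlling the $|T|\geq 2$ remainder requires a uniform bound on the weights, which is supplied by \assp{\ref{assp:bounded_weights}}: each $\uweight{}{s}$ is bounded, the combinatorial count of index tuples with $|T|=k$ is $\binom{n}{k}(N-1)^{n-k}$, and after dividing by $N^n$ the contributions with $k\geq 2$ sum to $O(N^{-2})$ with a finite constant $\mathsf{D}^\lmeas_{0:n}$ depending only on $n$, on the potentials $\{\pot{t}{\lmeas}{}\}_{t=1}^n$, and on $\lmeas$.
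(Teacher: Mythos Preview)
Your decomposition by $T=\{s:i_s=N\}$ does not deliver the three terms you claim. The issue you flag yourself---that non-frozen particles inherit dependence on the frozen trajectory through resampling---is not a bookkeeping nuisance but actually destroys the telescope in the $|T|=0$ block. Concretely, for $i_s\neq N$ one has
\[
\pE_{z_{0:n}}\bigl[\uweight{}{s}(\particle^{i_s}_{s+1})\,\big|\,\filtration{s+2}\bigr]
=\frac{1}{\sumweight{s+1}}\sum_{m=1}^{N}\uweight{}{s+1}(\particle^{m}_{s+2})\,\Qker{s|s+2}{y}(\mathbf{1})(\particle^{m}_{s+2}),
\]
which carries $\sumweight{s+1}$ in the denominator. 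In the paper's argument this denominator cancels against the factor $N^{-1}\sumweight{s+1}$ kept intact in $\normconstest{s+1:n}(\mathbf{1})$. But you have already expanded $\sumweight{s+1}=\sum_{i_{s+1}}\uweight{}{s+1}(\particle^{i_{s+1}}_{s+2})$ and restricted to a single index $i_{s+1}\neq N$, so the cancellation fails: summing over $i_{s+1}\in[1:N-1]$ leaves a residual $\uweight{}{s+1}(z_{s+2})/\sumweight{s+1}$ that feeds back into the other $|T|$-classes. The $|T|=0$ block therefore does \emph{not} reduce to $\normconst_0/\normconst_n$, and the same leakage contaminates your $|T|=1$ computation.

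The paper's route is different in exactly this respect: it never expands the product into index tuples. Instead it defines the functionals $\normconstest{\ell:n}(f)$ and proves the one-step recursion (\Cref{lem:normconst_identity})
\[
\pE_{z_{0:n}}\bigl[\normconstest{\ell-1:n}(f)\bigr]
=\frac{N-1}{N}\,\pE_{z_{0:n}}\bigl[\normconstest{\ell:n}(\Qker{\ell-1|\ell+1}{y}f)\bigr]
+\frac{1}{N}\,\pE_{z_{0:n}}\bigl[\normconstest{\ell:n}(\mathbf{1})\bigr]\,\uweight{}{\ell-1}(z_\ell)f(z_\ell),
\]
in which the $\sumweight{\ell}$ cancellation happens \emph{before} one splits frozen vs.\ non-frozen. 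Unfolding this recursion and using the identity \eqref{eq:Qidentity} is what produces the three displayed terms; the $(N-1)/N$ vs.\ $1/N$ branching there is the correct analogue of your $T$-classification, but it operates on the functionals $\normconstest{\ell:n}(\cdot)$, not on raw index tuples. If you want to salvage your outline, you would need to keep $\sumweight{s+1}$ unexpanded while you tower over $\filtration{s+2}$, which is precisely the paper's recursion.
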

Before proceeding with the proof of \Cref{lem:kldiv:filtering}, let us note that having $z \mapsto \uweight{}{\ell}(z)$ bounded on $\R^{\dimx}$ for all $\ell \in [0:n-1]$ is sufficient to guarantee that $\mathsf{C}^\lmeas _{0:n}$ and $\mathsf{D}^\lmeas _{0:n}$ are finite since in this case it follows immediately that $\pE_{z_{0:n}} \left[ \prod_{s = 0} ^{n-1} N^{-1} \sumweight{s} \right]$ is bounded and so is the right hand side of \eqref{eq:normconst_closedform}. This can be achieved with a slight modification of \eqref{eq:FKrecursion}. Indeed, consider instead the following recursion for $s \in [0:n]$ where $\delta > 0$, 
\begin{align*} 
    \filter{n}{\lmeas}{\lstate_n} & \propto \big( \tfwtrans{n|0}{\lmeas}{\ltopstate_n} + \delta \big)  \bwmarg{n}{\lstate_n} \eqsp, \\
    \filter{s}{\lmeas}{\lstate_s} & \propto \int \filter{s+1}{\lmeas}{\lstate_{s+1}} \bwtrans{}{s}{\lstate_{s+1}}{\rmd \lstate_s} \frac{\tfwtrans{s}{\lmeas}{\ltopstate_s} + \delta}{\tfwtrans{s+1}{\lmeas}{\ltopstate_{s+1}} + \delta} \rmd \lstate_{s+1} \eqsp.
    \end{align*}
Then we have that 
\begin{equation*}
    \filter{0}{\lmeas}{\lbottomstate_0} \propto \int \filter{1}{\lmeas}{\lstate_1} \bbwtrans{}{0}{\lstate_1}{\lbottomstate_0} \frac{\tbwtrans{}{0}{\lstate_1}{\lmeas}}{\tfwtrans{1|0}{\lmeas}{\ltopstate_1} + \delta} \rmd \lstate_1 \eqsp.
\end{equation*}
 We can then use \Cref{alg:algonoiseless} to produce a particle approximation of  $\filter{0}{\lmeas}{}$ using the following transition and weight function, 
\begin{align*} 
    \bwopt{s}{\lmeas, \delta}{\lstate_{s+1}}{\lstate_s} & = \frac{\gamma_s(\lmeas | \lstate_{s+1})}{\gamma_s(\lmeas | \lstate_{s+1}) + \delta} \bwopt{s}{\lmeas}{\lstate_{s+1}}{\lstate_s} + \frac{\delta}{\gamma_s(\lmeas | \lstate_{s+1}) + \delta} \bwtrans{}{s}{\lstate_{s+1}}{\lstate_s} \eqsp,  \\
    \uweight{}{s}(\lstate_{s+1}) & = \big( \gamma_s(\lmeas | \lstate_{s+1}) + \delta \big) \big/ \big( \tfwtrans{s+1|0}{\lmeas}{\ltopstate_{s+1}} + \delta \big) \eqsp,
\end{align*}
where $\gamma_s(\lmeas | \lstate_{s+1}) = \int \tfwtrans{s|0}{\lmeas}{\ltopstate_s} \bwtrans{}{s}{\lstate_{s+1}}{\lstate_s} \rmd \lstate_s$ is available in closed form and $\bwopt{s}{\lmeas}{}{}$ is defined in \eqref{eq:optkernel}. $\uweight{}{s}$ is thus clearly bounded for all $s \in [0:n-1]$ and it is still possible to sample from $\bwopt{s}{\lmeas, \delta}{}{}$ since it is simply a mixture between the transition \eqref{eq:optkernel} and the ``prior'' transition. 
\begin{proof}[Proof of \Cref{lem:kldiv:filtering}]
   Consider the \emph{forward} Markov kernel 
\begin{equation} 
    \fwdker{1:n}{z_0, }{\rmd z_{1:n}} = \frac{\bwmarg{1:n}{\rmd z_{1:n}} \bwtrans{}{0}{z_1}{z_0}}{\int \bwmarg{1:n}{\rmd \tilde{z}_{1:n}} \bwtrans{}{0}{\tilde{z}_1}{\tilde{z}_0}} \eqsp,
\end{equation}
which satisfies 
\[ 
    \filter{0:n}{y}{\rmd z_{0:n}} = \filter{0}{y}{\rmd z_0} \fwdker{1:n}{z_0,}{\rmd z_{1:n}} \eqsp.
\]
By \Cref{lem:rnikod} we have for any $C \in \sigalgebra(\R^{\dimx})$ that 
\begin{align*}
    \bfilter{0}{N}{C} & = \int \bfilter{0:n}{N}{\rmd z_{0:n}} \1_C(z_0) \\
    & = \int \1_C(z_0) \pE _{z_{0:n}} \left[ \frac{N^n \normconst_0 / \normconst_n}{\prod_{s = 0}^{n-1} \sumweight{s}} \right] \filter{0:n}{y}{\rmd z_{0:n}} \\
    & = \int \1_C(z_0) \int \fwdker{1:n}{z_0,}{\rmd z_{1:n}} \pE _{z_{0:n}} \left[ \frac{N^n \normconst_0 / \normconst_n}{\prod_{s = 0}^{n-1} \sumweight{s}} \right] \filter{0}{y}{\rmd z_{0}} \eqsp,
\end{align*}
which shows that the Radon-Nikodym derivative $\rmd \bfilter{0}{N}{} / \rmd \filter{0}{y}{}$ is,
\[ 
    \frac{\rmd \bfilter{0}{N}{}}{\rmd \filter{0}{y}{}}(z_0) = \int \fwdker{1:n}{z_0,}{\rmd z_{1:n}} \pE _{z_{0:n}} \left[ \frac{N^n \normconst_0 / \normconst_n}{\prod_{s = 0}^{n-1} \sumweight{s}} \right] \eqsp.
    \]
Applying Jensen's inequality twice yields
\[ 
    \frac{\rmd \bfilter{0}{N}{}}{\rmd \filter{0}{y}{}}(z_{0}) \geq \frac{N^n \normconst_0 / \normconst_n}{\int \fwdker{1:n}{z_0,}{\rmd z_{1:n}} \pE _{z_{0:n}} \left[ \prod_{s = 0}^{n-1} \sumweight{s} \right]} \eqsp,
\]
and it then follows that 
\[ 
    \kldivergence{\filter{0}{y}{}}{\bfilter{0}{N}{}} \leq \int \log \left( \frac{\normconst_n}{\normconst_0}  \int \fwdker{1:n}{z_0,}{\rmd z_{1:n}} \pE _{z_{0:n}} \left[ \prod_{s = 0}^{n-1} N^{-1} \sumweight{s} \right] \right) \filter{0}{y}{\rmd z_0} \eqsp.
\]
Finally, using \Cref{lem:closed_form_bound} and the fact that $\log(1 + x) < x$ for $x > 0$ we get
\begin{equation*}
    \kldivergence{\filter{0}{y}{}}{\bfilter{0}{N}{}} \leq \frac{\mathsf{C}^\lmeas _{0:n}}{N-1} + \frac{\mathsf{D}^\lmeas _{0:n}}{N^2}
\end{equation*}
where 
\[ 
    \mathsf{C}^\lmeas _{0:n} \eqdef \sum_{s = 1} ^n \int \frac{\normconst_s / \normconst_0}{\tfwtrans{s|0}{\lmeas}{\overline{z}_s} } \left(  \bwtrans{}{0|s}{z_s}{x_0} \delta_{\lmeas}(\rmd \ltopstate_0) \rmd \lbottomstate_0 \right) \filter{s}{\lmeas}{\rmd z_s} \eqsp,
\]
and $\filter{s}{\lmeas}{z_s} \propto \bwmarg{s}{z_s} \int \bwtrans{}{0|s}{z_s}{z_0} \delta_{\lmeas}(\rmd \underline{z}_0) \rmd \overline{z}_0.$
\end{proof}
\subsection*{Proof of \Cref{lem:rnikod} and \Cref{lem:closed_form_bound}}
\label{sec:proof_lems}
\begin{proof}[Proof of \Cref{lem:rnikod}]
    We have that
    \begin{align*} 
        & \bfilter{0:n}{N}{\rmd z_{0:n}} \\
        & = N^{-1} \int \lawSMC{0:n}(\rmd \iparticle{1:N}{0:n}, \rmd \ianc{1:N}{1:n}) \sum_{k_{0:n} \in [1:N]^{n+1}}  \delta_{\cat{\lmeas}{\lbottomstate^{k_0}_{0}}}(\rmd z_0) \prod_{s = 1}^n \1 \big\{ k_s = \ianc{k_{s-1}}{s} \big\} \delta_{\lstate^{k_s} _{s}}(\rmd z_s) \\
        & = N^{-1} \int \sum_{k_{0:n}} \sum_{\ianc{1:N}{1:n}}  \delta_{\cat{\lmeas}{\lbottomstate^{k_0}_{0}}}(\rmd z_0) \prod_{s = 1}^n \1 \big\{k_s = \ianc{k_{s-1}}{s}\big\} \delta_{\iparticle{k_s}{s}}(\rmd z_s) \\
        & \hspace{.5cm} \times \prod_{j = 1}^N \bwopt{n}{y}{}{}(\rmd \iparticle{j}{n}) \left\{ \prod_{\ell = 2}^n  \prod_{i = 1}^N \weight{\ianc{i}{\ell}}{\ell-1} \bwopt{\ell-1}{y}{\iparticle{\ianc{i}{\ell}}{\ell}}{\rmd \iparticle{i}{\ell-1}} \right\} \prod_{r = 1}^N  \weight{\ianc{r}{1}}{0} \bbwopt{\ell-1}{y}{\iparticle{\ianc{r}{1}}{1}}{\rmd \lbottomstate^{r} _{0}} \delta_\lmeas(\ltopstate^r _0)\\
        & = N^{-1}\int \sum_{k_{0:n}} \sum_{\ianc{1:N}{1:n}} \bwopt{n}{y}{}{}(\rmd \iparticle{k_n}{n}) \delta_{\iparticle{k_n}{n}}(\rmd z_n) \prod_{j \neq k_n} \bwopt{n}{y}{}{}(\rmd \iparticle{j}{n}) \prod_{\ell = 2}^n \bigg\{ \prod_{i \neq k_{\ell-1} } \weight{\ianc{i}{\ell}}{\ell-1} \bwopt{\ell-1}{y}{\iparticle{\ianc{i}{\ell}}{\ell}}{\rmd \iparticle{i}{\ell-1}}\\
        & \hspace{.5cm} \times \1 \big\{ \ianc{k_{\ell-1}}{\ell} = k_\ell\} \wratio{}{\ell-1}{\iparticle{\ianc{k_{\ell-1}}{\ell}}{\ell}} \bwopt{\ell-1}{y}{\iparticle{\ianc{k_{\ell-1}}{\ell}}{\ell}}{\rmd \iparticle{k_{\ell-1}}{\ell}} \delta_{\iparticle{k_{\ell-1}}{\ell-1}}(\rmd z_{\ell-1}) \bigg\} \\
        & \hspace{.5cm} \times \bigg\{ \prod_{r \neq k_0} \weight{\ianc{r}{1}}{0} \bbwopt{0}{y}{\iparticle{\ianc{r}{1}}{1}}{\rmd \lbottomstate^{r} _{0}} \delta_\lmeas(\rmd \ltopstate^r _0) \bigg\} \1 \big\{ \ianc{k_0}{1} = k_1 \big\} \wratio{}{0}{\iparticle{\ianc{k_{0}}{1}}{1}} \bwopt{0}{y}{\iparticle{\ianc{k_{0}}{1}}{0}}{\rmd \lbottomstate^{k_0} _0} \delta_{\cat{\lmeas}{\lbottomstate^{k_0} _0}}(\rmd z_{0}) \eqsp.
    \end{align*}
        Then, using that for all $s \in [2:n]$
    \[
        \uweight{}{s-1}(\iparticle{k_s}{s}) \bwopt{s-1}{y}{\iparticle{k_s}{s}}{\rmd \iparticle{k_{s-1}}{s-1}} = \frac{\tfwtrans{s-1|0}{\lmeas}{\ltopstate^{k_{s-1}} _{s-1}}}{\tfwtrans{s|0}{\lmeas}{\ltopstate^{k_s} _{s}}} \bwtrans{}{s}{\iparticle{k_s}{s}}{\rmd \iparticle{k_{s-1}}{s-1}} \eqsp,
    \]
     we recursively get that 
     \begin{align*}
         & \bwopt{n}{y}{}{}(\rmd \iparticle{k_n}{n}) \delta_{\iparticle{k_n}{n}}(\rmd z_n) \prod_{s = 2}^{n}\1 \big\{ \ianc{k_{s-1}}{s} = k_s\} \wratio{}{s-1}{\iparticle{\ianc{k_{s-1}}{s}}{s}} \bwopt{s-1}{y}{\iparticle{\ianc{k_{s-1}}{s}}{s}}{\rmd \iparticle{k_{s-1}}{s-1}} \delta_{\iparticle{k_{s-1}}{s-1}}(\rmd z_{s-1})\\
         & \hspace{2cm}  \times \1 \big\{ \ianc{k_0}{1} = k_1 \big\} \wratio{}{0}{\iparticle{\ianc{k_{0}}{1}}{1}} \bwopt{0}{y}{\iparticle{\ianc{k_{0}}{1}}{1}}{\rmd \lbottomstate^{k_0} _0} \delta_{\cat{\lmeas}{\lbottomstate^{k_0} _0}}(\rmd z_{0}) \\
         & = \frac{\tfwtrans{n|0}{\lmeas}{z_n} \bwmarg{n}{\rmd z_n}}{\normconst_n} \delta_{z_n}(\rmd \iparticle{k_n}{n}) \prod_{s=2}^{n} \1 \big\{ \ianc{k_{s-1}}{s} = k_s\} \frac{\tfwtrans{s-1|0}{\lmeas}{\overline{z}_{s-1}}}{\sumweight{s-1} \tfwtrans{s|0}{\lmeas}{\overline{z}_s}} \bwtrans{}{s-1}{z_s}{\rmd z_{s-1}} \delta_{z_{s-1}}(\rmd \iparticle{k_{s-1}}{s-1}) \\
         & \hspace{2cm} \times \1 \big\{ \ianc{k_0}{1} = k_1 \big\} \frac{\tbwtrans{}{0}{z_1}{\lmeas}}{\sumweight{0} \tfwtrans{1|0}{\lmeas}{\overline{z}_1}} \bwtrans{}{0}{z_1}{\rmd \underline{z}_0} \delta_\lmeas (\rmd \overline{z}_0) \delta_{z_0}(\rmd \lstate^{k_0} _{0}) \\
         & = \frac{\normconst_0}{\normconst_n} \filter{0:n}{y}{\rmd z_{0:n}}  \delta_{z_n}(\rmd \iparticle{k_n}{n}) \prod_{s=1}^{n} \1 \big\{ \ianc{k_{s-1}}{s} = k_s\} \frac{1}{\sumweight{s-1}} \delta_{z_{s-1}}(\rmd \iparticle{k_{s-1}}{s-1}) \eqsp.
     \end{align*}
     Thus, we obtain
    \begin{align*} 
        \bfilter{0:n}{N}{\rmd z_{0:n}} & = N^{-1} \int \sum_{k_{0:n}} \sum_{\ianc{1:N}{1:n}} \filter{0:n}{y}{\rmd z_{0:n}} \frac{\normconst_0 / \normconst_n}{\prod_{s = 0}^{n-1} \sumweight{s}} \delta_{z_n}(\rmd \iparticle{k_n}{n}) \prod_{j \neq k_n} \bwopt{n}{y}{}{}(\rmd \iparticle{j}{n})  \\
        & \hspace{.5cm} \times \prod_{\ell = 2}^n \1\big\{ \ianc{k_{\ell-1}}{\ell} = k_\ell \big\} \delta_{z_{\ell -1}}(\rmd \iparticle{k_{\ell-1}}{\ell-1}) \prod_{i \neq k_{\ell-1}} \weight{\ianc{i}{\ell}}{\ell-1} \bwopt{\ell-1}{y}{\iparticle{\ianc{i}{\ell}}{\ell}}{\rmd \iparticle{i}{\ell-1}}  \\
        & \hspace{.5cm} \times \1 \big\{ \ianc{k_0}{1} = k_1 \big\} \delta_{z_0}(\rmd \lstate^{k_0} _0) \prod_{i \neq k_0} \weight{\ianc{i}{1}}{0} \bbwtrans{}{0}{\lstate^{\ianc{i}{1}} _1}{\lbottomstate^i _0} \delta_\lmeas(\rmd \ltopstate^i _0) \\
        & = N^{-1} \sum_{k_{0:n}} \filter{0:n}{y}{\rmd z_{0:n}} \pE^{k_{0:n}} _{z_{0:n}} \left[ \frac{\normconst_0 / \normconst_n}{\prod_{s = 0}^{n-1} \sumweight{s}} \right] \eqsp,
    \end{align*}
    where for all $k_{0:n} \in [1:N]^{n+1}$ $\pE^{k_{0:n}} _{z_{0:n}}$ denotes the expectation under the Markov kernel 
    \begin{align*}
    & \lawcSMC{k_{0:n}}\big(\rmd (\iparticle{1:N}{0:n}, \ianc{1:N}{1:n})\big| z_{0:n} \big) =  \delta_{z_n}(\rmd \iparticle{k_n}{n}) \prod_{i \neq k_n} \bwopt{n}{y}{}{}(\rmd \iparticle{i}{n}) \\
     & \hspace{2cm} \times\prod_{\ell = 2}^n \delta_{z_{\ell -1}}(\rmd \iparticle{k_{\ell-1}}{\ell-1}) \delta_{k_\ell}(\rmd \ianc{k _{\ell-1}}{\ell}) \prod_{j \neq k_{\ell-1}} \sum_{k = 1}^N \weight{k}{\ell-1} \delta_k(\rmd \ianc{j}{\ell}) \bwopt{\ell-1}{y}{\iparticle{\ianc{j}{\ell}}{\ell}}{\rmd \iparticle{j}{\ell-1}} \\
     & \hspace{2cm} \times \delta_{z_0}(\rmd \lstate^{k_0} _{0}) \delta_{k_1}(\rmd \ianc{k_0}{1}) \prod_{j \neq k_0} \sum_{k = 1}^N \weight{k}{0} \delta_k(\rmd \ianc{j}{1}) \bbwopt{0}{y}{\iparticle{\ianc{j}{1}}{1}}{\rmd \lbottomstate^j _0} \delta_\lmeas(\rmd \ltopstate_0)\eqsp.
    \end{align*}
    Note however that for all $(k_{0:n}, \ell_{0:n}) \in ([1:N]^{n+1})^2$, \[ 
        \pE^{k_{0:n}} _{z_{0:n}} \left[ \frac{1}{\prod_{s = 0}^{n-1} \sumweight{s}} \right] = \pE^{\ell_{0:n}} _{z_{0:n}} \left[ \frac{1}{\prod_{s = 0}^{n-1} \sumweight{s}} \right]
        \] 
    and thus it follows that
    \begin{equation} 
        \bfilter{0:n}{N}{\rmd z_{0:n}} = \pE _{z_{0:n}} \left[ \frac{N^{n} \normconst_0 / \normconst_n}{\prod_{s = 0}^{n-1} \sumweight{s}} \right] \filter{0:n}{y}{\rmd z_{0:n}} \eqsp.
    \end{equation}
\end{proof}
Denote by $\{ \filtration{s} \}_{s = 0} ^n$ the filtration generated by a conditional particle cloud sampled from the kernel $\lawcSMC{}$ \eqref{eq:cSMCkernel}, i.e. for all $\ell \in [0:n-1]$
\[ 
    \filtration{s} = \sigma\big( \particle^{1:N} _{s:n}, \anc{1:N}{s+1:n} \big) \eqsp.
\]
and $\filtration{n} = \sigma\big( \particle^{1:N} _{n} \big)$. Define for all bounded $f$ and $\ell \in [0:n-1]$
\begin{equation} 
    \normconstest{\ell:n}(f) = \left\{ \prod_{s = \ell + 1}^{n-1} N^{-1} \sumweight{s} \right\} N^{-1} \sum_{k = 1}^N \uweight{}{\ell}(\particle^{k} _{\ell+1}) f(\particle^{k} _{\ell+1}) \eqsp,
\end{equation}
with the convention $\normconstest{\ell:n}(f) = 1$ if $\ell \geq n$.
Define also the transition Kernel 
\begin{equation} 
    \Qker{\ell-1|\ell+1}{y}: \R^{\dimx} \times \sigalgebra(\R^{\dimx}) \ni (x_{\ell+1}, A) \mapsto \int \1_A(x_\ell) \uweight{}{\ell-1}(\lstate_\ell) \bwopt{\ell}{y}{\lstate_{\ell+1}}{\rmd \lstate_{\ell}}  \eqsp.
\end{equation}
Using \cref{eq:optkernel,eq:weightfunc}, it is easily seen that for all $\ell \in [0:n-1]$, 
\begin{equation}
    \label{eq:Qidentity} 
    \uweight{}{\ell}(\lstate_{\ell+1}) \Qker{\ell-1|\ell+1}{y}(f)(\lstate_{\ell+1}) = \frac{1}{\tfwtrans{\ell+1|0}{\lmeas}{\ltopstate_{\ell+1}}} \int \tfwtrans{\ell|0}{\lmeas}{\ltopstate_s} \uweight{}{\ell-1}(x_\ell) f(\lstate_\ell) \bwtrans{}{\ell}{x_{\ell+1}}{\rmd x_\ell} \eqsp.
\end{equation}
Define $\mathbf{1}: \lstate \in \R^{\dimx} \mapsto 1$. We may thus write that $\normconstest{\ell:n}(f) = N^{-1} \normconstest{\ell+1:n}(\mathbf{1}) \sum_{k = 1}^N \uweight{}{\ell}(\particle^{k} _{\ell+1}) f(\particle^{k} _{\ell+1}).$
\begin{lemma} 
    \label{lem:normconst_identity}
For all $\ell \in [0:n-1]$ it holds that
\begin{equation*} 
    \pE_{z_{0:n}} \big[ \normconstest{\ell-1:n}(f) \big] = \frac{N-1}{N} \pE_{z_{0:n}} \left[ \normconstest{\ell:n}\left( \Qker{\ell-1|\ell+1}{y}(f)\right)\right] + \frac{1}{N} \pE_{z_{0:n}} \left[ \normconstest{\ell:n}(\mathbf{1}) \right] \uweight{}{\ell-1}( z_{\ell}) f(z_{\ell}) \eqsp.
\end{equation*}
\end{lemma}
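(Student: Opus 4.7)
The plan is to condition on $\filtration{\ell+1}$, which freezes the particles and ancestor indices from time $\ell+1$ onward, and then evaluate the resulting conditional expectation using the explicit structure of $\lawcSMC{}$ from \eqref{eq:cSMCkernel}. The starting point is the factorisation
\begin{equation*}
    \normconstest{\ell-1:n}(f) = \normconstest{\ell:n}(\mathbf{1}) \cdot N^{-1} \sum_{k=1}^N \uweight{}{\ell-1}(\particle^k_\ell)\, f(\particle^k_\ell),
\end{equation*}
which follows directly from the definition $\normconstest{\ell:n}(\mathbf{1}) = \prod_{s=\ell}^{n-1} N^{-1}\sumweight{s}$. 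Since $\normconstest{\ell:n}(\mathbf{1})$ depends only on $\particle^{1:N}_{\ell+1:n}$, it is $\filtration{\ell+1}$-measurable, so the only genuinely random piece under $\pE_{z_{0:n}}[\cdot \mid \filtration{\ell+1}]$ is the sum over $k$.

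Next, I would use the cSMC structure of $\lawcSMC{}$ to compute the conditional law of $\particle^{1:N}_\ell$ given $\filtration{\ell+1}$. The key observation is the asymmetry: $\particle^N_\ell = z_\ell$ is frozen, while for $k \in [1:N-1]$ the pair $(\anc{k}{\ell+1}, \particle^k_\ell)$ is drawn i.i.d.\ from the mixture $\sum_j \weight{j}{\ell}\, \bwopt{\ell}{y}{\particle^j_{\ell+1}}{\cdot}$. Hence
\begin{equation*}
    \pE_{z_{0:n}}\!\left[\uweight{}{\ell-1}(\particle^k_\ell) f(\particle^k_\ell) \,\big|\, \filtration{\ell+1}\right] = \sum_{j=1}^N \weight{j}{\ell}\, \Qker{\ell-1|\ell+1}{y}(f)(\particle^j_{\ell+1}) = \frac{1}{\sumweight{\ell}} \sum_{j=1}^N \uweight{}{\ell}(\particle^j_{\ell+1})\, \Qker{\ell-1|\ell+1}{y}(f)(\particle^j_{\ell+1})
\end{equation*}
for each $k \in [1:N-1]$, and the $k=N$ term contributes the deterministic $\uweight{}{\ell-1}(z_\ell) f(z_\ell)$. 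Summing the $N-1$ identical free contributions plus the frozen one, dividing by $N$, and multiplying by $\normconstest{\ell:n}(\mathbf{1}) = \normconstest{\ell+1:n}(\mathbf{1}) \cdot N^{-1}\sumweight{\ell}$ causes the $\sumweight{\ell}$ in the denominator to cancel, and the first term reassembles precisely into $\normconstest{\ell:n}(\Qker{\ell-1|\ell+1}{y}(f))$.

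Taking the outer expectation $\pE_{z_{0:n}}$ then yields the claimed identity. The calculation is essentially bookkeeping with no sharp estimate, and the only subtle point, which I expect to be the main source of mistakes, is correctly accounting for the split between the $N-1$ free particles (giving the $(N-1)/N$ coefficient and the $\Qker{\ell-1|\ell+1}{y}$ kernel) and the unique frozen particle $\particle^N_\ell = z_\ell$ (giving the $1/N$ coefficient and the pointwise term $\uweight{}{\ell-1}(z_\ell) f(z_\ell)$). Once this asymmetry is tracked properly in the conditional expectation under $\lawcSMC{}$, the identity between the mixture weights $\weight{j}{\ell}$ and the normalised weights $\uweight{}{\ell}(\particle^j_{\ell+1})/\sumweight{\ell}$ closes the argument.
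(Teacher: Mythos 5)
Your proposal is correct and follows essentially the same route as the paper's proof: condition on $\filtration{\ell+1}$, use that $\normconstest{\ell:n}(\mathbf{1})$ is $\filtration{\ell+1}$-measurable, exploit that the $N-1$ free particles at time $\ell$ are conditionally i.i.d.\ from the mixture $\sum_k \weight{k}{\ell}\,\bwopt{\ell}{y}{\particle^k_{\ell+1}}{\cdot}$ while $\particle^N_\ell = z_\ell$ is frozen, and reassemble after the $\sumweight{\ell}$ cancellation. The bookkeeping of the $(N-1)/N$ versus $1/N$ split that you flag as the delicate point is exactly how the paper closes the argument.
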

\begin{proof} 
    By the tower property and the fact that $\normconstest{\ell:n}(f)$ is $\filtration{\ell+1}$-measurable, we have that 
    \begin{equation*} 
        \pE_{z_{0:n}} \big[ \normconstest{\ell-1:n}(f) \big] = \pE_{z_{0:n}} \left[ N^{-1} \normconstest{\ell+1:n}(\mathbf{1}) \sumweight{\ell} \pE_{z_{0:n}} \left[ N^{-1} \sum_{k = 1}^N \uweight{}{\ell-1}(\particle^{k} _{\ell}) f(\particle^{k} _{\ell})\bigg| \filtration{\ell+1} \right]\right] \eqsp.
    \end{equation*}
    Note that for all $\ell \in [0:n-1]$, $(\particle^{1} _{\ell}, \dotsc, \particle^{N-1} _\ell)$ are identically distributed conditionally on $\filtration{\ell+1}$ and 
    \[ 
    \pE_{z_{0:n}} \left[\uweight{}{\ell-1}(\particle^{j} _{\ell}) f(\particle^{j} _{\ell}) \bigg| \filtration{\ell+1} \right] = \frac{1}{\sumweight{\ell}} \sum_{k = 1}^N \uweight{}{\ell}(\particle^{k} _{\ell+1}) \int \uweight{}{\ell-1}(x_\ell) f(x_{\ell }) \bwopt{\ell}{y}{\particle^{k} _{\ell+1}}{\rmd x_{\ell}} \eqsp,
    \]
    leading to 
    \begin{multline*} 
        \pE_{z_{0:n}} \left[ N^{-1} \sum_{k = 1}^N \uweight{}{\ell-1}(\particle^{k} _{\ell}) f(\particle^{k} _{\ell}) \bigg| \filtration{\ell+1} \right] \\
        = \frac{N-1}{N \sumweight{\ell}} \sum_{k = 1}^N \uweight{}{\ell}(\particle^{k} _{\ell+1}) \int \uweight{}{\ell-1}(x_\ell) f(x_{\ell }) \bwopt{\ell}{y}{\particle^{k} _{\ell+1}}{\rmd x_{\ell}} + \frac{1}{N} \uweight{}{\ell-1}(z_\ell) f(z_{\ell}) \eqsp,
    \end{multline*}
    and the desired recursion follows.
\end{proof}
\begin{proof}[Proof of \Cref{lem:closed_form_bound}]
    We proceed by induction and show for all $\ell \in [0:n-2]$ 
\begin{equation}
    \begin{aligned} 
        \label{eq:normconstest_exp}
     & \pE_{z_{0:n}}\big[ \normconstest{\ell:n}(f) ] \\
     & = \left( \frac{N-1}{N} \right)^{n-\ell} \frac{\int \bwmarg{\ell+1}{\rmd x_{\ell+1}} \tfwtrans{\ell+1|0}{\lmeas}{\ltopstate_{\ell+1}} \uweight{}{\ell}(x_{\ell+1}) f(x_{\ell+1})}{\normconst_{n}}  \\
     & + \frac{(N-1)^{n-\ell-1}}{N^{n-\ell}} \bigg[ (\normconst_{\ell+1} / \normconst_n)f(z_{\ell+1}) \uweight{}{\ell}(z_{\ell+1}) \\
     & + \sum_{s=\ell+2} ^n \frac{\normconst_s / \normconst_n}{\tfwtrans{s|0}{\lmeas}{\overline{z}_s}} \int \uweight{}{\ell}(x_{\ell+1}) \tfwtrans{\ell+1|0}{\lmeas}{\ltopstate_{\ell+1}} f(x_{\ell+1}) \bwtrans{}{\ell+1|s}{z_s}{\rmd x_{\ell+1}}\bigg] + \frac{\mathsf{D}^\lmeas _{\ell:n}}{N^2} \eqsp.
    \end{aligned}
\end{equation}
where $f$ is a bounded function and  $\mathsf{D}^\lmeas _{\ell:n}$ is a a positive constant. 
    The desired result in \Cref{lem:closed_form_bound} then follows by taking $\ell = 0$ and $f = \mathbf{1}$.

    Assume that \eqref{eq:normconstest_exp} holds at step $\ell$. To show that it holds at step $\ell-1$ we use \Cref{lem:normconst_identity} and we compute $\pE_{z_{0:n}} \left[ \normconstest{\ell:n}\left( \Qker{\ell-1|\ell+1}{y}(f)\right)\right]$ and $\pE_{z_{0:n}} \left[ \normconstest{\ell:n}(\mathbf{1}) \right] \uweight{}{\ell-1}( z_{\ell}) f(z_{\ell})$. 
    
    Using the following identities which follow from \eqref{eq:Qidentity}
    \begin{multline*}
        \int  \tfwtrans{\ell+1|0}{\lmeas}{\ltopstate_{\ell+1}} \uweight{}{\ell}(x_{\ell+1}) \Qker{\ell-1|\ell+1}{y}(f)(x_{\ell+1}) \bwmarg{\ell+1}{\rmd x_{\ell+1}}\\
        = \int  \tfwtrans{\ell|0}{\lmeas}{\ltopstate_\ell} \uweight{}{\ell-1}(x_\ell) f(x_\ell) \bwmarg{\ell}{\rmd x_\ell}\eqsp,
    \end{multline*}
    and 
    \begin{multline*} 
        \int \uweight{}{\ell}(x_{\ell+1}) \tfwtrans{\ell+1|0}{\lmeas}{\ltopstate_{\ell+1}} \Qker{\ell-1|\ell+1}{y}(f)(x_{\ell+1}) \bwtrans{}{\ell+1|s}{x_s}{\rmd x_{\ell+1}} \\
         = \int \uweight{}{\ell-1}(x_\ell)  \tfwtrans{\ell|0}{\lmeas}{\ltopstate_\ell} f(x_\ell) \bwtrans{}{\ell|s}{x_s}{\rmd x_\ell}\eqsp,
    \end{multline*}
    we get by \eqref{eq:normconstest_exp} that  
    \begin{equation}
        \begin{aligned} 
         & \frac{N-1}{N} \pE_{z_{0:n}} \left[ \normconstest{\ell:n}\left( \Qker{\ell-1|\ell+1}{y}(f)\right)\right]  \\
         & \hspace{.7cm} = \left( \frac{N-1}{N} \right)^{n-\ell + 1} \frac{\int \tfwtrans{\ell|0}{\lmeas}{\ltopstate_\ell} \uweight{}{\ell-1}(x_\ell) f(x_\ell) \bwmarg{\ell}{\rmd x_\ell}}{\normconst_n} \\
        & \hspace{1.2cm} + \frac{(N-1)^{n-\ell}}{N^{n-\ell+1}} \bigg[ \frac{\normconst_{\ell+1} / \normconst_{n}}{\tfwtrans{\ell+1|0}{\lmeas}{\overline{z}_{\ell+1}}} \int  \tfwtrans{\ell|0}{\lmeas}{\ltopstate_\ell} \uweight{}{\ell-1}(x_\ell) f(x_\ell) \bwtrans{}{\ell}{z_{\ell+1}}{\rmd x_\ell}\\
        & \hspace{1.2cm} + \sum_{s=\ell+2} ^n \frac{\normconst_s / \normconst_n}{\tfwtrans{s|0}{\lmeas}{\overline{z}_s}} \int \uweight{}{\ell-1}(x_\ell)  \tfwtrans{\ell|0}{\lmeas}{\ltopstate_\ell} f(x_\ell) \bwtrans{}{\ell|s}{z_s}{\rmd x_\ell} \bigg] + \frac{\mathsf{D}^\lmeas _{\ell:n}}{N^2}\\
        & \hspace{.7cm} = \left( \frac{N-1}{N} \right)^{n-\ell + 1} \frac{\int  \tfwtrans{\ell|0}{\lmeas}{\ltopstate_\ell} \uweight{}{\ell-1}(x_\ell) f(x_\ell) \bwmarg{\ell}{\rmd x_\ell}}{\normconst_n} \\
        &  \hspace{1.2cm}  + \frac{(N-1)^{n-\ell}}{N^{n-\ell+1}} \sum_{s=\ell+1} ^n \frac{\normconst_s / \normconst_n}{\tfwtrans{s|0}{\lmeas}{\overline{z}_s}} \int \uweight{}{\ell-1}(x_\ell)  \tfwtrans{\ell|0}{\lmeas}{\ltopstate_s} f(x_\ell) \bwtrans{}{\ell|s}{z_s}{\rmd x_\ell} + \frac{\mathsf{D}^\lmeas _{\ell:n}}{N^2} \eqsp.
        \end{aligned}
    \end{equation}
    The induction step is finished by using again \eqref{eq:normconstest_exp} and noting that 
    \begin{align*} 
        \frac{1}{N} \pE_{z_{0:n}} \left[ \normconstest{\ell:n}(\mathbf{1}) \right] \uweight{}{\ell-1}( z_{\ell}) f(z_{\ell}) = \frac{(N-1)^{n-\ell}}{N^{n - \ell +1}} \big(\normconst_\ell / \normconst_n\big) \uweight{}{\ell-1}( z_{\ell}) f(z_{\ell}) + \frac{\widetilde{\mathsf{D}}^\lmeas _{\ell:n}}{N^2} \eqsp.
    \end{align*}
    and then setting $\mathsf{D}^\lmeas _{\ell-1:n} = \mathsf{D}^\lmeas _{\ell:n} + \widetilde{\mathsf{D}}^\lmeas _{\ell:n}$.
    
    It remains to compute the initial value at $\ell = n-2$. Note that
    \begin{equation} 
        \pE_{z_{0:n}} \left[ \normconstest{n-1:n}(f) \right] = \frac{N-1}{N} \int \bwopt{n}{y}{}{}(\rmd x_n) \uweight{}{n-1}(x_n) f(x_n) + \frac{1}{N} \uweight{}{n-1}(z_n) f(z_n) 
    \end{equation}
    and thus by \Cref{lem:normconst_identity} and similarly to the previous computations
    \begin{align*} 
        & \pE_{z_{0:n}} \left[ \normconstest{n-2:n}(f) \right] \\
        &  = \left( \frac{N-1}{N} \right)^2 \int \bwopt{n}{y}{}{}(\rmd x_n) \uweight{}{n-1}(x_n) \Qker{n-2|n}{y}(f)(x_n) + \frac{N-1}{N^2} \bigg[ \uweight{}{n-1}(z_n)\Qker{n-2|n}{y}(f)(z_n) \\
        & +  \uweight{}{n-2}(z_{n-1}) f(z_{n-1}) \int \bwopt{n}{y}{}{}(\rmd x_n) \uweight{}{n-1|n}(x_n) \bigg] + \frac{\mathsf{D}^\lmeas _{n-2fa:n}}{N^2} \\
        & = \left( \frac{N-1}{N} \right)^2  \frac{\int \tfwtrans{n-1|0}{\lmeas}{\lstate_{n-1}}  \uweight{}{n-2}(\lstate_{n-1}) \bwmarg{n-1}{\rmd \lstate_{n-1}}}{\normconst_n} \\
        & + \frac{N-1}{N^2} \bigg[ \big( \normconst_{n-1} / \normconst_n \big)\uweight{}{n-2}(z_{n-1}) f(z_{n-1}) \\
        & + \frac{1}{\tfwtrans{n|0}{\lmeas}{\ltopstate_n}} \int \tfwtrans{n-1|0}{\lmeas}{\ltopstate_{n-1}} \uweight{}{n-2}(\lstate_{n-1}) f(\lstate_{n-1}) \bwtrans{}{n-1}{z_n}{\rmd \lstate_{n-1}}\bigg] + \frac{\mathsf{D}^\lmeas _{n-2:n}}{N^2} \eqsp.
    \end{align*}
\end{proof}
\subsection{Proof of \Cref{prop:noisytonoiseless}}
\label{proof:noisytonoiseless}
In this section and only in this section we make the following assumption
\begin{hypA}
\label{assp:bweqfwd} For all $s \in [0:n-1]$, 
    $
    \bwmargV{s}{\lstate_s} \fwtrans{s+1}{\lstate_s}{\lstate_{s+1}} = \bwmargV{s+1}{\lstate_{s+1}} \bwtransV{}{s}{\lstate_{s+1}}{\lstate_s} \eqsp. 
    $
\end{hypA}
We also consider $\sigma_{\delta} = 0$. In what follows we let $\tau_{\dimtr+1} = n$ and we write $\tau_{1:\dimtr} = \{ \tau_1, \dotsc, \tau_\dimtr \}$ and $\overline{\tau_{1:\dimtr}} = [1:n] \setminus \tau_{1:t}$. Define the measure
\begin{equation} 
    \label{eq:joint_noiseless}
    \Gamma^{\diffltrmeas} _{0:n}(\rmd \ltrstate_{0:n}) = \bwmargV{n}{\rmd \ltrstate_n} \prod_{s \in \overline{\tau_{1:\dimtr}}} \bwtransV{}{s}{\ltrstate_{s+1}}{\rmd \ltrstate_{s}}  \prod_{i = 1}^{\dimtr} \bwtransV{}{\tau_i}{\ltrstate_{\tau_i + 1}}{\ltrstate_{\tau_i}} \rmd \projidx{\ltrstate}{\tau_i}{i} \delta_{\vecidx{\diffltrmeas}{}{i}} (\rmd \vecidx{\ltrstate}{\tau_i}{i}) \eqsp.
\end{equation} 
Under \assp{\ref{assp:bweqfwd}} it has the following alternative \emph{forward} expression, 
\begin{equation} 
    \label{eq:joint_noiseless:forward}
    \Gamma^\diffltrmeas _{0:n}(\rmd \chunk{\ltrstate}{0}{n})  = \bwmargV{0}{\rmd \ltrstate_0} \prod_{s \in \overline{\tau_{1:\dimtr}}} \fwtrans{s+1}{\ltrstate_{s}}{\rmd \ltrstate_{s+1}} \prod_{i = 1}^\dimtr \fwtrans{\tau_i}{\ltrstate_{\tau_i - 1}}{\ltrstate_{\tau_i}} \rmd \projidx{\ltrstate}{\tau_i}{i} \delta_{\vecidx{\diffltrmeas}{}{i}} (\rmd \vecidx{\ltrstate}{\tau_i}{i}) \eqsp.
\end{equation}
Since the forward kernels decompose over the dimensions of the states, i.e. $$\fwtrans{s+1}{\ltrstate_s}{\ltrstate_{s+1}} = \prod_{\ell = 1}^{\dimx} \idxfwtrans{s+1}{\vecidx{\ltrstate}{s}{\ell}}{\vecidx{\ltrstate}{s+1}{\ell}}{\ell}$$ where $\idxfwtrans{s+1}{\vecidx{\ltrstate}{s}{\ell}}{\vecidx{\ltrstate}{s+1}{\ell}}{\ell} = \normpdf(\vecidx{\ltrstate}{s+1}{\ell}; (\alpha_{s+1} / \alpha_s)^{1/2} \vecidx{\ltrstate}{s}{\ell}, 1 - (\alpha _{s+1} / \alpha_s) )$, we can write 
\begin{equation}
    \label{eq:joint_noiseless:forward:dim_decomp}
    \Gamma^{\diffltrmeas} _{0:n}(\ltrstate_{0:n}) = \bwmargV{0}{\ltrstate_0} \prod_{\ell = 1}^{\dimx} \Gamma^{\diffltrmeas} _{1:n|0, \ell}\big(\vecidx{\ltrstate}{1}{\ell}, \dotsc, \vecidx{\ltrstate}{n}{\ell} \big| \vecidx{\ltrstate}{0}{\ell} \big) \eqsp, 
\end{equation}
    where for $\ell \in [1:\dimtr]$
\begin{equation}
    \label{eq:joint_noiseless:forward_observed}
    \Gamma^{\diffltrmeas} _{1:n|0, \ell}\big(\vecidx{\ltrstate}{1}{\ell}, \dotsc, \vecidx{\ltrstate}{n}{\ell} | \vecidx{\ltrstate}{0}{\ell} \big) = \idxfwtrans{\tau_\ell}{\vecidx{\ltrstate}{\tau_{\ell} - 1}{\ell}}{\vecidx{\diffltrmeas}{}{\ell}}{\ell}  \prod_{s \neq \tau_\ell}  \idxfwtrans{s}{\vecidx{\ltrstate}{s-1}{\ell}}{\rmd \vecidx{\ltrstate}{s}{\ell}}{\ell}  \eqsp,
\end{equation} 
and for $\ell \in [\dimtr+1:\dimx]$,
\begin{equation}
    \label{eq:joint_noiseless:forward_unobserved}
    \Gamma^{\diffltrmeas} _{1:n|0, \ell}(\vecidx{\ltrstate}{1}{\ell}, \dotsc, \vecidx{\ltrstate}{n}{\ell} |\vecidx{\ltrstate}{0}{\ell}) = \prod_{s = 0}^{n-1} \idxfwtrans{s+1}{\vecidx{\ltrstate}{s}{\ell}}{\vecidx{\ltrstate}{s+1}{\ell}}{\ell} \eqsp.
\end{equation}
With these quantities in hand we can now prove \Cref{prop:noisytonoiseless}.
\begin{proof}[Proof of \Cref{prop:noisytonoiseless}]
    Note that for $\ell \in [1:\dimtr]$,
    \begin{multline*}
        \normpdf(\vecidx{\diffltrmeas}{}{\ell}; \alpha_{\tau_\ell} \vecidx{\ltrstate}{0}{\ell}, 1 - \alpha_{\tau_\ell}) = \idxfwtrans{\tau_\ell | 0}{\vecidx{\ltrstate}{0}{\ell}}{\vecidx{\diffltrmeas}{}{\ell}}{\ell}
        =  \int \idxfwtrans{\tau_\ell}{\vecidx{\ltrstate}{\tau_{\ell} - 1}{\ell}}{\vecidx{\diffltrmeas}{}{\ell}}{\ell}  \prod_{s \neq \tau_\ell}  \idxfwtrans{s}{\vecidx{\ltrstate}{s-1}{\ell}}{\rmd \vecidx{\ltrstate}{s}{\ell}}{\ell} 
        \\ = \int \Gamma^{\diffltrmeas} _{1:n|0, \ell}\big(\rmd (\vecidx{\ltrstate}{1}{\ell}, \dotsc, \vecidx{\ltrstate}{n}{\ell}) |\vecidx{\ltrstate}{0}{\ell} \big) 
    \end{multline*}
    and thus by \eqref{eq:pot:forward} we have that 
    \begin{align*} 
        \bwmargV{0}{\ltrstate_0} \pot{0}{\lmeas}{\ltrstate_0} & \propto \bwmargV{0}{\ltrstate_0} \prod_{\ell = 1}^\dimtr \normpdf(\vecidx{\diffltrmeas}{}{\ell}; \alpha_{\tau_\ell} \vecidx{\ltrstate}{0}{\ell}, 1 - \alpha_{\tau_\ell}) \\
        & = \bwmargV{0}{\ltrstate_0} \prod_{\ell = 1}^\dimtr \int \Gamma^{\diffltrmeas} _{1:n|0, \ell}\big(\rmd (\vecidx{\ltrstate}{1}{\ell}, \dotsc, \vecidx{\ltrstate}{n}{\ell}) |\vecidx{\ltrstate}{0}{\ell} \big)  \\
        & = \bwmargV{0}{\ltrstate_0} \prod_{\ell = 1}^\dimx \int \Gamma^{\diffltrmeas} _{1:n|0, \ell}\big(\rmd (\vecidx{\ltrstate}{1}{\ell}, \dotsc, \vecidx{\ltrstate}{n}{\ell}) |\vecidx{\ltrstate}{0}{\ell} \big) \eqsp.
    \end{align*}
    By \eqref{eq:joint_noiseless:forward} it follows that 
    \[ 
        \filter{0}{\ltrmeas}{\ltrstate_0} = \frac{1}{\int \Gamma^{\diffltrmeas} _{0:n}(\chunk{\tilde\ltrstate}{0}{n}) \rmd \chunk{\tilde\ltrstate}{0}{n} }\int \Gamma^{\diffltrmeas} _{0:n}(\chunk{\ltrstate}{0}{n}) \rmd \chunk{\ltrstate}{1}{n} \eqsp,
    \]
    and hence by \eqref{eq:joint_noiseless:forward} and \eqref{eq:joint_noiseless} we get
    \[ 
        \noisyfilter{0}{\ltrmeas}{\ltrstate_{0}} \propto
    \int \bwmargV{\tau_\dimtr}{\ltrstate_{\tau_\dimtr}} \delta_{\vecidx{\diffltrmeas}{}{\dimtr}}(\rmd \vecidx{\ltrstate}{\tau_\dimtr}{\dimtr}) \rmd \projidx{\ltrstate}{\tau_\dimtr}{\dimtr} \left\{ \prod_{i = 1}^{\dimtr-1} \bwtransV{}{\tau_i | \tau_{i + 1}}{\ltrstate_{\tau_{i+1}}}{\ltrstate_{\tau_{i}}} \delta_{\vecidx{\diffltrmeas}{}{i}}(\rmd \vecidx{\ltrstate}{\tau_i}{i}) \rmd \projidx{\ltrstate}{\tau_i}{i} \right\} \bwtransV{}{0|\tau_1}{\ltrstate_{\tau_1}}{\ltrstate_0} \eqsp. \]
    This completes the proof.
\end{proof} 

  \subsection{Algorithmic details and numerics}
  \label{appendix:numerics}
  \subsubsection{GMM}
For a given dimension $\dimx$, we consider $\fwmarg{\mathrm{data}}{}$ a mixture of $25$ Gaussian random variables.
The Gaussian random variables have mean $\boldsymbol{\mu}_{i, j} \eqdef (8i, 8j, \cdots, 8i, 8j) \in \rset^\dimx$ for $(i, j) \in \{-2, -1, 0, 1, 2\}^2$ and unit variance.
The mixture (unnormalized) weights $\omega_{i, j}$ are independently drawn according to a $\chi^2$ distribution.
The $\kappa$ paramater of \algo\ is $\kappa^2 = 10^{-4}$.
We use $20$ steps of \ddim\ for the numerical examples and for all algorithms.
\paragraph{Score:}
Note that $\fwmarg{s}{\lstate_s} = \int \fwtrans{s|0}{\lstate_0}{\lstate_s}\fwmarg{\mathrm{data}}{\lstate_0} \rmd \lstate_0$. As $\fwmarg{\mathrm{data}}{}$ is a mixture of Gaussians, $\fwmarg{s}{\lstate_s}$ is also a mixture of Gaussians with means $\alpha_s^{1/2} \boldsymbol{\mu_{i, j}}$ and unitary variances.
Therefore, using automatic differentiation libraries, we can calculate $\nabla \log \fwmarg{s}{\lstate_s}$.
Setting $\epsprop{\lstate_s, s}{} = -(1 - \alpha_s)^{1/2} \nabla \log \fwmarg{s}{\lstate_s}$ leads to the optimum of \eqref{eq:optimal-denoising}.
\paragraph{Forward process scaling:}
We chose the sequence of $\{\beta_s\}_{s=1}^{1000}$ as a linearly decreasing sequence between $\beta_1=0.2$ and $\beta_{1000} = 10^{-4}$.
\paragraph{Measurement model:}
For a pair of dimensions $(\dimx, \dimtr)$ the measurement model $(\lmeas, \m{A}, \noisestd)$ is drawn as follows:
\begin{itemize}
    \item $\m{A}$: We first draw $\tilde{\m{A}} \sim \ndist{\m{0}_{\dimtr \times \dimx}}{\idm_{\dimtr \times \dimx}}$ and compute the SVD decomposition of $\tilde{A} = \m{U} \m{S} \m{V}^T$. Then, we sample for $(i, j) \in \{-2, -1, 0, 1, 2\}^2$, $s_{i,j}$ according to a uniform in $[0, 1]$. Finally, we set $\m{A} = \m{U} \operatorname{Diag}(\{s_{i,j}\}_{(i, j) \in \{-2, -1, 0, 1, 2\}^2}) \m{V}^T$.
    \item $\noisestd$: We draw $\noisestd$ uniformly in the interval $[0, \operatorname{max}(s_1, \cdots, s_\dimtr)]$.
    \item $\lmeas$: We then draw $\lstate_{*} \sim \fwmarg{\mathrm{data}}{}$ and set $\lmeas \eqdef \m{A} \lstate_{*} + \noisestd \epsilon$ where $\epsilon \sim \ndist{\m{0}_{\dimtr}}{\idm_{\dimtr}}$.
\end{itemize}
\paragraph{Posterior:}
Once we have drawn both $\fwmarg{\mathrm{data}}{}$ and $(\lmeas, \m{A}, \noisestd)$, the posterior can be exactly calculated using Bayes formula and gives a mixture of Gaussians with mixture components $c_{i, j}$ and associated weights $\tilde{\omega}_{i, j}$
\begin{align*}
    c_{i, j} & \eqdef \ndist{\Sigma \left(\m{A}^T \lmeas / \noisestd^2 + \boldsymbol{\mu}_{i, j}\right)}{\Sigma} \eqsp,\\
    \tilde{\omega}_i &\eqdef \omega_i \normpdf(\lmeas;\m{A}\boldsymbol{\mu}_{i,j}, \sigma^2\idm_{\dimx} + \m{A} \m{A}^T)\eqsp,
\end{align*}
where $\Sigma \eqdef \left(\idm_{\dimx} + \noisestd^{-2} \m{A}^T \m{A}\right)^{-1}$.
\paragraph{Variational Inference:}
The \rnvp\ entries in the numerical examination are obtained by Variational Inference using the \rnvp\ architecture for the normalizing flow from \cite{Dinh:2017}.
Given a normalizing flow $\nf{\phi}$ with $\phi \in \rset^{j}, j \in \nsetpos$, the training procedure consists of optimizing the ELBO, i.e., solving the optimization problem
\begin{equation}
    \label{eq:nf_optim}
    \phi_{*} = \argmax_{\phi \in \rset^j} \sum_{k=1}^{N_{nf}}\log|\mathrm{J}\nf{\phi}(\epsilon_i)| + \log\datadistr(\nf{\phi}(\epsilon_i)) \eqsp,
\end{equation}
where $N_{nf} \in \nsetpos$ is the minibatch-size, $\mathrm{J}\nf{\phi}$ the Jacobian of $\nf{\phi}$ w.r.t $\phi$, and $\chunk{\epsilon}{1}{N_{nf}} \sim \normdist(0, \id)^{\otimes N_{nf}}$.
All the experiments were performed using a $10$ layers \rnvp. \Cref{eq:nf_optim} is solved using Adam algorithm \cite{Kingma:2015} with a learning rate of $10^{-3}$
and $200$ iterations with $N_{nf} = 10$.
The losses for each pair $(\dimx, \dimtr)$ is shown in \cref{fig:loss_gmm_rnvp}, where one can see that the majority of the losses have converged.
\begin{figure}
    \centering
    \begin{tabular}{c}
    \begin{tabular}{
        M{0.04\linewidth}@{\hspace{0\tabcolsep}}
        M{0.3\linewidth}@{\hspace{0\tabcolsep}}
        M{0.3\linewidth}@{\hspace{0\tabcolsep}}
        M{0.3\linewidth}@{\hspace{0\tabcolsep}}
        M{0.04\linewidth}@{\hspace{0\tabcolsep}}
    }
        & $\dimx=1$ & $\dimx=2$ & $\dimx=4$ & \\
        \rotatebox[origin=c]{90}{$\mathsf{KL}$}
        & \includegraphics[width=\hsize]{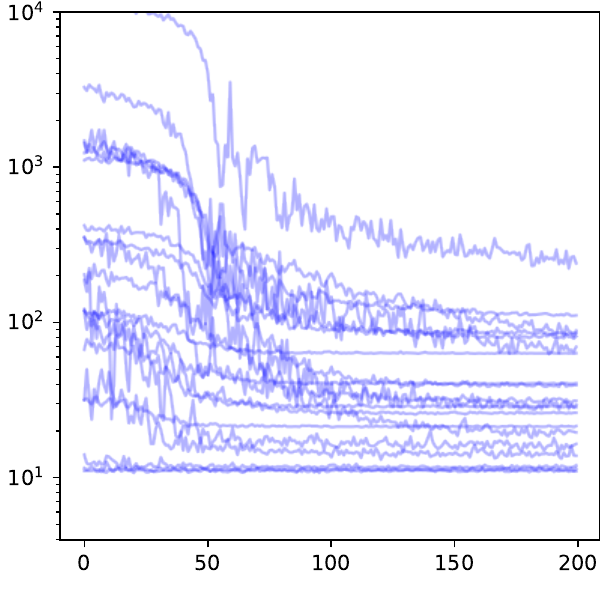}
        & \includegraphics[width=\hsize]{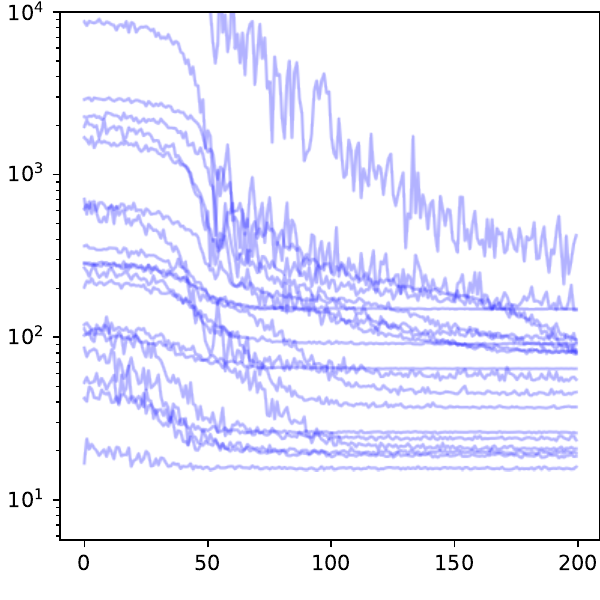}
        & \includegraphics[width=\hsize]{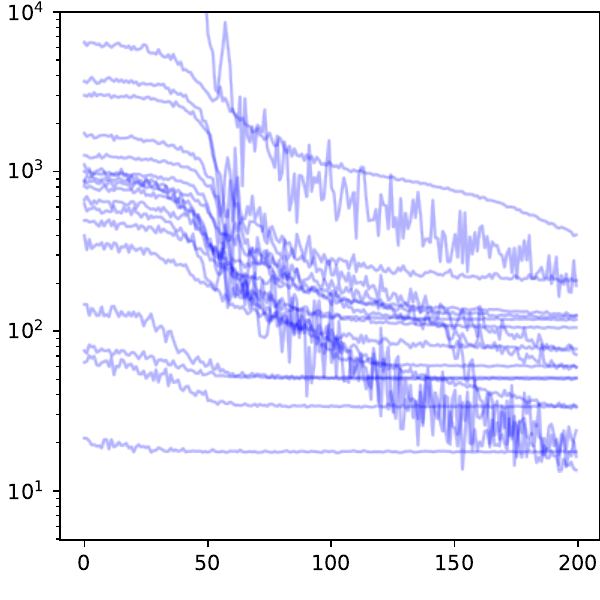}
        &  \rotatebox[origin=c]{-90}{$\dimtr=8$}\\
    \end{tabular}\\\addlinespace[-2.8pt]
    \begin{tabular}{
        M{0.04\linewidth}@{\hspace{0\tabcolsep}}
        M{0.3\linewidth}@{\hspace{0\tabcolsep}}
        M{0.3\linewidth}@{\hspace{0\tabcolsep}}
        M{0.3\linewidth}@{\hspace{0\tabcolsep}}
        M{0.04\linewidth}@{\hspace{0\tabcolsep}}
    }
        \rotatebox[origin=c]{90}{$\mathsf{KL}$}
        & \includegraphics[width=\hsize]{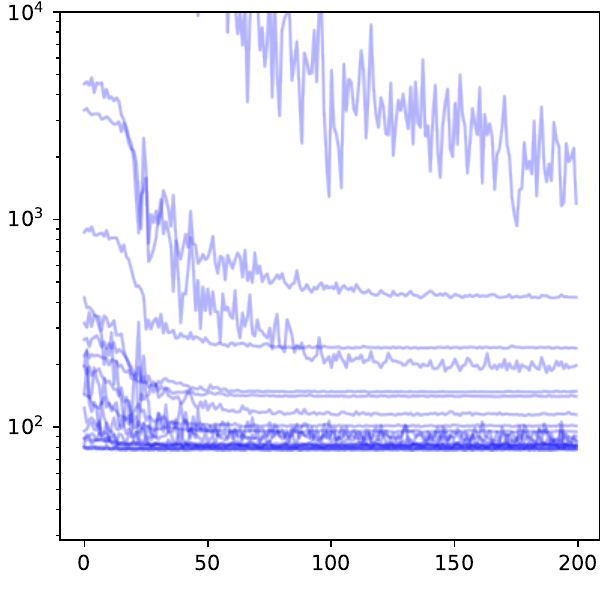}
        & \includegraphics[width=\hsize]{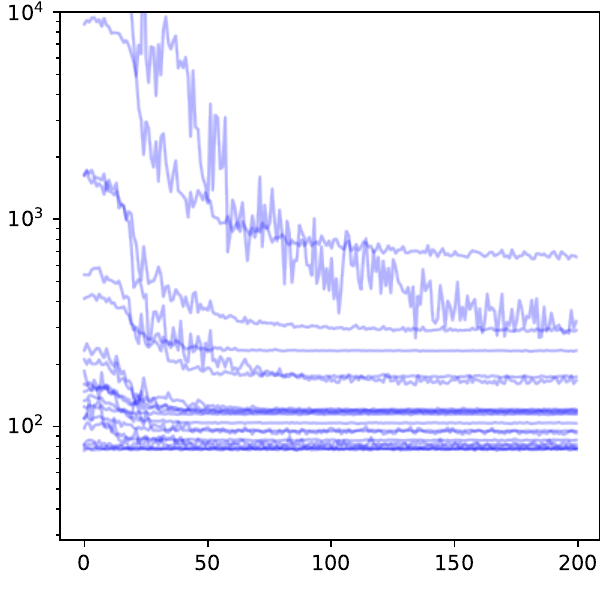}
        & \includegraphics[width=\hsize]{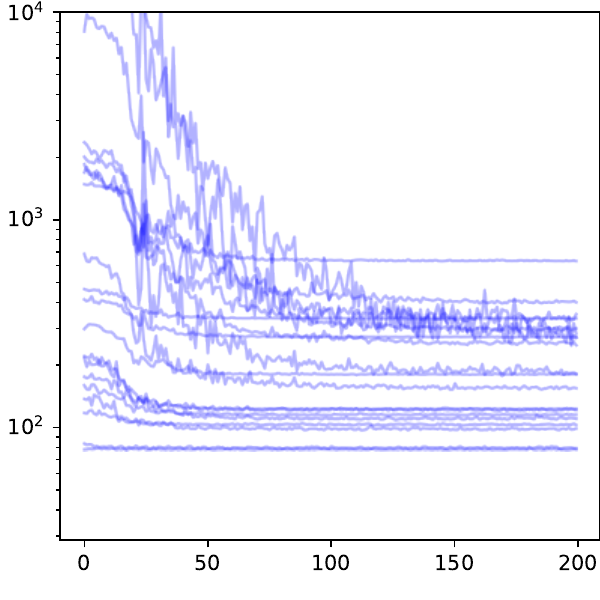}
        &  \rotatebox[origin=c]{-90}{$\dimtr=80$}\\
    \end{tabular}\\\addlinespace[-2.8pt]
    \begin{tabular}{
        M{0.04\linewidth}@{\hspace{0\tabcolsep}}
        M{0.3\linewidth}@{\hspace{0\tabcolsep}}
        M{0.3\linewidth}@{\hspace{0\tabcolsep}}
        M{0.3\linewidth}@{\hspace{0\tabcolsep}}
        M{0.04\linewidth}@{\hspace{0\tabcolsep}}
    }
        \rotatebox[origin=c]{90}{$\mathsf{KL}$}
        & \includegraphics[width=\hsize]{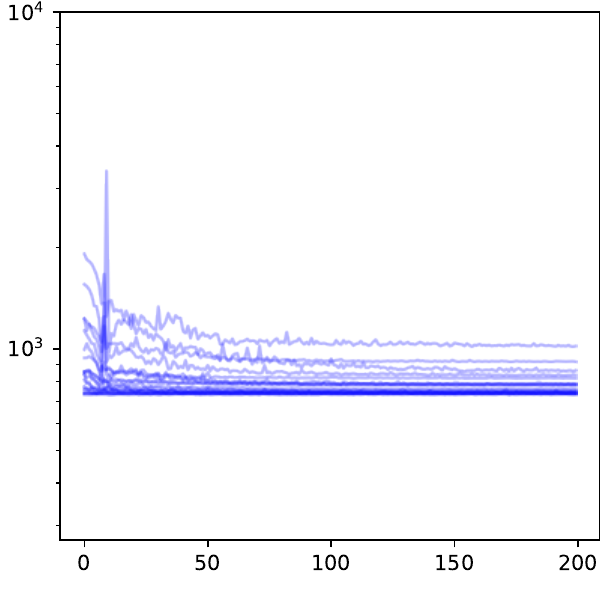}
        & \includegraphics[width=\hsize]{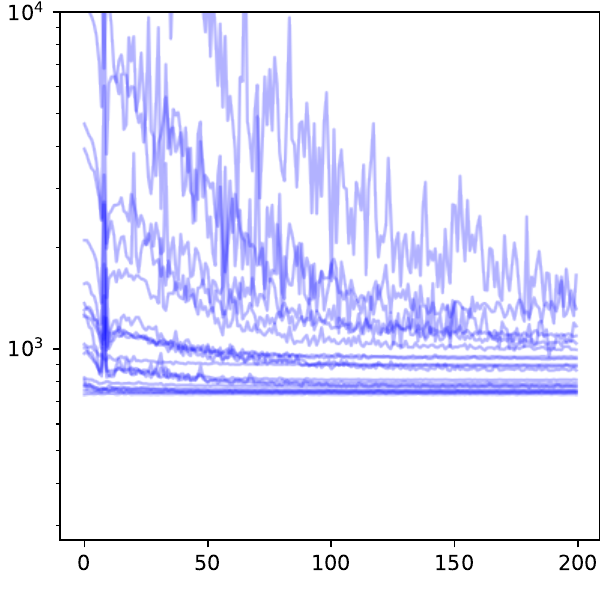}
        & \includegraphics[width=\hsize]{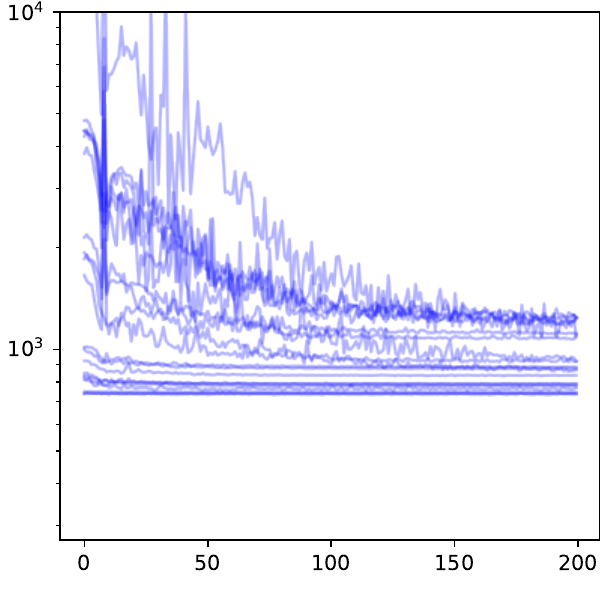}
        &  \rotatebox[origin=c]{-90}{$\dimtr=800$}\\
    \end{tabular}\\\addlinespace[-2.8pt]
    Iteration
    \end{tabular}
    \caption{Evolution of $\mathsf{KL}$ with the number of iterations for all pairs of $(\dimx, \dimtr)$ tested in the GMM case.}
    \label{fig:loss_gmm_rnvp}
\end{figure}
\paragraph{Choosing \ddim\ timesteps for a given measurement model:}
Given a number of \ddim\ samples $R$, we choose the timesteps $1=t_1 < \cdots < t_R = 1000 \in \intvect{1}{1000}$ as to try to satisfy the two following constraints:
\begin{itemize}
    \item For all $i \in \intvect{1}{\dimtr}$ there exists a $t_j$ such that $\noisestd \alpha_{t_j}^{1/2} \approx (1 - \alpha_{t_j})^{1/2} s_i$,
    \item For all $i \in \intvect{1}{R-1}$, $\alpha_{t_i}^{1/2} - \alpha_{t_{i+1}}^{1/2} \approx \delta$ for some $\delta > 0$.
\end{itemize}
The first constraint comes naturally from the definition of $\tau_i$. Since the potentials have mean $\alpha_{t_i}^{1/2} \lmeas$, the second condition constrains the intermediate laws remain ``close''. An algorithm that approximately satisfies both constraints is given below.
\begin{algorithm2e}[H]
    \caption{Timesteps choice}
    \label{alg:timesteps}
    \KwInput{Number of \ddim\ steps $R$, $\noisestd$, $\{s_i\}_{i=1}^{\dimtr}$, $\{\alpha_i\}_{i=1}^{1000}$}
    \KwOutput{$\{t_j\}_{j=1}^{R}$}
    Set $S_{\tau} = \{\}$.\\
    \For{$j \gets \intvect{1}{\dimtr}$}{
        Set $\tilde{\tau}_j = \argmin_{\ell \in \intvect{1}{1000}} |\noisestd \alpha_\ell^{1/2} - (1 - \alpha_{\ell})^{1/2}) s_j|$.\\
        Add $\tilde{\tau}_j$ to $S_{\tau}$ if $\tilde{\tau}_j \notin S_{\tau}$.
    }
    Set $n_{m} = R - \#S_{\tau} - 1$ and $\delta = (\alpha_{1}^{1/2} - \alpha_{1000}^{1/2}) / n_{m}$.\\
    Set $t_1 = 1$, $e = 1$ and $i_e = 1$.
    \For{$\ell \gets \intvect{2}{1000}$}{
        \uIf{$\alpha_{e}^{1/2} - \alpha_{\ell}^{1/2} > \delta$ or $\ell \in S_{\tau}$}{
            Set $e = \ell$, $i_e = i_e + 1$ and $\tau_{i_e} = \ell$.
        }
    }
    Set $\tau_R = 1000$.
\end{algorithm2e}
\paragraph{Additional numerics:}
We now proceed to illustrate in \Cref{fig:gmm:20:8,fig:gmm:20:80,fig:gmm:20:800} the first 2 components for one of the measurement models for all the different combinations of $(\dimx, \dimtr)$ combinations used in \cref{table:gmm:sw}.
\begin{figure}
    \centering
    \begin{tabular}{c}
    \begin{tabular}{
        M{0.04\linewidth}@{\hspace{0\tabcolsep}}
        M{0.16\linewidth}@{\hspace{0\tabcolsep}}
        M{0.16\linewidth}@{\hspace{0\tabcolsep}}
        M{0.16\linewidth}@{\hspace{0\tabcolsep}}
        M{0.16\linewidth}@{\hspace{0\tabcolsep}}
    }
        & \algo & \ddrm & \dps & \rnvp \\
        \rotatebox[origin=c]{90}{$x_2$}
        & \includegraphics[width=\hsize]{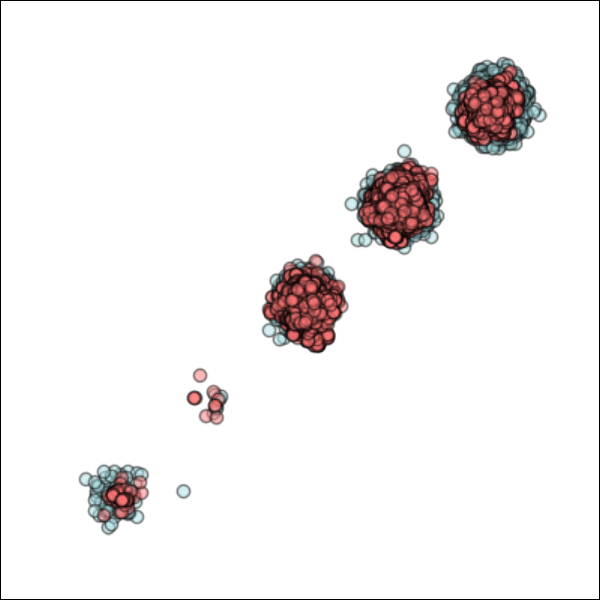}
        & \includegraphics[width=\hsize]{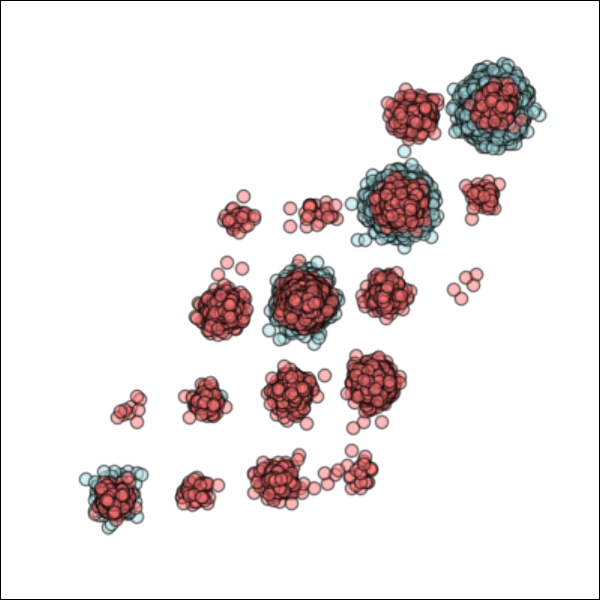}
        & \includegraphics[width=\hsize]{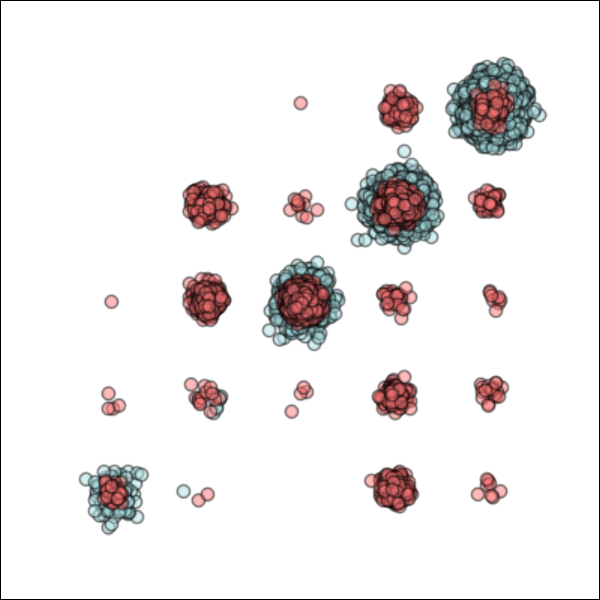}
        & \includegraphics[width=\hsize]{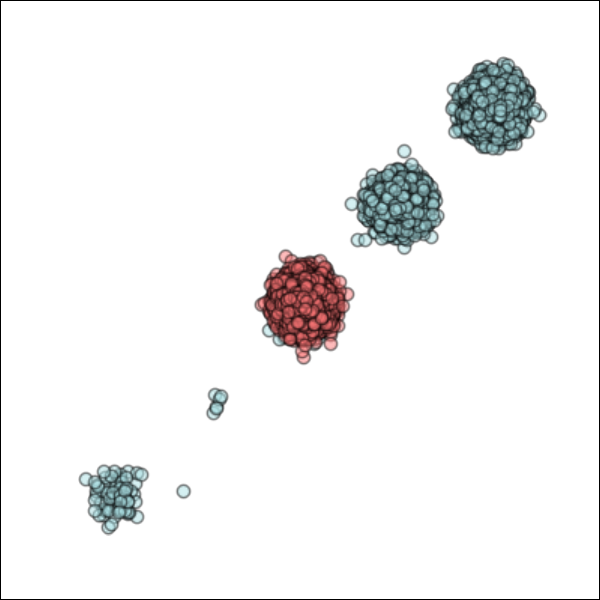} \\
    \end{tabular}\\\addlinespace[-2.8pt]
    \begin{tabular}{
        M{0.04\linewidth}@{\hspace{0\tabcolsep}}
        M{0.16\linewidth}@{\hspace{0\tabcolsep}}
        M{0.16\linewidth}@{\hspace{0\tabcolsep}}
        M{0.16\linewidth}@{\hspace{0\tabcolsep}}
        M{0.16\linewidth}@{\hspace{0\tabcolsep}}
    }
        \rotatebox[origin=c]{90}{$x_2$}
        & \includegraphics[width=\hsize]{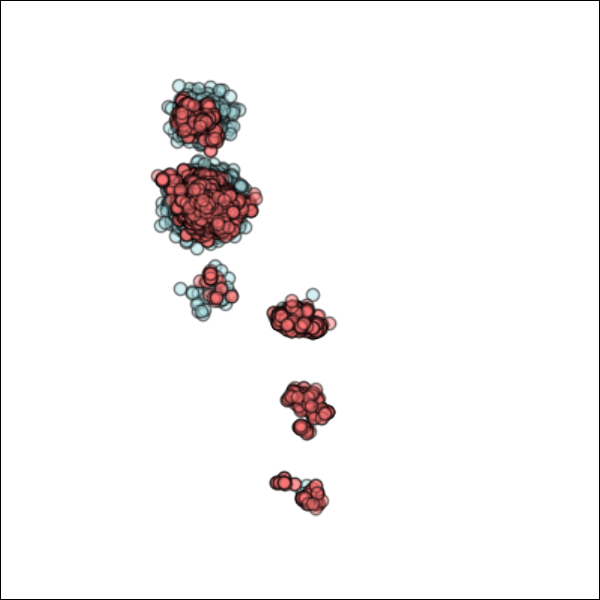}
        & \includegraphics[width=\hsize]{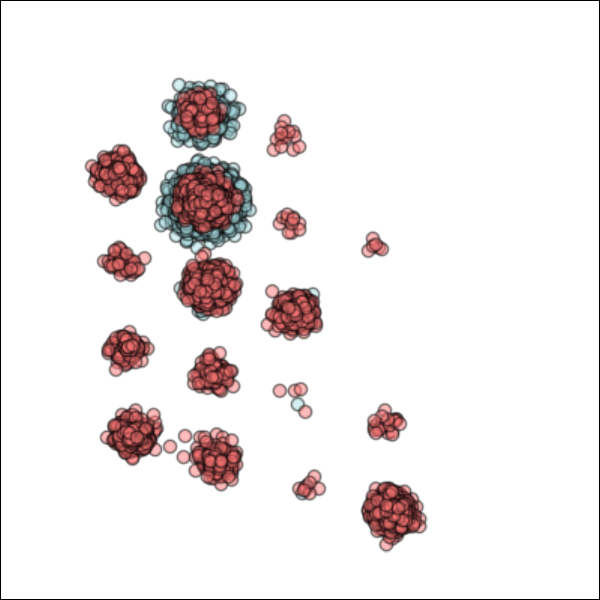}
        & \includegraphics[width=\hsize]{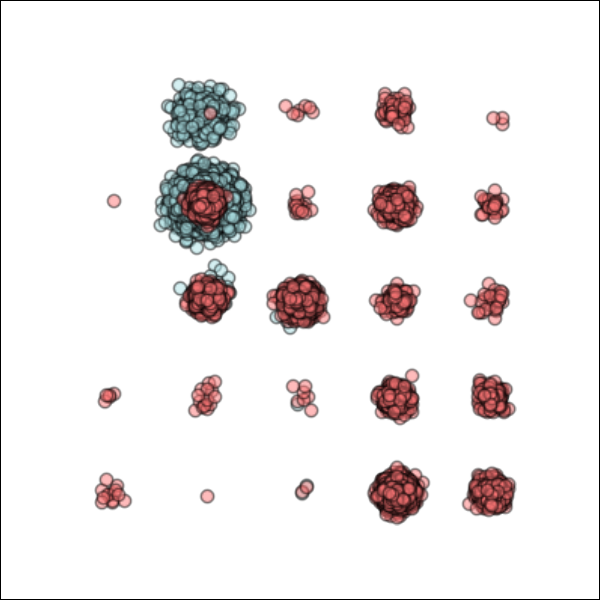}
        & \includegraphics[width=\hsize]{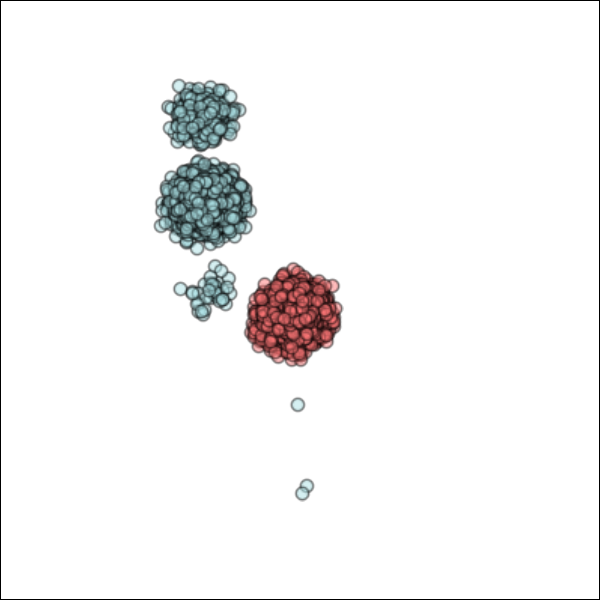}
    \end{tabular}\\\addlinespace[-2.8pt]
    \begin{tabular}{
        M{0.04\linewidth}@{\hspace{0\tabcolsep}}
        M{0.16\linewidth}@{\hspace{0\tabcolsep}}
        M{0.16\linewidth}@{\hspace{0\tabcolsep}}
        M{0.16\linewidth}@{\hspace{0\tabcolsep}}
        M{0.16\linewidth}@{\hspace{0\tabcolsep}}
    }
        \rotatebox[origin=c]{90}{$x_2$}
        & \includegraphics[width=\hsize]{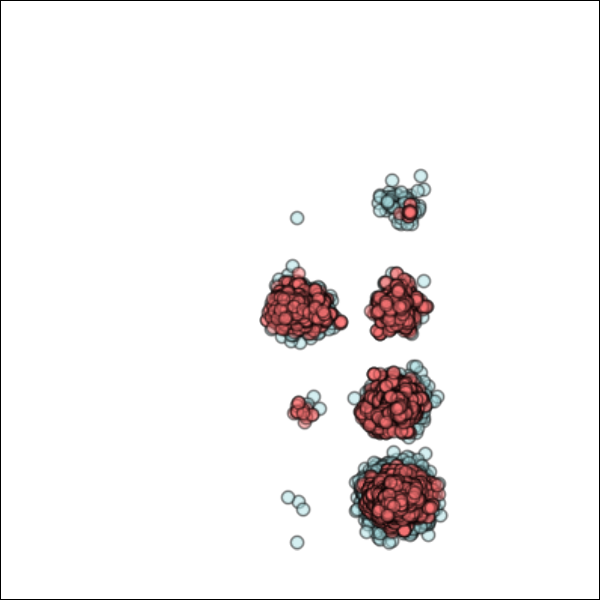}
        & \includegraphics[width=\hsize]{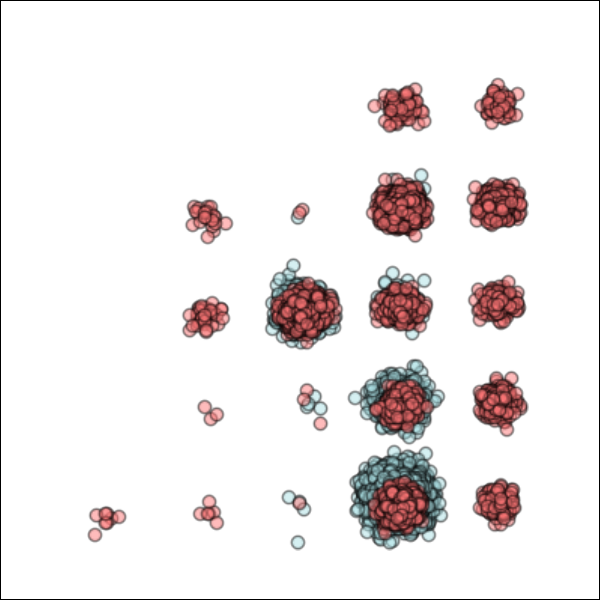}
        & \includegraphics[width=\hsize]{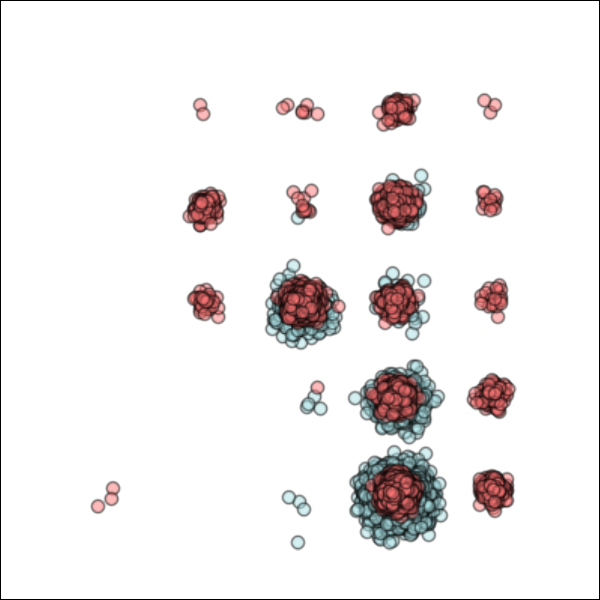}
        & \includegraphics[width=\hsize]{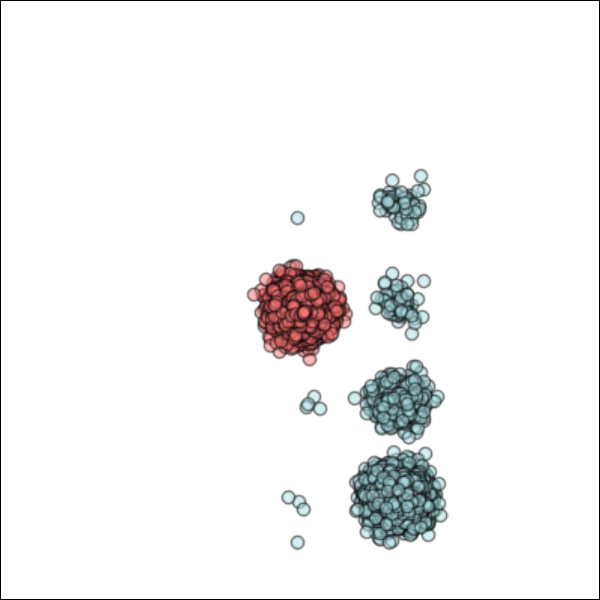}
    \end{tabular}\\\addlinespace[-2.8pt]
    $x_1$
    \end{tabular}
    \caption{First two dimensions for the GMM case with $\dimx = 8$. The rows represent $\dimtr = 1, 2, 4$ respectively.
    The blue dots represent samples from the exact posterior, while the red dots correspond to samples generated by each of the algorithms used (the names of the algorithms are given at the top of each column).}
    \label{fig:gmm:20:8}
\end{figure}
\begin{figure}
    \centering
    \begin{tabular}{c}
    \begin{tabular}{
        M{0.04\linewidth}@{\hspace{0\tabcolsep}}
        M{0.16\linewidth}@{\hspace{0\tabcolsep}}
        M{0.16\linewidth}@{\hspace{0\tabcolsep}}
        M{0.16\linewidth}@{\hspace{0\tabcolsep}}
        M{0.16\linewidth}@{\hspace{0\tabcolsep}}
    }
        & \algo & \ddrm & \dps & \rnvp \\
        \rotatebox[origin=c]{90}{$x_2$}
        & \includegraphics[width=\hsize]{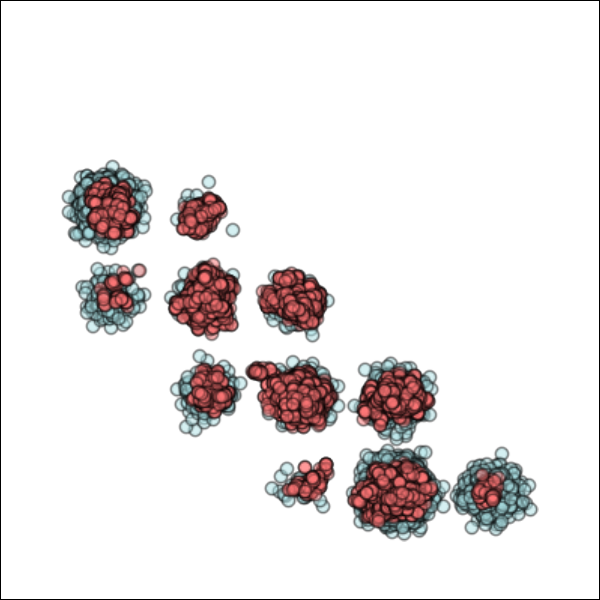}
        & \includegraphics[width=\hsize]{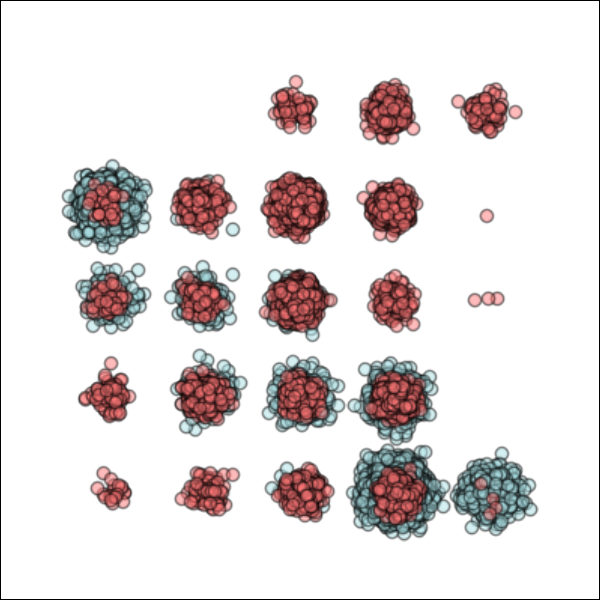}
        & \includegraphics[width=\hsize]{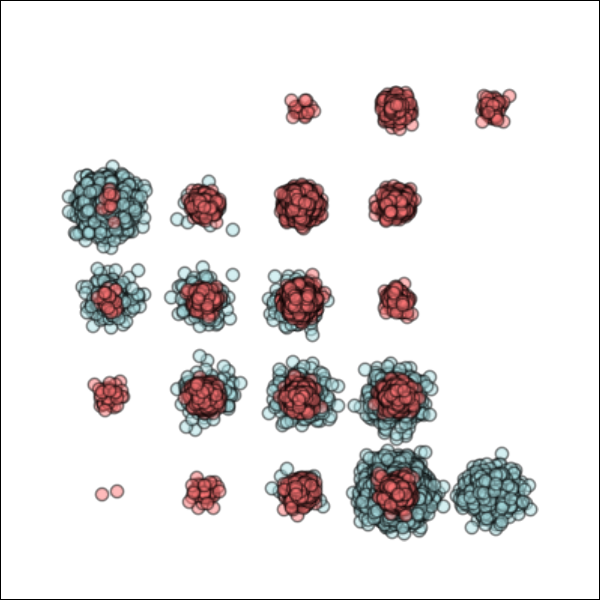}
        & \includegraphics[width=\hsize]{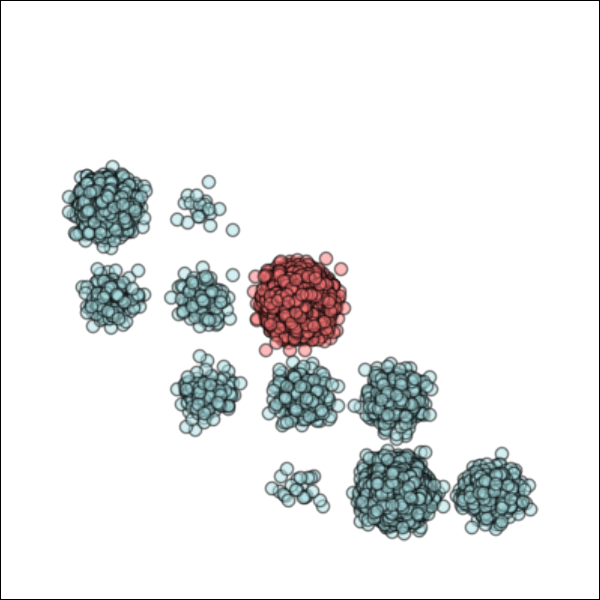} \\
    \end{tabular}\\\addlinespace[-2.8pt]
    \begin{tabular}{
        M{0.04\linewidth}@{\hspace{0\tabcolsep}}
        M{0.16\linewidth}@{\hspace{0\tabcolsep}}
        M{0.16\linewidth}@{\hspace{0\tabcolsep}}
        M{0.16\linewidth}@{\hspace{0\tabcolsep}}
        M{0.16\linewidth}@{\hspace{0\tabcolsep}}
    }
        \rotatebox[origin=c]{90}{$x_2$}
        & \includegraphics[width=\hsize]{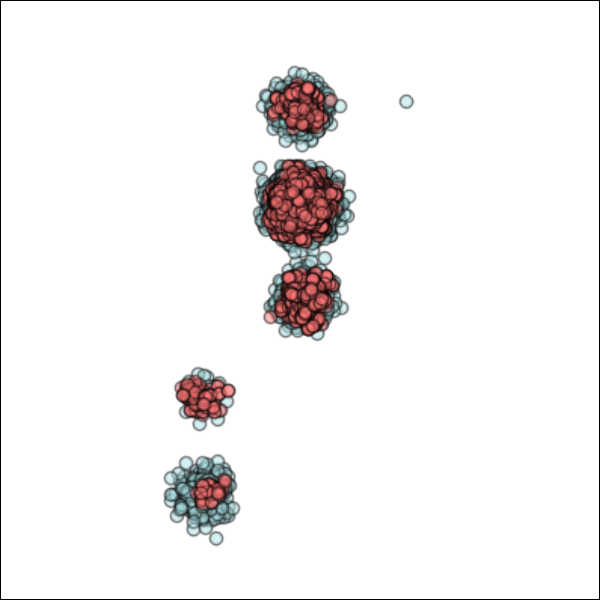}
        & \includegraphics[width=\hsize]{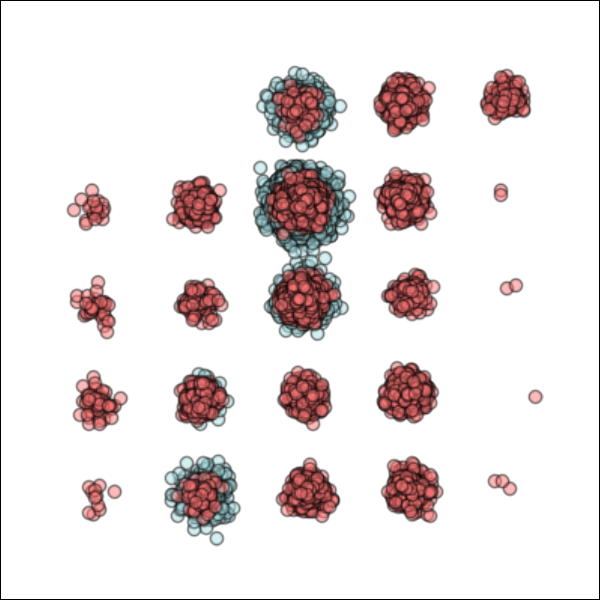}
        & \includegraphics[width=\hsize]{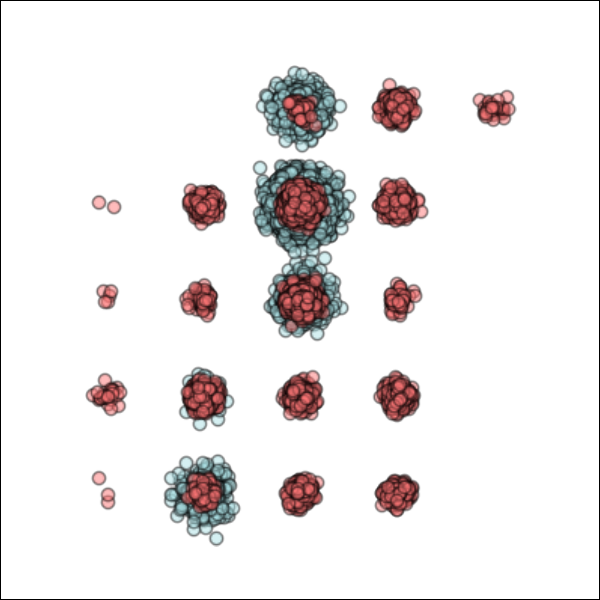}
        & \includegraphics[width=\hsize]{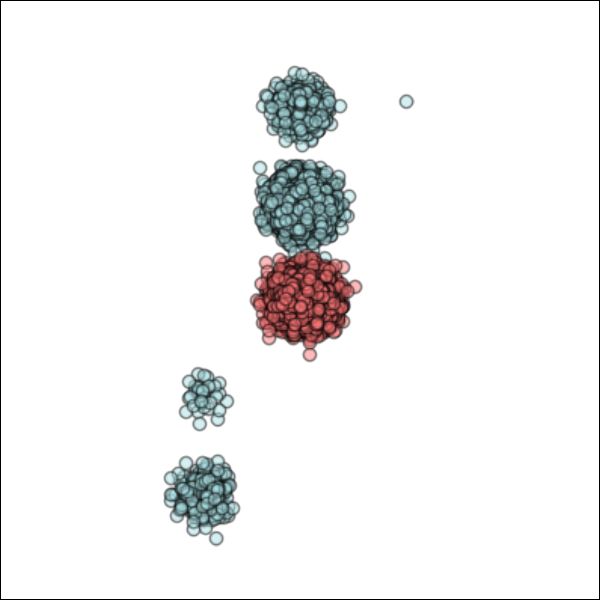}
    \end{tabular}\\\addlinespace[-2.8pt]
    \begin{tabular}{
        M{0.04\linewidth}@{\hspace{0\tabcolsep}}
        M{0.16\linewidth}@{\hspace{0\tabcolsep}}
        M{0.16\linewidth}@{\hspace{0\tabcolsep}}
        M{0.16\linewidth}@{\hspace{0\tabcolsep}}
        M{0.16\linewidth}@{\hspace{0\tabcolsep}}
    }
        \rotatebox[origin=c]{90}{$x_2$}
        & \includegraphics[width=\hsize]{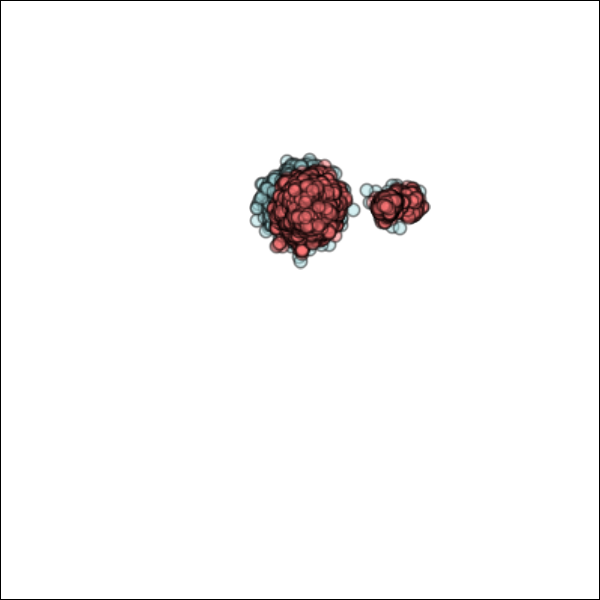}
        & \includegraphics[width=\hsize]{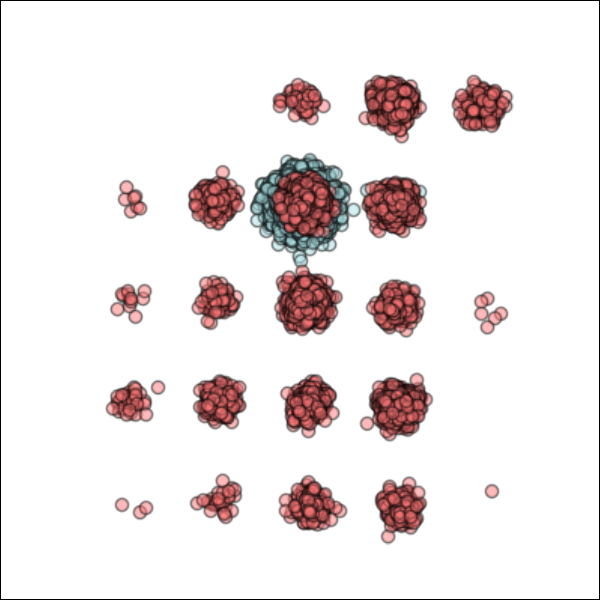}
        & \includegraphics[width=\hsize]{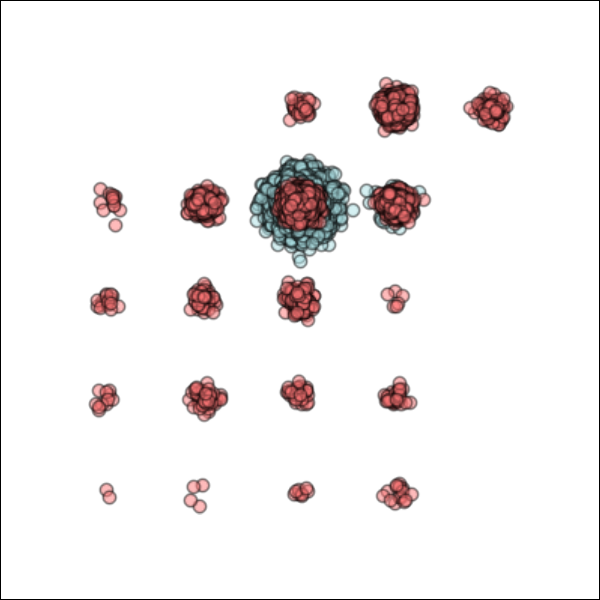}
        & \includegraphics[width=\hsize]{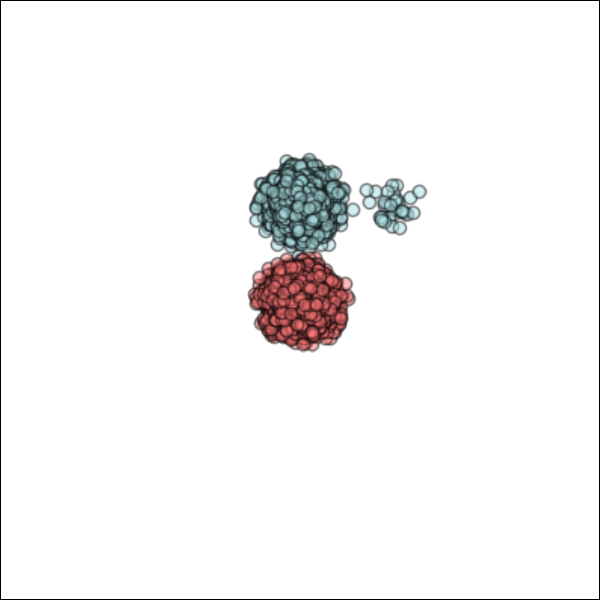}
    \end{tabular}\\\addlinespace[-2.8pt]
    $x_1$
    \end{tabular}
    \caption{First two dimensions for the GMM case with $\dimx = 80$. The rows represent $\dimtr = 1, 2, 4$ respectively.
    The blue dots represent samples from the exact posterior, while the red dots correspond to samples generated by each of the algorithms used (the names of the algorithms are given at the top of each column).}
    \label{fig:gmm:20:80}
\end{figure}
\begin{figure}
    \centering
    \begin{tabular}{c}
    \begin{tabular}{
        M{0.04\linewidth}@{\hspace{0\tabcolsep}}
        M{0.16\linewidth}@{\hspace{0\tabcolsep}}
        M{0.16\linewidth}@{\hspace{0\tabcolsep}}
        M{0.16\linewidth}@{\hspace{0\tabcolsep}}
        M{0.16\linewidth}@{\hspace{0\tabcolsep}}
    }
        & \algo & \ddrm & \dps & \rnvp \\
        \rotatebox[origin=c]{90}{$x_2$}
        & \includegraphics[width=\hsize]{images/optimal/gaussian/800_1_1_25_20_MCG_DIFF}
        & \includegraphics[width=\hsize]{images/optimal/gaussian/800_1_1_25_20_DDRM}
        & \includegraphics[width=\hsize]{images/optimal/gaussian/800_1_1_25_20_DPS}
        & \includegraphics[width=\hsize]{images/optimal/gaussian/800_1_1_25_20_RNVP} \\
    \end{tabular}\\\addlinespace[-2.8pt]
    \begin{tabular}{
        M{0.04\linewidth}@{\hspace{0\tabcolsep}}
        M{0.16\linewidth}@{\hspace{0\tabcolsep}}
        M{0.16\linewidth}@{\hspace{0\tabcolsep}}
        M{0.16\linewidth}@{\hspace{0\tabcolsep}}
        M{0.16\linewidth}@{\hspace{0\tabcolsep}}
    }
        \rotatebox[origin=c]{90}{$x_2$}
        & \includegraphics[width=\hsize]{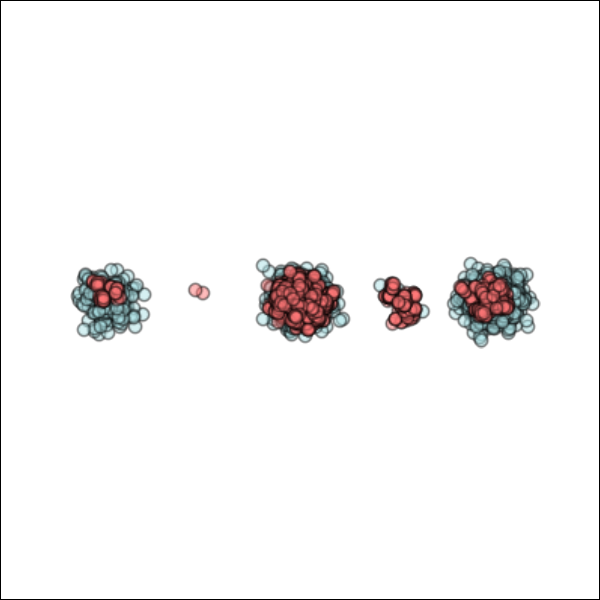}
        & \includegraphics[width=\hsize]{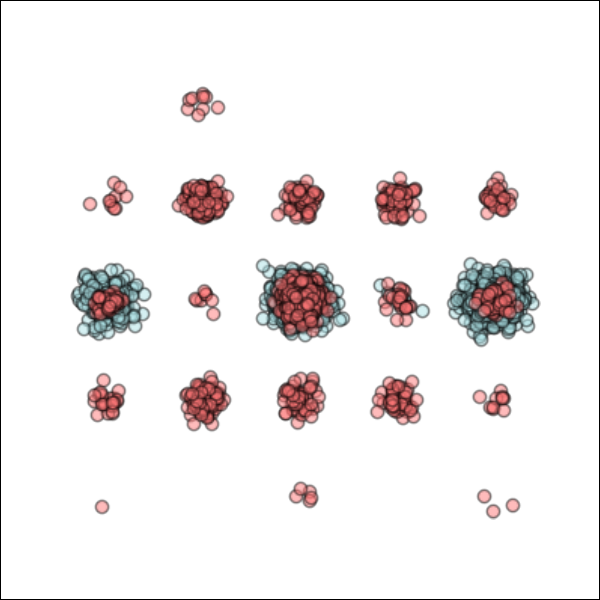}
        & \includegraphics[width=\hsize]{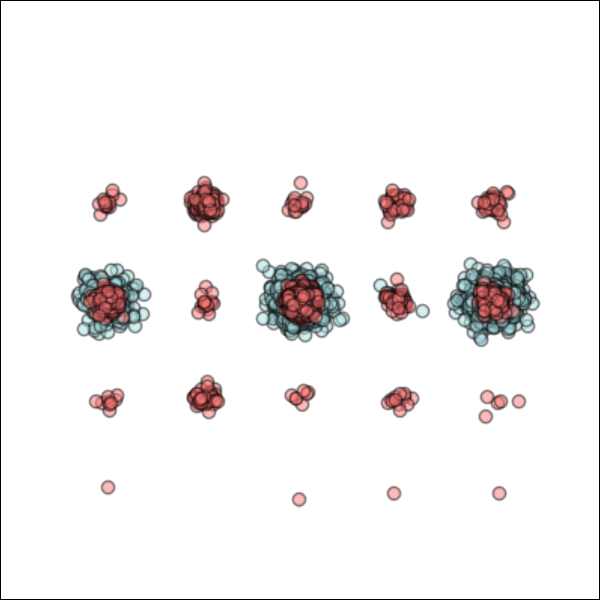}
        & \includegraphics[width=\hsize]{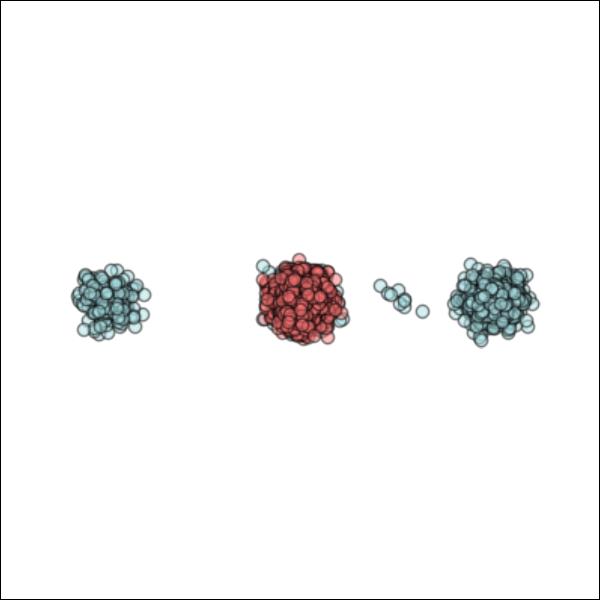}
    \end{tabular}\\\addlinespace[-2.8pt]
    \begin{tabular}{
        M{0.04\linewidth}@{\hspace{0\tabcolsep}}
        M{0.16\linewidth}@{\hspace{0\tabcolsep}}
        M{0.16\linewidth}@{\hspace{0\tabcolsep}}
        M{0.16\linewidth}@{\hspace{0\tabcolsep}}
        M{0.16\linewidth}@{\hspace{0\tabcolsep}}
    }
        \rotatebox[origin=c]{90}{$x_2$}
        & \includegraphics[width=\hsize]{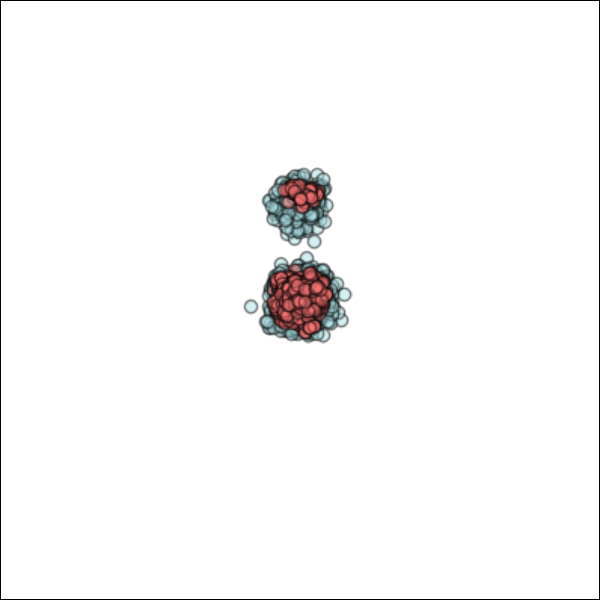}
        & \includegraphics[width=\hsize]{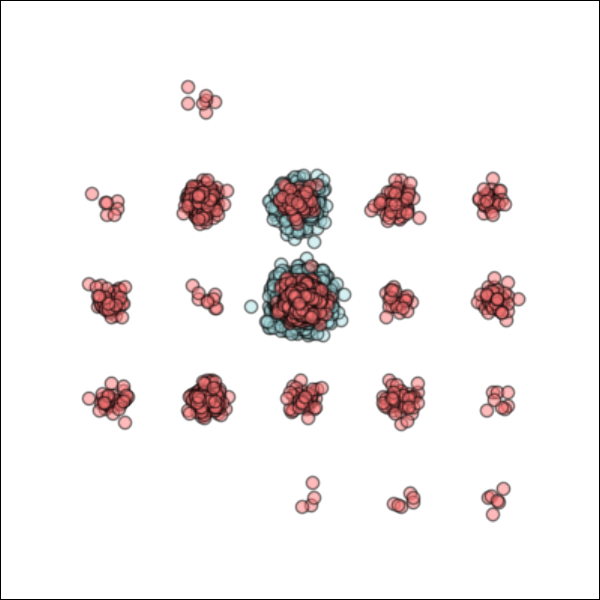}
        & \includegraphics[width=\hsize]{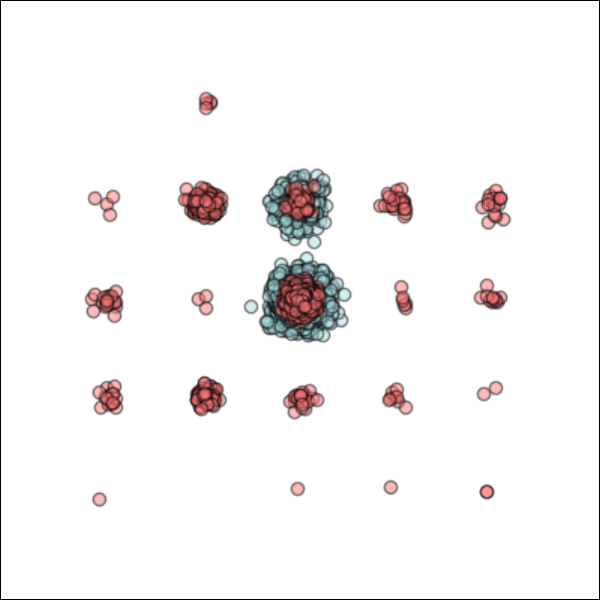}
        & \includegraphics[width=\hsize]{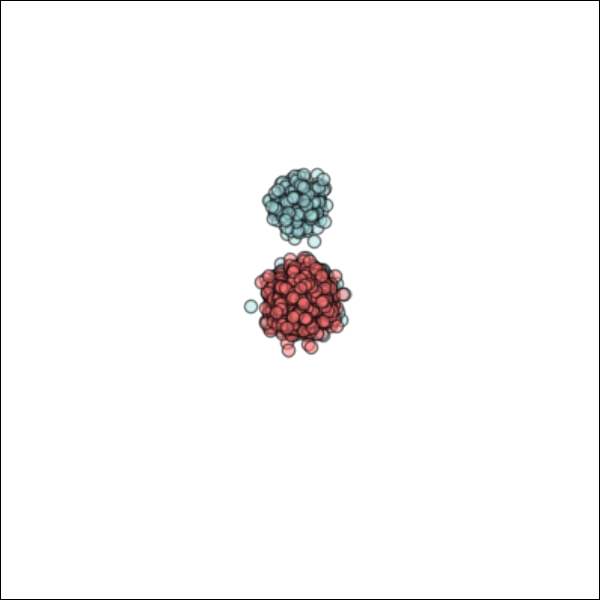}
    \end{tabular}\\\addlinespace[-2.8pt]
    $x_1$
    \end{tabular}
    \caption{First two dimensions for the GMM case with $\dimx = 800$. The rows represent $\dimtr = 1, 2, 4$ respectively.
    The blue dots represent samples from the exact posterior, while the red dots correspond to samples generated by each of the algorithms used (the names of the algorithms are given at the top of each column).}
    \label{fig:gmm:20:800}
\end{figure}
We also show in \cref{fig:gmm:coordinate:100:dim_y:4} the evolution of each observed coordinate in the noise case with $\dimtr=4$. We can see that it follows closely the forward path of the diffused observations indicated by the blue line.
\begin{figure}[H]
\centering
\begin{tabular}{@{} r M{0.4\linewidth}@{\hspace{0\tabcolsep}} M{0.4\linewidth}@{\hspace{0\tabcolsep}} @{} r}
    $\vecidx{\ltopstate}{s}{1}$
  & \includegraphics[width=\hsize]{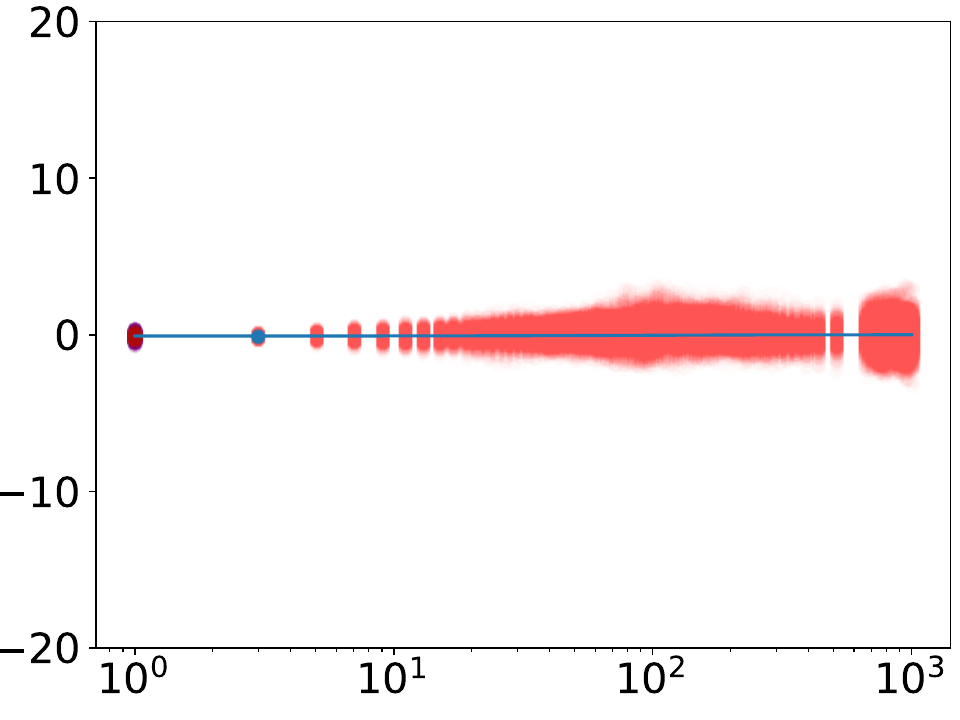}
  &  \includegraphics[width=\hsize]{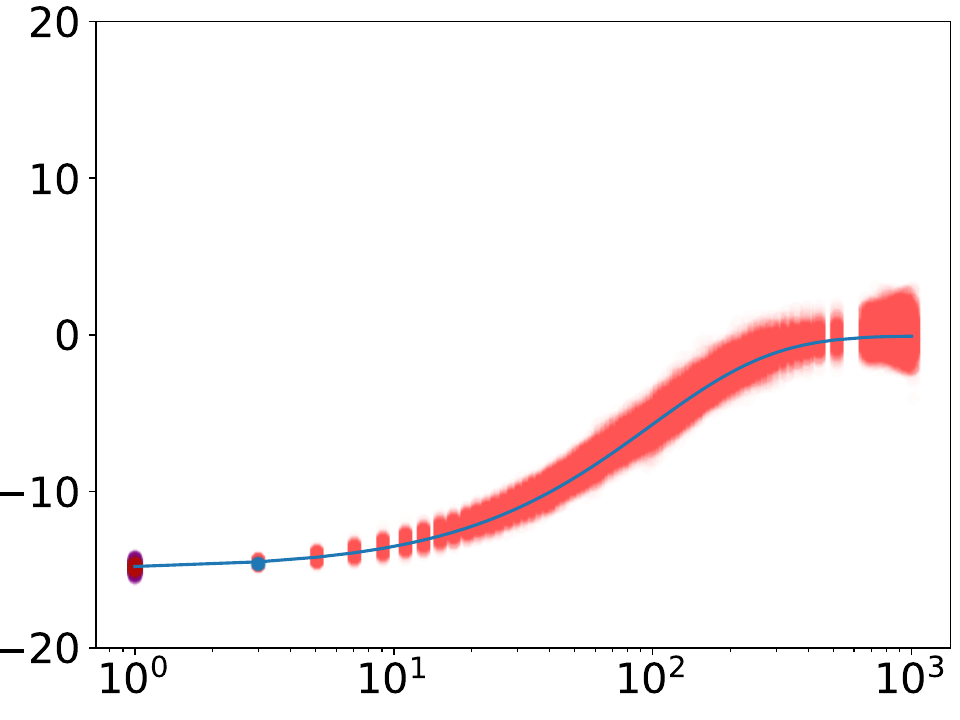}  & $\vecidx{\ltopstate}{s}{2}$\\
  $\vecidx{\ltopstate}{s}{3}$
  & \includegraphics[width=\hsize]{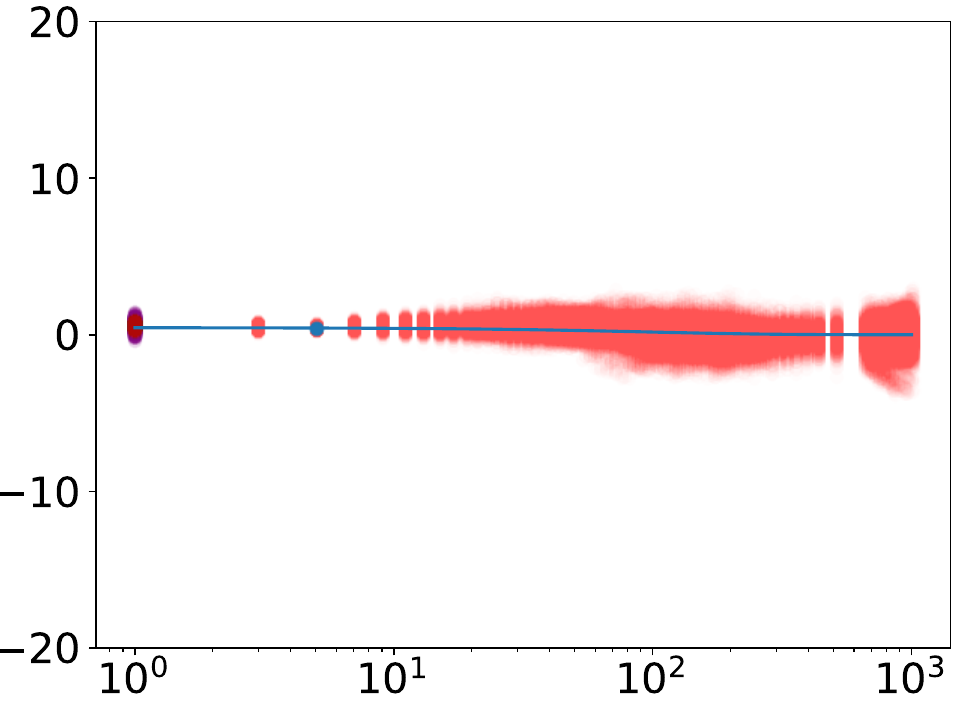}
  &\includegraphics[width=\hsize]{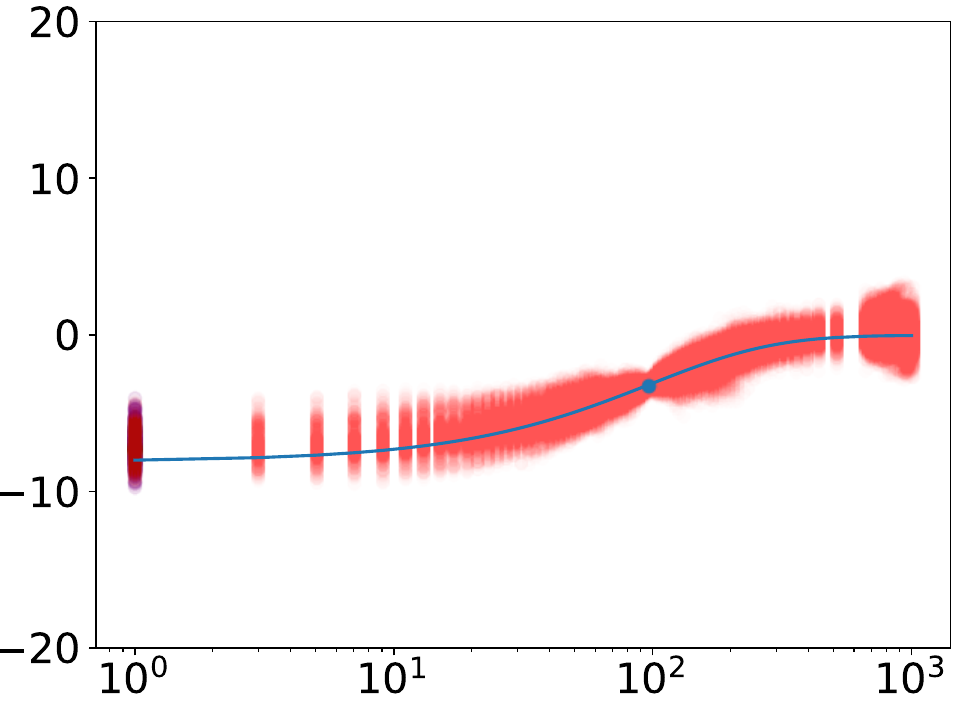} &   $\vecidx{\ltopstate}{s}{4}$\\\addlinespace[-1.25ex]
   & $s$ &$s$ \\\addlinespace[-1ex]
\end{tabular}
\caption{Illustration of the particle cloud of the $4$ first observed coordinate in the case $(\dimtr, \dimx) = (4, 800)$ with $100$ \ddim\ steps. The red points represent the particle cloud, while the purple points at the origin represent the posterior distribution. The blue curve corresponds to the curve $s \rightarrow \alpha_s^{1/2}\vecidx{\ltrmeas}{}{\ell}$ and the blue dot on the curve to $\alpha_{\tau_\ell}^{1/2}\vecidx{\ltrmeas}{}{\ell}$.}
\label{fig:gmm:coordinate:100:dim_y:4}
\end{figure}
\Cref{table:extension_gmm_sw} is an extended version of \cref{table:gmm:sw}.
\DTLloaddb{app_comparison_gmm_sw_20}{data/gaussian/25_20_inverse_problem_comparison.csv}
\begin{table}
    \centering
    \begin{tabular}{|c|c|c|c|c|c|c|}%
        \hline
        $d$ & $d_y$ & \algo & \ddrm & \dps & \rnvp \\
        \hline
        \DTLforeach*{app_comparison_gmm_sw_20}{
            \dim=D_X,
            \ydim=D_Y,
            \distours=MCG_DIFF,
            \distoursnum=MCG_DIFF_num,
            \distddrm=DDRM,
            \distddrmnum=DDRM_num,
            \distdps=DPS,
            \distdpsnum=DPS_num,
            \distrnvp=RNVP,
            \distrnvpnum=RNVP_num}{
            \dim & \ydim &
            \DTLgminall{\rowmin}{\distoursnum,\distddrmnum,\distdpsnum,\distrnvpnum}%
            \dtlifnumeq{\distoursnum}{\rowmin}{\bf}{}\distours &
            \dtlifnumeq{\distddrmnum}{\rowmin}{\bf}{}\distddrm &
            \dtlifnumeq{\distdpsnum}{\rowmin}{\bf}{}\distdps &
            \dtlifnumeq{\distrnvpnum}{\rowmin}{\bf}{}\distrnvp\DTLiflastrow{}{\\
            }}
        \\\hline
    \end{tabular}%
    \caption{Extended GMM sliced wasserstein table.}
    \label{table:extension_gmm_sw}
\end{table}
\subsubsection{FMM}
A funnel distribution is defined by the following density
\begin{equation*}
    \normdist(x_1; 0, 1) \prod_{i=1}^{d} \normdist{}{}(x_i; 0, \exp(x_1 / 2)) \eqsp.
\end{equation*}
To generate a Funnel mixture model of $20$ components in dimension $d$,
we start by firstly sampling $(\mu_i, R_i)_{i=1}^{20}$ uniformly in $([-20, 20]^{d} \times \mathsf{SO}(R^d))^{\times 20}$. The mixture will consist of
$20$ Funnel random variables translated by $\mu_i$ and rotated by $R_i$, with
unnormalized weights $\omega_{i, j}$ that are independently drawn uniformly in $[0,1]$.

\paragraph{Score}
The denoising diffusion network $\epsprop{\theta}{}$ in dimension $d$ is defined as a $5$ layers Resnet network where each Resnet block consists of the chaining of three blocks where each block has the following layers:
\begin{itemize}
    \item Linear ($512, 1024$),
    \item 1d Batch Norm,
    \item ReLU activation.
\end{itemize}
The Resnet is preceeded by an input embedding from dimension $d$ to $512$ and in the end an output embedding layer projects the output of the resnet from $512$ to $d$.
The time $t$ is embedded using positional embedding into dimension $512$ and is added to the input at each Resnet block.
The network is trained using the same loss as in \cite{ho2020denoising} for $10^4$ iterations using a batch size of $512$ samples. A learning rate of $10^{-3}$ is used for the Adam optimizer \cite{Kingma:2015}.
\Cref{fig:diffusion_learning_funnel} illustrate the outcome of the learned diffusion generative model and the target prior. In
\cref{table:fmm_prior_sw} we show the CLT $95\%$ intervals for the SW between the learned diffusion generative model and the target prior.
\begin{figure}
    \begin{tabular}{c}
        \begin{tabular}{
            M{0.04\linewidth}@{\hspace{0\tabcolsep}}
            M{0.3\linewidth}@{\hspace{0\tabcolsep}}
            M{0.3\linewidth}@{\hspace{0\tabcolsep}}
            M{0.3\linewidth}@{\hspace{0\tabcolsep}}
        }
            & $\dimtr=2$ & $\dimtr=6$ & $\dimtr=10$ \\
            \rotatebox[origin=c]{90}{$x_2$}
            & \includegraphics[width=\hsize]{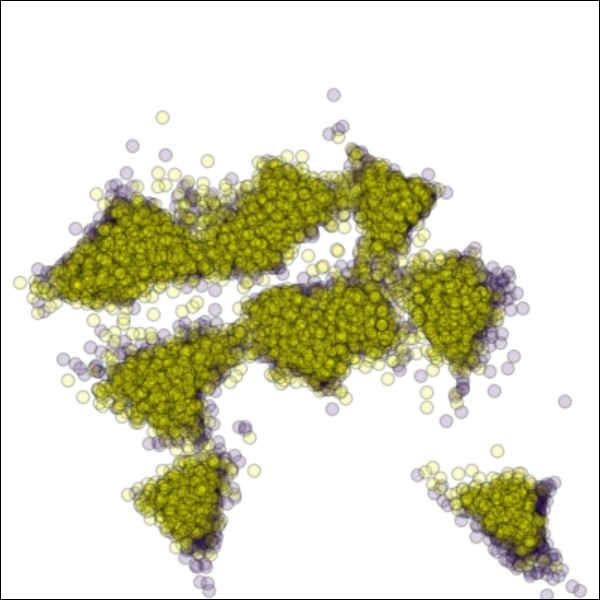}
            & \includegraphics[width=\hsize]{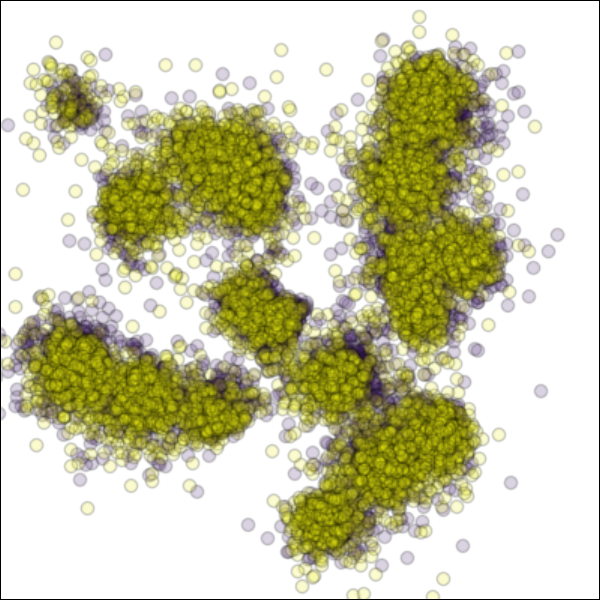}
            & \includegraphics[width=\hsize]{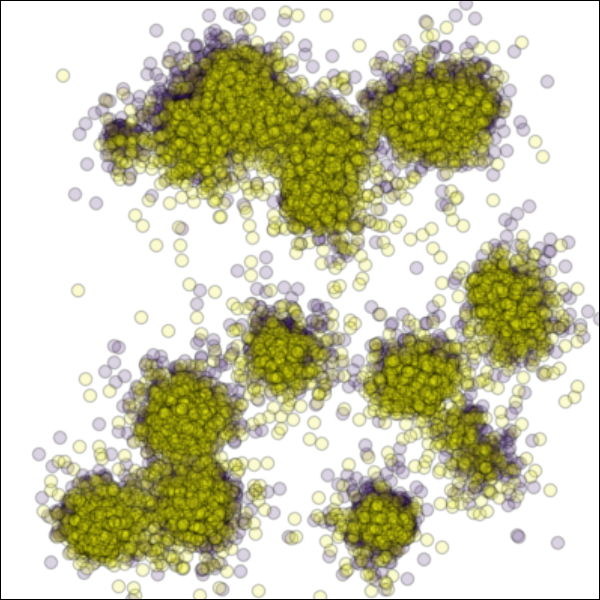}\\
        \end{tabular}\\
            $x_1$
    \end{tabular}
    \caption{Purple points are samples from the prior and yellow samples from the diffusion with $25$ \ddim\ steps.}
    \label{fig:diffusion_learning_funnel}
\end{figure}
\DTLloaddb{app_comparison_prior_fmm_sw_100}{data/funnel/20_100_diffusion_prior.csv}
\begin{table}
    \centering
    \begin{tabular}{|c|c|}%
        \hline
        $d$ & SW \\
        \hline
        \DTLforeach*{app_comparison_prior_fmm_sw_100}{
            \dim=D_X,
            \distdiff=diffusion}{
            \dim & \distdiff \DTLiflastrow{}{\\
            }}
        \\\hline
    \end{tabular}%
    \caption{Sliced Wasserstein between learned diffusion and target prior.}
    \label{table:fmm_prior_sw}
\end{table}
\paragraph{Forward process scaling}
We chose the sequence of $\{\beta_s\}_{s=1}^{1000}$ as a linearly decreasing sequence between $\beta_1=0.2$ and $\beta_{1000} = 10^{-4}$.
\paragraph{Measurement model}
The measurement model was generated in the same way as for the GMM case.
\paragraph{Posterior}
The posterior samples were generated by running the No U-turn sampler (\cite{Hoffman2011}) with a chain of length $10^4$ and taking the last sample of the chain.
This was done in parallel to generate $10^4$ samples. The mass matrix and learning rate were set by first running Stan's warmup and taking the last values of the warmup phase.
\paragraph{Variational inference:}
Variational inference in FMM shares the same details as the GMM case.
The analogous of \cref{fig:loss_gmm_rnvp} is displayed at \cref{fig:loss_fmm_rnvp}.
\begin{figure}
    \centering
    \begin{tabular}{c}
    \begin{tabular}{
        M{0.04\linewidth}@{\hspace{0\tabcolsep}}
        M{0.3\linewidth}@{\hspace{0\tabcolsep}}
        M{0.3\linewidth}@{\hspace{0\tabcolsep}}
        M{0.3\linewidth}@{\hspace{0\tabcolsep}}
        M{0.04\linewidth}@{\hspace{0\tabcolsep}}
    }
        & $\dimx=1$ & $\dimx=3$ & $\dimx=5$ & \\
        \rotatebox[origin=c]{90}{$\mathsf{KL}$}
        & \includegraphics[width=\hsize]{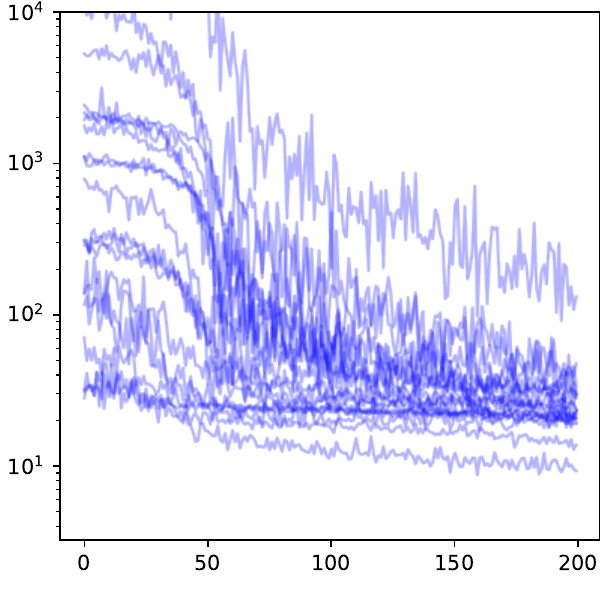}
        & \includegraphics[width=\hsize]{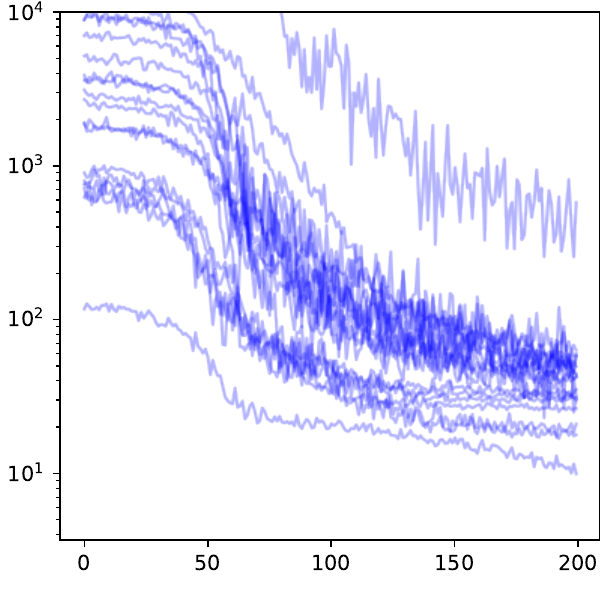}
        & \includegraphics[width=\hsize]{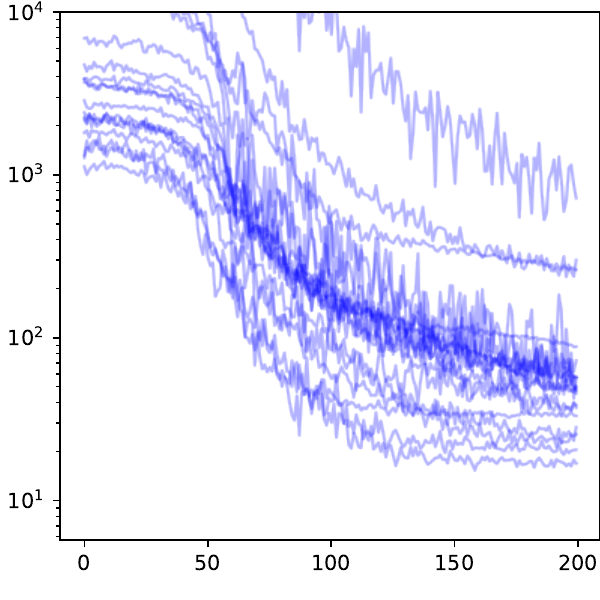}
        &  \rotatebox[origin=c]{-90}{$\dimtr=6$}\\
    \end{tabular}\\\addlinespace[-2.8pt]
    \begin{tabular}{
        M{0.04\linewidth}@{\hspace{0\tabcolsep}}
        M{0.3\linewidth}@{\hspace{0\tabcolsep}}
        M{0.3\linewidth}@{\hspace{0\tabcolsep}}
        M{0.3\linewidth}@{\hspace{0\tabcolsep}}
        M{0.04\linewidth}@{\hspace{0\tabcolsep}}
    }
        \rotatebox[origin=c]{90}{$\mathsf{KL}$}
        & \includegraphics[width=\hsize]{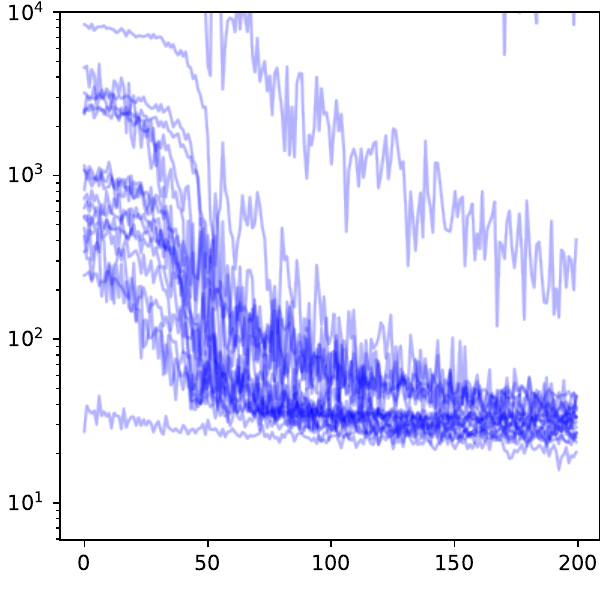}
        & \includegraphics[width=\hsize]{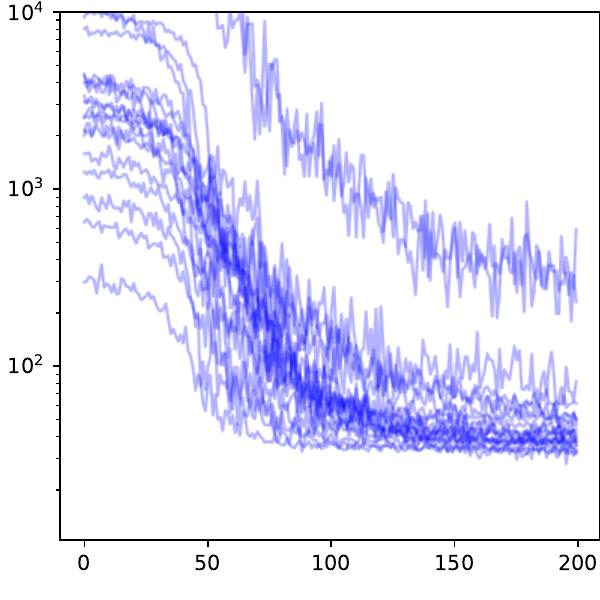}
        & \includegraphics[width=\hsize]{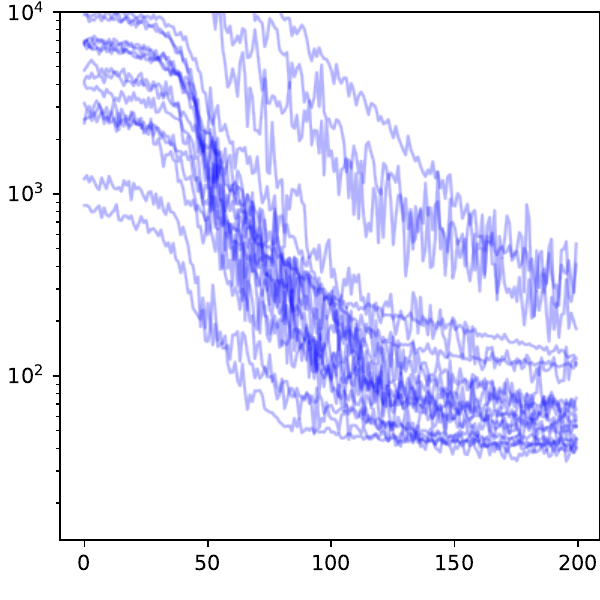}
        &  \rotatebox[origin=c]{-90}{$\dimtr=10$}\\
    \end{tabular}\\\addlinespace[-2.8pt]
    Iteration
    \end{tabular}
    \caption{Evolution of $\mathsf{KL}$ with the number of iterations for all pairs of $(\dimx, \dimtr)$ tested in the FMM case.}
    \label{fig:loss_fmm_rnvp}
\end{figure}
\paragraph{Additional plots:}
We now proceed to illustrate in \Cref{fig:fmm:100:10,fig:fmm:100:6,fig:fmm:100:2} the first 2 components for one of the measurement models for all the different combinations of $(\dimx, \dimtr)$ combinations used in \cref{table:gmm:sw}.
\begin{figure}
    \centering
    \begin{tabular}{c}
    \begin{tabular}{
        M{0.04\linewidth}@{\hspace{0\tabcolsep}}
        M{0.16\linewidth}@{\hspace{0\tabcolsep}}
        M{0.16\linewidth}@{\hspace{0\tabcolsep}}
        M{0.16\linewidth}@{\hspace{0\tabcolsep}}
        M{0.16\linewidth}@{\hspace{0\tabcolsep}}
    }
        & \algo & \ddrm & \dps & \rnvp \\
        \rotatebox[origin=c]{90}{$\operatorname{PCA}_2$}
        & \includegraphics[width=\hsize]{images/optimal/funnel/10_1_1_20_19_MCG_DIFF}
        & \includegraphics[width=\hsize]{images/optimal/funnel/10_1_1_20_19_DDRM}
        & \includegraphics[width=\hsize]{images/optimal/funnel/10_1_1_20_19_DPS}
        & \includegraphics[width=\hsize]{images/optimal/funnel/10_1_1_20_19_RNVP} \\
    \end{tabular}\\\addlinespace[-2.8pt]
    \begin{tabular}{
        M{0.04\linewidth}@{\hspace{0\tabcolsep}}
        M{0.16\linewidth}@{\hspace{0\tabcolsep}}
        M{0.16\linewidth}@{\hspace{0\tabcolsep}}
        M{0.16\linewidth}@{\hspace{0\tabcolsep}}
        M{0.16\linewidth}@{\hspace{0\tabcolsep}}
    }
        \rotatebox[origin=c]{90}{$\operatorname{PCA}_2$}
        & \includegraphics[width=\hsize]{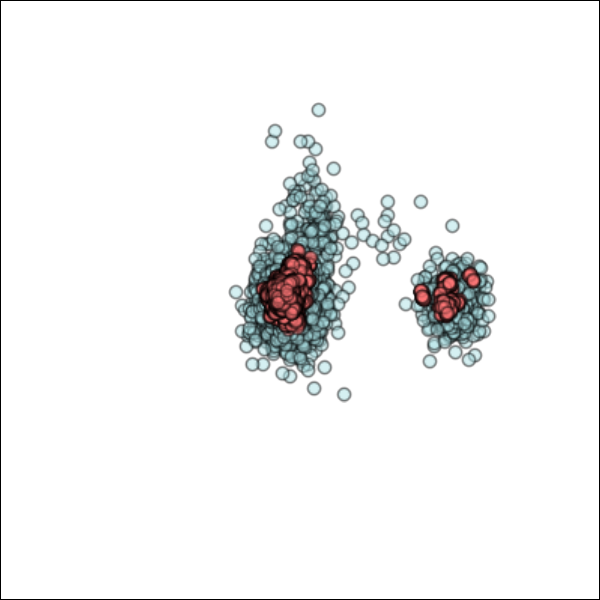}
        & \includegraphics[width=\hsize]{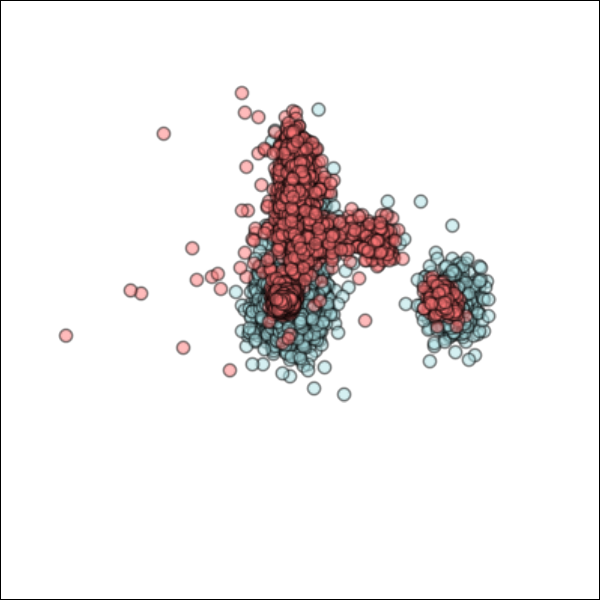}
        & \includegraphics[width=\hsize]{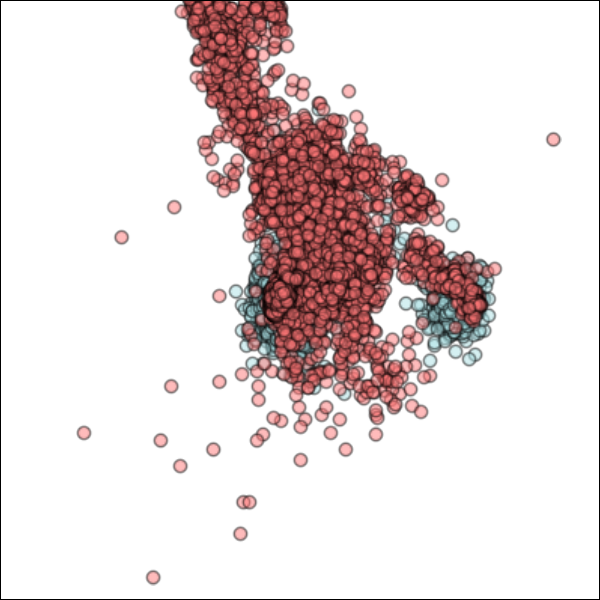}
        & \includegraphics[width=\hsize]{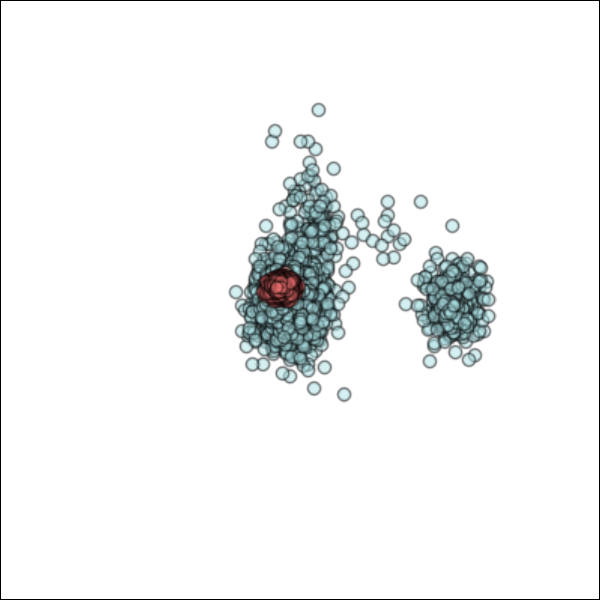} \\
    \end{tabular}\\\addlinespace[-2.8pt]
    \begin{tabular}{
        M{0.04\linewidth}@{\hspace{0\tabcolsep}}
        M{0.16\linewidth}@{\hspace{0\tabcolsep}}
        M{0.16\linewidth}@{\hspace{0\tabcolsep}}
        M{0.16\linewidth}@{\hspace{0\tabcolsep}}
        M{0.16\linewidth}@{\hspace{0\tabcolsep}}
    }
        \rotatebox[origin=c]{90}{$\operatorname{PCA}_2$}
        & \includegraphics[width=\hsize]{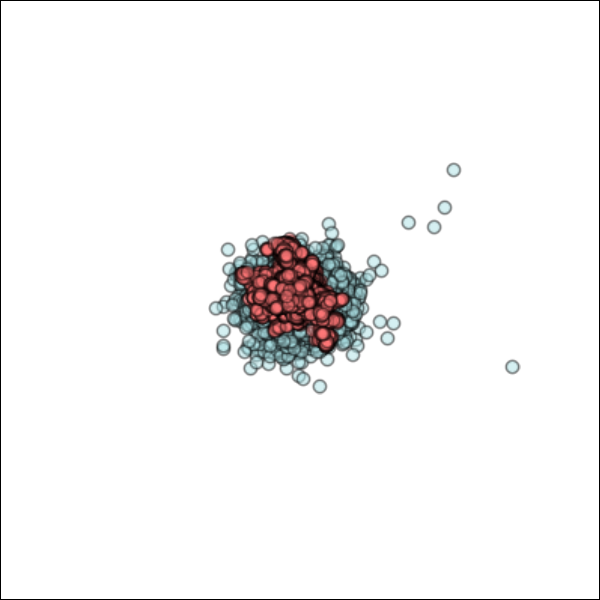}
        & \includegraphics[width=\hsize]{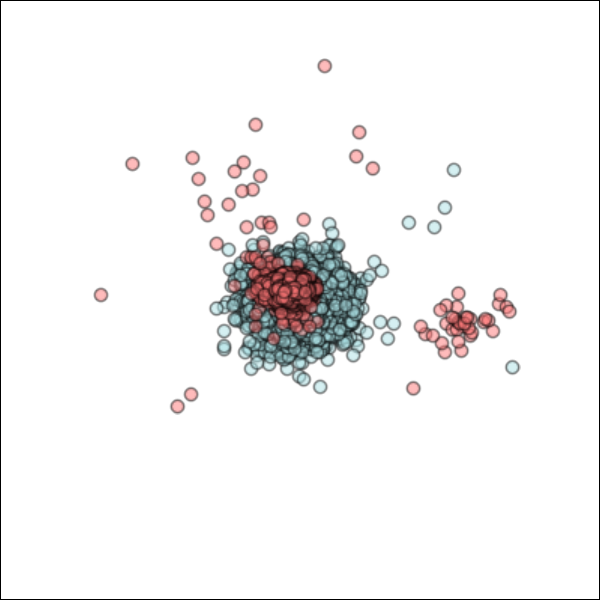}
        & \includegraphics[width=\hsize]{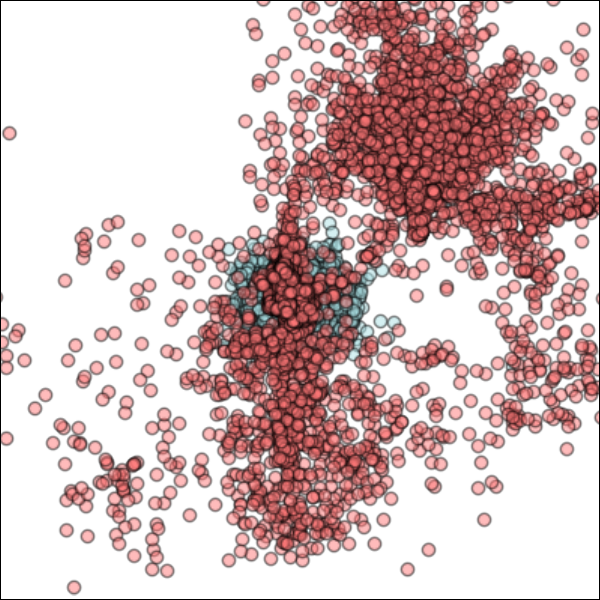}
        & \includegraphics[width=\hsize]{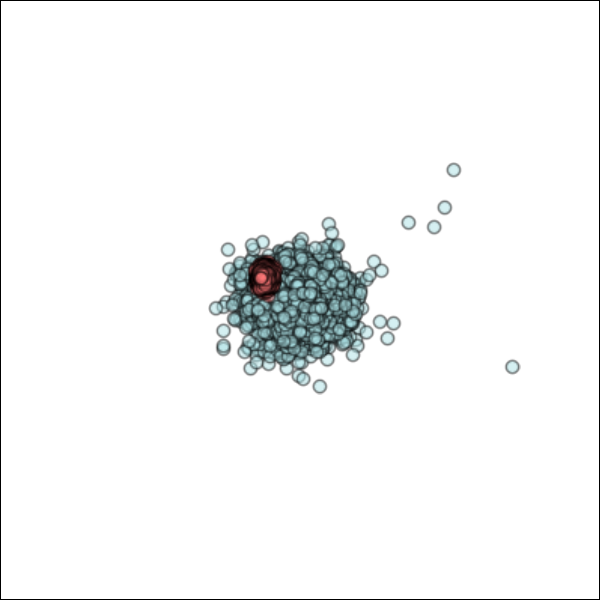} \\
    \end{tabular}\\\addlinespace[-2.8pt]
    $\operatorname{PCA}_1$
    \end{tabular}
    \caption{First two dimensions for the FMM case with $\dimx = 10$. The rows represent $\dimtr = 1, 3, 5$ respectively.
    The blue dots represent samples from the exact posterior, while the red dots correspond to samples generated by each of the algorithms used (the names of the algorithms are given at the top of each column).}
    \label{fig:fmm:100:10}
\end{figure}
\begin{figure}
    \centering
    \begin{tabular}{c}
    \begin{tabular}{
        M{0.04\linewidth}@{\hspace{0\tabcolsep}}
        M{0.16\linewidth}@{\hspace{0\tabcolsep}}
        M{0.16\linewidth}@{\hspace{0\tabcolsep}}
        M{0.16\linewidth}@{\hspace{0\tabcolsep}}
        M{0.16\linewidth}@{\hspace{0\tabcolsep}}
    }
        & \algo & \ddrm & \dps & \rnvp \\
        \rotatebox[origin=c]{90}{$\operatorname{PCA}_2$}
        & \includegraphics[width=\hsize]{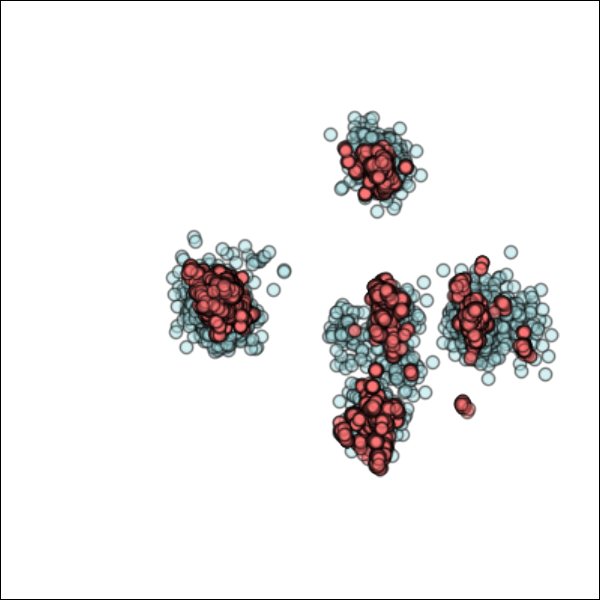}
        & \includegraphics[width=\hsize]{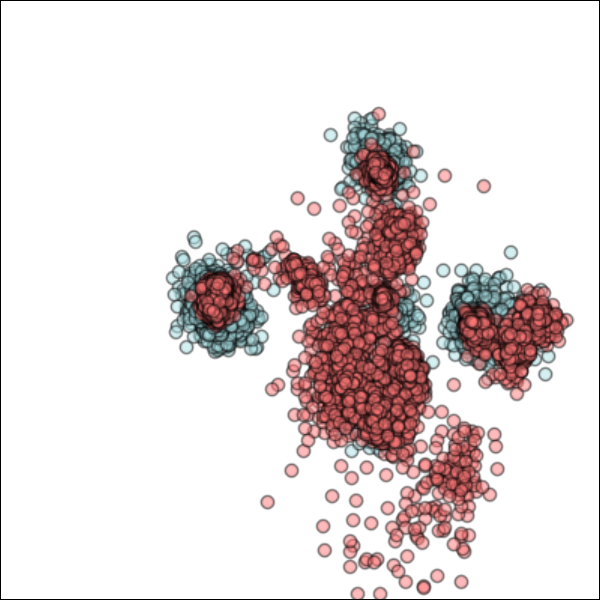}
        & \includegraphics[width=\hsize]{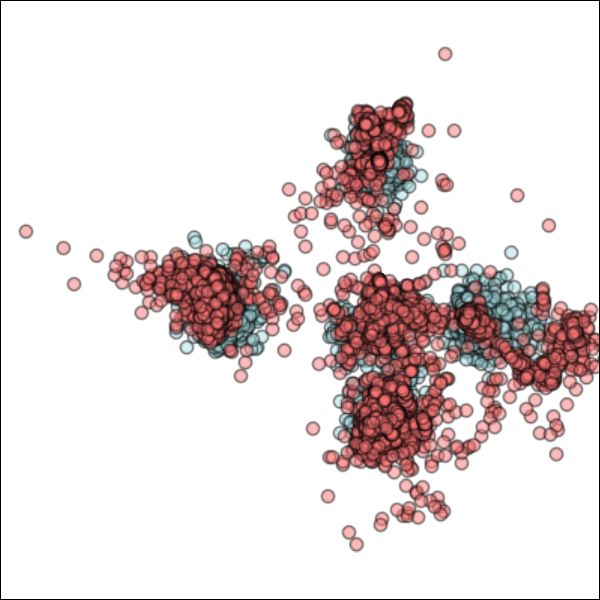}
        & \includegraphics[width=\hsize]{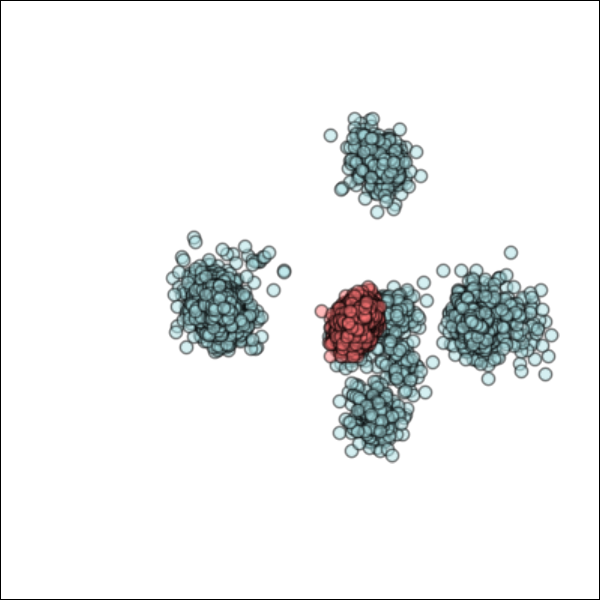} \\
    \end{tabular}\\\addlinespace[-2.8pt]
    \begin{tabular}{
        M{0.04\linewidth}@{\hspace{0\tabcolsep}}
        M{0.16\linewidth}@{\hspace{0\tabcolsep}}
        M{0.16\linewidth}@{\hspace{0\tabcolsep}}
        M{0.16\linewidth}@{\hspace{0\tabcolsep}}
        M{0.16\linewidth}@{\hspace{0\tabcolsep}}
    }
        \rotatebox[origin=c]{90}{$\operatorname{PCA}_2$}
        & \includegraphics[width=\hsize]{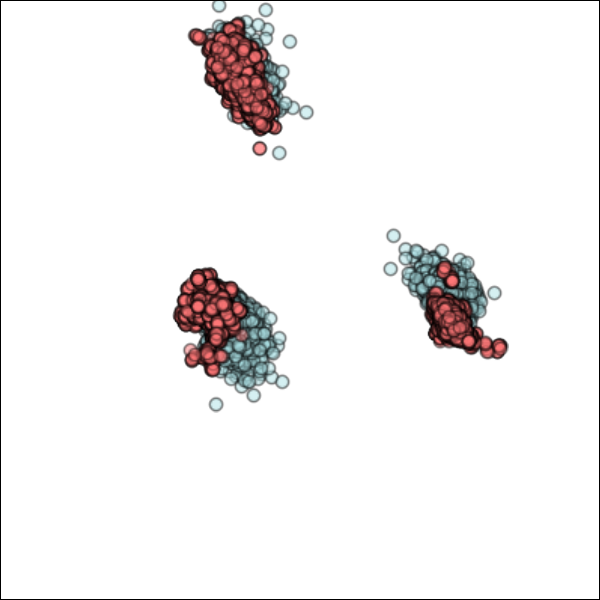}
        & \includegraphics[width=\hsize]{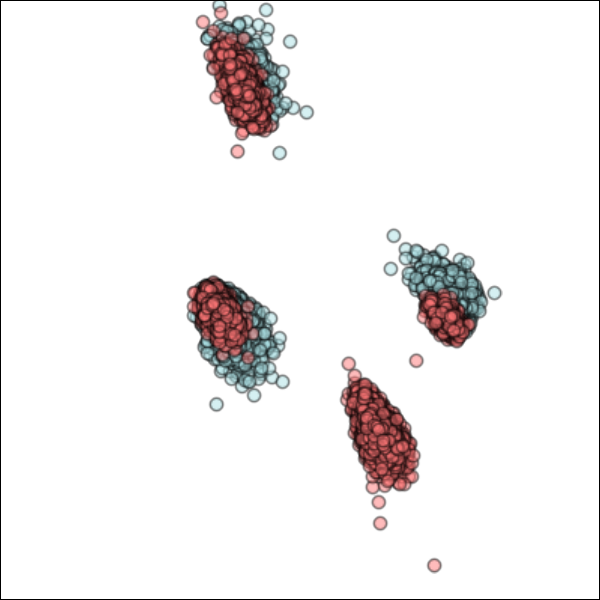}
        & \includegraphics[width=\hsize]{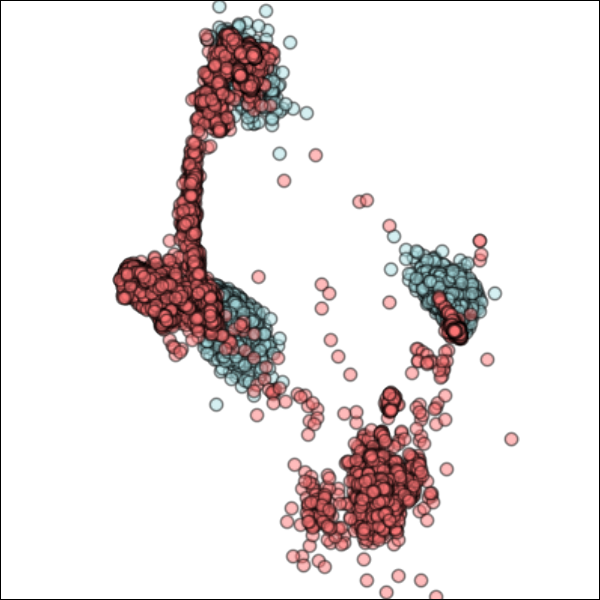}
        & \includegraphics[width=\hsize]{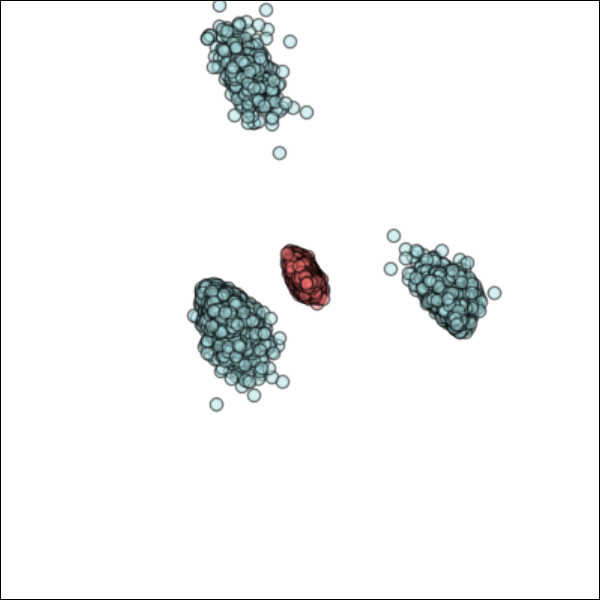} \\
    \end{tabular}\\\addlinespace[-2.8pt]
    \begin{tabular}{
        M{0.04\linewidth}@{\hspace{0\tabcolsep}}
        M{0.16\linewidth}@{\hspace{0\tabcolsep}}
        M{0.16\linewidth}@{\hspace{0\tabcolsep}}
        M{0.16\linewidth}@{\hspace{0\tabcolsep}}
        M{0.16\linewidth}@{\hspace{0\tabcolsep}}
    }
        \rotatebox[origin=c]{90}{$\operatorname{PCA}_2$}
        & \includegraphics[width=\hsize]{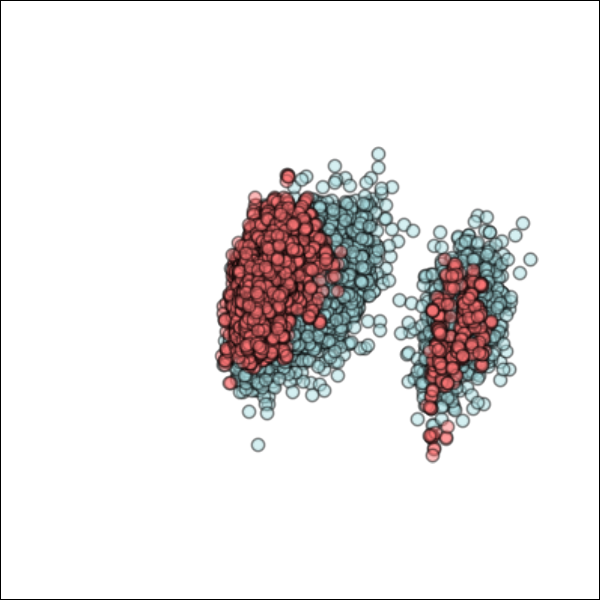}
        & \includegraphics[width=\hsize]{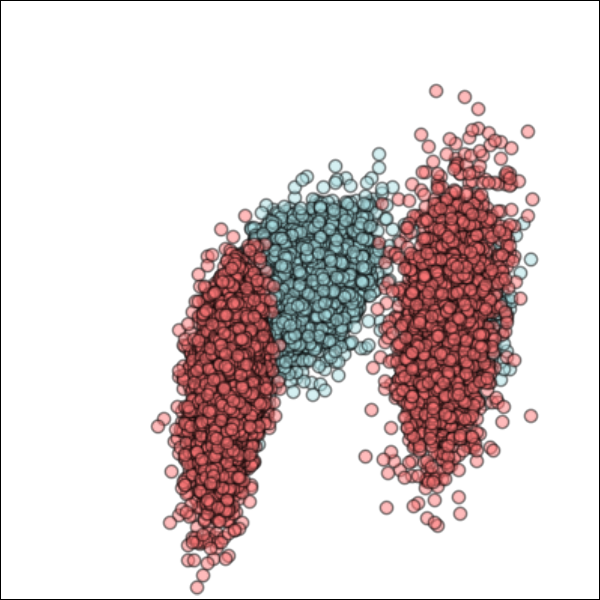}
        & \includegraphics[width=\hsize]{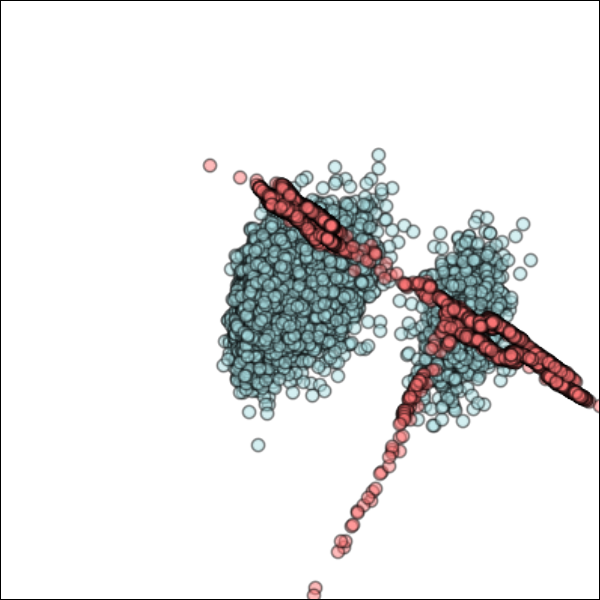}
        & \includegraphics[width=\hsize]{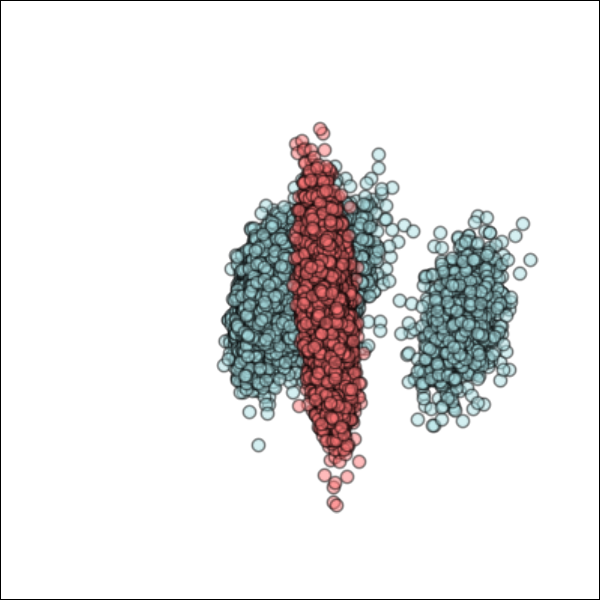} \\
    \end{tabular}\\\addlinespace[-2.8pt]
    $\operatorname{PCA}_1$
    \end{tabular}
    \caption{First two dimensions for the FMM case with $\dimx = 6$. The rows represent $\dimtr = 1, 3, 5$ respectively.
    The blue dots represent samples from the exact posterior, while the red dots correspond to samples generated by each of the algorithms used (the names of the algorithms are given at the top of each column).}
    \label{fig:fmm:100:6}
\end{figure}
\begin{figure}
    \centering
    \begin{tabular}{c}
    \begin{tabular}{
        M{0.04\linewidth}@{\hspace{0\tabcolsep}}
        M{0.16\linewidth}@{\hspace{0\tabcolsep}}
        M{0.16\linewidth}@{\hspace{0\tabcolsep}}
        M{0.16\linewidth}@{\hspace{0\tabcolsep}}
        M{0.16\linewidth}@{\hspace{0\tabcolsep}}
    }
        & \algo & \ddrm & \dps & \rnvp \\
        \rotatebox[origin=c]{90}{$\operatorname{PCA}_2$}
        & \includegraphics[width=\hsize]{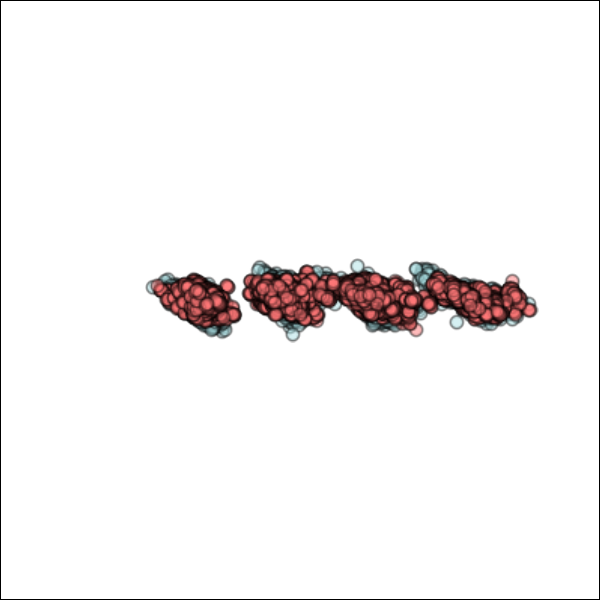}
        & \includegraphics[width=\hsize]{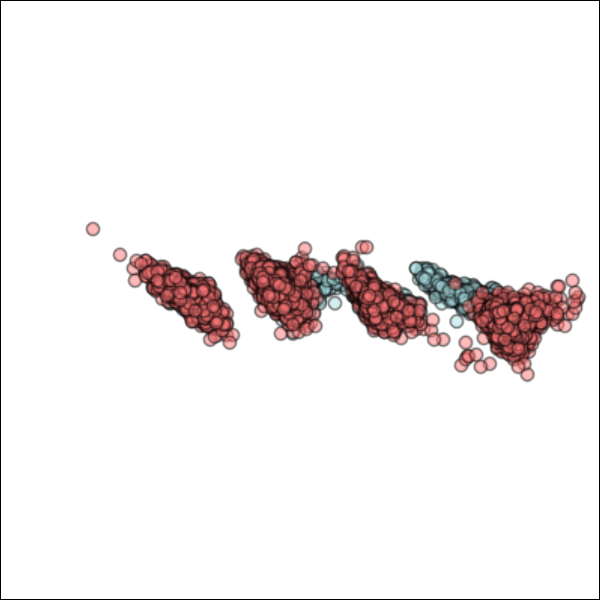}
        & \includegraphics[width=\hsize]{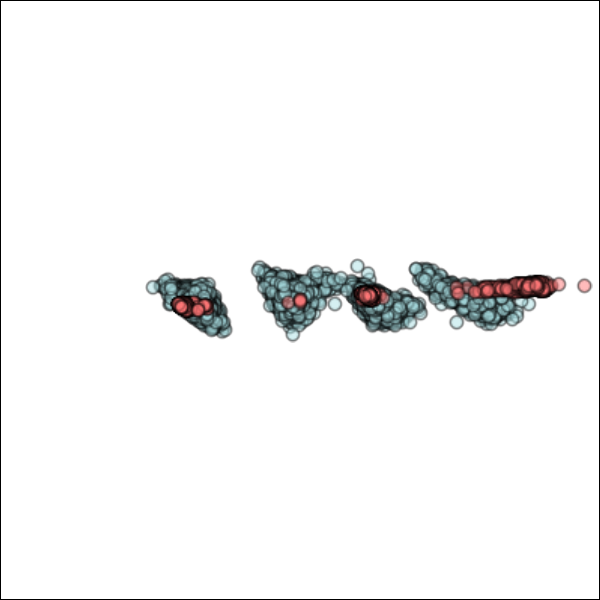}
        & \includegraphics[width=\hsize]{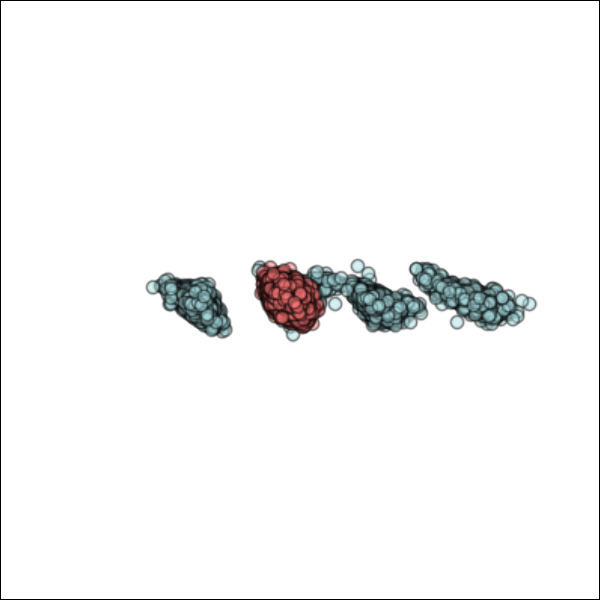} \\
    \end{tabular}\\\addlinespace[-2.8pt]
    $\operatorname{PCA}_1$
    \end{tabular}
    \caption{First two dimensions for the FMM case with $\dimx = 2$ and $\dimtr = 1$.
    The blue dots represent samples from the exact posterior, while the red dots correspond to samples generated by each of the algorithms used (the names of the algorithms are given at the top of each column).}
    \label{fig:fmm:100:2}
\end{figure}
\clearpage
\end{appendix}

\end{document}